\tikzstyle{vertex} = [fill,shape=circle,node distance=80pt]
\tikzstyle{edge} = [fill,opacity=.5,fill opacity=.5,line cap=round, line join=round, line width=35pt]
\tikzstyle{elabel} =  [fill,shape=circle,node distance=30pt]
\def\RED{black}
\def\BLUE{black}
\newcommand\shrink[1]{}
\def\defas{{\quad\stackrel{\it def}{=}\quad}}
\newcommand{\hl}[1]{\emph{#1}}
\def\then{{\Rightarrow}}
\def\bthen{{\Leftarrow}}
\def\Rule(#1,#2){{{#1} \rightarrow {#2}}}
\def\force(#1,#2){{#1[#2]}}
\def\w{{\omega}}
\def\n(#1){{\bar{#1}}}
\def\l{{\ell}}
\def\nl{\bar{\ell}}
\def\cd{{|}}
\def\nvarphi{{\overline{\varphi}}}
\def\X{{\mathbb{X}}}
\def\RS(#1){{R(#1)}}
\def\MS(#1){{M(#1)}}
\def\BMS(#1){{BM(#1)}}
\def\ps{{\Sigma}}
\newcommand\erase[2]{{#1_{\uparrow#2}}}
\newtheorem{theorem}{Theorem}
\newtheorem{corollary}{Corollary}
\newtheorem{lemma}{Lemma}
\newtheorem{definition}{Definition}
\newtheorem{proposition}{Proposition}
\title{On Quantifying Literals in Boolean Logic and Its \\ Applications to Explainable AI}
\author{Adnan Darwiche \and Pierre Marquis}
\begin{document} 

\title{On Quantifying Literals in Boolean Logic and Its \\ Applications to Explainable AI}

\author{\name Adnan Darwiche \email darwiche@cs.ucla.edu \\
       \addr Computer Science Department, UCLA,\\
       Los Angeles, CA  90095 USA
       \AND
       \name Pierre Marquis \email marquis@cril.fr \\
       \addr CRIL, Universit\'e d’Artois \& CNRS,\\
        Institut Universitaire de France\\
	F-62307, Lens Cedex, France
}

\maketitle
 
\begin{abstract}
Quantified Boolean logic results from adding operators to Boolean logic for 
existentially and universally quantifying variables. This extends the reach of Boolean
logic by enabling a variety of applications that have been explored over the decades.
The existential quantification of literals (variable states) and its applications have also been 
studied in the literature. In this paper, we complement 
this by studying universal literal quantification and its applications, particularly to explainable AI. 
We also provide a novel semantics for quantification,
discuss the interplay between variable/literal and existential/universal quantification. We further
identify some classes of Boolean formulas and circuits on which quantification can be done efficiently.
Literal quantification is more fine-grained than variable quantification as the latter can be 
defined in terms of the former. This leads to a refinement of quantified Boolean logic with
literal quantification as its primitive.
\end{abstract}

\section{Introduction}

To quantify a variable from a Boolean formula is to eliminate that variable from the formula. 
This quantification process can be performed either existentially or universally, each leading to a different semantics
and a different set of applications. The use of quantification in Boolean logic dates back at least to
George Boole's work~\cite{Boole1854}, where he used universal quantification to perform some
forms of logical deduction. Existential quantification has a particularly intuitive interpretation as it
can be viewed as a process of removing from a formula all and only the information which pertains to the quantified
variable. Due to this semantics, which is referred to as forgetting~\cite{LinReiter94}, existential
quantification has received much attention in AI and database theory, particularly in the management 
of inconsistent information. This use dates back at least to~\cite{DBLP:conf/eds/Weber86} who employed 
existential quantification to combine pieces of information that may contradict each other. 
Variable quantification plays a prominent role in complexity theory too since deciding the validity of quantified 
Boolean formulas (QBFs) is the canonical {\sf PSPACE}-complete problem; see, for example,~\cite{Papadimitriou94}. 
The validity of QBFs has also been used to characterize the polynomial hierarchy~\cite{Stockmeyer77}.
In contrast to AI where existential quantification has had a more dominant role than universal quantification, 
the role of these two forms of quantification has been more symmetrical in other areas of computer science
particularly in complexity theory.

The interpretation of existential variable quantification as a process of forgetting information
prompted a refinement in which literals (states of variables) are also existentially quantified~\shortcite{LangLM03}.
With this advance, one can now remove information that pertains to a literal from a formula. 
Existential literal quantification is more fine-grained than existential variable quantification as
the latter can be formulated in terms of the former. This has led to further applications which we review later. 

The dominance of existential quantification within AI is worth noting. We attribute this to the lack of
an intuitive enough interpretation of universal quantification, which is a gap that we aim to address in this work.
We started our study by an observation that some recent work on explainable AI~\cite{DarwicheH20} 
can be formulated in terms of universal quantification, particularly universal literal quantification which has 
not been discussed in the literature before this work.
This prompted us to formalize this notion and to elaborately investigate its connections to explainable AI.
Our investigation led us to interpret universal quantification as a {\em selection} process (in contrast to a forgetting process),
which gave rise to many implications. It also led us to some results on the efficient computation
of universal literal quantification (e.g., being tractable on CNFs) which has further implications on
the efficient computation of explainable AI queries.

In explainable AI, one is typically interested in reasoning about the behavior of classifiers which make 
decisions on instances. For example, one may wish to understand why a classifier made a particular 
decision. One may also wish to determine whether a decision is biased (i.e., would be different if we were to only change 
some protected features of the instance). Such classifiers can be represented using Boolean 
formulas, even in some cases where they have a numeric form that is learned from data.\footnote{One can capture the 
input-output behavior of machine learning classifiers with discrete features using Boolean formulas, either through an 
encoding or a compilation process. This has been shown
for Bayesian network classifiers, random forests and some types of neural networks. For some example
works along these lines, see~\shortcite{ChanD03,NarodytskaKRSW18,ShihCD19,IgnatievNM19a,IgnatievNM19b,shi2020tractable,kr/AudemardKM20,ChoiShihGoyankaDarwiche20}.}
In particular, one can use a Boolean formula to represent positive instances and use its negation to represent negative instances.
A major insight underlying our results is that many explainable AI queries correspond
to a process of selecting instances with particular properties, where the Boolean formula characterizing
such instances constitutes the answer to the explainable AI query. Moreover, the universal quantification of
literals (and variables) can be used to select such instances. We use this to formulate and generalize some of the
recently introduced notions in explainable AI. We also give examples of additional queries that have not been
treated earlier to emphasize the open ended applications that could be enabled by the new results.

Our formulation of universal literal quantification implies an alternate treatment of existential literal quantification 
compared to the existing literature (due to the duality between them). It also implies a new treatment of
variable quantification as this is subsumed by literal quantification. Our treatment is based on the
novel notion of boundary models. These are models of a Boolean formula that can become models of its negation
if we flip a single variable. In the context of classifiers, these correspond to instances
whose label may change if we flip a single feature. We also provide a wealth of results on boundary models,
which furthers our understanding of Boolean logic; for example, that the boundary models of a Boolean
formula characterize all its models and can be exponentially fewer in count. 

We start in Section~\ref{sec:prelim} with some preliminaries on Boolean logic, where we also introduce
the notion of boundary models. We then review variable quantification and some of its applications in Section~\ref{sec:vq}.
We study boundary models further in Section~\ref{sec:brules} where we develop a number of additional results.
Some of these results are used in our upcoming treatment while others are of a more general interest to the study of Boolean logic. 
We then formalize and study universal literal quantification in 
Section~\ref{sec:lq}, where we also review existential literal quantification and some of its applications. 
We then turn our attention in Section~\ref{sec:tractable} to the computation of literal quantification 
on various types of formulas and circuits, including CNFs, DNFs, Decision-DNNFs~\cite{jair/HuangD07} 
and SDDs~\cite{ijcai/Darwiche11}, which include OBDDs~\cite{tc/Bryant86} as a special case. 
Section~\ref{sec:XAI} constitutes a significant portion of the paper and is dedicated to the interpretation 
of universal quantification as a selection process and its applications to explainable AI. We finally close with some concluding
remarks in Section~\ref{sec:conclusion}.
Proofs of all results can be found in Appendixes~\ref{app:lemmas} and~\ref{app:proofs}.

\section{Boolean Logic Preliminaries}
\label{sec:prelim}

\begin{table}[t]
\begin{center}
\small
\color{\RED}
\begin{tabular}{l|l}
{\bf Notation} & {\bf Description}\\
\hline
$\ps$ & A finite set of Boolean variables \\
$\w$ & A truth assignment (world) which maps \(\ps\) to $\{0,1\}$ \\
$x, X$ & Positive literals of Boolean variable \(X\) \\
$\n(x), \neg X$ & Negative literals of Boolean variable \(X\) \\
$\l, \nl$ & A literal (positive or negative) and its negation \\
$\w[\l]$ & If \(\nl \in \w\), \(\w[\l]\) is the result of replacing \(\nl\) with \(\l\) in \(\w\); otherwise, \(\w[\l]=\w\) \\
$\w \models \varphi$ & World \(\w\) satisfies formula \(\varphi\) \\
$\varphi \models \psi$ & Formula $\varphi$ implies formula $\psi$ \\
$\MS(\varphi)$ & The set of models of formula $\varphi$\\
$\BMS(\varphi)$ & The set of boundary models of formula $\varphi$\\
$\RS(\varphi)$ & The set of b-rules for formula $\varphi$
\end{tabular}
\color{black}
\end{center}
\caption{Some of the key notation adopted in this paper. \label{tab:notation}}
\end{table}

We start with some notational conventions; see also Table~\ref{tab:notation}.
We assume a finite set of Boolean variables \(\ps\) where \(x\) and \(\n(x)\) denote the positive and negative literals of Boolean variable \(X\).
We may also use \(X\) and \(\neg X\) to denote these literals. 
We use \(\top\) and \(\bot\) to denote the Boolean constants true and false. 
A {\em world,} typically denoted by \(\w\), is a truth assignment (a mapping from \(\ps\) to $\{0, 1\}$), 
often represented as a set of literals that contains exactly one literal for each Boolean variable in \(\ps\). Alternatively, a world 
can be represented by a sequence of literals, sometimes using commas as separators.
When a world \(\w\) satisfies a Boolean formula \(\varphi\) we say it is a {\em model} of the formula and write \(\w \models \varphi\).
Otherwise, $\w$ is a model of $\neg \varphi$, aka a {\em counter model} of $\varphi$.
We use \(\MS(\varphi)\) to denote the models of formula \(\varphi\). 
Whenever \(\MS(\varphi) \subseteq \MS(\psi)\) holds, formula $\psi$ is said to be a {\em logical consequence} 
of formula $\varphi$ (or equivalently, $\varphi$ implies $\psi$).
Furthermore, when \(\MS(\varphi) = \MS(\psi)\) holds, $\varphi$ and $\psi$ are said to be {\em logically equivalent}.
For a literal \(\l\) of variable \(X\), we 
use \(\w[\l]\) to denote the world that results from replacing the literal of variable \(X\) in \(\w\) with literal \(\l\).
For example, if \(\w = x y \n(z)\), then \(\w[\n(x)] = \n(x) y \n(z)\) and \(\w[x] = x y \n(z)\).

The following novel definition is fundamental as it will play a critical role when defining the semantics of literal quantification.

\begin{definition}\label{def:bmodel}
A world \(\w\) is said to be an \hl{\(\l\)-boundary model} of Boolean formula \(\varphi\) iff \(\w\) contains literal \(\l\),
\(\w\) is a model of \(\varphi\) and \(\w[\nl]\) is a model of \(\neg \varphi\).
\end{definition}
\noindent That is, model \(\w\) of formula \(\varphi\) becomes a model of \(\neg \varphi\) once we flip literal \(\l\) in \(\w\) to \(\nl\).
Consider the formula \(\varphi = (x \then y)\wedge(y \then x)\).  World \(\w = \{x,y\}\) is an \(x\)-boundary model
for formula \(\varphi\) as it contains literal \(x\), is a model of \(\varphi\) but becomes a model of its negation \(\neg \varphi\) 
once we flip literal \(x\) to \(\n(x)\). World \(\w\) is also a \(y\)-boundary model for formula \(\varphi\). 
We will use \(\BMS(\varphi)\) to denote the set of all boundary models of formula \(\varphi\). 

Boundary models will be used to define the semantics of single-literal quantification. 
The next, novel notion will be used to define the semantics of multiple-literal quantification.
\begin{definition}\label{def:imodel}
Let \(\alpha\) be a set of literals. 
A world \(\w\) is said to be an \hl{\(\alpha\)-independent model} of formula \(\varphi\) iff 
\(\alpha \subseteq \w\) and \(\w\setminus\alpha \models \varphi\).
\end{definition}
\indent Consider a world \(\w\) and literals \(\alpha\) such that \(\alpha \subseteq \w\).
If world \(\w\) is an \(\alpha\)-independent model of formula \(\varphi\), 
we can flip any literals of \(\alpha\) in world \(\w\) while maintaining the world as a model of \(\varphi\). 
In this case, the model \(\w\) cannot be \(\l\)-boundary for any literal \(\l \in \alpha\). 
The converse is not true though. Flipping a single literal \(\l \in \alpha\) may maintain \(\w\) as a model of \(\varphi\) for all \(\l \in \alpha\),
yet flipping multiple literals in \(\alpha\) may not. Consider formula \(\varphi = (x \vee y)\wedge z\) and its model \(\w = \{x,y,z\}\).
This model is not \(x\)-boundary since \(\w[\n(x)]=\{\n(x),y,z\}\) is also a model of \(\varphi\). It is also not
\(y\)-boundary since \(\w[\n(y)]=\{x,\n(y),z\}\) is also a model of \(\varphi\). However,  $\w$
is not an \(\{x,y\}\)-independent model of $\varphi$ since \(\w[\n(x),\n(y)]=\{\n(x),\n(y),z\}\) is not a model of \(\varphi\). On the other
hand, world \(\{x,y,\n(z)\}\) is an \(\{x,y\}\)-independent model of \(\neg \varphi\) as we can flip literals
\(x\) and \(y\) in any manner while maintaining the world as a model for \(\neg \varphi\).

We next review a number of classical notions from Boolean logic.
A {\em term} is a set of literals over distinct variables and represents the conjunction of these literals.
A term is therefore consistent and the empty term represents \(\top\). We will sometimes denote a term or a world
such as $\{x, \n(y)\}$ using the tuple $x\n(y)$ or $(x, \n(y))$.
A {\em clause} is a set of literals over distinct variables and represents the disjunction of these literals.
A clause is therefore non-valid and the empty clause represents \(\bot\).
A {\em Disjunctive Normal Form (DNF)} is a set of terms representing the disjunction of these terms. 
A {\em Conjunctive Normal Form (CNF)} is a set of clauses representing the conjunction of the clauses.
A {\em Negation Normal Form (NNF)} is a formula which contains only the constants \(\top\), \(\bot\),
the connectives \(\wedge\), \(\vee\), \(\neg\) and where negations appear only next to constants or variables.
DNF and CNF are subsets of NNF.
An {\em NNF circuit} is a Boolean circuit satisfying the conditions of an NNF formula.
A formula/circuit is {\em monotone} iff it is equivalent to a formula in which literals $x$ and $\n(x)$ cannot both 
occur for any variable~\(X\). 

An {\em implicate} of a formula \(\varphi\) is a clause implied by \(\varphi\). A {\em prime implicate} of \(\varphi\)
is an implicate of \(\varphi\) that is not a superset of another implicate of \(\varphi\). A {\em implicant}
of \(\varphi\) is a term that implies \(\varphi\). A {\em prime implicant} of \(\varphi\) is an implicant of \(\varphi\)
that is not a superset of another implicant of \(\varphi\). The {\em resolution} of clauses \(x \vee \alpha\)
and \(\n(x) \vee \beta\) on variable $X$ leads to clause \(\alpha \vee \beta\) when \(\alpha\) and \(\beta\) share no complementary
literals. Adding clauses to a CNF using resolution does not change the models of the CNF. 
A CNF is {\em closed} under resolution on variable \(X\) iff the result of each resolution on \(X\) is in the CNF.
The {\em consensus} of terms \(x \wedge \alpha\) and \(\n(x) \wedge \beta\) on variable $X$ leads to term \(\alpha \wedge \beta\)
when \(\alpha\) and \(\beta\) share no complementary literals. 
Adding terms to a DNF using consensus does not change the models of the DNF.  
A DNF is {\em closed} under consensus on variable \(X\) iff the result of each consensus on \(X\) is in the DNF.

The occurrence of a variable in a Boolean formula is well defined, but the occurrence of a literal in a formula
can be ambiguous (e.g., whether the positive literal \(x\) occurs in \(\neg(\n(x) \vee y)\)). We will therefore refer
to literal occurrence only with respect to NNF since this is well defined. We conclude this 
section with three additional notions.
\begin{itemize}
\item[--] The {\em conditioning} of formula \(\varphi\) on positive literal \(x\), denoted \(\varphi \cd x\),
is obtained by replacing every occurrence of variable \(X\) in \(\varphi\) with \(\top\).
The conditioning of formula \(\varphi\) on negative literal \(\n(x)\), denoted \(\varphi \cd \n(x)\),
is obtained by replacing every occurrence of variable \(X\) in \(\varphi\) with \(\bot\).
Hence, variable \(X\) does not occur in \(\varphi \cd x\) or in \(\varphi \cd \n(x)\).

\item[--] A formula \(\varphi\) is {\em independent of variable} \(X\) iff \(\varphi\) is equivalent to some formula 
in which variable \(X\) does not occur. \textcolor{\BLUE}{For instance, $\varphi = (x \wedge y) \vee (\n(x) \wedge y)$
is independent of variable $X$ since $\varphi$ is equivalent to $y$ and the latter does not mention variable \(X\).}

\item[--] A formula \(\varphi\) is {\em independent of literal} \(\l\) iff \(\varphi\) is equivalent to some NNF in 
which literal \(\l\) does not occur~\cite{LangLM03}. \textcolor{\BLUE}{Thus, $\varphi = (x \wedge y) \vee (\n(x) \wedge y)$
is independent of literal $\n(x)$ since $\varphi$ is equivalent to NNF $y$ which does not mention literal \(\n(x)\).}
\end{itemize}
It follows that \(\varphi\) is independent of variable \(X\) iff \(\varphi\) is independent of literals \(x\) and \(\n(x)\).

\section{Variable Quantification}
\label{sec:vq}

A variable $X$ can be quantified either existentially or universally from a Boolean formula, leading to another formula
that does not depend on variable \(X\). The result of existentially quantifying variable \(X\) from formula \(\varphi\)
is denoted by \(\exists X \cdot \varphi\) and the result of universally quantifying it is denoted by \(\forall X \cdot \varphi\).
Existential quantification received more attention in the AI literature due to its more intuitive semantics which 
facilitated applications. According to this semantics, the existential quantification of variable \(X\) from
formula \(\varphi\) removes information from \(\varphi\) but it only removes information that depends 
on variable \(X\). Hence, every logical consequence of formula \(\varphi\) is also a logical consequence of its quantification 
\(\exists X \cdot \varphi\) as long as that consequence does not depend on variable \(X\). We next provide the formal definition of
existential variable quantification and its semantics.

\begin{definition}\label{def:evq}
\textcolor{\BLUE}{The existential quantification of variable \(X\) from Boolean formula \(\varphi\), denoted $\exists X \cdot \varphi$,
is defined as any Boolean formula that is logically equivalent to \((\varphi \cd x) \vee (\varphi \cd \n(x))\).}
\end{definition}

\begin{proposition}\label{prop:evq}
\textcolor{\RED}{\(\exists X \cdot \varphi\) is a logically strongest formula that
is both independent of variable \(X\) and implied by formula \(\varphi\). 
\(\exists X \cdot \varphi\) is unique up to logical equivalence.}
\end{proposition}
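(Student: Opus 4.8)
The plan is to argue everything through the canonical representative: let $\psi^{\star}$ denote the formula $(\varphi \cd x) \vee (\varphi \cd \n(x))$, so that by Definition~\ref{def:evq} the formula $\exists X \cdot \varphi$ is, by stipulation, any formula logically equivalent to $\psi^{\star}$. It then suffices to establish three things about $\psi^{\star}$: (i) $\psi^{\star}$ is independent of variable $X$; (ii) $\varphi \models \psi^{\star}$; and (iii) $\psi^{\star} \models \psi$ for every formula $\psi$ that is independent of $X$ and implied by $\varphi$. Since (i)--(iii) are invariant under logical equivalence, they transfer to $\exists X \cdot \varphi$, and together they say exactly that $\exists X \cdot \varphi$ is a logically strongest formula that is both independent of $X$ and implied by $\varphi$. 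Uniqueness up to logical equivalence will be a one-line extremality argument at the end.

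The workhorse is the semantic reading of conditioning: for any world $\w$ and any literal $\l$ of $X$, one has $\w \models \varphi \cd \l$ iff $\w[\l] \models \varphi$. This holds because $X$ does not occur in $\varphi \cd \l$, so the truth of $\varphi \cd \l$ at $\w$ does not depend on the literal of $X$ in $\w$, while substituting $\top$ (resp.\ $\bot$) for $X$ in $\varphi$ evaluates $\varphi$ under the assignment agreeing with $\w$ off $X$ and setting $X$ to $1$ (resp.\ $0$), i.e.\ under $\w[\l]$. I would either invoke the corresponding preliminary lemma from the appendix or include this one-paragraph verification inline. Property (i) is then immediate, since neither $\varphi \cd x$ nor $\varphi \cd \n(x)$ mentions $X$, hence neither does $\psi^{\star}$. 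For (ii), take $\w \models \varphi$; the world $\w$ contains exactly one literal $\l$ of $X$, and since $\l \in \w$ we have $\w[\l] = \w \models \varphi$, so the conditioning fact gives $\w \models \varphi \cd \l$ and therefore $\w \models \psi^{\star}$.

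For the extremality claim (iii) I would first record the one direction of the characterization of variable-independence that is actually needed: if $\psi$ is independent of $X$, say $\psi$ is equivalent to some $\psi'$ in which $X$ does not occur, then for every world $\w$ and literal $\l$ of $X$ we have $\w \models \psi$ iff $\w[\l] \models \psi$, because $\w$ and $\w[\l]$ differ at most on $X$, which $\psi'$ ignores. Now assume $\psi$ is independent of $X$ and $\varphi \models \psi$, and let $\w \models \psi^{\star}$. Then $\w \models \varphi \cd x$ or $\w \models \varphi \cd \n(x)$; in the first case the conditioning fact gives $\w[x] \models \varphi$, hence $\w[x] \models \psi$, hence $\w \models \psi$ by the flip-invariance just noted, and the second case is symmetric. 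Thus $\psi^{\star} \models \psi$, establishing (iii).

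Finally, uniqueness: if $\psi_1$ and $\psi_2$ are each a logically strongest formula that is independent of $X$ and implied by $\varphi$, then applying the extremality of $\psi_1$ with $\psi_2$ as witness gives $\psi_1 \models \psi_2$, and symmetrically $\psi_2 \models \psi_1$, so $\psi_1$ and $\psi_2$ are logically equivalent. The only places that call for care are the bookkeeping in the conditioning fact and the ``independent of $X$ implies flip-invariant models'' step; I do not expect a genuine obstacle, as the proposition is essentially the semantic unpacking of Definition~\ref{def:evq}.
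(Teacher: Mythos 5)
Your proof is correct. The route differs from the paper's in one structural respect: the paper proves extremality by contradiction, first replacing $\psi$ (without loss of generality) by a single clause via a CNF reduction, then exhibiting a model $\w$ of $\exists X \cdot \varphi$ that falsifies the clause and deriving a contradiction from the existence of a model $\w'$ of $\varphi$ differing from $\w$ only on $X$. You instead argue directly: you isolate the ``conditioning fact'' ($\w \models \varphi \cd \l$ iff $\w[\l] \models \varphi$) and the flip-invariance of the models of an $X$-independent formula as explicit lemmas, and then verify $\psi^{\star} \models \psi$ model-by-model without any normal-form reduction. Both proofs rest on the same underlying idea --- that $\w$ and $\w[\l]$ agree on everything a formula independent of $X$ can see --- but your version makes that idea explicit and reusable, whereas the paper's CNF detour buries it inside the ``$\l$'s variable is different from $X$'' step of a contradiction. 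Your (ii), which the paper leaves implicit in its description of $M(\exists X \cdot \varphi)$, is also spelled out, and the uniqueness argument is the same one-line double-extremality argument. The one thing to double-check in a final write-up is that you are consistent about the semantic vs.\ syntactic reading of ``independent of $X$'': the proposition uses the semantic notion, so the hypothesis on $\psi$ is only that $\psi$ is \emph{equivalent} to some $\psi'$ not mentioning $X$; you handle this correctly, while the paper's statement ``variable $X$ does not occur in $\psi$'' is a harmless wlog that it does not flag.
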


Consider the formula \(\varphi = (x \then y) \wedge (y \then z)\). 
Existentially quantifying variable \(Y\) yields \(\exists Y \cdot \varphi = (x \then z)\) which removes only
the information that \(\varphi\) has about variable \(Y\).
This is why existential variable quantification is typically referred to as ``forgetting'' in the AI literature,
following~\cite{LinReiter94} who used existential quantification in a first-order setting to forget facts and relations;
see also~\cite{ki/EiterK19}. 
Existential variable quantification was employed earlier in~\cite{DBLP:conf/eds/Weber86} to
maintain consistency between two conflicting formulas (e.g., forgetting some
of our current beliefs when they conflict with new observations). 
A version of Proposition~\ref{prop:evq} appeared in~\cite{DBLP:conf/ijcai/KatsunoM89} who showed that existential 
variable quantification (referred to as {\em elimination}) is the only operator satisfying the properties stated in this proposition. 
\textcolor{\BLUE}{Existential variable quantification corresponds to a projection operation since the models
of  \(\exists X \cdot \varphi\) are precisely the models of $\varphi$ projected onto $\ps \setminus X$.
This operation has a number of applications in AI. For example, it is a key mechanism
for implementing elementary tasks (progression, regression) when reasoning about actions via
transition formulas so it has been considered for decades in planning within nondeterministic domains~\shortcite<e.g.,>{DBLP:conf/aips/CimattiRT98}.}

We next define the dual notion of universal variable quantification and its semantics.

\begin{definition}\label{def:uvq}
\textcolor{\BLUE}{The universal quantification of variable \(X\) from Boolean formula \(\varphi\), denoted \(\forall X \cdot \varphi\),
is defined as any Boolean formula that is logically equivalent to \((\varphi \cd x) \wedge (\varphi \cd \n(x))\).}
\end{definition}

\begin{proposition}\label{prop:uvq}
\textcolor{\RED}{\(\forall X \cdot \varphi\) is a logically weakest formula that
is both independent of variable \(X\) and implies formula \(\varphi\).
\(\forall X \cdot \varphi\) is unique up to logical equivalence.}
\end{proposition}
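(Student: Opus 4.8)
The plan is to derive Proposition~\ref{prop:uvq} from Proposition~\ref{prop:evq} via the duality $\forall X \cdot \varphi \equiv \neg(\exists X \cdot \neg\varphi)$. First I would establish this identity. Conditioning commutes with negation: $(\neg\varphi)\cd x$ and $(\neg\varphi)\cd\n(x)$ are obtained from $\neg\varphi$ by substituting $\top$ or $\bot$ for $X$, which is the same as negating $\varphi\cd x$ or $\varphi\cd\n(x)$. Hence by Definition~\ref{def:evq}, $\exists X\cdot\neg\varphi \equiv (\neg\varphi\cd x)\vee(\neg\varphi\cd\n(x)) \equiv \neg(\varphi\cd x)\vee\neg(\varphi\cd\n(x))$, whose negation is $(\varphi\cd x)\wedge(\varphi\cd\n(x))$, i.e., $\forall X\cdot\varphi$ by Definition~\ref{def:uvq}.

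Next I would translate the three assertions of Proposition~\ref{prop:evq}, applied to $\neg\varphi$, through this negation, using three elementary facts: (i) a formula is independent of variable $X$ iff its negation is; (ii) $\neg\varphi \models \alpha$ iff $\neg\alpha \models \varphi$; and (iii) $\alpha\models\beta$ iff $\neg\beta\models\neg\alpha$, so that ``logically strongest'' turns into ``logically weakest'' under negation. By Proposition~\ref{prop:evq}, $\gamma \defas \exists X\cdot\neg\varphi$ is a logically strongest formula that is both independent of $X$ and implied by $\neg\varphi$; then fact (i) gives that $\neg\gamma$ is independent of $X$, fact (ii) gives that $\neg\gamma$ implies $\varphi$, and for any $\delta$ that is independent of $X$ and implies $\varphi$, its negation $\neg\delta$ is independent of $X$ and implied by $\neg\varphi$, so $\neg\delta\models\gamma$ and hence $\neg\gamma\models\delta$ by (iii). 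Therefore $\neg\gamma = \forall X\cdot\varphi$ is a logically weakest formula that is both independent of $X$ and implies $\varphi$. Uniqueness up to logical equivalence is immediate: any two logically weakest formulas with a fixed property each imply the other, hence are equivalent (this also transfers directly from the uniqueness clause of Proposition~\ref{prop:evq}).

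As a self-contained alternative I would argue directly. Independence of $X$ is immediate from Definition~\ref{def:uvq} since neither $\varphi\cd x$ nor $\varphi\cd\n(x)$ mentions $X$. For $\forall X\cdot\varphi\models\varphi$: if $\w \models (\varphi\cd x)\wedge(\varphi\cd\n(x))$ and $\l$ is the literal of $X$ in $\w$, then $\w\models\varphi\cd\l$, and since $\l\in\w$ this gives $\w\models\varphi$. For the weakest-ness: let $\psi$ be independent of $X$ with $\psi\models\varphi$, and let $\w\models\psi$; since $\psi$ is equivalent to a formula not mentioning $X$, both $\w[x]$ and $\w[\n(x)]$ satisfy $\psi$, hence $\varphi$, and since $\varphi\cd x$ does not mention $X$ while $\w[x]$ contains $x$, we get $\w\models\varphi\cd x$ and symmetrically $\w\models\varphi\cd\n(x)$, so $\w\models\forall X\cdot\varphi$; thus $\psi\models\forall X\cdot\varphi$. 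The main obstacle is bookkeeping rather than conceptual: one must invoke cleanly that conditioning commutes with negation and removes variable $X$, that $\w\models\varphi \iff \w\models\varphi\cd\l$ when $\l\in\w$, and that a formula independent of $X$ is unaffected by flipping $X$ in a world. Everything else is a mechanical transfer across the $\exists/\forall$ duality.
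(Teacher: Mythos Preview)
Your approach is exactly the paper's: the paper's proof reads in full ``Follows directly from Propositions~\ref{prop:evq} and~\ref{prop:evquvq-duality},'' and you spell out precisely that derivation, including re-proving the duality identity (which is the content of Proposition~\ref{prop:evquvq-duality}). Your self-contained direct argument in the second paragraph is also correct and would stand on its own.

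There is, however, a direction slip in your duality translation. From ``$\gamma=\exists X\cdot\neg\varphi$ is the \emph{strongest} formula independent of $X$ and implied by $\neg\varphi$'' you should conclude $\gamma\models\neg\delta$ (the strongest member of a class implies every other member), and hence by contraposition $\delta\models\neg\gamma$, which is what ``$\neg\gamma$ is weakest'' means. You wrote $\neg\delta\models\gamma$ and then $\neg\gamma\models\delta$; that is the reverse implication and would make $\neg\gamma$ the \emph{strongest}, not the weakest. The fix is the one-line swap just described; the rest of the argument is sound.
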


The following duality relates existential and universal variable quantification.
\begin{proposition}\label{prop:evquvq-duality}
\(\exists X \cdot \varphi = \neg(\forall X \cdot \neg \varphi)\) 
and \(\forall X \cdot \varphi = \neg(\exists X \cdot \neg \varphi)\).
\end{proposition}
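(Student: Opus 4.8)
The plan is to reduce both identities to the defining equivalences of Definitions~\ref{def:evq} and~\ref{def:uvq} together with De Morgan's laws, after first recording that conditioning commutes with negation. Since $\exists X \cdot$ and $\forall X \cdot$ are defined only up to logical equivalence, it suffices to exhibit, for each side of each claimed identity, a single formula to which both sides are logically equivalent; all reasoning below is therefore phrased in terms of logical equivalence.

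First I would establish the auxiliary fact that for every literal $\l$ of a variable, $(\neg\varphi)\cd\l$ is logically equivalent to $\neg(\varphi\cd\l)$ — in fact it is syntactically the negation of $\varphi\cd\l$. This is immediate from the definition of conditioning: replacing every occurrence of variable $X$ by a Boolean constant is a substitution that passes through all connectives, in particular through the leading $\neg$, so conditioning $\neg\varphi$ on $\l$ produces $\neg(\varphi\cd\l)$ verbatim.

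Next, applying Definition~\ref{def:uvq} to $\neg\varphi$ gives $\forall X \cdot \neg\varphi \equiv ((\neg\varphi)\cd x)\wedge((\neg\varphi)\cd\n(x))$, which by the auxiliary fact equals $\neg(\varphi\cd x)\wedge\neg(\varphi\cd\n(x))$. Negating and applying De Morgan yields $\neg(\forall X\cdot\neg\varphi)\equiv(\varphi\cd x)\vee(\varphi\cd\n(x))$, and the right-hand side is logically equivalent to $\exists X\cdot\varphi$ by Definition~\ref{def:evq}; this is the first identity. The second identity follows by the mirror-image argument (interchange $\wedge$ and $\vee$, and Definitions~\ref{def:evq} and~\ref{def:uvq}), or equivalently by substituting $\neg\varphi$ for $\varphi$ in the first identity, negating both sides, and using double-negation elimination. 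As a sanity check one can also argue semantically via Propositions~\ref{prop:evq} and~\ref{prop:uvq}: negation turns ``a logically weakest $X$-independent formula implying $\neg\varphi$'' into ``a logically strongest $X$-independent formula implied by $\varphi$'', which is exactly the characterization of $\exists X\cdot\varphi$, and symmetrically for the other direction.

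I do not anticipate a genuine obstacle; the proof is routine once the negation-commutes-with-conditioning observation is in place. The only points meriting a word of care are (i) that the identities are asserted up to logical equivalence, consistent with how the quantifiers were defined, so the write-up should stay at the level of equivalence rather than syntactic identity, and (ii) the harmless degenerate case where $X$ does not occur in $\varphi$, in which $\varphi\cd x$ and $\varphi\cd\n(x)$ both coincide with $\varphi$ and every formula above collapses accordingly, with the argument unchanged.
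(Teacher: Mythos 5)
Your proof is correct and takes essentially the same approach as the paper's, which proves the duality directly from Definitions~\ref{def:evq} and~\ref{def:uvq} via De Morgan's laws. You have merely spelled out the details the paper leaves implicit, most notably the (correct) observation that conditioning commutes with negation since it is a substitution.
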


George Boole used universal variable quantification in Chapter VII of his book~\cite{Boole1854},
which he also referred to as {\em elimination} (the chapter was titled ``On Elimination'').
Boole employed the property that \(\varphi\) is inconsistent only if \(\forall X \cdot \varphi\) is inconsistent 
to devise an inference rule which allowed him, for example, to infer \(y=y \wedge z\) 
from \(y = x \wedge z\).\footnote{Boole operated in an algebraic setting where the equivalence between \(y\) and \(x \wedge z\) 
would be written as \(y = xz\) and \(y - xz = 0\). He viewed such an expression as a function of \(x\),
\(f(x)=y - xz = 0\), and utilized universal quantification to write: \(f(x)=0\) only if \(f(0)f(1)=0\). Applying this to
the previous expression gives \(f(0)f(1)=(y-0\times z)(y-1\times z)=0\). This simplifies to \(y(y-z)=0\) 
and can be expressed as \(y=yz\), which exemplifies how Boole used universal quantification to perform deduction.}
He also observed that the order in which we quantify multiple variables does not matter.

As mentioned earlier, existential quantification received more attention than universal quantification in the AI literature.
This is largely due to its semantics as a forgetting operator and the central role that forgetting plays in 
managing inconsistent information and other areas such as planning in nondeterministic domains.
For example, a fairly general framework for reasoning with inconsistent information 
was proposed in~\cite{LangMarquis10}. The framework was based
on the notion of {\em recoveries,} which are sets of variables whose
forgetting enables one to restore consistency. Several criteria for defining 
preferred recoveries were proposed, depending on whether the focus is laid on
the relative relevance of variables or the relative entrenchment of certain information (or both). 
Forgetting has also been employed to resolve conflicts in logic programs including 
classical logic programs with negation as failure~\shortcite{Zhang-etal05,ZhangFoo06}.
Notions and techniques for forgetting in logic programs were also adapted to forgetting concepts in ontologies~\shortcite{Eiteretal06}.
Forgetting has also been widely used for defining update operators which incorporate
the effect of an action (expressed as a ``change formula'') into a base formula that represents current beliefs. 
Many update operators use the ``forget-then-conjoin'' scheme where one forgets every variable that the change formula depends on
and then conjoins the resulting base formula with the change formula~\shortcite<e.g.,>{Winslett90,Dohertyetal98,Dohertyetal00,Herzig96,HerzigRifi98,HerzigRifi99}.
Just as existential quantification can be understood as a forgetting operator, we will later show that
universal quantification can be understood as a {\em selection} operator. We will also argue that the notion of selection
is central to explainable AI as forgetting is central to belief revision and update.

\color{\RED}
Variable quantification in Boolean logic can be considered from two distinct points of view. According to one view, a
quantifier is an {\em elimination operator} which transforms one Boolean formula into another. According to
the second view, quantifiers are {\em connectives} which lead to a more general and succinct logical representation, 
known as {\em Quantified Boolean Formulas (QBFs)}. 
While quantifiers have been mostly used as elimination operators in AI, they have been
mostly used as connectives in complexity theory~\cite<e.g.,>{Papadimitriou94}.
\color{black}
Quantification plays a key role in this context as 
the validity problem for QBFs is the canonical {\sf PSPACE}-complete
problem: there is a polynomial-space algorithm for deciding the validity of a QBF, and every decision problem 
which has a polynomial-space algorithm can be reduced efficiently into the validity problem for QBFs.
QBFs are particularly important for characterizing the polynomial hierarchy~\cite{Stockmeyer77},
where the notion of a prenex and closed QBF plays a central role. 
Consider a standard Boolean formula \(\varphi\) and let $\X_1, \ldots, \X_n$ be a partition of its
variables. A (prenex and closed) QBF has the form $Q_1 \X_1, \ldots, Q_n \X_n \cdot \varphi$ where 
$Q_1,\ldots,Q_n$ are alternating quantifiers in \(\{\forall, \exists\}$. 
This formula is said to be {\em prenex} because $\varphi$ does not contain any quantifiers.
Moreover, it is said to be {\em closed} because every variable of $\varphi$ belongs to some $\X_i$.
Since every variable is quantified in a closed QBF, the QBF is either valid (equivalent to true)
or inconsistent (equivalent to false).\footnote{For instance, the QBF $\forall X \cdot (\exists Y \cdot (X \Leftrightarrow Y))$ 
is valid: whatever truth value is given to $X$, one can find a truth value for $Y$ that satisfies 
$X \Leftrightarrow Y$ (we can give $Y$ the same truth value given to $X$). 
Contrastingly, $\exists X \cdot (\forall Y \cdot (X \Leftrightarrow Y))$ is inconsistent: whatever truth value we give to $X$, 
there is a truth value for $Y$ that falsifies $X \Leftrightarrow Y$ (we can give \(Y\) the value not given to \(X\)). 
}
A standard assumption of complexity theory is that the polynomial hierarchy does not collapse.
This is equivalent to saying that each addition of a block of quantifiers makes the validity problem computationally harder
since the validity problem for prenex and closed QBFs is $\Sigma_n^p$-complete if \(Q_1 = \exists\) 
and $\Pi_n^p$-complete if \(Q_1 = \forall\). 

While QBFs can be harder to decide compared to standard Boolean formulas, 
they do allow for exponentially more succinct encodings of problems in domains such as
planning, non-monotonic reasoning, formal verification and the synthesis of computing systems~\shortcite<e.g.,>{DBLP:series/faia/GiunchigliaMN09,DBLP:conf/ictai/ShuklaBPS19}. 
This explains the extensive efforts dedicated to developing QBF solvers in the past few decades (see \url{http://www.qbflib.org}).
\color{\RED}
QBFs with free variables are also a valuable knowledge representation language for Boolean logic.
As such, some specific automated reasoning techniques have been designed for dealing with such 
formulas~\shortcite<e.g.,>{DBLP:conf/cp/KlieberJMC13,DBLP:journals/ai/FargierM14}.
\color{black}

We will next present our treatment of literal quantification in Boolean logic which subsumes variable quantification. 
The results we shall present on literal quantification immediately translate into results on variable quantification
as we can quantify a variable by quantifying its two literals.
Our treatment of literal quantification is based on the novel notion of boundary models (Definition~\ref{def:bmodel}), 
which we investigate in the next section before using it to define the semantics of literal quantification in later sections.

\section{Boundary Models and Rules}
\label{sec:brules}

According to Definition~\ref{def:bmodel}, an \(\l\)-boundary model of formula \(\varphi\) becomes a model
for its negation \(\neg \varphi\) once we flip its literal \(\l\). As such, a model \(\w\) of \(\varphi\) can be boundary
with respect to multiple literals \(\l \in \w\). We will next introduce the notion of a {\em boundary rule} to
describe both a model and a particular literal that it is boundary on. We will then show that boundary rules
encode certain knowledge that a formula \(\varphi\) has about literals. In fact, we will show that such rules characterize all 
models of \(\varphi\) and hence they characterize its logical content.  
In later sections, we will show that quantifying a literal \(\l\) from a formula~\(\varphi\) erases
the knowledge that formula \(\varphi\) has about literal \(\l\), either by strengthening the formula (universal quantification) or by
weakening it (existential quantification). 

\begin{definition}\label{def:brule}
Let \(\ps = \{X_1, \ldots, X_n\}\) be the set of all Boolean variables. 
A \hl{boundary rule}  has the form \(\Rule({\l_1,\ldots,\l_{i-1},\l_{i+1},\ldots,\l_n},\l_i)\)
where \(\l_i\) is a literal for variable \(X_i\).
\end{definition}
\noindent Boundary rules will be referred to as {\em b-rules} for short. 
We will say that a b-rule {\em infers} the literal appearing in its consequent (\(\l_i\)). We will also say that a b-rule
{\em uses} the literals appearing in its antecedent (\(\l_1,\ldots,\l_{i-1},\l_{i+1},\ldots,\l_n\)). The set of legitimate b-rules depends on the set of
Boolean variables \(\ps\). In the following examples, and unless stated otherwise, we will assume that \(\ps\) is the set of Boolean
variables in the formulas under consideration.

\tikzstyle{nod}= [circle, draw,inner sep=0pt, minimum size=0.5cm]

\begin{figure}[tb]
\begin{center}

\scalebox{0.5}{
\begin{tikzpicture}
[every label/.append style={font=\Large}]
\pgfsetxvec{\pgfpoint{1.cm}{0.0cm}}
\pgfsetyvec{\pgfpoint{0.0cm}{1.0cm}}
\node[nod] at (0,0) [label=below:$\n(x)\n(y)\n(z)$] (000) {};
\node[nod] at (5,0) [label=below:$x\n(y)\n(z)$] (100) {};
\node[nod] at (0,5) [label=above:$\n(x)\n(y)z$] (001) {};
\node[nod] at (5,5) [label=above:$x\n(y)z$] (101) {};
\node[nod] at (2,2) [label=below:$\n(x)y\n(z)$] (010) {};
\node[nod] at (7,2) [label=below:$xy\n(z)$] (110) {};
\node[nod] at (2,7) [label=above:$\n(x)yz$] (011) {};
\node[nod] at (7,7) [label=above:$xyz$] (111) {};
\path (001)
edge (101)
edge (011)
edge (000)
(100)
edge (000)
edge (110)
edge (101)
(111)
edge (110)
edge (101)
edge (011);
\path (010)
edge (110)
edge (000)
edge (011);
\end{tikzpicture}
}
\qquad\qquad
\scalebox{0.5}{
\begin{tikzpicture}
[every label/.append style={font=\Large}]
\pgfsetxvec{\pgfpoint{1.cm}{0.0cm}}
\pgfsetyvec{\pgfpoint{0.0cm}{1.0cm}}
\node[nod,color=red,fill=red] at (0,0) [label=below:$\n(x)\n(y)\n(z)$] (000) {};
\node[nod,color=red,fill=red] at (5,0) [label=below:$x\n(y)\n(z)$] (100) {};
\node[nod,color=red,fill=red] at (0,5) [label=above:$\n(x)\n(y)z$] (001) {};
\node[nod,color=black,fill=black] at (5,5) [label=above:$x\n(y)z$] (101) {};
\node[nod,color=red,fill=red] at (2,2) [label=below:$\n(x)y\n(z)$] (010) {};
\node[nod,color=black,fill=black] at (7,2) [label=below:$xy\n(z)$] (110) {};
\node[nod,color=black,fill=black] at (2,7) [label=above:$\n(x)yz$] (011) {};
\node[nod,color=black,fill=black] at (7,7) [label=above:$xyz$] (111) {};
\path (001)
edge (000)
(100)
edge (000)
(111)
edge (011)
edge (101)
edge (110);
\path[->,color=gray, line width=3pt] (101)
edge (001)
edge (100)
(110)
edge (010)
edge (100)
(011)
edge (001)
edge (010);
\path (010)
edge (000);
\end{tikzpicture}
}
\end{center}
\caption{Left: Visualizing the worlds over variables \(X, Y, Z\). Two worlds are connected by an edge iff they disagree on a single variable.
Right: Visualizing the models and b-rules of formula $\varphi = (x \vee y) \wedge (x \vee z) \wedge (y \vee z).$
Black nodes are models of \(\varphi\) and red nodes are models of \(\neg \varphi\). Each highlighted edge corresponds
to a b-rule for \(\varphi\).
\label{fig:cube}}
\end{figure}
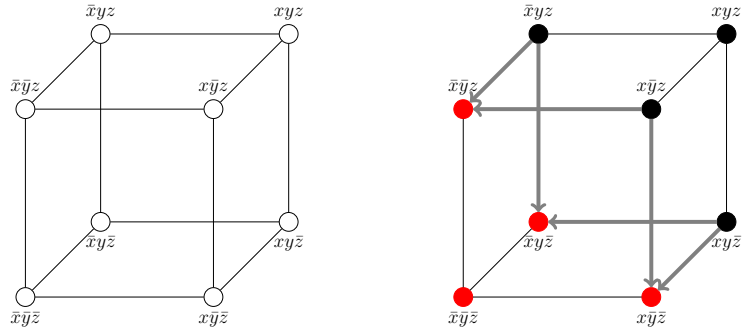

\begin{definition}\label{def:brule}
We say formula \(\varphi\) has b-rule \(\Rule(\alpha,\l)\) iff \(\alpha,\l\) is an \(\l\)-boundary model of \(\varphi\).
\end{definition}

\noindent This definition establishes b-rules as descriptors of boundary models. 
We will use \(\RS(\varphi)\) to denote the set of b-rules for formula \(\varphi\). 
Recall Definition~\ref{def:bmodel} which says that \(\alpha,\l\) is an \(\l\)-boundary model of \(\varphi\) 
precisely when \(\alpha,\l \models \varphi\) and \(\alpha,\nl \models \neg \varphi\).
Consider now the formula $\varphi = (x \vee y) \wedge (x \vee z) \wedge (y \vee z)$
which has four models \(\MS(\varphi)=\{\n(x)yz, x\n(y)z, xy\n(z), xyz\}$ and six b-rules $\RS(\varphi) =
\{\Rule({\n(x)z},y),$  
$\Rule({\n(y)z},x),$  
$\Rule({\n(x)y},z),$ 
$\Rule({x\n(y)},z),$
$\Rule({y\n(z)},x),$ 
$\Rule({x\n(z)},y)\}$.
Figure~\ref{fig:cube} (left) visualizes all eight worlds over variables \(X,Y,Z\) using a hypercube. 
Figure~\ref{fig:cube} (right) visualizes the models of formula \(\varphi\)
and its b-rules. Consider model \(\w=\n(x)yz\). This is a \(y\)-boundary model
and a \(z\)-boundary model, so it is described by two b-rules: 
\(\Rule({\n(x)z},y)\) and $\Rule({\n(x)y},z).$ The first b-rule tells us that \(\w\) will become
a model of \(\neg \varphi\) if we flip literal \(y\) (\(\n(x)\n(y)z \models \neg \varphi\)). 
The second b-rule tells us that \(\w\) will become a model of \(\neg \varphi\) if we flip literal \(z\) (\(\n(x)y\n(z)\models \neg \varphi\)).
Since \(\Rule({yz},{\n(x)})\) is not a b-rule of formula \(\varphi\) we know that flipping literal \(\n(x)\) will maintain \(\w\) 
as a model (\(xyz \models \varphi\)). Interestingly, \(xyz\) is not a boundary model of formula \(\varphi\) yet we 
were able to conclude that it is a model of \(\varphi\) using only the b-rules of \(\varphi\).
The next theorem provides a stronger result: one can compute all models of a consistent formula using its b-rules.
That is, the b-rules of a consistent formula characterize its logical content.

\begin{theorem}\label{theo:know}
Two consistent formulas \(\varphi_1\) and \(\varphi_2\) are equivalent iff they have the same set of b-rules:
\(\MS(\varphi_1)=\MS(\varphi_2)\) iff \(\RS(\varphi_1)=\RS(\varphi_2)\).
\end{theorem}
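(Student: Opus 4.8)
The forward direction ($\Rightarrow$) is the routine one: if $\MS(\varphi_1)=\MS(\varphi_2)$ then $\MS(\neg\varphi_1)=\MS(\neg\varphi_2)$ as well, and whether a world $\w$ is an $\l$-boundary model of a formula depends only on which worlds are models of the formula and which are models of its negation (via the condition $\w\models\varphi$ and $\w[\nl]\models\neg\varphi$). Hence the two formulas have literally the same boundary models, so $\RS(\varphi_1)=\RS(\varphi_2)$. The consistency hypothesis is not even needed here.

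The substantive direction is $\RS(\varphi_1)=\RS(\varphi_2)\Rightarrow\MS(\varphi_1)=\MS(\varphi_2)$, and here is where I would put the work. The natural strategy is to show that, for a consistent formula $\varphi$, the set $\RS(\varphi)$ determines $\MS(\varphi)$ outright — i.e.\ to exhibit a reconstruction of $\MS(\varphi)$ from $\RS(\varphi)$. Think of the hypercube of all $2^n$ worlds with edges between worlds differing in one variable (Figure~\ref{fig:cube}). A b-rule $\Rule(\alpha,\l)$ is exactly an edge with one endpoint a model of $\varphi$ (the $\l$-side, $\alpha\cup\{\l\}$) and the other a model of $\neg\varphi$ (the $\nl$-side). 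So $\RS(\varphi)$ records precisely the ``cut'' edges — those crossing between $\MS(\varphi)$ and $\MS(\neg\varphi)$ — together with an orientation telling us which side is the model side. Since $\varphi$ is consistent, $\MS(\varphi)\neq\emptyset$; and since the hypercube is connected, if $\MS(\varphi)$ were a proper nonempty subset there would be at least one cut edge, hence at least one b-rule. More usefully: starting from any world $\w$, I can decide membership $\w\in\MS(\varphi)$ by walking a path in the hypercube to a reference world and counting how many times b-rules (oriented cut edges) are crossed along the way — each crossing flips the ``in $\MS(\varphi)$ vs.\ in $\MS(\neg\varphi)$'' status. The orientation built into the b-rule notation makes this count unambiguous, so $\w\in\MS(\varphi)$ is a function of $\RS(\varphi)$ alone (given the consistency of $\varphi$ to anchor the parity, e.g.\ by locating one model). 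Therefore $\RS(\varphi_1)=\RS(\varphi_2)$ forces $\MS(\varphi_1)=\MS(\varphi_2)$.

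Concretely I would argue: let $\w,\w'$ be any two worlds and let $\w=\w_0,\w_1,\dots,\w_k=\w'$ be a path in the hypercube (consecutive worlds differ in exactly one variable). Claim: the number of indices $j$ for which the edge $\{\w_j,\w_{j+1}\}$ is a (cut) edge witnessed by a b-rule of $\varphi$ has the same parity as the indicator of ``$\w$ and $\w'$ lie on opposite sides'' (one in $\MS(\varphi)$, the other in $\MS(\neg\varphi)$). This is immediate by induction on $k$, using Definition~\ref{def:brule}: the edge $\{\w_j,\w_{j+1}\}$ corresponds to a b-rule of $\varphi$ iff $\w_j$ and $\w_{j+1}$ are on opposite sides. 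Now fix a model $\w^\star$ of $\varphi$ (exists by consistency); whether such a $\w^\star$ can be identified from $\RS(\varphi)$ is the one point needing a little care — but any edge of the hypercube either is not a cut edge (both endpoints on the same side) or is a cut edge oriented by its b-rule toward the model side, and since $\varphi$ is consistent and $\ps$ is nonempty (or else $\varphi\in\{\top,\bot\}$, a trivial case handled separately) some model is reachable and recognizable; then for arbitrary $\w$, $\w\in\MS(\varphi)$ iff the path-count from $\w^\star$ to $\w$ is even. Since this characterization references only $\RS(\varphi)$, formulas with equal b-rule sets have equal model sets.

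**Main obstacle.** The delicate step is pinning down the parity anchor purely from $\RS(\varphi)$ — i.e.\ certifying at least one specific world as a model (not a countermodel) using only the b-rule set, so that the whole parity argument is grounded. The orientation encoded in the $\Rule(\alpha,\l)$ notation (antecedent is the model side) is exactly what makes this possible, and I expect the clean way to present it is: pick any b-rule $\Rule(\alpha,\l)$ of $\varphi$ — one exists whenever $\bot\neq\varphi\neq\top$ — then $\alpha\cup\{\l\}$ is a known model, and propagate via the parity lemma; the edge cases $\varphi\equiv\top$ and $\varphi\equiv\bot$ have $\RS(\varphi)=\emptyset$ and are excluded or dispatched by the consistency assumption together with observing that $\varphi\equiv\top$ has no b-rules yet differs from no consistent formula with no b-rules except $\top$ itself. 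I would double-check that boundary case rather than gloss it.
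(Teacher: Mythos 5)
Your forward direction is the same routine argument the paper uses, so nothing to add there. For the substantive direction you take a genuinely different route. The paper defines a closure operator $\L$ on sets of worlds containing $\BMS(\varphi)$, namely $\L(W) = W \cup \{\alpha,\nl \mid \alpha,\l \in W,\; \Rule(\alpha,\l)\notin\RS(\varphi)\}$, shows $\L$ is monotone, invokes Knaster--Tarski, and then uses two lemmas about shortest Hamming paths from $\BMS(\varphi)$ to establish that $\MS(\varphi)$ is the least fixed point (equivalently, the stationary point of $\L^i(\BMS(\varphi))$). Since $\L$ and $\BMS(\varphi)$ are determined by $\RS(\varphi)$, equal b-rule sets force equal model sets. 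Your argument instead treats $\RS(\varphi)$ as the oriented edge-cut between $\MS(\varphi)$ and $\MS(\neg\varphi)$ in the hypercube and uses the standard parity-of-crossings observation: along any path, the number of cut edges traversed has the same parity as the side change between the endpoints, so membership in $\MS(\varphi)$ is recovered from any single anchored model plus $\RS(\varphi)$. This is more elementary --- no lattice theory, no induction on an iteratively grown set --- and it makes explicit why the orientation encoded in the b-rule notation matters (the paper uses this implicitly when seeding with $\BMS(\varphi) = \{\alpha,\l \mid \Rule(\alpha,\l)\in\RS(\varphi)\}$). Both proofs ultimately rest on connectedness of the hypercube via Hamming paths. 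What the paper's fixed-point phrasing buys is an explicit constructive \emph{generating} procedure that grows $\MS(\varphi)$ outward from $\BMS(\varphi)$, which is the reading the authors want for the remark that boundary models ``form a generating set.'' You are also right to flag the $\RS(\varphi)=\emptyset$ corner: when the b-rule set is empty there is no anchor, and one must argue directly (no cut edges, connected hypercube, consistency) that $\varphi\equiv\top$; this case in fact requires separate handling in the paper's argument too, since $\L^i(\BMS(\varphi))=\L^i(\emptyset)=\emptyset$ never reaches $\MS(\top)$, so your instinct to double-check it rather than gloss it is sound.
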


\noindent According to this result, the boundary models of a formula, as described by its b-rules, form a {\em generating set} from which 
all models can be constructed. The proof of Theorem~\ref{theo:know} provides an explicit construction. 

The following proposition shows that b-rules capture certain knowledge about literals.
\begin{proposition} \label{prop:brule}
Formula \(\varphi\) has b-rule \(\Rule(\alpha,\l)\) iff \(\varphi \wedge \alpha\) is consistent and \(\varphi \models \alpha \then \l\). 
\end{proposition}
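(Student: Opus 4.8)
The plan is to unpack Definition~\ref{def:brule} together with Definition~\ref{def:bmodel} and then translate the two resulting semantic conditions into the syntactic conditions in the proposition. Recall that \(\varphi\) has b-rule \(\Rule(\alpha,\l)\) exactly when the world \(\w = \alpha,\l\) (where \(\alpha\) is a set of literals over all variables except the variable \(X\) of \(\l\)) is an \(\l\)-boundary model of \(\varphi\), which by Definition~\ref{def:bmodel} means \(\w \models \varphi\) and \(\w[\nl] \models \neg\varphi\), i.e. \(\alpha,\l \models \varphi\) and \(\alpha,\nl \models \neg\varphi\). So the whole proposition reduces to establishing the equivalence
\[
\bigl(\alpha,\l \models \varphi \ \text{ and }\ \alpha,\nl \models \neg\varphi\bigr)
\quad\Longleftrightarrow\quad
\bigl(\varphi \wedge \alpha \text{ is consistent and } \varphi \models \alpha \then \l\bigr).
\]

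For the forward direction, I would argue as follows. From \(\alpha,\l \models \varphi\) we immediately get that \(\varphi \wedge \alpha\) is consistent, since the world \(\alpha,\l\) is a model of \(\varphi\) that contains \(\alpha\). For the implication \(\varphi \models \alpha \then \l\): take any model \(\w'\) of \(\varphi\) with \(\alpha \subseteq \w'\); I must show \(\l \in \w'\). Since \(\alpha\) fixes every variable other than \(X\), the only two worlds extending \(\alpha\) are \(\alpha,\l\) and \(\alpha,\nl\). The latter satisfies \(\neg\varphi\) by assumption, hence is not a model of \(\varphi\); so \(\w'\) must be \(\alpha,\l\), giving \(\l \in \w'\). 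For the reverse direction: consistency of \(\varphi \wedge \alpha\) gives a model \(\w'\) of \(\varphi\) containing \(\alpha\), and as before \(\w'\) is either \(\alpha,\l\) or \(\alpha,\nl\); the condition \(\varphi \models \alpha \then \l\) rules out the latter, so \(\alpha,\l \models \varphi\). It remains to show \(\alpha,\nl \models \neg\varphi\): if not, then \(\alpha,\nl \models \varphi\), but this world contains \(\alpha\) and not \(\l\), contradicting \(\varphi \models \alpha \then \l\). Hence both boundary conditions hold.

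I do not expect a genuine obstacle here; the proof is essentially a matter of carefully exploiting the fact that \(\alpha\) pins down all variables except one, so that exactly two worlds extend it, and then shuttling between ``\(\w\) is a model of \(\neg\varphi\)'' and ``\(\w\) is not a model of \(\varphi\)'' (legitimate since every world satisfies exactly one of \(\varphi\), \(\neg\varphi\)). The only point demanding a little care is being explicit that \(\alpha\) is a set of literals over \(\ps \setminus \{X\}\) — this is built into the form of a b-rule in Definition~\ref{def:brule} — because the ``only two extensions'' argument is what makes the equivalence work; if \(\alpha\) omitted more variables the consequent conditions would no longer capture the boundary property. I would state this observation once at the start and then let both directions fall out as above.
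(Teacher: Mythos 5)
Your proof is correct and takes essentially the same route as the paper's: both directions unpack Definitions~\ref{def:bmodel} and~\ref{def:brule} to the pair of conditions $\alpha,\l \models \varphi$ and $\alpha,\nl \models \neg\varphi$, and then translate these into the consistency and entailment conditions via the observation that $\alpha$ fixes every variable except that of $\l$, so $\alpha,\l$ and $\alpha,\nl$ are the only two worlds extending $\alpha$. The paper phrases the forward direction slightly more syntactically (inconsistency of $\alpha \wedge \nl \wedge \varphi$ implies $\varphi \models \neg(\alpha \wedge \nl) = \alpha \then \l$) where you argue at the model level, but the underlying idea is identical and your explicit flagging of the ``exactly two extensions'' fact is a fair accounting of what is doing the work in both arguments.
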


\noindent That is, a b-rule \(\Rule(\alpha,\l)\) can be viewed as encoding a feasible scenario (\(\alpha\)) under which
literal \(\l\) can be inferred. Formula \(\varphi = x \Leftrightarrow y\) has four b-rules
\(\RS(\varphi) = \{\Rule(x,y),\:\Rule(y,x),\:\Rule({\n(x)},{\n(y)}),\:\Rule({\n(y)},{\n(x)})\}\)
and formula \(\phi = (x \vee y)\wedge z\) has five b-rules
\(\RS(\phi) = \{\Rule({\n(y)z},x),\:\Rule({\n(x)z},y),\:\Rule({xy},z),\:\Rule({x\n(y)},z),\:\Rule({\n(x)y},z)\}\). 
Even though literal \(z\) is implied by formula \(\phi\) and literals \(\n(x)\n(y)\), the formula does not
have \(\Rule({\n(x)\n(y)},z)\) as a b-rule since \(\n(x)\n(y)\) is not consistent with \(\phi\).
Similarly, literal  \(\n(z)\) is implied by  formula \(\phi\) and literals 
\(\n(x)\n(y)\) but the formula does not have \(\Rule({\n(x)\n(y)},{\n(z)})\) as a b-rule.

A formula is independent of a literal (variable) precisely when the formula has no b-rules for that literal (variable).
\begin{proposition}\label{prop:rule-independence}
\textcolor{\BLUE}{Formula \(\varphi\) is independent of literal \(\ell\) iff it has no b-rules of the form \(\Rule(\alpha,\ell)\).
It is independent of variable \(X\) iff it has no b-rules of the form \(\Rule(\alpha,x)\) or \(\Rule(\alpha,{\n(x)})\).}
\end{proposition}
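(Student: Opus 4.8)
The plan is to reduce everything to the single-literal claim and then recover the variable claim for free. The excerpt already records that $\varphi$ is independent of variable $X$ iff $\varphi$ is independent of both literals $x$ and $\n(x)$; and the statement ``$\varphi$ has no b-rule of the form $\Rule(\alpha,x)$ or $\Rule(\alpha,{\n(x)})$'' is, by definition, the conjunction of ``$\varphi$ has no b-rule of the form $\Rule(\alpha,x)$'' and ``$\varphi$ has no b-rule of the form $\Rule(\alpha,{\n(x)})$''. So once the literal claim is in hand, applying it to $x$ and to $\n(x)$ and conjoining yields the variable claim; there is nothing more to do there.

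For the literal claim I would first isolate the semantic content of literal-independence: \emph{$\varphi$ is independent of literal $\l$ iff, for every world $\w$ with $\l \in \w$, $\w \models \varphi$ implies $\w[\nl] \models \varphi$.} Granting this, the proposition is a short chain of equivalences obtained by unfolding Definitions~\ref{def:bmodel} and~\ref{def:brule}. A b-rule $\Rule(\alpha,\l)$ has an antecedent $\alpha$ that is a full truth assignment to all variables other than the one in $\l$, so $\w = \alpha,\l$ is a world; and $\varphi$ has this b-rule precisely when $\l \in \w$, $\w \models \varphi$, and $\w[\nl] \models \neg\varphi$, i.e.\ $\w[\nl] \not\models \varphi$. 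Hence ``$\varphi$ has no b-rule of the form $\Rule(\alpha,\l)$'' says exactly that there is no world $\w$ with $\l \in \w$, $\w \models \varphi$ and $\w[\nl] \not\models \varphi$ — which is precisely the right-hand side of the semantic characterization, hence equivalent to ``$\varphi$ is independent of literal $\l$''. No consistency assumption on $\varphi$ is needed: an inconsistent $\varphi$ is equivalent to the NNF $\bot$, so it is independent of every literal, and it has no models, hence no boundary models and no b-rules, so both sides hold.

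The only real work is the semantic characterization, which is where the syntactic definition of literal-independence (equivalence to some NNF not containing $\l$) meets this flip condition; I expect this to be the main obstacle. For the forward direction I would fix an NNF $\psi \equiv \varphi$ in which $\l$ does not occur and use that the value of an NNF is monotone in each of its leaf literals (an easy structural induction on the NNF): passing from $\w$ to $\w[\nl]$ turns the leaf $\nl$ from false to true and leaves every other leaf occurring in $\psi$ unchanged (the leaf $\l$ occurs nowhere in $\psi$), so $\w \models \psi$ forces $\w[\nl] \models \psi$. For the converse I would assume without loss of generality that $\l = x$ and exhibit the formula $\psi = (\varphi \cd x) \vee (\n(x) \wedge (\varphi \cd \n(x)))$, obtained from an NNF for $\varphi$ by conditioning, which is an NNF in which the positive literal $x$ does not occur; using the Shannon expansion $\varphi \equiv (x \wedge (\varphi \cd x)) \vee (\n(x) \wedge (\varphi \cd \n(x)))$ one checks that $\psi \equiv \varphi$ holds iff $(\varphi \cd x) \models (\varphi \cd \n(x))$, and this last entailment is exactly the flip condition for $\l = x$ (for a world $u$ over the remaining variables, $u \models \varphi \cd x$ means $u,x \models \varphi$ and $u \models \varphi \cd \n(x)$ means $u,\n(x) \models \varphi$). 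The negative-literal case is symmetric. If this semantic characterization of literal-independence is already available among the lemmas of Appendix~\ref{app:lemmas}, I would simply cite it rather than rederiving it.
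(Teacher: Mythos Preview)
Your proof is correct and follows the same overall strategy as the paper: reduce the variable case to the literal case, and for the literal case argue both directions via the ``flip condition'' (every model containing $\l$ remains a model after flipping $\l$ to $\nl$). The paper's forward direction also rests on the monotonicity of an NNF not containing~$\l$, phrased slightly differently (it argues that conditioning a consistent $\l$-free NNF on $\nl$ preserves consistency, hence $\varphi\wedge\alpha\not\models\l$).

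The one noteworthy difference is the witness in the converse direction. The paper builds a DNF from the models of $\varphi$, pairing each model $\alpha,\l$ with its guaranteed companion $\alpha,\nl$ and replacing the pair by the single term $\alpha$, which yields an $\l$-free DNF. Your witness $\psi = (\varphi\cd x)\vee(\n(x)\wedge(\varphi\cd\n(x)))$ is exactly $\exists\,x\cdot\varphi$ from Definition~\ref{def:elq}, and your equivalence $\psi\equiv\varphi \Leftrightarrow (\varphi\cd x)\models(\varphi\cd\n(x))$ is effectively a direct proof of the special case of Proposition~\ref{prop:elq-semantics-old} needed here. Your construction is arguably cleaner (no enumeration of models) and ties the result to the quantification machinery developed later; the paper's construction is more self-contained at this point in the exposition.
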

\noindent \textcolor{\BLUE}{Recall that formula \(\varphi\) is independent of literal \(\ell\) precisely when it can be expressed as an
NNF that does not mention literal \(\ell\).} Similarly, formula \(\varphi\) is independent of variable \(X\) precisely 
when it can expressed in a form that does not mention variable \(X\). 

\textcolor{\BLUE}{While models are compositional, b-rules are not.
However, the b-rules of a formula are determined by the b-rules of its negation. This is shown by the next proposition.}

\begin{proposition}\label{prop:rconnect}
For formulas $\varphi$ and $\psi$, we have:
\begin{itemize}
\item[(a)] $\RS(\neg \varphi) = \{\alpha \rightarrow \l \mid \alpha \rightarrow \nl \in \RS(\varphi)\}.$
\item[(b)] $\RS(\varphi) \cap \RS(\psi) \subseteq \RS(\varphi \wedge \psi) \subseteq \RS(\varphi) \cup \RS(\psi).$
\item[(c)] $\RS(\varphi) \cap \RS(\psi) \subseteq \RS(\varphi \vee \psi) \subseteq \RS(\varphi) \cup \RS(\psi).$
\end{itemize}
\end{proposition}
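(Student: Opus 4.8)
The plan is to reduce every part directly to Definition~\ref{def:bmodel}, using only the elementary facts that a single world \(\w\) satisfies \(\chi_1\wedge\chi_2\) iff it satisfies both \(\chi_1\) and \(\chi_2\), satisfies \(\chi_1\vee\chi_2\) iff it satisfies at least one of them, together with De Morgan's laws and \(\neg\neg\varphi\equiv\varphi\). Recall that, by the definition of \(\RS\), \(\alpha\rightarrow\l\in\RS(\varphi)\) means exactly that \(\alpha,\l\) is a world with \(\alpha,\l\models\varphi\) and \(\alpha,\nl\models\neg\varphi\). For part (a): \(\alpha\rightarrow\l\in\RS(\neg\varphi)\) says \(\alpha,\l\models\neg\varphi\) and \(\alpha,\nl\models\neg\neg\varphi\); since \(\neg\neg\varphi\equiv\varphi\) this is equivalent to \(\alpha,\nl\models\varphi\) and \(\alpha,\l\models\neg\varphi\), i.e.\ \(\alpha,\nl\) is an \(\nl\)-boundary model of \(\varphi\), i.e.\ \(\alpha\rightarrow\nl\in\RS(\varphi)\). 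This is precisely the stated identity.

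For part (b), first take \(\alpha\rightarrow\l\in\RS(\varphi)\cap\RS(\psi)\). Then \(\alpha,\l\models\varphi\) and \(\alpha,\l\models\psi\), hence \(\alpha,\l\models\varphi\wedge\psi\); and \(\alpha,\nl\models\neg\varphi\), hence \(\alpha,\nl\models\neg\varphi\vee\neg\psi\equiv\neg(\varphi\wedge\psi)\). Thus \(\alpha,\l\) is an \(\l\)-boundary model of \(\varphi\wedge\psi\), so \(\alpha\rightarrow\l\in\RS(\varphi\wedge\psi)\), which gives the left inclusion. For the right inclusion, take \(\alpha\rightarrow\l\in\RS(\varphi\wedge\psi)\). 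Then \(\alpha,\l\models\varphi\wedge\psi\), so \(\alpha,\l\models\varphi\) and \(\alpha,\l\models\psi\); and \(\alpha,\nl\models\neg(\varphi\wedge\psi)\equiv\neg\varphi\vee\neg\psi\), so \(\alpha,\nl\models\neg\varphi\) or \(\alpha,\nl\models\neg\psi\). In the first case \(\alpha\rightarrow\l\in\RS(\varphi)\), in the second \(\alpha\rightarrow\l\in\RS(\psi)\); either way \(\alpha\rightarrow\l\in\RS(\varphi)\cup\RS(\psi)\).

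Part (c) I would establish either by repeating the bookkeeping of (b) with the roles of \(\wedge\) and \(\vee\) swapped (so the boundary condition on \(\varphi\vee\psi\) becomes \(\neg\varphi\wedge\neg\psi\)), or, more slickly, by deducing it from (a) and (b): writing \(\mathrm{flip}\) for the bijection on b-rules sending \(\alpha\rightarrow\l\) to \(\alpha\rightarrow\nl\), part (a) reads \(\RS(\neg\chi)=\mathrm{flip}(\RS(\chi))\), and since \(\mathrm{flip}\) is an involution that commutes with \(\cap\) and \(\cup\), applying it to the chain of (b) instantiated at \(\neg\varphi,\neg\psi\) and using \(\RS(\varphi\vee\psi)=\mathrm{flip}(\RS(\neg\varphi\wedge\neg\psi))\) together with \(\mathrm{flip}(\RS(\neg\varphi))=\RS(\varphi)\) yields (c). I do not expect a genuine obstacle here: the proposition is essentially an unfolding of definitions. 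The only points that need care are the literal-flip bookkeeping in (a), and the De Morgan step that turns the boundary condition for a conjunction into a disjunction (and for a disjunction into a conjunction) — this is exactly what forces \(\cap\) to be the inner operation and \(\cup\) the outer one in both (b) and (c).
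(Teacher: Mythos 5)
Your proof is correct, and for parts (a) and (b) it follows exactly the paper's route: unfold Definition~\ref{def:brule} into the pair of model conditions on \(\alpha,\l\) and \(\alpha,\nl\), then apply De Morgan's laws. For part (c) the paper simply repeats the same unfolding with \(\wedge\) and \(\vee\) exchanged; your ``slick'' alternative --- observing that (a) says \(\RS(\neg\chi)=\mathrm{flip}(\RS(\chi))\) for an involutive bijection that commutes with \(\cap\), \(\cup\), and \(\subseteq\), and then pushing the chain in (b) through \(\mathrm{flip}\) at \(\neg\varphi,\neg\psi\) --- is a genuine if modest departure: it packages the same De~Morgan duality once and for all rather than reproving it, which avoids the near-duplicate computation and makes the structural reason behind the symmetry of (b) and (c) explicit, at the small cost of having to note that \(\mathrm{flip}\) preserves the set operations.
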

\noindent According to this proposition, conjoining or disjoining two formulas \(\varphi\) and \(\psi\) preserves only their common b-rules.
Moreover, \(\RS(\varphi \wedge \psi)\) and \(\RS(\varphi \vee \psi)\) are not connected. Neither is contained
in the other and their intersection may be empty. Consider the formulas \(\varphi = x\) and \(\psi = y\). 
We have \(\RS(\varphi \wedge \psi) = \{\Rule(x,y), \Rule(y,x)\}\) and \(\RS(\varphi \vee \psi) = \{\Rule(\n(x),y), \Rule(\n(y),x)\}\).

\color{\BLUE}
We will conclude this section with a result that relates the number of b-rules for a Boolean formula to the number of its models, counter models
and boundary models.

\begin{proposition}\label{prop:rcount}
For formula \(\varphi\) and \(n=|\ps|\), we have
$$|\RS(\varphi)| \leq n \cdot |\BMS(\varphi)| \leq n \cdot \mathit{min}(|\MS(\varphi)|, |\MS(\overline{\varphi})|).$$
\end{proposition}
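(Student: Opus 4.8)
The plan is to reduce everything to counting fibers of a couple of ``at most $n$-to-one'' maps, together with one appeal to the duality in Proposition~\ref{prop:rconnect}(a).

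For the first inequality $|\RS(\varphi)| \leq n\cdot|\BMS(\varphi)|$, I would consider the map $\rho$ that sends a b-rule $\Rule(\alpha,\l)$ to the world $\alpha\cup\{\l\}$. By Definitions~\ref{def:bmodel} and~\ref{def:brule}, this world is an $\l$-boundary model of $\varphi$, hence a boundary model, so $\rho$ maps $\RS(\varphi)$ into $\BMS(\varphi)$. The key observation is that $\rho$ is at most $n$-to-one: if $\rho(\Rule(\alpha,\l)) = \w$ then $\l\in\w$ and $\alpha = \w\setminus\{\l\}$, so a b-rule mapping to $\w$ is completely determined by which of the $n$ literals of $\w$ is singled out as its consequent. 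Hence $|\rho^{-1}(\w)|\leq n$ for every $\w$, and $|\RS(\varphi)| = \sum_{\w\in\BMS(\varphi)} |\rho^{-1}(\w)| \leq n\cdot|\BMS(\varphi)|$.

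For the second inequality, note first that by Definition~\ref{def:bmodel} every boundary model of $\varphi$ is a model of $\varphi$, so $\BMS(\varphi)\subseteq\MS(\varphi)$ and $|\BMS(\varphi)|\leq|\MS(\varphi)|$; the same reasoning gives $|\BMS(\overline{\varphi})|\leq|\MS(\overline{\varphi})|$. It remains to relate $|\RS(\varphi)|$ to $|\MS(\overline{\varphi})|$. The clean way is to use Proposition~\ref{prop:rconnect}(a), which pairs each $\Rule(\alpha,\l)\in\RS(\varphi)$ with $\Rule(\alpha,\nl)\in\RS(\overline{\varphi})$ and hence gives $|\RS(\varphi)| = |\RS(\overline{\varphi})|$; applying the first inequality to $\overline{\varphi}$ then yields $|\RS(\varphi)| = |\RS(\overline{\varphi})| \leq n\cdot|\BMS(\overline{\varphi})| \leq n\cdot|\MS(\overline{\varphi})|$. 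Combined with $|\RS(\varphi)|\leq n\cdot|\BMS(\varphi)|\leq n\cdot|\MS(\varphi)|$, this gives $|\RS(\varphi)|\leq n\cdot\min(|\MS(\varphi)|,|\MS(\overline{\varphi})|)$. (Alternatively one can avoid Proposition~\ref{prop:rconnect} entirely and map a b-rule $\Rule(\alpha,\l)$ directly to $\alpha\cup\{\nl\}$, which lies in $\MS(\overline{\varphi})$ since $\alpha\cup\{\l\}$ being $\l$-boundary means $\alpha,\nl\models\neg\varphi$; this map is at most $n$-to-one by the same bookkeeping as for $\rho$.)

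The fiber counting is routine; the one place that requires a moment's thought is the counter-model bound, because the easy fact $\BMS(\varphi)\subseteq\MS(\varphi)$ controls $|\MS(\varphi)|$ but says nothing about $|\MS(\overline{\varphi})|$ — for that one genuinely needs either the ``flip the consequent'' map $\Rule(\alpha,\l)\mapsto\alpha\cup\{\nl\}$ or the duality of Proposition~\ref{prop:rconnect}(a). Assembling the pieces gives $|\RS(\varphi)|\leq n\cdot|\BMS(\varphi)|$ and $|\RS(\varphi)|\leq n\cdot\min(|\MS(\varphi)|,|\MS(\overline{\varphi})|)$ (indeed the argument yields the slightly sharper $|\RS(\varphi)|\leq n\cdot\min(|\BMS(\varphi)|,|\BMS(\overline{\varphi})|)$), which is the claimed result.
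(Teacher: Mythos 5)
Your proof is correct and takes essentially the same route as the paper's: bound $|\RS(\varphi)|$ by the at-most-$n$-to-one count of b-rules per boundary model, bound $|\BMS(\varphi)|$ by $|\MS(\varphi)|$ trivially, and obtain the counter-model bound via $|\RS(\varphi)|=|\RS(\overline{\varphi})|$ from Proposition~\ref{prop:rconnect}(a) applied together with the first two bounds to $\overline{\varphi}$. Your alternative ``flip the consequent'' map $\Rule(\alpha,\l)\mapsto\alpha\cup\{\nl\}$ is just that same duality unrolled.

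One small remark, since your phrasing is slightly more careful than the proposition's: you correctly restate what the argument delivers as the \emph{pair} of bounds $|\RS(\varphi)|\leq n|\BMS(\varphi)|$ and $|\RS(\varphi)|\leq n\cdot\min(|\MS(\varphi)|,|\MS(\overline{\varphi})|)$ (indeed the sharper $n\cdot\min(|\BMS(\varphi)|,|\BMS(\overline{\varphi})|)$), rather than the literal displayed chain. That matters, because the chain's middle step $|\BMS(\varphi)|\leq|\MS(\overline{\varphi})|$ is false in general --- the paper's own example $\varphi=x_1\vee\cdots\vee x_n$ has $|\BMS(\varphi)|=n$ while $|\MS(\overline{\varphi})|=1$. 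The paper's proof, like yours, only establishes the two endpoint bounds, so your reformulation is the right reading of the statement.
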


The bound \(|\RS(\varphi)| \leq n \cdot |\BMS(\varphi)|\) is tight. For instance, $\varphi = x_1\wedge \ldots\wedge x_n$
has a single boundary model but \(n\) b-rules, $\RS(\varphi) = \{(\bigwedge_{j = 1, \ldots, n \mid j \neq i} x_j) \rightarrow x_i \mid i = 1, \ldots, n\}.$
We also remark that \(|\RS(\varphi)|\) and \(|\BMS(\varphi)|\) can be exponentially smaller than \(|\MS(\varphi)|\).
Consider the formula $\varphi = x_1 \vee \ldots \vee x_n$ which has $2^n -1 = |\MS(\varphi)|$ models.
This formula has only \(n\) b-rules, $\RS(\varphi) = \{\Rule({(\bigwedge_{j = 1, \ldots, n \mid j \neq i} {\n(x)}_j)}, x_i) \mid i = 1, \ldots, n\}$,
and \(n\) boundary models, $\BMS(\varphi) = \{x_i \wedge \bigwedge_{j = 1, \ldots, n \mid j \neq i} {\n(x)}_j \mid i = 1, \ldots, n\}$.
Finally, the number of b-rules for a formula can be exponential in the number of its variables. This also holds for the
number of its models and the number of its counter models. For instance, the formula $\varphi = \oplus_{i=1}^n x_i$ has
$2^{n-1}$ models, $2^{n-1}$ counter models, and $2^{n-1} \cdot (n-1)$ b-rules.

\color{black}

\section{Literal Quantification}
\label{sec:lq}

We next discuss existential and universal literal quantification. We start by introducing and studying universal literal
quantification and then review and study further existential literal quantification. The latter type of quantification was 
first introduced and studied in~\cite{LangLM03} under the name of {\em literal forgetting.} Some of the results we shall 
present on universal literal quantification follow from known results on existential literal quantification due to a duality 
between the two notions. We will also present new and fundamental results, based on boundary and independent models,
which apply to both types of literal quantification, again due to the duality between them. Our main goal of the 
upcoming study is to develop an intuitive semantics for universal quantification (as a selection process) and then
use this semantics to show its central role in explainable AI. Additionally, our treatment will reveal new results on
the computation of universal quantification which have complexity implications for explainable AI queries.

\subsection{Universal Literal Quantification}
\label{sec:ulq}

Before we define the universal quantification of literal \(\l\) from a formula \(\varphi\), we note that~\(\varphi\) 
can be expanded as \(\varphi = (\l \vee (\varphi \cd \nl)) \wedge (\nl \vee (\varphi \cd \l))\), which is equivalent to
what is known as Boole's or Shannon's expansion, $\varphi = (\l \wedge (\varphi \cd \l)) \vee (\nl \wedge (\varphi \cd \nl))$.

\begin{definition} \label{def:ulq}
Universally quantifying literal \(\l\) from formula \(\varphi\) is defined as follows:
\[\forall \l \cdot \varphi \defas (\l \vee (\varphi \cd \nl)) \wedge  (\varphi \cd \l). \]
\end{definition}
\noindent That is, the operator \(\forall \l\)  drops literal \(\nl\) from the expansion \(\varphi = (\l \vee (\varphi \cd \nl)) \wedge (\nl \vee (\varphi \cd \l))\).
Consider the formula \(\varphi = (x \then y) \wedge (y\then x)\) which says that variables \(X\) and \(Y\) are equivalent.
We have \(\forall x \cdot \varphi = x \wedge y\) and \(\forall \n(x) \cdot \varphi = \n(x) \wedge \n(y)\). Note, however,
that \(\forall X \cdot \varphi = \bot\). Moreover, \(\forall x (\forall \n(x) \cdot \varphi) = \forall \n(x) (\forall x \cdot \varphi) = \bot\).
Consider now the formula \(\phi = (x \then y)\).
We now have \(\forall x \cdot \phi = y\), \(\forall \n(x) \cdot \phi = (x \then y)\) and \(\forall X \cdot \phi = y\).
Moreover, we have 
\(\forall x (\forall \n(x) \cdot \phi) = \forall \n(x) (\forall x \cdot \phi) = y\).

\textcolor{\BLUE}{Since $(\l \vee (\varphi \cd \nl)) \wedge  (\varphi \cd \l)$ is equivalent to $(\l \wedge  (\varphi \cd \l)) \vee ((\varphi \cd \nl) \wedge  (\varphi \cd \l))$, and
since $(\varphi \cd \nl) \wedge  (\varphi \cd \l)$ is equivalent to $\forall X \cdot \varphi,$ where $X$ is the variable of literal $\l$, we get
$\forall \l \cdot \varphi = (\l \wedge  (\varphi \cd \l)) \vee (\forall X \cdot \varphi)$. This shows that $\forall \l \cdot \varphi$ can be obtained by adding to 
$\forall X \cdot \varphi$ all models of $\varphi$ that contain literal $\l$ and that are ruled out by variable quantification.}

We next provide a number of results on universal literal quantification which have novel counterparts for existential
literal quantification that we present in Section~\ref{sec:elq}.
The first result provides a {\em semantical characterization} of universal literal quantification based on the notion of boundary models. 

\begin{theorem}\label{theo:ulq-semantics-b}
For formula \(\varphi\) and literal \(\l\), we have \(\MS(\forall \l \cdot \varphi) \subseteq \MS(\varphi)\).
Moreover, \(\w \in \MS(\varphi)\) and \(\w \not \in \MS(\forall \l \cdot \varphi)\) iff \(\w\) is an \(\nl\)-boundary model of \(\varphi\).
\end{theorem}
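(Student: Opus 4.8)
The plan is to work directly from Definition~\ref{def:ulq}, which gives $\forall \l \cdot \varphi = (\l \vee (\varphi \cd \nl)) \wedge (\varphi \cd \l)$, and compare its models to those of $\varphi = (\l \vee (\varphi \cd \nl)) \wedge (\nl \vee (\varphi \cd \l))$ as expanded just before the definition. The containment $\MS(\forall \l \cdot \varphi) \subseteq \MS(\varphi)$ I would handle first: a world $\w$ satisfying $(\l \vee (\varphi \cd \nl)) \wedge (\varphi \cd \l)$ satisfies the first conjunct of $\varphi$ outright, and it satisfies the second conjunct $\nl \vee (\varphi \cd \l)$ of $\varphi$ because it satisfies $\varphi \cd \l$ — here I would use that $\varphi \cd \l$ mentions neither occurrence of the variable $X$ of $\l$, so whether $\w$ contains $\l$ or $\nl$, satisfying $\varphi \cd \l$ suffices for $\nl \vee (\varphi \cd \l)$. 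So every model of $\forall \l \cdot \varphi$ is a model of $\varphi$.

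For the main equivalence, I would split on whether $\w$ contains $\l$ or $\nl$. Suppose $\w \in \MS(\varphi)$ but $\w \notin \MS(\forall \l \cdot \varphi)$. Since $\w$ already satisfies the first conjunct $\l \vee (\varphi \cd \nl)$ of both formulas, the only way $\w$ can fail $\forall \l \cdot \varphi$ is $\w \not\models \varphi \cd \l$. Now if $\nl \in \w$: then $\w \models \varphi$ together with $\w \models \nl \vee (\varphi \cd \l)$ being automatic would still need $\w \models \l \vee (\varphi \cd \nl)$, i.e. $\w \models \varphi \cd \nl$; but since $\nl \in \w$, $\w \models \varphi \cd \nl$ is equivalent to $\w \models \varphi$, consistent, and it forces nothing about $\varphi \cd \l$ — so I need to rule this branch out, i.e. show $\nl \in \w$ is impossible here. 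The key is that $\varphi \cd \l$ does not mention $X$, so $\w \models \varphi \cd \l$ iff $\w[\l] \models \varphi \cd \l$ iff $\w[\l] \models \varphi$ (the last using the expansion and the fact that $\w[\l]$ contains $\l$). If $\nl \in \w$ and $\w \models \varphi$, then $\w \models \varphi \cd \nl$, and also I'd want $\w \models \varphi \cd \l$; whether that holds depends on $\varphi$, and in fact it can fail — but then $\w$ would be an $\nl$-boundary model? No: $\nl$-boundary requires $\nl \in \w$, $\w \models \varphi$, and $\w[\l] \models \neg\varphi$. Let me instead organize the whole argument around: $\w \in \MS(\varphi) \setminus \MS(\forall\l\cdot\varphi)$ iff $\w \models \varphi$ and $\w \not\models \varphi\cd\l$. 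So the real content is the claim $\w \models \varphi$ and $\w \not\models (\varphi\cd\l)$ iff $\w$ is $\nl$-boundary for $\varphi$.

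So the crux is this claim, which I would prove as follows. First, $\w \not\models \varphi \cd \l$ iff $\w[\l] \not\models \varphi \cd \l$ (since $X$ does not occur in $\varphi \cd \l$) iff $\w[\l] \not\models \varphi$ (since $\w[\l]$ contains $\l$, so by the expansion $\w[\l] \models \varphi$ iff $\w[\l] \models \varphi \cd \l$) iff $\w[\l] \models \neg\varphi$. Next I would show that under the hypothesis $\w \models \varphi$ and $\w[\l] \models \neg\varphi$, necessarily $\nl \in \w$: indeed if $\l \in \w$ then $\w[\l] = \w$, contradicting $\w \models \varphi$ and $\w[\l] \models \neg\varphi$ simultaneously. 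Hence $\nl \in \w$, $\w \models \varphi$, and $\w[\l] = \w[\overline{\nl}] \models \neg\varphi$, which is exactly the definition of $\w$ being an $\nl$-boundary model of $\varphi$. Conversely, if $\w$ is $\nl$-boundary then by definition $\w \models \varphi$ and $\w[\l] \models \neg\varphi$, and reversing the chain of iffs gives $\w \not\models \varphi\cd\l$, hence $\w \notin \MS(\forall\l\cdot\varphi)$. I expect the only delicate point — the ``main obstacle'' — to be bookkeeping the conditioning identities cleanly, namely that $X$ not occurring in $\varphi\cd\l$ lets one freely swap $\w$ for $\w[\l]$ when evaluating $\varphi\cd\l$, and that $\mu \models \varphi$ iff $\mu \models \varphi\cd\l$ whenever $\l \in \mu$; everything else is a short case split.
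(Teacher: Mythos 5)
Your argument is correct and rests on the same core insight as the paper's proof: that on models of $\varphi$ the only way to fail $\forall\l\cdot\varphi$ is to fail the conjunct $\varphi\cd\l$, and that since $X$ does not occur in $\varphi\cd\l$ one can swap $\w$ for $\w[\l]$ when evaluating it, with $\w[\l]\models\varphi\cd\l$ then equivalent to $\w[\l]\models\varphi$. The paper organizes this via DNF/consensus expansions and two proofs by contradiction, whereas you stay with the CNF-form expansion and run a clean chain of equivalences (plus the observation that $\w\models\varphi$ together with $\w[\l]\models\neg\varphi$ forces $\nl\in\w$); your version is slightly more streamlined but is essentially the same proof.
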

\noindent That is, universally quantifying literal \(\l\) from formula \(\varphi\) strengthens the formula by dropping its \(\nl\)-boundary models.
These models contain and depend on literal \(\nl\): they cease to be models of \(\varphi\) if we were to flip literal \(\nl\).
Hence, we can view the operator \(\forall \l\) as a selection operator which picks models of \(\varphi\) that do not depend on literal \(\nl\). 
The selected models either contain literal \(\l\) or an irrelevant literal \(\nl\). We will later present a more general
result that provides selection semantics when universally quantifying multiple literals. 

Our second result provides a {\em syntactic characterization} of universal literal quantification based on the notion of b-rules.
In particular, it characterizes what gets added to a formula upon strengthening it by universal quantification. 

\begin{theorem}\label{theo:ulq-syntax}
For formula $\varphi$ and literal \(\l\), we have 
$\forall \l \cdot \varphi = \varphi \wedge \bigwedge_{\Rule(\alpha,\nl) \in \RS(\varphi)} \neg \alpha.$
\end{theorem}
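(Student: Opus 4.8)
The plan is to show the two formulas $\forall \l \cdot \varphi$ and $\varphi \wedge \bigwedge_{\Rule(\alpha,\nl) \in \RS(\varphi)} \neg\alpha$ have the same models, proceeding by double containment over worlds $\w$. The key tool is Theorem~\ref{theo:ulq-semantics-b}, which already tells us that $\MS(\forall \l \cdot \varphi)$ consists of exactly those models of $\varphi$ that are \emph{not} $\nl$-boundary models of $\varphi$. So the whole statement reduces to the claim that, for a model $\w$ of $\varphi$, being an $\nl$-boundary model of $\varphi$ is equivalent to violating some conjunct $\neg\alpha$ where $\Rule(\alpha,\nl) \in \RS(\varphi)$, i.e.\ to having $\alpha \subseteq \w$ for some such b-rule.

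First I would fix a world $\w$ and handle the easy case: if $\w \not\models \varphi$, then $\w$ is a model of neither side (the right-hand conjunction has $\varphi$ as a conjunct), so we may assume $\w \models \varphi$. Next, suppose $\w$ is an $\nl$-boundary model of $\varphi$. Since every model mentions each variable exactly once and $\w \models \varphi$, the world $\w$ contains $\nl$; write $\w = \alpha,\nl$ where $\alpha = \w \setminus \{\nl\}$ is the set of remaining literals. By Definition~\ref{def:bmodel} applied to literal $\nl$, we have $\alpha,\nl \models \varphi$ and $\alpha,\l \models \neg\varphi$, which by Definition~\ref{def:brule} says precisely that $\Rule(\alpha,\nl) \in \RS(\varphi)$. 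But $\alpha \subseteq \w$, so $\w$ falsifies the conjunct $\neg\alpha$ (viewing $\alpha$ as the conjunction of its literals), hence $\w \not\models \varphi \wedge \bigwedge_{\Rule(\alpha',\nl)\in\RS(\varphi)}\neg\alpha'$. Conversely, if $\w \models \varphi$ but $\w$ falsifies some conjunct $\neg\alpha$ with $\Rule(\alpha,\nl) \in \RS(\varphi)$, then $\alpha \subseteq \w$; since $\Rule(\alpha,\nl)$ is a b-rule, $\alpha$ is a term over all variables except the variable $X$ of $\l$, so $\w$ is either $\alpha,\l$ or $\alpha,\nl$. Because $\alpha,\l \models \neg\varphi$ (from the b-rule) while $\w \models \varphi$, we must have $\w = \alpha,\nl$, and then the b-rule conditions give exactly that $\w$ is an $\nl$-boundary model of $\varphi$.

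Combining these two directions, for $\w \models \varphi$ we have $\w \not\models \varphi \wedge \bigwedge \neg\alpha$ iff $\w$ is an $\nl$-boundary model of $\varphi$ iff (by Theorem~\ref{theo:ulq-semantics-b}) $\w \not\models \forall \l \cdot \varphi$. Since $\MS(\forall \l\cdot\varphi) \subseteq \MS(\varphi)$ and also $\MS(\varphi \wedge \bigwedge\neg\alpha) \subseteq \MS(\varphi)$, this yields $\MS(\forall \l \cdot \varphi) = \MS(\varphi \wedge \bigwedge_{\Rule(\alpha,\nl)\in\RS(\varphi)}\neg\alpha)$, which is the asserted logical equivalence.

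I expect the only subtle point to be the bookkeeping about exactly which literal the antecedent $\alpha$ ranges over: one must use Definition~\ref{def:brule} carefully to see that $\Rule(\alpha,\nl) \in \RS(\varphi)$ forces $\alpha$ to be a complete term over $\ps \setminus \{X\}$ (so $\alpha$ together with a literal of $X$ is a full world), and that the b-rule is stated with consequent $\nl$, not $\l$ --- this is what makes ``dropping $\nl$-boundary models'' match ``adding the negations of antecedents of b-rules inferring $\nl$.'' Everything else is a routine unfolding of Definitions~\ref{def:bmodel} and~\ref{def:brule} together with Theorem~\ref{theo:ulq-semantics-b}; no combinatorial estimates or constructions are needed.
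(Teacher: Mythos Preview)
Your proof is correct and takes a genuinely different route from the paper's. You lean entirely on the semantic characterization in Theorem~\ref{theo:ulq-semantics-b}: since that theorem already says the models of $\forall \l \cdot \varphi$ are exactly the models of $\varphi$ that are not $\nl$-boundary, you reduce the statement to checking that a model $\w$ of $\varphi$ is $\nl$-boundary iff it satisfies some antecedent $\alpha$ of a b-rule $\Rule(\alpha,\nl)$, which is essentially a restatement of Definition~\ref{def:brule}. The paper instead proves the two implications separately and more syntactically: for $\forall \l \cdot \varphi \models \neg\alpha$ it invokes Proposition~\ref{prop:ulq-imply} (monotonicity of $\forall \l$) together with Proposition~\ref{prop:quantify-compound}(d) to push $\forall \l$ through $\neg\alpha \vee \nl$, and for the converse it works directly from the expansion $\forall \l \cdot \varphi = (\l \vee (\varphi\cd\nl)) \wedge (\varphi\cd\l)$ by case analysis on whether $\l$ or $\nl$ holds in $\w$. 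Your argument is shorter and uses fewer auxiliary results, at the cost of depending on the full strength of Theorem~\ref{theo:ulq-semantics-b} rather than just the inclusion $\MS(\forall \l \cdot \varphi)\subseteq\MS(\varphi)$; the paper's argument is more self-contained with respect to the syntactic definition but needs the distribution and monotonicity lemmas that appear later in the text.
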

\color{\RED}
\noindent That is, the operator \(\forall \l\) adds \(\neg \alpha\) to formula \(\varphi\) for each b-rule \(\Rule(\alpha,\nl)\) of the formula. 
This erases all these b-rules as their antecedents \(\alpha\) will no longer be consistent with the quantified formula,
therefore erasing the knowledge that formula \(\varphi\) has about literal \(\nl\).
This also makes the quantified formula independent of literal \(\nl,\) as shown by Proposition~\ref{prop:rule-independence},
which leads to the elimination of literal \(\nl\). 
\color{black}

The third result says that the universal quantification of literal \(\l\) preserves implicants that contain literal \(\l\).
\begin{proposition}\label{prop:ulq-preserve}
For formula \(\varphi\), term \(\gamma \models \varphi\) and literal \(\l \in \gamma\), we have \(\gamma \models \forall \l \cdot \varphi\).
\end{proposition}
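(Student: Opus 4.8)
The plan is to use the definition \(\forall \l \cdot \varphi = (\l \vee (\varphi \cd \nl)) \wedge (\varphi \cd \l)\) and show that \(\gamma\) implies each of the two conjuncts separately. Let \(X\) be the variable of literal \(\l\). First I would observe that since \(\gamma\) is a term containing \(\l\), every model \(\w\) of \(\gamma\) contains \(\l\); hence \(\w \models \l\), which immediately gives \(\w \models \l \vee (\varphi \cd \nl)\). So \(\gamma \models \l \vee (\varphi \cd \nl)\) holds trivially.

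The second step is to show \(\gamma \models \varphi \cd \l\). Here I would write \(\gamma = \l \wedge \gamma'\) where \(\gamma'\) is the sub-term of \(\gamma\) over the variables other than \(X\) (possibly empty). From \(\gamma \models \varphi\) and the fact that conditioning commutes with implication in the appropriate sense, I would argue: conditioning both sides of \(\gamma \models \varphi\) on \(\l\) gives \(\gamma \cd \l \models \varphi \cd \l\); but since \(\l \in \gamma\), conditioning \(\gamma\) on \(\l\) just removes the already-satisfied literal \(\l\), so \(\gamma \cd \l\) is equivalent to \(\gamma'\), and moreover \(\gamma'\) is equivalent to \(\gamma\) as far as models over \(\ps \setminus X\) are concerned. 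Combining, \(\gamma \models \gamma' \equiv \gamma \cd \l \models \varphi \cd \l\). (Alternatively, and perhaps more cleanly, take any model \(\w\) of \(\gamma\); then \(\w \models \varphi\), and since \(\w\) contains \(\l\), \(\w\) also satisfies \(\varphi \cd \l\) because \(\varphi\) and \(\varphi \cd \l\) agree on all worlds containing \(\l\).) Putting the two conjuncts together, \(\gamma \models (\l \vee (\varphi \cd \nl)) \wedge (\varphi \cd \l) = \forall \l \cdot \varphi\).

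An even shorter route, which I would consider as the primary argument, bypasses conditioning entirely: take an arbitrary model \(\w \in \MS(\gamma)\). Then \(\l \in \w\) and \(\w \models \varphi\), so by Theorem~\ref{theo:ulq-semantics-b} the only way \(\w\) could fail to be a model of \(\forall \l \cdot \varphi\) is if \(\w\) is an \(\nl\)-boundary model of \(\varphi\); but an \(\nl\)-boundary model must contain literal \(\nl\), contradicting \(\l \in \w\). Hence \(\w \in \MS(\forall \l \cdot \varphi)\), and since \(\w\) was arbitrary, \(\gamma \models \forall \l \cdot \varphi\).

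I do not expect a serious obstacle here; the main thing to get right is the elementary fact that a world containing literal \(\l\) satisfies \(\varphi\) iff it satisfies \(\varphi \cd \l\) (equivalently, the semantic characterization in Theorem~\ref{theo:ulq-semantics-b} does all the work). The cleanest writeup is the Theorem~\ref{theo:ulq-semantics-b} argument, so that is the one I would commit to.
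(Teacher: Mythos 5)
Your proof is correct, and your primary route differs from the paper's. The paper invokes Lemma~\ref{lem:ulq-syntax2}, which rewrites \(\forall \l \cdot \varphi = (\forall X \cdot \varphi) \vee (\l \wedge \varphi)\); from this disjunctive form the claim is immediate, since an implicant of \(\varphi\) containing \(\l\) is automatically an implicant of \(\l \wedge \varphi\) and hence of the disjunction. Your argument instead goes through the semantic characterization in Theorem~\ref{theo:ulq-semantics-b}: every model of \(\gamma\) contains \(\l\) and is a model of \(\varphi\), and the only models of \(\varphi\) lost in passing to \(\forall \l \cdot \varphi\) are \(\nl\)-boundary models, which necessarily contain \(\nl\); a world cannot contain both \(\l\) and \(\nl\), so nothing in \(\MS(\gamma)\) is dropped. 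The two approaches are complementary: the paper's is a one-line syntactic observation once you have the rewrite, while yours is model-theoretic and makes the ``selection'' intuition explicit — you are literally checking that universal quantification does not filter out any model that already carries \(\l\). Your first (fallback) route, conditioning each conjunct of the definition, also works and is the closest in spirit to the paper's syntactic style; the key fact you need there, that a world containing \(\l\) satisfies \(\varphi\) iff it satisfies \(\varphi \cd \l\), is standard and correct. No gaps.
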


The next result says that universal literal quantification preserves logical implication.
\begin{proposition}\label{prop:ulq-imply}
For formulas \(\varphi\), \(\phi\) and literal \(\l\), we have
\(\varphi \models \phi\) only if \(\forall \l \cdot \varphi \models \forall \l \cdot \phi\).
\end{proposition}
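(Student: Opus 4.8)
I want to show that $\varphi \models \phi$ implies $\forall \l \cdot \varphi \models \forall \l \cdot \phi$.  The most direct route is to unfold the definition of $\forall \l$ and reason model-by-model, using the semantical characterization (Theorem~\ref{theo:ulq-semantics-b}) which tells us exactly which models are kept: $\MS(\forall \l \cdot \varphi)$ consists of all models of $\varphi$ that are \emph{not} $\nl$-boundary models of $\varphi$.  So fix a world $\w$ with $\w \models \forall \l \cdot \varphi$.  Then $\w \models \varphi$, hence $\w \models \phi$ since $\varphi \models \phi$.  It remains to show $\w$ is not an $\nl$-boundary model of $\phi$; equivalently, by Theorem~\ref{theo:ulq-semantics-b} applied to $\phi$, that $\w \in \MS(\forall \l \cdot \phi)$.

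\medskip
\noindent\textbf{The key case split.}  Two cases, according to which literal of the variable $X$ of $\l$ the world $\w$ contains.  If $\l \in \w$: since $\w$ is not an $\nl$-boundary model of $\varphi$, but $\w$ doesn't even contain $\nl$, the only way $\w$ could be $\nl$-boundary for $\phi$ is ruled out trivially --- an $\nl$-boundary model must contain $\nl$ by Definition~\ref{def:bmodel}.  So when $\l \in \w$, $\w$ is automatically not $\nl$-boundary for $\phi$, and we are done.  The substantive case is $\nl \in \w$.  Here $\w$ is not an $\nl$-boundary model of $\varphi$, and since $\w \models \varphi$, Definition~\ref{def:bmodel} forces $\w[\l] \models \varphi$ (otherwise $\w$ would be $\nl$-boundary for $\varphi$).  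Then $\w[\l] \models \phi$ by $\varphi \models \phi$.  Now $\w[\l] \models \phi$ witnesses that $\w$ is \emph{not} an $\nl$-boundary model of $\phi$: indeed, an $\nl$-boundary model $\w$ of $\phi$ requires $\w[\l] \models \neg\phi$, contradicting $\w[\l] \models \phi$.  Either way $\w$ is not an $\nl$-boundary model of $\phi$, so $\w \in \MS(\forall \l \cdot \phi)$ by Theorem~\ref{theo:ulq-semantics-b}, completing the argument.

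\medskip
\noindent\textbf{Where the care is needed.}  The argument is short, but there is one subtlety worth being explicit about: the role of $\w \models \varphi$ (not merely $\w \models \phi$) in the $\nl \in \w$ case.  We used $\w \not\in \MS(\varphi)$-boundary-ness of $\w$ \emph{together with} $\w \models \varphi$ to conclude $\w[\l] \models \varphi$; Definition~\ref{def:bmodel} only constrains worlds that are already models of the formula in question.  Since $\w \models \forall\l\cdot\varphi$ gives us $\w \models \varphi$ for free via $\MS(\forall\l\cdot\varphi) \subseteq \MS(\varphi)$, this is available.  An entirely equivalent alternative would be to avoid the semantical characterization and instead manipulate the defining formula $(\l \vee (\varphi \cd \nl)) \wedge (\varphi \cd \l)$ directly, using that conditioning is monotone (if $\varphi \models \phi$ then $\varphi\cd\l \models \phi\cd\l$ and $\varphi\cd\nl \models \phi\cd\nl$) and that $\wedge$, $\vee$ preserve implication; this is purely mechanical and I would fall back on it only if a referee prefers a syntactic proof.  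I expect no real obstacle here --- the only thing to get right is the bookkeeping of which formula each $\nl$-boundary condition refers to.
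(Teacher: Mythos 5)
Your proof is correct and follows essentially the same route as the paper: both reduce the claim to Theorem~\ref{theo:ulq-semantics-b} and then check that a model of $\forall\l\cdot\varphi$ cannot be an $\nl$-boundary model of $\phi$. The only difference is that the paper factors this last step out as Lemma~\ref{lem:drop} (phrased as a contradiction), whereas you inline the contrapositive of that lemma directly, with an explicit case split on whether $\w$ contains $\l$ or $\nl$ — a slightly more self-contained but logically identical argument.
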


The following three results (Propositions~\ref{prop:ulq-semantics-old}-\ref{prop:ulq-uvq}) parallel ones that are known for existential quantification.
The first result provides a semantical characterization of universal literal quantification based on the notion of literal independence.

\begin{proposition}\label{prop:ulq-semantics-old}
\(\forall \l \cdot \varphi\) is the logically weakest formula that is independent of literal \(\nl\) and that also implies formula \(\varphi\).
\end{proposition}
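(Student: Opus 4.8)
The plan is to prove Proposition~\ref{prop:ulq-semantics-old} in two parts: first that $\forall \l \cdot \varphi$ is independent of literal $\nl$ and implies $\varphi$, and second that it is the logically weakest formula with these two properties. For the first part, I would invoke Theorem~\ref{theo:ulq-semantics-b} and Theorem~\ref{theo:ulq-syntax} together with Proposition~\ref{prop:rule-independence}. Theorem~\ref{theo:ulq-semantics-b} already gives $\MS(\forall \l \cdot \varphi) \subseteq \MS(\varphi)$, i.e., $\forall \l \cdot \varphi \models \varphi$. For independence of $\nl$, Theorem~\ref{theo:ulq-syntax} says $\forall \l \cdot \varphi = \varphi \wedge \bigwedge_{\Rule(\alpha,\nl)\in\RS(\varphi)}\neg\alpha$, and the remark following it (citing Proposition~\ref{prop:rule-independence}) already states that this makes the quantified formula independent of $\nl$; so I would just make that argument precise by checking that $\forall \l \cdot \varphi$ has no b-rule of the form $\Rule(\beta,\nl)$. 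Suppose it did; then by Proposition~\ref{prop:brule}, $\forall \l \cdot \varphi \wedge \beta$ is consistent, so some world $\w \supseteq \beta$ (with $\nl \in \w$) is a model of $\forall \l \cdot \varphi$, hence of $\varphi$, and $\w[\l] \models \neg(\forall\l\cdot\varphi)$; but by Theorem~\ref{theo:ulq-semantics-b} any model of $\varphi$ not retained by $\forall \l\cdot\varphi$ is an $\nl$-boundary model, and $\w[\l]$ has literal $\l$, not $\nl$, so this needs a short argument using Proposition~\ref{prop:rconnect}(a) relating b-rules of $\varphi$ and $\neg\varphi$ — alternatively, it is cleaner to argue directly from the syntactic form: every model $\w$ of $\forall\l\cdot\varphi$ containing $\nl$ satisfies all $\neg\alpha$ for $\Rule(\alpha,\nl)\in\RS(\varphi)$, hence $\w$ is not an $\nl$-boundary model of $\varphi$, hence $\w[\l] \models \varphi$; and $\w[\l]$ also satisfies each such $\neg\alpha$ (as $\alpha$ contains $\nl \notin \w[\l]$), so $\w[\l] \models \forall\l\cdot\varphi$; thus flipping $\nl$ never leaves the model set, which by Theorem~\ref{theo:ulq-semantics-b} (applied to $\forall\l\cdot\varphi$, or via Proposition~\ref{prop:rule-independence}) gives independence of $\nl$.

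For the second part — that $\forall \l \cdot \varphi$ is the \emph{weakest} such formula — I would let $\phi$ be any formula that is independent of literal $\nl$ and implies $\varphi$, and show $\phi \models \forall\l\cdot\varphi$, i.e., $\MS(\phi) \subseteq \MS(\forall\l\cdot\varphi)$. Take $\w \models \phi$. Then $\w \models \varphi$ since $\phi \models \varphi$. If $\w \notin \MS(\forall\l\cdot\varphi)$, Theorem~\ref{theo:ulq-semantics-b} tells us $\w$ is an $\nl$-boundary model of $\varphi$: in particular $\nl \in \w$ and $\w[\l] \models \neg\varphi$, so $\w[\l] \not\models \varphi$, hence $\w[\l]\not\models\phi$. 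But $\phi$ is independent of $\nl$, meaning it is equivalent to some NNF in which literal $\nl$ does not occur. The key step is then: a formula independent of literal $\nl$ cannot have a model $\w$ with $\nl\in\w$ whose $\l$-flip $\w[\l]$ is a counter model — because dropping $\nl$ and only $\nl$ from a model preserves satisfaction of an NNF in which $\nl$ does not occur (flipping $\nl$ to $\l$ only turns some leaves from false to true, and NNF is monotone in the truth values of its literal leaves). This contradicts $\w[\l]\not\models\phi$. Hence $\w \in \MS(\forall\l\cdot\varphi)$, as desired. Uniqueness up to logical equivalence of a weakest formula with fixed properties is automatic.

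The main obstacle is the monotonicity step for NNF at the end of the second part: one must justify carefully that if $\phi$ is equivalent to an NNF $\psi$ not mentioning $\nl$, and $\w \models \psi$ with $\nl \in \w$, then $\w[\l] \models \psi$. The clean way is an induction on the structure of $\psi$: the literal leaves of $\psi$ that can change truth value when passing from $\w$ to $\w[\l]$ are exactly those equal to $\l$ (which go from $0$ to $1$; note $\nl$ does not occur in $\psi$ by hypothesis), and every connective $\wedge,\vee$ in an NNF is monotone in its inputs, while negations sit only at leaves and those leaves are constants or variables other than — well, they could be $\n(x)$ for other variables $X$, but those are unchanged. So the truth value of $\psi$ can only go from $0$ to $1$ or stay, never from $1$ to $0$; in particular it stays $1$. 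This is the heart of why literal independence (not merely variable independence) is the right condition. Everything else is assembling Theorems~\ref{theo:ulq-semantics-b} and~\ref{theo:ulq-syntax}, Proposition~\ref{prop:rule-independence}, and the definition of literal independence.
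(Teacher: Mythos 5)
Your proof is correct, but it takes a genuinely different route from the paper's. The paper dispatches this proposition in one line by duality: it invokes Theorem~\ref{theo:lq-duality} ($\forall \l \cdot \varphi = \neg(\exists \l \cdot \neg\varphi)$), the existential counterpart Proposition~\ref{prop:elq-semantics-old} (quoted from Lang, Liberatore and Marquis 2003), and their Proposition~2(4), which says $\varphi$ is independent of literal $\l$ iff $\neg\varphi$ is independent of literal $\nl$. You instead establish all three pieces directly from the paper's own machinery: implication of $\varphi$ from Theorem~\ref{theo:ulq-semantics-b}; independence of $\nl$ from the syntactic form in Theorem~\ref{theo:ulq-syntax} together with Proposition~\ref{prop:rule-independence}; and weakness from Theorem~\ref{theo:ulq-semantics-b} plus an explicit NNF-monotonicity argument (flipping $\nl$ to $\l$ in a model of an NNF that does not mention $\nl$ can only raise the truth values of literal leaves, so cannot destroy satisfaction). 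The paper's version is shorter and parallels the existential case, at the cost of importing results from LLM03; yours is self-contained and exposes the structural reason literal independence behaves as it does, namely the monotonicity of NNF in its leaves. One small slip to fix: in the independence step you justify $\w[\l] \models \neg\alpha$ by saying ``$\alpha$ contains $\nl \notin \w[\l]$,'' but by Definition~\ref{def:brule} the antecedent $\alpha$ of a b-rule $\Rule(\alpha,\nl)$ ranges over all variables \emph{other than} that of $\nl$, so $\alpha$ contains neither $\nl$ nor $\l$. The step is still valid, but for the right reason: $\w$ and $\w[\l]$ agree on every variable mentioned in $\alpha$, hence satisfy the same subformulas over those variables.
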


The second result shows that a set of literals can be universally quantified in any order, therefore justifying the notation \(\forall \{\l_1, \ldots, \l_n\} \cdot \varphi\).

\begin{proposition}\label{prop:ulq-order}
For literals \(\l_1\), \(\l_2\) and formula \(\varphi\), we have \(\forall \l_1 (\forall \l_2 \cdot \varphi) = \forall \l_2 (\forall \l_1 \cdot \varphi)\).
\end{proposition}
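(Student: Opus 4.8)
The plan is to reduce Proposition~\ref{prop:ulq-order} to the analogous (and already-justified) fact about universal \emph{variable} quantification, using the decomposition $\forall \l \cdot \varphi = (\l \wedge (\varphi \cd \l)) \vee (\forall X \cdot \varphi)$ that was derived right after Definition~\ref{def:ulq}. A cleaner route, however, is to argue semantically via Theorem~\ref{theo:ulq-semantics-b}: quantifying $\l_1$ deletes exactly the $\nl_1$-boundary models of $\varphi$, and quantifying $\l_2$ afterwards deletes exactly the $\nl_2$-boundary models of the resulting formula. So it suffices to show that the set of worlds deleted by the two-step process $\forall \l_1(\forall \l_2 \cdot \varphi)$ is symmetric in $\l_1$ and $\l_2$. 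First I would handle the trivial case where $\l_1$ and $\l_2$ are literals of the same variable $X$: if $\l_1 = \l_2$ the claim is immediate (quantifying twice is idempotent, since $\forall \l_1 \cdot \varphi$ is already independent of $\nl_1$ by Proposition~\ref{prop:ulq-semantics-old}, so no $\nl_1$-boundary models remain), and if $\l_1 = \overline{\l_2}$ then both orders equal $\forall X \cdot \varphi$, which is order-independent by Boole's observation recalled in Section~\ref{sec:vq}. So from here on assume $\l_1$ and $\l_2$ are literals of distinct variables.

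For the main case, I would characterize the models of $\forall \l_1(\forall \l_2 \cdot \varphi)$ directly. Write $\psi = \forall \l_2 \cdot \varphi$. By Theorem~\ref{theo:ulq-semantics-b}, a world $\w \models \varphi$ fails to be a model of $\forall \l_1 \cdot \psi$ iff either (i) $\w \not\models \psi$, i.e.\ $\w$ is an $\nl_2$-boundary model of $\varphi$, or (ii) $\w \models \psi$ but $\w$ is an $\nl_1$-boundary model of $\psi$, i.e.\ $\w$ contains $\nl_1$, $\w \models \psi$, and $\w[\l_1] \not\models \psi$. The condition $\w \models \psi$ unfolds (again by Theorem~\ref{theo:ulq-semantics-b}) to: $\w \models \varphi$ and $\w$ is not an $\nl_2$-boundary model of $\varphi$; and $\w[\l_1] \not\models \psi$ unfolds to: $\w[\l_1] \not\models \varphi$, or $\w[\l_1]$ is an $\nl_2$-boundary model of $\varphi$. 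Expanding all of this gives a concrete membership criterion for $\MS(\forall \l_1(\forall \l_2 \cdot \varphi))$ phrased entirely in terms of $\varphi$, the flips $\w[\l_1]$, $\w[\l_2]$, $\w[\nl_1]$, $\w[\nl_2]$ (and their compositions), and which of the literals $\l_1,\nl_1,\l_2,\nl_2$ lie in $\w$. Since $\l_1$ and $\l_2$ are literals of distinct variables, the single-variable flips $\w[\cdot]$ on variable $X_1$ commute with those on variable $X_2$, so all the flipped worlds appearing in the criterion are unchanged if we swap the roles of the two indices. The final step is to verify, by inspecting the (finite, boolean) combination of conditions obtained, that the criterion is literally invariant under interchanging $1 \leftrightarrow 2$; this is the conjunction of ``$\w$ is deleted by $\forall \l_2$'' with ``$\w$ survives $\forall \l_2$ but is boundary-on-$\nl_1$ for the survivor formula,'' and one checks it collapses to the symmetric statement ``$\w \models \varphi$, and flipping $\nl_1$ and/or $\nl_2$ (appropriately, when present in $\w$) drives $\w$ out of $\MS(\varphi)$.'' Equivalently, and perhaps most transparently, one shows both iterated quantifications equal $\forall \{\l_1,\l_2\}$ as characterized by the multiple-literal selection semantics promised after Theorem~\ref{theo:ulq-semantics-b}; but since that general result is stated later, I would keep the argument self-contained via the two-step unfolding above.

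The main obstacle is bookkeeping rather than conceptual: the unfolded membership criterion has several nested cases (does $\w$ contain $\nl_1$? does it contain $\nl_2$? is $\w[\l_1]$ still a model of $\varphi$? is it $\nl_2$-boundary?), and one must be careful that quantifying $\l_1$ second does not re-introduce sensitivity to $\l_2$ — this is exactly where $\psi = \forall \l_2 \cdot \varphi$ being independent of $\nl_2$ (Proposition~\ref{prop:ulq-semantics-old}) does the work, guaranteeing that boundary-ness of $\w$ in $\psi$ with respect to $\nl_1$ never secretly depends on the $X_2$-coordinate. Once that is pinned down, the symmetry is a direct syntactic comparison. An attractive shortcut that avoids the case analysis entirely is to invoke the syntactic form $\forall \l \cdot \varphi = (\l \wedge (\varphi \cd \l)) \vee (\forall X \cdot \varphi)$: applying it twice and using that conditioning on literals of distinct variables commutes (so $(\varphi \cd \l_1) \cd \l_2 = (\varphi \cd \l_2) \cd \l_1$) and that $\forall X_1 \cdot \forall X_2 \cdot \varphi = \forall X_2 \cdot \forall X_1 \cdot \varphi$, one expands $\forall \l_1(\forall \l_2 \cdot \varphi)$ into a disjunction of terms each manifestly symmetric in the two indices after distributing conditioning over the connectives; I would present this algebraic version as the primary proof and relegate the semantic unfolding to a remark.
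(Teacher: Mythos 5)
Your proposal is correct, but it takes a genuinely different route from the paper. The paper's own proof is a one-liner: it cites Proposition~\ref{prop:elq-order} (the existential analogue, taken as known from \cite{LangLM03}) and then transfers it across via the duality of Theorem~\ref{theo:lq-duality}. You instead give a self-contained direct argument, and of the two variants you sketch, the algebraic one is cleanest and does work: using $\forall \l \cdot \varphi = (\l \wedge (\varphi \cd \l)) \vee (\forall X \cdot \varphi)$ twice, together with the commutation of conditioning on distinct variables and of universal variable quantification, one expands $\forall \l_1 (\forall \l_2 \cdot \varphi)$ into
\[
\bigl(\l_1 \wedge \l_2 \wedge (\varphi \cd \l_1 \l_2)\bigr) \vee \bigl(\l_1 \wedge \forall X_2 \cdot (\varphi \cd \l_1)\bigr) \vee \bigl(\l_2 \wedge \forall X_1 \cdot (\varphi \cd \l_2)\bigr) \vee \bigl(\forall X_1, X_2 \cdot \varphi\bigr),
\]
which is manifestly symmetric under $1 \leftrightarrow 2$. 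Your treatment of the same-variable cases ($\l_1 = \l_2$ via idempotence through independence of $\nl_1$; $\l_1 = \nl_2$ via both orders collapsing to $\forall X \cdot \varphi$) is correct, though you should verify the latter by direct calculation rather than leaning on Proposition~\ref{prop:ulq-uvq} (whose set-notation $\forall\{x,\n(x)\}$ already presupposes the order-independence you are proving) or on Boole's variable-order observation (which is not what is needed there). The trade-off: the paper's proof is essentially free given the duality machinery, but opaque to anyone who has not internalized it; your argument costs a page of algebra but is self-contained, works from the definitions, and incidentally produces the explicit symmetric normal form for $\forall \l_1, \l_2 \cdot \varphi$, which is of independent interest. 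One small caveat: to justify that $\forall \l_2$ distributes over the conjunction $\forall X_1 \cdot \varphi = (\varphi \cd \l_1) \wedge (\varphi \cd \nl_1)$ you are implicitly using Proposition~\ref{prop:quantify-compound}(b), which appears later in the paper; it is elementary to verify directly from Definition~\ref{def:ulq}, but you should either do so or restructure the dependency.
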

\noindent We should note that literals \(\l_1\) and \(\l_2\) can be for the same variable and hence conflicting.

The third result shows that universal literal quantification is more fine-grained than universal variable quantification
as we can universally quantify variable \(X\) by universally quantifying literals \(x\) and \(\n(x)\) in any order. 
\begin{proposition}\label{prop:ulq-uvq}
For variable \(X\) and formula \(\varphi\), we have \(\forall X \cdot \varphi = \forall \{x, \n(x)\} \cdot \varphi\).
\end{proposition}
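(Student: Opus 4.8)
The plan is to prove $\forall X \cdot \varphi = \forall \{x, \n(x)\} \cdot \varphi$ by unfolding the definition of universal literal quantification (Definition~\ref{def:ulq}) twice and simplifying, using the fact that conditioning on a literal removes the corresponding variable. Recall that by Definition~\ref{def:uvq}, $\forall X \cdot \varphi$ is (any formula equivalent to) $(\varphi \cd x) \wedge (\varphi \cd \n(x))$. By Proposition~\ref{prop:ulq-order} the order in which we quantify $x$ and $\n(x)$ does not matter, so it suffices to compute $\forall x (\forall \n(x) \cdot \varphi)$.

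First I would apply the observation highlighted after Definition~\ref{def:ulq}, namely that $\forall \l \cdot \varphi = (\l \wedge (\varphi \cd \l)) \vee (\forall Y \cdot \varphi)$ where $Y$ is the variable of $\l$. Applying this with $\l = \n(x)$ gives $\psi \defas \forall \n(x) \cdot \varphi = (\n(x) \wedge (\varphi \cd \n(x))) \vee (\forall X \cdot \varphi)$. The key simplification step is then to compute the conditionings $\psi \cd x$ and $\psi \cd \n(x)$. Conditioning on $x$ kills the disjunct $\n(x) \wedge (\varphi \cd \n(x))$ (it contains literal $\n(x)$, which becomes $\bot$), so $\psi \cd x = \forall X \cdot \varphi$ (noting $\forall X \cdot \varphi$ already does not mention $X$, so conditioning leaves it unchanged). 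Conditioning on $\n(x)$ keeps both disjuncts, yielding $\psi \cd \n(x) = (\varphi \cd \n(x)) \vee (\forall X \cdot \varphi)$, which is equivalent to $(\varphi \cd \n(x))$ since $\forall X \cdot \varphi \models (\varphi \cd \n(x))$ (as $\forall X \cdot \varphi = (\varphi \cd x) \wedge (\varphi \cd \n(x))$).

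Next I would plug these into Definition~\ref{def:ulq} for the outer quantification of $x$:
\[\forall x \cdot \psi = (x \vee (\psi \cd \n(x))) \wedge (\psi \cd x) = \bigl(x \vee (\varphi \cd \n(x))\bigr) \wedge (\forall X \cdot \varphi).\]
Since $\forall X \cdot \varphi \models (\varphi \cd \n(x)) \models x \vee (\varphi \cd \n(x))$, the first conjunct is subsumed by the second, so the whole expression is equivalent to $\forall X \cdot \varphi$. By Proposition~\ref{prop:ulq-order}, $\forall \{x,\n(x)\} \cdot \varphi = \forall x (\forall \n(x) \cdot \varphi) = \forall X \cdot \varphi$, which is the claim.

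I expect the only real subtlety — rather than an obstacle — to be bookkeeping the conditioning operations carefully: one must be a little careful that $(\forall X \cdot \varphi)\cd x = (\forall X \cdot \varphi)\cd \n(x) = \forall X \cdot \varphi$, which holds because $\forall X \cdot \varphi$ is independent of $X$ (Proposition~\ref{prop:uvq}) and conditioning a formula on a literal of a variable it does not mention leaves it unchanged up to equivalence. An alternative, perhaps cleaner, route would be purely semantic: combine Proposition~\ref{prop:ulq-semantics-old} (that $\forall \l \cdot \varphi$ is the weakest formula independent of $\nl$ implying $\varphi$) with the fact that independence of variable $X$ is equivalent to independence of both literals $x$ and $\n(x)$ (stated at the end of Section~\ref{sec:prelim}), plus Propositions~\ref{prop:uvq} and~\ref{prop:ulq-order} — but the syntactic computation above is the most direct and self-contained, so that is the one I would carry out.
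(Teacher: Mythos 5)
Your proof is correct but takes a genuinely different route from the paper's. The paper derives the result in one line by duality: it negates the existential analogue (Proposition~\ref{prop:elq-evq}, itself quoted from \cite{LangLM03}) and pushes the negation through the quantifiers via Theorem~\ref{theo:lq-duality}. You instead compute $\forall x(\forall\n(x)\cdot\varphi)$ directly from Definition~\ref{def:ulq}, using the identity $\forall \l \cdot \varphi = (\l \wedge (\varphi\cd\l)) \vee (\forall X \cdot \varphi)$ (stated in prose after Definition~\ref{def:ulq} and proved as Lemma~\ref{lem:ulq-syntax2}). Your conditioning arithmetic checks out, and you explicitly flag the one genuine subtlety --- that $(\forall X\cdot\varphi)\cd x$ and $(\forall X\cdot\varphi)\cd\n(x)$ are both equivalent to $\forall X\cdot\varphi$ because $\forall X\cdot\varphi$ is independent of $X$ by Proposition~\ref{prop:uvq} --- which is exactly where a careless argument would go wrong, since conditioning is defined syntactically but $\forall X\cdot\varphi$ is only determined up to equivalence. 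What your route buys is self-containment: the paper's proof ultimately rests on an external result (Proposition~20 of \cite{LangLM03}), whereas yours uses only the paper's own definitions and a short equivalence-chase, at the cost of a few extra lines of bookkeeping. Both approaches are sound, and the symmetry of your computation also means you do not strictly need Proposition~\ref{prop:ulq-order} beyond what is required to make sense of the set notation $\forall \{x,\n(x)\}\cdot\varphi$.
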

\noindent Together with Proposition~\ref{prop:ulq-order}, this result shows that we can universally quantify a set of
literals and variables in any order. For example, the following quantifications are all legitimate and equivalent:
\(\forall x, \n(x), Y \cdot \varphi\),  \(\forall x, Y, \n(x) \cdot \varphi\) and \(\forall Y, X \cdot \varphi\).

We are now ready to present our {\em selection semantics} for universally quantifying
a set of (possibly conflicting)  literals. This result will play a major role when discussing the applications of universal quantification to explainable
AI in Section~\ref{sec:XAI}. It invokes the notion of an \(\alpha\)-independent model introduced in Definition~\ref{def:imodel}.
This is a model that contains term \(\alpha\) and that remains a model if we were to flip any literals of~\(\alpha\). 

\begin{theorem}\label{theo:ulq-semantics}
Let \(\varphi\) be a formula, \(\l_1, \ldots, \l_n\) be literals, \(\w\) be a world and \(\alpha = \w \cap \{\nl_1,\ldots,\nl_n\}\).
Then
\(\w \models \forall \l_1, \ldots, \l_n \cdot \varphi\) iff \(\w\) is an \(\alpha\)-independent model of \(\varphi\).
\end{theorem}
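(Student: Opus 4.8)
The plan is to prove the equivalence by induction on $n$, peeling off one literal at a time, with Theorem~\ref{theo:ulq-semantics-b} as the engine. Two reformulations keep the bookkeeping manageable. First, since $\alpha = \w \cap \{\nl_1, \ldots, \nl_n\}$ is automatically a subset of $\w$, Definition~\ref{def:imodel} tells us that $\w$ is an $\alpha$-independent model of $\varphi$ exactly when $\w \setminus \alpha \models \varphi$; and $\w \setminus \alpha$ is a term whose models are precisely the worlds that agree with $\w$ on every variable not appearing in $\alpha$. Second, Theorem~\ref{theo:ulq-semantics-b} can be restated as: for every formula $\psi$ and literal $\l$, $\w \models \forall \l \cdot \psi$ iff $\w \models \psi$ and $\w[\l] \models \psi$. (When $\nl \notin \w$ we have $\w[\l] = \w$ and the two conjuncts coincide; when $\nl \in \w$, a model $\w$ of $\psi$ fails to be $\nl$-boundary precisely when $\w[\l] \models \psi$.) This restatement is exactly the $n = 1$ case of the theorem, since the models of the term $\w \setminus (\w \cap \{\nl\})$ are exactly $\w$ and $\w[\l]$.

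For the inductive step, write $\forall \l_1, \ldots, \l_n \cdot \varphi = \forall \l_n \cdot \psi$ with $\psi = \forall \l_1, \ldots, \l_{n-1} \cdot \varphi$ (legitimate by Proposition~\ref{prop:ulq-order}), and put $S_k = \{\nl_1, \ldots, \nl_k\}$, so $S_n = S_{n-1} \cup \{\nl_n\}$. By the restated $n = 1$ case applied to $\psi$, we have $\w \models \forall \l_n \cdot \psi$ iff $\w \models \psi$ and $\w[\l_n] \models \psi$; applying the induction hypothesis to $\psi$ at the worlds $\w$ and $\w[\l_n]$ rewrites this as the two conditions $\w \setminus (\w \cap S_{n-1}) \models \varphi$ and $\w[\l_n] \setminus (\w[\l_n] \cap S_{n-1}) \models \varphi$. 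So it suffices to prove the set identity
\[
\MS(\w \setminus (\w \cap S_n)) \;=\; \MS(\w \setminus (\w \cap S_{n-1})) \;\cup\; \MS(\w[\l_n] \setminus (\w[\l_n] \cap S_{n-1})),
\]
for then the conjunction above holds iff $\MS(\w \setminus (\w \cap S_n)) \subseteq \MS(\varphi)$, i.e.\ iff $\w \setminus (\w \cap S_n) \models \varphi$, which is the desired conclusion. To prove the identity, observe that each side is a union of worlds determined by which variables it leaves free; writing $X_n$ for the variable of $\l_n$, one checks that $\w \cap S_{n-1}$ and $\w[\l_n] \cap S_{n-1}$ involve the same variables apart from $X_n$, that the variables of $\w \cap S_n$ are that common collection together with $X_n$ (when $\nl_n \in \w$; when $\nl_n \notin \w$ the identity is trivial since then $\w[\l_n] = \w$ and $\w \cap S_n = \w \cap S_{n-1}$), and that flipping $X_n$ interchanges $\w$ and $\w[\l_n]$. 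A short case split on whether a given world carries $\nl_n$ or $\l_n$ then yields both inclusions.

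The main obstacle is this last set identity, specifically its edge cases: $X_n$ may already occur among $\l_1, \ldots, \l_{n-1}$, either as a repeated copy of $\l_n$ or as the conflicting literal $\nl_n$ (e.g.\ when quantifying $\forall x, \n(x) \cdot \varphi = \forall X \cdot \varphi$). In those cases the variables of $\w \cap S_{n-1}$ or of $\w[\l_n] \cap S_{n-1}$ already include $X_n$, so the naive ``flip $X_n$ or not'' split produces overlapping halves; the identity is stated precisely so that this overlap is absorbed, and carrying out the argument at the level of sets of worlds rather than of sets of flipped variables is what avoids a proliferation of sub-cases. I would isolate this identity as a stand-alone observation and then obtain the theorem by combining it with the restated $n = 1$ case through the induction above.
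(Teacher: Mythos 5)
Your proof is correct, but it proceeds along a genuinely different route from the paper's. The paper avoids induction entirely: it proves two auxiliary lemmas --- that every model $\w$ of $\forall \l_1, \ldots, \l_n \cdot \varphi$ yields $\w\setminus\{\nl_1,\ldots,\nl_n\}$ as another model of that formula (Lemma~\ref{lem:ulq-independence}, via the independence characterizations of Propositions~\ref{prop:ulq-semantics-old} and~\ref{prop:elq-semantics-old}), and that $\alpha$-independent models of $\varphi$ remain $\alpha$-independent models of $\forall \l_1, \ldots, \l_n \cdot \varphi$ (Lemma~\ref{lem:imodel-w2s}, via Proposition~\ref{prop:ulq-imply} and Lemma~\ref{lem:ulq-term}) --- and then combines them with Theorem~\ref{theo:ulq-semantics-b} and Lemma~\ref{lem:imodel-s2w} in one sweep. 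You instead induct on $n$, recast Theorem~\ref{theo:ulq-semantics-b} as the clean single-literal statement ``$\w \models \forall \l \cdot \psi$ iff $\w \models \psi$ and $\w[\l] \models \psi$'' (which I checked is an exact rephrasing), apply the inductive hypothesis at both $\w$ and $\w[\l_n]$, and reduce everything to the set identity $\MS(\w\setminus(\w\cap S_n)) = \MS(\w\setminus(\w\cap S_{n-1})) \cup \MS(\w[\l_n]\setminus(\w[\l_n]\cap S_{n-1}))$. That identity does hold in all the edge cases you flag (repeats $\l_n = \l_j$, conflicts $\l_n = \nl_j$, and $\nl_n\notin\w$); in each such case one of the two constituent model sets is simply absorbed by the other, which is exactly why stating the claim at the level of unions of model sets --- rather than as a disjoint split on the value of $X_n$ --- is the right move. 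The paper's route is shorter and reuses its independence machinery; your route is more elementary and self-contained, needing only the single-literal boundary semantics and commutativity (Proposition~\ref{prop:ulq-order}), at the cost of the combinatorial case analysis in the term identity, which you correctly isolate as the technical crux but leave compressed --- written out in full it would need the four-way case split on whether $\nl_n$ and $\l_n$ each belong to $S_{n-1}$.
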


\color{\RED}
Consider $\varphi = (x \vee y \vee z) \wedge (\n(x) \vee y \vee t)$, literals \(x,y,z\) and world $\omega = \n(x)\n(y)zt$,
leading to $\alpha = \w\cap\{\n(x),\n(y),\n(z)\}= \n(x)\n(y)$. Then $\omega$ is an $\alpha$-independent 
model of $\varphi$ since $zt \models \varphi$. Hence, $\omega \models \forall x, y, z \cdot \varphi$ which can be verified since
$\forall x, y, z \cdot \varphi = (x \vee y \vee z) \wedge (y \vee t)$.
For literals \(x, \n(y), \n(z)\), we get \(\alpha = \w \cap \{\n(x),y,z \} = \n(x)z\) so \(\w\)
is not an $\alpha$-independent model of $\varphi$ since $\n(y)t \not \models \varphi$ (indeed, $\n(x)\n(y)\n(z)t \models \nvarphi$).
Hence, $\omega \not \models \forall x, \n(y), \n(z) \cdot \varphi$ which can be verified since $\forall x, \n(y), \n(z) \cdot \varphi = x \wedge t$.
\color{black}

According to Theorem~\ref{theo:ulq-semantics}, when universally quantifying literals \(\l_1, \ldots, \l_n\) from formula \(\varphi\), 
we are ``selecting'' all (and only) models of \(\varphi\) that do not depend on literals \(\nl_1, \ldots, \nl_n\).
That is, the models of \(\forall \l_1, \ldots, \l_n \cdot \varphi\) are precisely the models of \(\varphi\) which 
continue to be models of \(\varphi\) if we were to flip any literals they may have 
in \(\nl_1, \ldots, \nl_n\).\footnote{Each model of \(\forall \l_1, \ldots, \l_n \cdot \varphi\) is also an \(\alpha\)-independent model of 
\(\forall \l_1, \ldots, \l_n \cdot \varphi\); see Lemma~\ref{lem:imodel-w2s}.}
We will revisit Theorem \ref{theo:ulq-semantics} in detail when we discuss explainable AI in Section~\ref{sec:XAI},
but we note here that the presence of \(\alpha\)-independent models indicates the absence of certain 
b-rules.\footnote{Let us say that term \(\gamma\) is an \(\alpha\)-independent implicant
of formula \(\varphi\) iff \(\gamma \models \varphi\),  \(\alpha \subseteq \gamma\) and \(\gamma\setminus\alpha \models \varphi\). Then
Theorem~\ref{theo:ulq-semantics} will hold if we replace ``world" by ``term" and ``model" by ``implicant."}

\begin{proposition}\label{prop:imodel-brule}
Let \(\w = \alpha,\beta\) be a model of formula \(\varphi\) where \(\alpha\) and \(\beta\) are disjoint terms.
Then \(\w\) is an \(\alpha\)-independent model of \(\varphi\) iff \(\varphi\) has no b-rules of the form \(\Rule({\beta,\gamma},\l)\).
\end{proposition}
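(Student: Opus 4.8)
The plan is to prove both directions of the biconditional directly from the definitions of $\alpha$-independent model (Definition~\ref{def:imodel}) and b-rule (Definition~\ref{def:brule}), using the characterization of b-rules given in Proposition~\ref{prop:brule} as the main engine. Throughout, fix $\w = \alpha,\beta$ a model of $\varphi$ where $\alpha$ and $\beta$ are disjoint terms; write $\ps = \{X_1,\ldots,X_n\}$ and note that since $\w$ is a world, $\alpha \cup \beta$ contains exactly one literal per variable, so $\beta$ is exactly $\w \setminus \alpha$ and the variables of $\beta$ are precisely those not mentioned in $\alpha$.

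For the contrapositive of the ``only if'' direction, suppose $\varphi$ has a b-rule of the form $\Rule({\beta,\gamma},\l)$; here $\beta,\gamma$ is a term and $\l$ is a literal whose variable is the unique variable of $\ps$ not appearing in $\beta,\gamma$. Since $\gamma$ extends $\beta$ using only literals over variables of $\alpha$, and $\l$ is over the one remaining variable, I claim the literals $\gamma,\l$ together agree with a flip of $\w$ inside $\alpha$: concretely, let $\w' = \beta,\gamma,\l$. This is a world, it agrees with $\w$ on all variables of $\beta$, and it differs from $\w$ only by flipping some literals of $\alpha$ to $\nl$ for the variable of $\l$ and to $\gamma$'s choices on the other variables of $\alpha$. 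By the definition of b-rule, $\beta,\gamma,\nl \models \neg\varphi$, i.e.\ the world obtained from $\w'$ by flipping $\l$ is a counter-model of $\varphi$. But that world is $\w$ with the literals of $\alpha$ flipped in a particular way — so it is $\w$ with some subset of $\alpha$'s literals flipped — hence $\w$ is \emph{not} an $\alpha$-independent model of $\varphi$ (we have exhibited a flip of literals of $\alpha$ landing outside $\MS(\varphi)$). I would spell out this index bookkeeping carefully since it is where the ``$\beta,\gamma$'' shape of the b-rule is used. The converse direction of the contrapositive runs the same computation backwards: if $\w$ is not $\alpha$-independent, there is a flip of some literals of $\alpha$ yielding a counter-model $\w''$; pick a literal $\l$ on which $\w''$ already differs from the desired boundary witness, set $\gamma$ to be the literals of $\w''$ on $\alpha$'s variables other than that of $\l$, and verify that $\Rule({\beta,\gamma},\l)$ is a b-rule using that $\beta,\gamma,\l$ (agreeing with $\w$ on $\beta$, flipping the chosen literal of $\w''$ back) is a model of $\varphi$ while $\beta,\gamma,\nl = \w'' \models \neg\varphi$. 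One must check $\beta,\gamma,\l \models \varphi$: this needs the intermediate flipped world along the path from $\w$ to $\w''$ to still be a model, which follows because $\w'' $ was chosen as a \emph{minimal} (or suitably chosen) flip so that flipping one literal back recovers a model — alternatively invoke Proposition~\ref{prop:brule} to reduce $\beta,\gamma,\l \models \varphi$ to ``$\varphi \wedge (\beta,\gamma)$ consistent and $\varphi \models (\beta,\gamma) \then \l$,'' each of which I can argue from $\w \models \varphi$ and minimality.

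The cleanest route is probably to bypass the explicit flip-path argument and instead use Proposition~\ref{prop:brule} as a black box on both sides. The statement ``$\varphi$ has no b-rule $\Rule({\beta,\gamma},\l)$'' becomes, by Proposition~\ref{prop:brule}, ``for every term $\gamma$ over the variables of $\alpha$ minus one and every literal $\l$ over that remaining variable, either $\varphi \wedge \beta \wedge \gamma$ is inconsistent or $\varphi \not\models (\beta \wedge \gamma) \then \l$.'' I then want to show this is equivalent to: for every literal assignment $\alpha'$ over the variables of $\alpha$, $\beta,\alpha' \models \varphi$. Going one way: given such an $\alpha'$, if $\beta,\alpha' \not\models \varphi$ then $\beta,\alpha' \models \neg\varphi$; pick the variable of $\alpha'$ corresponding to, say, $\l$, let $\gamma = \alpha'$ restricted to the other variables of $\alpha$ and let $\l$ be $\overline{\alpha'[\text{that variable}]}$; then $\varphi \wedge \beta \wedge \gamma$ is consistent (witnessed by $\w$ restricted appropriately, or by $\w$ itself since $\beta,\gamma$ is part of... — here I need the consistency witness, which is the flip of $\w$ agreeing with $\gamma$ on $\alpha$'s other variables, and I claim this is a model; if it isn't, iterate) and $\varphi \models (\beta\wedge\gamma)\then\l$ because the only completion of $\beta,\gamma$ that could falsify $\varphi$'s consequent is $\beta,\gamma,\nl = \beta,\alpha' \models \neg\varphi$. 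The other way is symmetric.

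The main obstacle I anticipate is the consistency condition in Proposition~\ref{prop:brule}: a ``bad'' flip $\beta,\alpha'$ that is a counter-model does not by itself guarantee that the matching $\beta \wedge \gamma$ is \emph{consistent with} $\varphi$, which is required for $\Rule({\beta,\gamma},\l)$ to actually be a b-rule. Resolving this requires choosing $\alpha'$ to be a counter-model that is as close to $\w$ as possible (minimal Hamming distance from $\w$ within the flips of $\alpha$), so that some one-literal-back-flip of it is a model of $\varphi$ — that neighbor provides the needed consistency witness for $\beta \wedge \gamma$, and also makes $\beta,\gamma,\l$ a model giving the ``$\models$'' half another way. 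So the real content of the proof is this minimality argument establishing that a non-$\alpha$-independent model always has a counter-model-flip \emph{on the boundary} (one flip away from a model), which then corresponds exactly to a b-rule of the required shape; the two applications of Proposition~\ref{prop:brule} then package everything. I would present the minimality argument as a short lemma-style paragraph inside the proof, then do the two Proposition~\ref{prop:brule} translations, which are routine.
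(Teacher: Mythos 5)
Your proposal is correct, and the key step you isolate — that a non-$\alpha$-independent model must have a counter-model over $\beta$ that is Hamming-minimal with respect to $\w$, whose one-flip neighbor is then a model and hence a boundary model supplying the b-rule — is exactly the content that makes the ``if'' direction go. The paper packages this same Hamming-cube idea slightly differently: rather than arguing by contrapositive from a minimal counter-example, it argues directly that when no b-rule of the form $\Rule({\beta,\gamma},\l)$ exists, the condition ``$\beta,\gamma,\l \models \varphi$ only if $\beta,\gamma,\nl \models \varphi$'' lets one walk forward along any single-flip path from $\w = \beta,\alpha$ to an arbitrary $\w^\star \supseteq \beta$, preserving model-hood at each step, so $\beta \models \varphi$ and $\w$ is $\alpha$-independent. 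Your ``take the closest counter-model'' and the paper's ``induct along the path from the model $\w$'' are dual phrasings of the same connectedness argument. One thing your proposal over-complicates: routing through Proposition~\ref{prop:brule} and then worrying about the consistency side-condition is not necessary — the paper works directly from Definition~\ref{def:brule}, which says $\Rule({\beta,\gamma},\l) \in \RS(\varphi)$ iff $\beta,\gamma,\l \models \varphi$ and $\beta,\gamma,\nl \models \neg\varphi$; this avoids the ``is $\varphi \wedge \beta \wedge \gamma$ consistent?'' detour entirely, since the two-model/counter-model condition is exactly what the path argument delivers. The minimality obstacle you correctly identify is a genuine part of the proof either way (you need a boundary model, not merely a counter-model), and your resolution is sound; but using the boundary-model definition directly makes the bookkeeping shorter. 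The ``only if'' direction in both your account and the paper's is the easy observation that a b-rule $\Rule({\beta,\gamma},\l)$ exhibits $\beta,\gamma,\nl \models \neg\varphi$, a world extending $\beta$, so $\beta \not\models \varphi$.
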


\color{\BLUE}
In Appendix~\ref{app:b-rule-dynamics}, we provide a complete characterization of which b-rules are deleted, introduced or preserved
when universally quantifying a literal. This allows us to define quantification as a process of b-rule transformation. 
\color{black}

\subsection{Existential Literal Quantification} 
\label{sec:elq}

We next review and study further the existential quantification of literals which was first introduced 
and studied in~\cite{LangLM03} under the name of {\em literal forgetting.}
Before we define the existential quantification of literal \(\l\) from a formula \(\varphi\), we recall again
the well known 
Boole's or Shannon's expansion, $\varphi = (\l \wedge (\varphi \cd \l)) \vee (\nl \wedge (\varphi \cd \nl))$.

\begin{definition}\label{def:elq}
Existentially quantifying literal \(\l\) from formula \(\varphi\) is defined as follows:
\[\exists \l \cdot \varphi \defas (\varphi \cd \l) \vee (\nl \wedge (\varphi \cd \nl)). \]
\end{definition}
\noindent That is, the operator \(\exists \l\) drops literal \(\l\) from the expansion \(\varphi = (\l \wedge (\varphi \cd \l)) \vee (\nl \wedge (\varphi \cd \nl))\).

Consider the formula \(\varphi = (x \then y)\wedge(y\then x)\) which says that variables \(X\) and \(Y\) are equivalent.
We have \(\exists x \cdot \varphi = (x \then y) = (\n(x)\vee y)\) which is independent of literal \(x\).
We also have \(\exists \n(x) \cdot \varphi = (y \then x) = (\n(y)\vee x)\) which is independent of literal \(\n(x)\).
Note, however, that \(\exists X \cdot \varphi = \top\).
We finally have \(\exists x (\exists \n(x) \cdot \varphi) = \exists \n(x) (\exists x \cdot \varphi) = \top\).

\shrink{
\textcolor{\RED}{Since $(\varphi \cd \l) \vee (\nl \wedge (\varphi \cd \nl))$ is equivalent to 
$((\varphi \cd \l) \vee \nl) \wedge ((\varphi \cd \l) \vee  (\varphi \cd \nl))$, and
since $((\varphi \cd \l) \vee  (\varphi \cd \nl))$ is equivalent to $\exists X \cdot \varphi$ where $X$ is the variable of $\l$, we get
$\exists \l \cdot \varphi = ((\varphi \cd \l) \vee \nl) \wedge (\exists X \cdot \varphi)$.
This shows that $\exists \l \cdot \varphi$ can be obtained from 
$\exists X \cdot \varphi$ by removing all models that contain literals $\l$ and that are not a model of $\varphi$.}
}

Existential and universal literal quantification are related by the following duality.
\begin{theorem}\label{theo:lq-duality}
For literal \(\l\) and formula \(\varphi\),
\(\exists \l \cdot \varphi = \neg(\forall \l \cdot \neg \varphi)\) and 
\(\forall \l \cdot \varphi = \neg(\exists \l \cdot \neg \varphi)\).
\end{theorem}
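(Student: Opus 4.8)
The plan is to prove both identities by directly expanding the definitions of \(\forall \l \cdot \varphi\) and \(\exists \l \cdot \varphi\) and pushing the outer negation through with De Morgan's laws. The one auxiliary fact I need is that conditioning commutes with negation: for any formula \(\varphi\) and literal \(\l\), the formula \((\neg \varphi) \cd \l\) is (logically equivalent to, and in fact syntactically the same as up to the placement of \(\neg\)) the formula \(\neg(\varphi \cd \l)\). This is immediate from the definition of conditioning, since \(\varphi \cd x\) replaces every occurrence of variable \(X\) by the constant \(\top\) and \(\varphi \cd \n(x)\) replaces it by \(\bot\), and substituting a Boolean constant for a variable commutes with every Boolean connective, in particular with \(\neg\). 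I will phrase this at the level of logical equivalence, so that nothing hinges on the exact syntactic shape produced by the substitution.

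Granting this, the first identity goes as follows. By Definition~\ref{def:ulq}, \(\forall \l \cdot \neg \varphi = (\l \vee ((\neg \varphi) \cd \nl)) \wedge ((\neg \varphi) \cd \l)\), and applying the commutation fact this is equivalent to \((\l \vee \neg(\varphi \cd \nl)) \wedge \neg(\varphi \cd \l)\). Negating and applying De Morgan twice yields \(\neg(\forall \l \cdot \neg \varphi) = \neg(\l \vee \neg(\varphi \cd \nl)) \vee (\varphi \cd \l) = (\nl \wedge (\varphi \cd \nl)) \vee (\varphi \cd \l)\), which is exactly \(\exists \l \cdot \varphi\) by Definition~\ref{def:elq}. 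Hence \(\exists \l \cdot \varphi = \neg(\forall \l \cdot \neg \varphi)\). For the second identity I would substitute \(\neg \varphi\) for \(\varphi\) in the identity just proved, getting \(\exists \l \cdot \neg \varphi = \neg(\forall \l \cdot \neg \neg \varphi) = \neg(\forall \l \cdot \varphi)\), and then negate both sides (double-negation elimination) to obtain \(\forall \l \cdot \varphi = \neg(\exists \l \cdot \neg \varphi)\).

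There is no real obstacle; the content is entirely bookkeeping with De Morgan's laws. The only point to handle with a little care is that the equalities in Definitions~\ref{def:ulq} and~\ref{def:elq}, and in the statement, are understood up to logical equivalence, so each step should be read as an equivalence rather than a syntactic identity — which is why I want the conditioning/negation commutation stated at the level of equivalence. As an alternative, one could verify the first identity model-theoretically, using Theorem~\ref{theo:ulq-semantics-b} to describe \(\MS(\forall \l \cdot \neg \varphi)\) in terms of the \(\nl\)-boundary models of \(\neg \varphi\) together with the b-rule correspondence of Proposition~\ref{prop:rconnect}(a), but the computational proof above is shorter and self-contained, so I would present that one.
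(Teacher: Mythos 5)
Your proof is correct and takes essentially the same approach as the paper: expand the definition of \(\forall \l \cdot \neg \varphi\), push the negation through with De Morgan's laws (using the fact that conditioning commutes with negation), and recognize the result as \(\exists \l \cdot \varphi\) by Definition~\ref{def:elq}. The only cosmetic difference is that you derive the second identity by substituting \(\neg \varphi\) and negating both sides, whereas the paper simply notes the symmetric calculation, and you make explicit the commutation of conditioning with negation that the paper uses tacitly.
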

\noindent This is symmetric to the variable quantification duality: \(\exists X \cdot \varphi = \neg (\forall X \cdot \neg \varphi)\)
and \(\forall X \cdot \varphi = \neg (\exists X \cdot \neg \varphi)\). In both cases, pushing a negation through
a quantifier flips that quantifier.

The next three results come from~\cite{LangLM03}.
The first result provides a semantical characterization of existential literal quantification using the notion of independence.
\begin{proposition}\label{prop:elq-semantics-old}
\(\exists \l \cdot \varphi\) is the logically strongest formula that is independent of literal \(\l\) and that is implied by formula \(\varphi\). 
\end{proposition}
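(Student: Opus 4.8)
The plan is to obtain this as the dual of Proposition~\ref{prop:ulq-semantics-old}. By Theorem~\ref{theo:lq-duality}, if we set $\chi = \forall \l \cdot \neg \varphi$ then $\exists \l \cdot \varphi = \neg \chi$; and Proposition~\ref{prop:ulq-semantics-old}, applied to the formula $\neg \varphi$ and the literal $\l$, says that $\chi$ is the logically weakest formula that is independent of literal $\nl$ and that implies $\neg \varphi$. So the whole argument reduces to checking that taking negations converts each of the three clauses of this characterization into the corresponding clause for $\exists \l \cdot \varphi$.

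Two of these conversions are immediate. ``Implies $\neg\varphi$'' dualizes to ``is implied by $\varphi$'' because $\MS(\chi) \subseteq \MS(\neg\varphi)$ holds iff $\MS(\varphi) \subseteq \MS(\neg\chi)$. And ``logically weakest such formula'' dualizes to ``logically strongest such formula'': if $\psi$ is any formula that is independent of literal $\l$ with $\varphi \models \psi$, then (by the independence conversion discussed next) $\neg\psi$ is independent of literal $\nl$ and $\neg\psi \models \neg\varphi$, so minimality of $\chi$ gives $\neg\psi \models \chi$, i.e. $\neg\chi \models \psi$; since $\neg\chi$ itself has the two required properties, it is the logically strongest formula with them, and hence is pinned down up to logical equivalence.

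The one genuine ingredient is the lemma that a formula is independent of literal $m$ iff its negation is independent of literal $\bar m$; applied to $\chi$ it gives that $\neg\chi = \exists \l \cdot \varphi$ is independent of $\l$. I would prove this lemma using results already in hand: by Proposition~\ref{prop:rule-independence} a formula is independent of a literal exactly when it has no b-rule with that literal in its consequent, and by Proposition~\ref{prop:rconnect}(a) the b-rules $\alpha\rightarrow \bar m$ of $\neg\psi$ are exactly the $\alpha\rightarrow m$ of $\psi$; combining these two facts gives the equivalence. (A more elementary argument is available as well: rewrite an NNF for $\psi$ that avoids literal $m$, push its outer negation inward with De~Morgan, and observe that every occurrence of a literal is replaced by its complement, so the resulting NNF for $\neg\psi$ avoids literal $\bar m$.)

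I expect this lemma to be the only delicate point, since one must remember that literal-independence flips under negation (unlike variable-independence, which is preserved). The rest is bookkeeping with Theorem~\ref{theo:lq-duality} and with the meaning of ``logically strongest/weakest''. If one wanted a self-contained proof not routed through the universal operator, the same three claims can instead be verified directly from Definition~\ref{def:elq}: $\varphi \models \exists\l\cdot\varphi$ and the independence of $(\varphi\cd\l) \vee (\nl \wedge (\varphi \cd \nl))$ from literal $\l$ are straightforward from the conditioning properties, and minimality follows by a short case analysis on whether a model of $\exists\l\cdot\varphi$ satisfies $\varphi\cd\l$ or $\nl\wedge(\varphi\cd\nl)$, using that an NNF avoiding literal $\l$ is monotone non-decreasing in literal $\nl$.
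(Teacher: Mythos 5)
Your derivation is a valid piece of mathematics, but it runs in exactly the opposite direction from the paper's own dependency chain, and this creates a circularity problem. The paper does not prove Proposition~\ref{prop:elq-semantics-old} at all: it simply cites it as Proposition~16 of \cite{LangLM03}. It then derives Proposition~\ref{prop:ulq-semantics-old} \emph{from} Proposition~\ref{prop:elq-semantics-old} via Theorem~\ref{theo:lq-duality} and the fact (cited as Proposition~2(4) of \cite{LangLM03}) that $\varphi$ is independent of $\l$ iff $\neg\varphi$ is independent of $\nl$. Your proof is precisely the inverse of that derivation: you take Proposition~\ref{prop:ulq-semantics-old} as given and dualize to recover Proposition~\ref{prop:elq-semantics-old}. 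Since the two derivations are inverses (duality and independence-flipping are both involutions), neither one can serve as a proof of the other's hypothesis without an external anchor. In the paper, the anchor is the citation to \cite{LangLM03} for the existential statement; if you instead anchor at the universal statement, you must supply an independent proof of Proposition~\ref{prop:ulq-semantics-old}, which your argument does not do.

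Your treatment of the one nontrivial ingredient, the independence-flipping lemma, is correct and nicely done: the route through Proposition~\ref{prop:rule-independence} and Proposition~\ref{prop:rconnect}(a) gives a clean internal proof, and the NNF/De~Morgan argument you sketch is also sound. That lemma is exactly what the paper itself leans on (there cited externally) when going the other way, so your reasoning about what dualizes and why is on target. The gap is purely one of proof architecture: you need a base case, and the cleanest fix is the self-contained direct proof from Definition~\ref{def:elq} that you mention in your last paragraph. That direct route, establishing that $\varphi \models \exists\l\cdot\varphi$, that $(\varphi\cd\l)\vee(\nl\wedge(\varphi\cd\nl))$ is independent of $\l$, and the minimality claim, would stand on its own and break the cycle; as written, however, your main argument assumes the very thing the paper uses this proposition to prove.
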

Similar to existential variable quantification, existentially quantifying literal \(\l\) removes information from formula \(\varphi\)
but it only removes information that depends on literal \(\l\). That is, every consequence of \(\varphi\) is also a consequence 
of \(\exists \l \cdot \varphi\) as long as that consequence does not depend on literal \(\l\). Literal quantification
provides a more fine-grained notion of forgetting, which enables more refined treatments. This is particularly the case when managing 
inconsistent information as we can now forget less information from one formula to make it consistent with another. 
Literal forgetting was employed in extended logic programs~\cite{Wang-etal05} and disjunctive logic programs~\cite{EiterWang06,EiterWang08}. 
It was also used to define more refined update operators using the ``forget-then-conjoin'' scheme that we discussed earlier~\cite{Herzigetal13}.

The second results from~\cite{LangLM03} shows that the order in which literals are existentially quantified does not matter,
which justifies the notation \(\exists \{\l_1, \ldots, \l_n\} \cdot \varphi\).
\begin{proposition}\label{prop:elq-order}
For literals \(\l_1\), \(\l_2\) and formula \(\varphi\), we have \(\exists \l_1 (\exists \l_2 \cdot \varphi) = \exists \l_2 (\exists \l_1 \cdot \varphi)\).
\end{proposition}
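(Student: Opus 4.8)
The plan is to prove Proposition~\ref{prop:elq-order} by exploiting the duality between existential and universal literal quantification established in Theorem~\ref{theo:lq-duality}, rather than by a direct syntactic manipulation of the conditioning-based definition. Since the analogous result for universal quantification (Proposition~\ref{prop:ulq-order}) is already available, this reduces the work to a short dualization argument.

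First I would recall the two identities we will use: $\exists \l \cdot \psi = \neg(\forall \l \cdot \neg \psi)$ for every formula $\psi$ and literal $\l$ (Theorem~\ref{theo:lq-duality}), together with $\forall \l \cdot \neg\psi = \neg(\exists \l \cdot \psi)$, which is the same identity rearranged. Then, starting from $\exists \l_1 (\exists \l_2 \cdot \varphi)$, I would rewrite the inner quantification as $\exists \l_2 \cdot \varphi = \neg(\forall \l_2 \cdot \neg\varphi)$, and then apply duality to the outer quantification: $\exists \l_1(\neg(\forall \l_2 \cdot \neg\varphi)) = \neg(\forall \l_1 \cdot \neg\neg(\forall \l_2 \cdot \neg\varphi)) = \neg(\forall \l_1(\forall \l_2 \cdot \neg\varphi))$, using that double negation is a logical equivalence and that quantification is defined only up to logical equivalence. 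Now Proposition~\ref{prop:ulq-order} applied to $\neg\varphi$ gives $\forall \l_1(\forall \l_2 \cdot \neg\varphi) = \forall \l_2(\forall \l_1 \cdot \neg\varphi)$, and undoing the dualization in the symmetric way yields $\neg(\forall \l_2(\forall \l_1 \cdot \neg\varphi)) = \exists \l_2(\exists \l_1 \cdot \varphi)$, which is the desired conclusion.

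Alternatively, a self-contained semantic proof is available and worth mentioning as a backup: one could show $\MS(\exists \l_1(\exists \l_2 \cdot \varphi))$ equals the set of worlds $\w$ such that some world $\w'$ with $\w' \models \varphi$ agrees with $\w$ on all variables except possibly the variables of $\l_1$ and $\l_2$, with the extra constraint governing which of the literals $\l_1, \l_2, \nl_1, \nl_2$ are allowed to be flipped — a description manifestly symmetric in the two literals. This would parallel whatever semantic characterization underlies Proposition~\ref{prop:elq-order} in the original source, but the duality route is cleaner since it reuses a theorem already proved in the paper.

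The only real subtlety — and hence the main obstacle — is bookkeeping around the case where $\l_1$ and $\l_2$ are literals of the same variable (possibly $\l_1 = \nl_2$), since then the two quantifications interact nontrivially and one cannot appeal to any ``disjoint support'' intuition. However, this case is already handled by Proposition~\ref{prop:ulq-order} (whose accompanying remark explicitly notes that $\l_1$ and $\l_2$ may be conflicting literals of the same variable), so the duality argument inherits that generality for free. The remaining steps are purely formal: verifying that each rewriting preserves logical equivalence, which is immediate since $\exists \l \cdot$ and $\forall \l \cdot$ are themselves only defined up to equivalence and negation is an equivalence-preserving operation.
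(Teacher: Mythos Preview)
Your duality argument is mathematically valid, but in the context of this paper it is circular: the paper proves Proposition~\ref{prop:ulq-order} \emph{from} Proposition~\ref{prop:elq-order} via duality (``Follows directly from Proposition~\ref{prop:elq-order} and Theorem~\ref{theo:lq-duality}''), not the other way around. So invoking Proposition~\ref{prop:ulq-order} to establish Proposition~\ref{prop:elq-order} would be assuming what you need to prove. The paper's own ``proof'' of Proposition~\ref{prop:elq-order} is simply a citation to footnote~4 of \cite{LangLM03}, where the result was originally established; the existential statement is taken as the primitive fact from the literature, and the universal analogue is then derived by duality.

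Your backup semantic sketch is closer to a self-contained argument and would avoid the circularity, though as written it is only an outline. If you want a direct proof that stays within the paper, the cleanest route is to work from Definition~\ref{def:elq} and verify the equality by case analysis on whether $\l_1$ and $\l_2$ are over the same variable (with the subcase $\l_1 = \nl_2$ requiring care, as you note), or to compute the model sets directly using Theorem~\ref{theo:elq-semantics-b}.
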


The third result shows that existential literal quantification is more primitive than existential variable quantification
as we can existentially quantify a variable \(X\) by existentially quantifying its two literals \(x\) and \(\n(x)\) in any order.
\begin{proposition}\label{prop:elq-evq}
For variable \(X\) and formula \(\varphi\), we have \(\exists X \cdot \varphi = \exists \{x, \n(x)\} \cdot \varphi\).
\end{proposition}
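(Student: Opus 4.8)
The plan is to derive this from the already-established universal counterpart (Proposition~\ref{prop:ulq-uvq}) together with the two duality results, rather than computing with conditioning directly. First I would unfold the set notation as an iterated quantification, say $\exists \{x,\n(x)\} \cdot \varphi = \exists x (\exists \n(x) \cdot \varphi)$; this is legitimate because Proposition~\ref{prop:elq-order} guarantees that the order of existential literal quantification is immaterial, so the notation $\exists\{x,\n(x)\}\cdot\varphi$ is well defined and equals this particular iteration.

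Next I would push negations through using Theorem~\ref{theo:lq-duality}, which states $\exists \l \cdot \psi = \neg(\forall \l \cdot \neg \psi)$ for any literal $\l$ and formula $\psi$. Applying it to the inner quantifier gives $\exists \n(x) \cdot \varphi = \neg(\forall \n(x) \cdot \neg\varphi)$, and applying it again to the outer quantifier (with $\psi = \exists\n(x)\cdot\varphi$) gives $\exists x (\exists \n(x) \cdot \varphi) = \neg\bigl(\forall x \cdot \neg(\exists \n(x) \cdot \varphi)\bigr) = \neg\bigl(\forall x (\forall \n(x) \cdot \neg\varphi)\bigr)$, where the last step cancels a double negation.

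Now I would invoke Proposition~\ref{prop:ulq-uvq}, which says $\forall X \cdot \psi = \forall \{x,\n(x)\}\cdot\psi = \forall x(\forall\n(x)\cdot\psi)$, applied to $\psi = \neg\varphi$; this turns the right-hand side into $\neg(\forall X \cdot \neg\varphi)$. Finally, Proposition~\ref{prop:evquvq-duality} gives $\neg(\forall X \cdot \neg\varphi) = \exists X \cdot \varphi$, completing the chain $\exists\{x,\n(x)\}\cdot\varphi = \exists X \cdot \varphi$.

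The argument is essentially bookkeeping with dualities, so there is no serious obstacle; the one point to be careful about is the cancellation of the double negation inside the outer quantifier, and the fact that every identity here holds only up to logical equivalence (quantification operators return a representative of an equivalence class), which is harmless since all the cited results are themselves stated up to logical equivalence. As a sanity check one could also give the direct computation: writing $\psi = \exists\n(x)\cdot\varphi = (\varphi \cd \n(x)) \vee (x \wedge (\varphi \cd x))$ and noting that $\varphi\cd x$ and $\varphi\cd\n(x)$ do not mention $X$, one gets $\psi\cd x = (\varphi\cd\n(x))\vee(\varphi\cd x)$ and $\psi\cd\n(x) = \varphi\cd\n(x)$, so $\exists x \cdot \psi = (\psi\cd x)\vee(\n(x)\wedge(\psi\cd\n(x)))$ collapses to $(\varphi\cd x)\vee(\varphi\cd\n(x))$, which is exactly $\exists X \cdot \varphi$ by Definition~\ref{def:evq}.
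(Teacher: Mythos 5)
Your primary argument has a circularity problem within this paper's logical development: the paper proves Proposition~\ref{prop:ulq-uvq} \emph{from} Proposition~\ref{prop:elq-evq} together with Theorem~\ref{theo:lq-duality}, so invoking Proposition~\ref{prop:ulq-uvq} to establish Proposition~\ref{prop:elq-evq} runs the derivation in a circle. The same concern applies in spirit to leaning on Proposition~\ref{prop:elq-order} to unfold the set notation, since that order-independence is a result from \cite{LangLM03} that the paper treats as a sibling of Proposition~\ref{prop:elq-evq} rather than as a prerequisite. The paper's own ``proof'' is simply a citation to Proposition~20 of \cite{LangLM03}, so one cannot extract a dependency order from it directly, but the way Proposition~\ref{prop:ulq-uvq} is derived makes the intended direction clear: $\exists X \cdot \varphi = \exists\{x,\n(x)\}\cdot\varphi$ comes first, and its universal counterpart is obtained from it by duality.

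Fortunately, your closing ``sanity check'' is in fact the correct, self-contained proof and does not touch Proposition~\ref{prop:ulq-uvq} at all. It unfolds Definition~\ref{def:elq} twice, uses that $\varphi\cd x$ and $\varphi\cd\n(x)$ contain no occurrence of $X$ (so that conditioning $\psi = \exists\n(x)\cdot\varphi = (\varphi\cd\n(x))\vee(x\wedge(\varphi\cd x))$ on $x$ or $\n(x)$ only affects the explicit literal), and then absorbs $\n(x)\wedge(\varphi\cd\n(x))$ into $\varphi\cd\n(x)$ to land exactly on $(\varphi\cd x)\vee(\varphi\cd\n(x)) = \exists X\cdot\varphi$ per Definition~\ref{def:evq}. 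I would promote that computation to be the proof and drop the duality chain, or at most note the duality relationship as a corollary remark. One small caveat: as written, the ``sanity check'' fixes one particular order $\exists x(\exists\n(x)\cdot\varphi)$; if you want the statement for the set $\{x,\n(x)\}$ without already assuming order-independence, either repeat the symmetric computation for $\exists\n(x)(\exists x\cdot\varphi)$ (it collapses to the same formula), or state that you are verifying both orders.
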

\noindent Together with Proposition~\ref{prop:elq-order}, this result shows that we can existentially quantify a set of
literals and variables in any order. 

The next set of results are new and follow from results we presented for universal literal quantification
using the duality between existential and universal literal quantification.
The first result provides a {\em semantical characterization} of existential literal quantification.

\begin{theorem}\label{theo:elq-semantics-b}
For formula $\varphi$ and literal $\l$, we have $M(\varphi) \subseteq M(\exists \l \cdot \varphi)$. Moreover,
$\w \in M(\exists \l \cdot \varphi)$ and $\w \not \in M(\varphi)$ iff $\w$ is an $\nl$-boundary model of $\neg\varphi$.
\end{theorem}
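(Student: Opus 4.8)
The plan is to derive this result entirely from its universal counterpart, Theorem~\ref{theo:ulq-semantics-b}, using the duality in Theorem~\ref{theo:lq-duality}. Concretely, I would first instantiate Theorem~\ref{theo:ulq-semantics-b} with the formula $\neg\varphi$ playing the role of the quantified formula and the same literal $\l$. This gives two facts: $M(\forall \l \cdot \neg\varphi) \subseteq M(\neg\varphi)$, and the characterization that $\w \in M(\neg\varphi)$ and $\w \not\in M(\forall \l \cdot \neg\varphi)$ iff $\w$ is an $\nl$-boundary model of $\neg\varphi$.

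Next I would transport these two facts across the duality $\exists \l \cdot \varphi = \neg(\forall \l \cdot \neg\varphi)$. Working over the fixed set of all worlds on $\ps$, the set $M(\exists \l \cdot \varphi)$ is exactly the complement of $M(\forall \l \cdot \neg\varphi)$, and $M(\varphi)$ is exactly the complement of $M(\neg\varphi)$. Complementing the inclusion $M(\forall \l \cdot \neg\varphi) \subseteq M(\neg\varphi)$ reverses it and yields $M(\varphi) \subseteq M(\exists \l \cdot \varphi)$, which is the first claim.

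For the second claim, I would rewrite the event ``$\w \in M(\exists \l \cdot \varphi)$ and $\w \not\in M(\varphi)$'' under the same complementation: $\w \in M(\exists \l \cdot \varphi)$ is equivalent to $\w \not\in M(\forall \l \cdot \neg\varphi)$, and $\w \not\in M(\varphi)$ is equivalent to $\w \in M(\neg\varphi)$. Hence the event is precisely ``$\w \in M(\neg\varphi)$ and $\w \not\in M(\forall \l \cdot \neg\varphi)$'', which by the instantiation of Theorem~\ref{theo:ulq-semantics-b} above holds iff $\w$ is an $\nl$-boundary model of $\neg\varphi$. Here one uses that $\neg\neg\varphi$ and $\varphi$ have the same models, so ``$\nl$-boundary model of $\neg\varphi$'' in the sense of Definition~\ref{def:bmodel} is exactly the condition $\nl \in \w$, $\w \models \neg\varphi$, and $\w[\l] \models \varphi$. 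This completes the argument.

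The argument is essentially bookkeeping, so there is no hard technical core; the only points requiring care are the direction in which complementation reverses the set inclusion while preserving or negating the membership statements, and correctly matching the literal in ``$\nl$-boundary'' between the universal statement about $\neg\varphi$ and the existential statement about $\varphi$. As a sanity check I would also verify the claim directly from Definition~\ref{def:elq} on a small instance such as $\varphi = x \Leftrightarrow y$ with $\l = x$, but I do not expect to need the direct route.
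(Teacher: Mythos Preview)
Your proposal is correct and follows essentially the same route as the paper: both use the duality $\exists \l \cdot \varphi = \neg(\forall \l \cdot \neg\varphi)$ together with Theorem~\ref{theo:ulq-semantics-b} applied to $\neg\varphi$ to obtain the characterization. The only cosmetic difference is that the paper cites Proposition~\ref{prop:elq-semantics-old} for the inclusion $M(\varphi) \subseteq M(\exists \l \cdot \varphi)$, whereas you derive it by complementing the inclusion from Theorem~\ref{theo:ulq-semantics-b}; your version is slightly more self-contained.
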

\noindent That is, existentially quantifying literal \(\l\) from formula \(\varphi\) adds the
\(\nl\)-boundary models of \(\neg\varphi\). These models must contain literal \(\nl\) so no model that contains
literal \(\l\) is added. 

The second result provides a {\em syntactic characterization} of existential literal quantification using the notion of b-rules.

\begin{theorem}\label{theo:elq-syntax}
For formula $\varphi$ and literal \(\l\), we have 
$\exists \l \cdot \varphi = \varphi \vee \bigvee_{\Rule(\alpha,\l) \in \RS(\varphi)} \alpha.$
\end{theorem}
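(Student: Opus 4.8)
The plan is to obtain this identity from the syntactic characterization of universal literal quantification (Theorem~\ref{theo:ulq-syntax}) by way of the duality between the two quantifiers (Theorem~\ref{theo:lq-duality}), together with the relationship between the b-rules of a formula and those of its negation (Proposition~\ref{prop:rconnect}(a)). So the whole argument is essentially algebraic bookkeeping on top of results already in hand.

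First I would instantiate Theorem~\ref{theo:ulq-syntax} on the formula $\neg\varphi$ and the literal $\l$, which gives $\forall \l \cdot \neg\varphi = \neg\varphi \wedge \bigwedge_{\Rule(\alpha,\nl) \in \RS(\neg\varphi)} \neg\alpha$. Then I would apply the duality $\exists \l \cdot \varphi = \neg(\forall \l \cdot \neg\varphi)$ from Theorem~\ref{theo:lq-duality} and push the outer negation inward by De Morgan, using $\neg\neg\varphi \equiv \varphi$ and $\neg\neg\alpha \equiv \alpha$, to arrive at $\exists \l \cdot \varphi = \varphi \vee \bigvee_{\Rule(\alpha,\nl) \in \RS(\neg\varphi)} \alpha$. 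The last step is to rewrite the index set: by Proposition~\ref{prop:rconnect}(a), a b-rule $\Rule(\alpha,\nl)$ lies in $\RS(\neg\varphi)$ precisely when $\Rule(\alpha,\overline{\nl}) = \Rule(\alpha,\l)$ lies in $\RS(\varphi)$, so the disjunction ranges over exactly the antecedents $\alpha$ of b-rules $\Rule(\alpha,\l)$ of $\varphi$, which is the claimed identity. (Alternatively, one could prove the equation directly at the level of models, combining Definition~\ref{def:elq} with Theorem~\ref{theo:elq-semantics-b} and Proposition~\ref{prop:brule}, but the dualization route is shorter since it reuses Theorem~\ref{theo:ulq-syntax} verbatim.)

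I do not expect a real obstacle here. The one point that needs care is the double literal negation: the operator $\forall\l$ in Theorem~\ref{theo:ulq-syntax} collects b-rules whose \emph{consequent} is $\nl$, so after dualization the relevant b-rules of $\neg\varphi$ are those with consequent $\nl$, and only the application of Proposition~\ref{prop:rconnect}(a) converts them into b-rules of $\varphi$ with consequent $\l$. It is also worth one remark on the degenerate case: an empty disjunction denotes $\bot$, the neutral element for $\vee$, so the formula collapses to $\exists\l\cdot\varphi = \varphi$ exactly when $\varphi$ has no b-rule with consequent $\l$, i.e., when $\varphi$ is already independent of literal $\l$ by Proposition~\ref{prop:rule-independence} — which is consistent with the intended semantics of existential literal quantification.
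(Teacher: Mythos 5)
Your proof is correct and follows essentially the same route as the paper's own argument: instantiate Theorem~\ref{theo:ulq-syntax} on $\neg\varphi$, negate both sides, apply the duality of Theorem~\ref{theo:lq-duality}, and rewrite the index set of the disjunction via Proposition~\ref{prop:rconnect}(a). The extra remarks you make about the double literal negation and the degenerate empty-disjunction case are accurate and consistent with Proposition~\ref{prop:rule-independence}, but do not change the substance of the argument.
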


\color{\RED}
\noindent That is, existentially quantifying literal \(\l\) from formula \(\varphi\) amounts to disjoining \(\varphi\)
with the antecedent \(\alpha\) of each of its b-rules \(\Rule(\alpha,\l)\). Given b-rule \(\Rule(\alpha,\l)\), world
\(\alpha,\l\) is a model of \(\varphi\) but world \(\alpha,\nl\) is not a model. 
Disjoining with the antecedent \(\alpha\) adds \(\alpha,\nl\) as a model. This erases b-rules \(\Rule(\alpha,\l)\) and the knowledge
that \(\varphi\) has about literal \(\l\).
This also makes the quantified formula independent of literal \(\l\), therefore eliminating this literal.
\color{black}

The third result says that existentially quantifying literal \(\l\) preserves implicates that contain literal \(\neg \l\).
\begin{proposition}\label{prop:elq-preserve}
If formula $\varphi$ implies clause $\beta$ and literal $\nl \in \beta$, then $\exists \l \cdot \varphi \models \beta$.
\end{proposition}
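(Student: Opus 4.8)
The plan is to derive this as the dual of Proposition~\ref{prop:ulq-preserve}, using the quantifier duality of Theorem~\ref{theo:lq-duality}. The bridge between the two statements is the De Morgan correspondence between clauses and terms: if $\beta$ is a clause containing the literal $\nl$, then $\neg\beta$ is a term containing the literal $\l$, and the hypothesis $\varphi \models \beta$ is equivalent to $\neg\beta \models \neg\varphi$.

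First I would unfold the goal. By Theorem~\ref{theo:lq-duality} we have $\exists \l \cdot \varphi = \neg(\forall \l \cdot \neg\varphi)$, so proving $\exists \l \cdot \varphi \models \beta$ is the same as proving $\neg(\forall \l \cdot \neg\varphi) \models \beta$, which by contraposition is the same as $\neg\beta \models \forall \l \cdot \neg\varphi$. Write $\gamma = \neg\beta$; since $\beta$ is a clause over distinct variables with $\nl \in \beta$, the formula $\gamma$ is a term and $\l \in \gamma$. Next, from $\varphi \models \beta$ I get, again by contraposition, $\neg\beta \models \neg\varphi$, i.e.\ $\gamma \models \neg\varphi$. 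Thus $\gamma$ is a term that implies $\neg\varphi$ and contains $\l$, so Proposition~\ref{prop:ulq-preserve}, applied with $\neg\varphi$ in place of $\varphi$ and $\gamma$ as the implicant, yields $\gamma \models \forall \l \cdot \neg\varphi$. This is precisely $\neg\beta \models \forall \l \cdot \neg\varphi$, and running the chain of equivalences backwards gives $\exists \l \cdot \varphi \models \beta$.

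I do not expect any genuine obstacle: the only care needed is the bookkeeping that turns the clause $\beta$ into the term $\neg\beta$ and tracks the fact that the distinguished literal flips from $\nl$ to $\l$ under negation. For completeness, there is an equally short self-contained route that avoids duality. Using Definition~\ref{def:elq}, $\exists \l \cdot \varphi = (\varphi \cd \l) \vee (\nl \wedge (\varphi \cd \nl))$; writing $\beta = \nl \vee \beta'$, conditioning preserves implication, so $\varphi \cd \l \models \beta \cd \l = \beta'$, and $\beta' \models \beta$ since $\beta'$ is obtained from $\beta$ by dropping the disjunct $\nl$; meanwhile $\nl \wedge (\varphi \cd \nl) \models \nl \models \beta$. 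Hence each disjunct of $\exists \l \cdot \varphi$ implies $\beta$, and therefore so does $\exists \l \cdot \varphi$.
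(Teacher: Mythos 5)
Your primary argument is exactly the paper's proof: reduce to Proposition~\ref{prop:ulq-preserve} by taking $\gamma = \neg\beta$, observing $\gamma \models \neg\varphi$ and $\l \in \gamma$, applying the universal version, and then appealing to contraposition and duality (Theorem~\ref{theo:lq-duality}). The self-contained alternative you append is a correct bonus, but the main route is the same.
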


The last result says that existential literal quantification preserves logical implication.
\begin{proposition}\label{prop:elq-imply}
For formulas \(\varphi\), \(\phi\) and literal \(\l\), we have \(\varphi \models \phi\) only if \(\exists \l \cdot \varphi \models \exists \l \cdot \phi\).
\end{proposition}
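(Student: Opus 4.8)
\textbf{Proof proposal for Proposition~\ref{prop:elq-imply}.}
The plan is to derive this from the already-established monotonicity of universal literal quantification (Proposition~\ref{prop:ulq-imply}) via the duality between the two quantifiers (Theorem~\ref{theo:lq-duality}). Assume $\varphi \models \phi$. Contraposing the entailment gives $\neg \phi \models \neg \varphi$. Applying Proposition~\ref{prop:ulq-imply} to the formulas $\neg\phi$ and $\neg\varphi$ yields $\forall \l \cdot \neg \phi \models \forall \l \cdot \neg \varphi$. Negating both sides (entailment reverses under negation) gives $\neg(\forall \l \cdot \neg \varphi) \models \neg(\forall \l \cdot \neg \phi)$. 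By Theorem~\ref{theo:lq-duality}, the left-hand side is exactly $\exists \l \cdot \varphi$ and the right-hand side is exactly $\exists \l \cdot \phi$, so $\exists \l \cdot \varphi \models \exists \l \cdot \phi$, as required.

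An alternative, self-contained route avoids duality and works directly from Definition~\ref{def:elq}. The key auxiliary fact is that conditioning is monotone: if $\varphi \models \phi$ then $\varphi \cd \l \models \phi \cd \l$ for every literal $\l$. This holds because a world $\w$ over $\ps \setminus \{X\}$ (where $X$ is the variable of $\l$) satisfies $\varphi \cd \l$ iff the extended world $\w \cup \{\l\}$ satisfies $\varphi$, and likewise for $\phi$; the claim then follows from $\varphi \models \phi$. Granting this, from $\varphi \models \phi$ we get $\varphi \cd \l \models \phi \cd \l$ and $\varphi \cd \nl \models \phi \cd \nl$, hence $\nl \wedge (\varphi \cd \nl) \models \nl \wedge (\phi \cd \nl)$; since disjunction preserves entailment componentwise, $(\varphi \cd \l) \vee (\nl \wedge (\varphi \cd \nl)) \models (\phi \cd \l) \vee (\nl \wedge (\phi \cd \nl))$, which is precisely $\exists \l \cdot \varphi \models \exists \l \cdot \phi$.

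I would present the duality argument as the main proof, since Proposition~\ref{prop:ulq-imply} and Theorem~\ref{theo:lq-duality} are both available and the derivation is a two-line contraposition; the only thing to be careful about is the direction flips (entailment reverses once under the initial contraposition and once more under the final negation, so the two cancel to give the stated direction). There is no real obstacle here: both arguments are routine, and the direct argument's one nontrivial ingredient — monotonicity of conditioning — is elementary and could equally well be cited as folklore or proved in a single line.
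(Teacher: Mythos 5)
Your main proof is exactly the paper's argument: contrapose to $\neg\phi \models \neg\varphi$, apply Proposition~\ref{prop:ulq-imply}, and then negate and invoke Theorem~\ref{theo:lq-duality}. The proposal is correct and takes essentially the same route; the alternative direct derivation via monotonicity of conditioning is also sound, but the duality argument is what the paper uses.
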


One can use the duality theorem to state further results such as the parallel of Theorem~\ref{theo:ulq-semantics}. We will
refrain from discussing these additional results though as our goal is to focus more on universal literal quantification and its
applications to explainable AI.

\color{\RED}
\subsection{Is it Quantification or Elimination?}

As mentioned earlier, variable quantification in Boolean logic can be viewed as an elimination process that transforms a formula
so it becomes independent of the quantified variable. In particular, quantifying a variable leads to a new formula
that can be expressed without mentioning the variable. In fact, George Boole used
the term ``elimination'' but the term ``quantification'' is now more commonly used. Moreover, ``variable elimination'' has become a
synonym for ``existential variable quantification'' in some parts of the literature, including many works on satisfiability
where existential variable quantification is used as a preprocessing technique~\cite<e.g.,>{sat/EenB05,NiVER}. 
The distinction between elimination and quantification becomes more relevant though when treating literals instead of variables as we show next.  
Let us consider variables first.
\(\exists X . \varphi\) eliminates variable \(X\) by weakening formula \(\varphi\): It {\em adds} world \(\w\) as a model iff 
\(\w[\l]\) is a model of \(\varphi\) for {\em some} literal \(\l\) of variable \(X\). 
\(\forall X . \varphi\) eliminates variable \(X\) by strengthening formula \(\varphi\): It {\em keeps} world \(\w\) as a model iff 
\(\w[\l]\) is a model of \(\varphi\) for {\em all} literals \(\l\) of \(X\).
The fundamental operation here is one of elimination, which can be achieved in a unique way (up to logical equivalence)
through either weakening (adding models) or strengthening (dropping models). Moreover, the conditions for deciding
which models to add or drop are based on the two states of variable \(X\), which are considered either existentially or universally. 
Eliminating literals is also done by weakening or strengthening a formula in a unique way, but the conditions for deciding which
models to add or drop consider only one state of the variable. In particular,
\(\exists \l . \varphi\) eliminates literal \(\l\) by adding world \(\w\) as a model iff \(\w[\l]\) is a model of \(\varphi\). 
Moreover, \(\forall \l . \varphi\) eliminates literal \(\nl\) by keeping world \(\w\) as a model iff \(\w[\l]\) is a model of \(\varphi\).
In a sense, the term elimination is perhaps more appropriate than quantification in this case, and the symbols \(\exists\) and \(\forall\)
are more indicative of elimination by weakening and strengthening than anything else. One could have adopted different symbols
and terminology for literals, but we opted to keep the same ones used for variables as this emphasizes
the symmetries between variable and literal elimination, as revealed by the many results we discussed earlier.
\color{black}

\shrink{
\subsection{Duality and Interplay}
\label{sec:duality}

\begin{table}[tb]
\small
\[
\begin{array}{c|c|cccc|c}
 & \mbox{DNF} & \multicolumn{4}{c|}{\mbox{models}} & \mbox{rules} \\ \hline \hline
\varphi & xy + \n(x)\n(y) & 
xy & & & \n(x)\n(y) 
& 
\begin{array}{cc}
\Rule(x,y) & \Rule(y,x) \\
\Rule({\n(x)},{\n(y)}) & \Rule({\n(y)},{\n(x)})
\end{array}
\\ \hline
\exists X \cdot \varphi & \top & 
xy & x\n(y) & \n(x)y & \n(x)\n(y)
& 
\\ \hline
\forall X \cdot \varphi  & \bot & 
& & & 
& 
\\ \hline
\exists x \cdot \varphi & \n(x)+y & 
xy & & \n(x)y & \n(x)\n(y)
& 
\begin{array}{cc}
\Rule(x,y) &  \\
 & \Rule({\n(y)},{\n(x)})
\end{array}
\\ \hline
\forall x \cdot \varphi & xy & 
xy & & & 
& 
\begin{array}{cc}
\Rule(x,y) & \Rule(y,x) \\
 & 
\end{array} 
\\ \hline
\exists \n(x) \cdot \varphi & x+\n(y) & 
xy & x\n(y) & & \n(x)\n(y)
& 
\begin{array}{cc}
& \Rule(y,x) \\
\Rule({\n(x)},{\n(y)}) & 
\end{array} 
\\ \hline
\forall \n(x) \cdot \varphi  & \n(x)\n(y) & 
 & & & \n(x)\n(y)
&
\begin{array}{cc}
&  \\
\Rule({\n(x)},{\n(y)}) & \Rule({\n(y)},{\n(x)})
\end{array}
\end{array}
\]
\caption{Illustrating variable and literal quantification. \label{tab:quantify}}
\end{table}


Consider literal \(\l\) and formula \(\varphi\). If \(\varphi\) has b-rule \(\Rule(\alpha,\l)\) then
it has \(\alpha,\l\) as a model but not \(\alpha,\nl\). The operators \(\exists \l\) and \(\forall \nl\) 
will both transform \(\varphi\) to erase this b-rule but they do it differently. 
The operator \(\exists \l\) does this by adding \(\alpha,\nl\) as a model while the operator \(\forall \nl\) does this by deleting 
model \(\alpha,\nl\), leading to \(\forall \nl \cdot \varphi \models \varphi \models \exists \l \cdot \varphi\).

We now consider an example to contrast the quantification of variables and literals both existentially and universally. 
In particular, we consider a formula \(\varphi\) expressed in DNF as \((x\wedge y) \vee (\n(x)\wedge\n(y))\) 
and in CNF as \((x\vee \n(y)) \wedge (\n(x) \vee y)\), which says that variables \(X\) and \(Y\)
are equivalent. Table~\ref{tab:quantify} shows the models and b-rules of formula \(\varphi\) together with
all its possible quantifications over variable \(X\) and literals \(x\) and \(\n(x)\).
The table illustrates how existential quantification adds models while universal quantification removes models.
It also shows the b-rules deleted and preserved by each type of quantification (rules may also be added).

The formula \(\varphi\) allows us to infer the state of each variable once we know the state of the other variable,
so its knowledge amounts to four b-rules.
Variable quantification, whether existential or universal, erases all four b-rules, yielding trivial formulas:
\(\exists X \cdot \varphi = \top\) and \(\forall X \cdot \varphi = \bot\). This should be expected as the only knowledge 
captured by formula \(\varphi\) is that variables \(X\) and \(Y\) are equivalent. If we cannot reference variable \(X\) due
to quantification then we cannot keep any of that knowledge. 
Literal quantification is more refined.
Both operators \(\exists x\) and \(\forall \n(x)\) erase the b-rule \(\Rule(y,x)\), therefore 
suppressing the ability to infer state \(x\) of variable \(X\).
As a side effect, the operator \(\exists x\) also erases the b-rule \(\Rule({\n(x)},{\n(y)})\) 
while \(\forall \n(x)\) also erases the b-rule \(\Rule(x,y)\). The operator \(\exists x\)
suppresses the ability to infer state \(\n(y)\) of variable \(Y\), while the operator \(\forall \n(x)\)
suppresses the ability to infer state \(y\) of variable \(Y\). While both operators lead to a formula
that knows two b-rules, we have \(\forall \n(x) \cdot \varphi \models \varphi \models \exists x \cdot \varphi\).
Similarly, both operators \(\exists \n(x)\) and \(\forall x\) erase the b-rule \(\Rule({\n(y)},{\n(x)})\),
therefore suppressing the ability to infer state \(\n(x)\) of variable \(X\).
As a side effect, the operator \(\exists \n(x)\) also erases the b-rule \(\Rule(x,y)\) 
while \(\forall x\) also erases the b-rule \(\Rule({\n(x)},{\n(y)})\).
Again, while each operator leads to a formula that knows two b-rules, we have
\(\forall x \cdot \varphi \models \varphi \models \exists \n(x) \cdot \varphi\).

We now discuss how quantification transforms clauses and terms, which can be quite useful when working with CNFs and DNFs.

Universally quantifying a variable \(X\) from a term leads to \(\bot\) if literals \(x\) or \(\n(x)\) appear in the term,
otherwise the term is left intact. 
However, universally quantifying a literal \(\l\) from a term leads to \(\bot\) only if \(\nl\) appears in the clause, otherwise
the term is left intact.
Universally quantifying a variable \(X\) from a clause leads to dropping any literal over variable \(X\) from the clause (\(x\) or \(\n(x)\)).
However, universally quantifying literal \(\l\) from a clause drops only literal \(\nl\) from the clause, but does not drop literal \(\l\). 

Existentially quantifying a variable \(X\) from a term drops any literal over that variable from the term (\(x\) or \(\n(x)\)).
However, existentially quantifying a literal \(\l\) from a term drops only literal \(\l\) from the term, but not its negation \(\nl\).
Existentially quantifying a variable \(X\) from a clause leads to \(\top\) if literals \(x\) or \(\n(x)\) appear in the clause,
otherwise the clause is left intact.
However, existentially quantifying literal \(\l\) from a clause leads to \(\top\) only if literal \(\l\) appears in the clause, but
keeps the clause intact otherwise.
}

\section{Tractable Literal Quantification}
\label{sec:tractable}

We identify in this section some classes of Boolean formulas and circuits that allow one to quantify literals efficiently.
We consider CNFs, DNFs and some of their subsets. 
We also consider two circuit types: Decision-DNNFs~\cite{jair/HuangD07} and SDDs~\cite{ijcai/Darwiche11},
which are strict supersets of OBDDs~\cite{tc/Bryant86}.
All of these circuit types are subsets of NNF circuits~\cite{jair/DarwicheM02}.

We start with two results which provide the foundation of further results. 
The first shows how to quantify literals out of constants and literals. 
The second identifies conditions under which literal  quantification can be distributed through conjunctions and disjunctions.
\textcolor{\RED}{When these conditions are met, quantification can be done in linear time on NNF formulas and circuits.
The second result corresponds to a well-known result for variable quantification which we show does
extend to the more fine-grained notion of literal quantification.}

\begin{proposition}\label{prop:quantify-base}
For literal \(\l\), we have \(\exists \l \cdot \top = \forall \l \cdot \top = \top\) and \(\exists \l \cdot \bot = \forall \l \cdot \bot = \bot\).
Moreover, for literals \(\l_1\) and \(\l_2\), we have:
\[
\exists \l_1 \cdot \l_2 = \left\{
\begin{array}{ll}
\top, & \mbox{if \(\l_1 = \l_2\)} \\
\l_2, & \mbox{otherwise.}
\end{array}
\right.
\qquad 
\mbox{and} 
\qquad
\forall \l_1 \cdot \l_2 = \left\{
\begin{array}{ll}
\bot, & \mbox{if \(\l_1 = {\n(\l)}_2\)} \\
\l_2, & \mbox{otherwise.}
\end{array}
\right.
\]
\end{proposition}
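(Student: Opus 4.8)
The plan is to unwind each case directly from the definitions of existential and universal literal quantification (Definitions~\ref{def:elq} and~\ref{def:ulq}), using the fact that conditioning a constant or a literal is elementary. For the constants, note that if $\varphi$ is $\top$ or $\bot$, then $\varphi \cd \l$ and $\varphi \cd \nl$ are both the same constant, since conditioning only replaces occurrences of the variable of $\l$, and that variable does not occur. Plugging $\varphi \cd \l = \varphi \cd \nl = \top$ into $\exists \l \cdot \varphi = (\varphi \cd \l) \vee (\nl \wedge (\varphi \cd \nl))$ gives $\top \vee (\nl \wedge \top) = \top$, and similarly $\forall \l \cdot \varphi = (\l \vee (\varphi \cd \nl)) \wedge (\varphi \cd \l) = (\l \vee \top) \wedge \top = \top$; the $\bot$ case is symmetric.

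For the literal cases, I would split on whether $\l_1$ and $\l_2$ are over the same variable, and if so, whether they are equal or complementary. If $\l_1$ and $\l_2$ are over distinct variables, then $\l_2 \cd \l_1 = \l_2 \cd \nl_1 = \l_2$ since conditioning on $\l_1$ replaces only the variable of $\l_1$, leaving $\l_2$ untouched; so $\exists \l_1 \cdot \l_2 = \l_2 \vee (\nl_1 \wedge \l_2) = \l_2$ and $\forall \l_1 \cdot \l_2 = (\l_1 \vee \l_2) \wedge \l_2 = \l_2$. If $\l_1 = \l_2$, then $\l_2 \cd \l_1 = \top$ (conditioning the formula ``$\l_2$'' on the literal $\l_2$ replaces the variable consistently with $\l_2$, yielding $\top$) and $\l_2 \cd \nl_1 = \bot$; so $\exists \l_1 \cdot \l_2 = \top \vee (\nl_1 \wedge \bot) = \top$, while $\forall \l_1 \cdot \l_2 = (\l_1 \vee \bot) \wedge \top = \l_1 = \l_2$, matching the ``otherwise'' branch for $\forall$. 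If $\l_1 = \nl_2$ (i.e.\ $\l_1 = \overline{\l}_2$), then $\l_2 \cd \l_1 = \bot$ and $\l_2 \cd \nl_1 = \top$; so $\forall \l_1 \cdot \l_2 = (\l_1 \vee \top) \wedge \bot = \bot$, matching the $\forall$ case split, while $\exists \l_1 \cdot \l_2 = \bot \vee (\nl_1 \wedge \top) = \nl_1 = \l_2$, matching the ``otherwise'' branch for $\exists$.

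I would then just collect these sub-cases into the two tabulated formulas in the statement: the $\exists$ table distinguishes only $\l_1 = \l_2$ (giving $\top$) from everything else (giving $\l_2$, which covers both the distinct-variable case and the complementary case), and the $\forall$ table distinguishes only $\l_1 = \overline{\l}_2$ (giving $\bot$) from everything else (giving $\l_2$). Alternatively, one could derive the $\forall$ statements from the $\exists$ statements via the duality $\forall \l \cdot \varphi = \neg(\exists \l \cdot \neg \varphi)$ of Theorem~\ref{theo:lq-duality}, since $\neg \top = \bot$, $\neg \bot = \top$, and $\neg \l_2 = \nl_2$ turns the ``$\l_1 = \l_2$'' condition into ``$\l_1 = \nl_2$''; this is a clean shortcut worth mentioning.

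There is no real obstacle here: the only thing to be careful about is the bookkeeping of which of the four conditioning possibilities ($\l_2 \cd \l_1$ versus $\l_2 \cd \nl_1$, same variable versus different) arises in each branch, and making sure the merged case-splits in the final tables correctly absorb the complementary-variable sub-case into the ``otherwise'' branches — for $\exists$ the complementary case produces $\nl_1 = \l_2$, and for $\forall$ the equal case produces $\l_1 = \l_2$, both consistent with the stated ``otherwise'' value $\l_2$.
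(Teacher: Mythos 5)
Your proof is correct and follows essentially the same case analysis as the paper's; the only minor difference is that the paper computes the three cases for $\exists$ directly and then obtains the $\forall$ statements via the duality of Theorem~\ref{theo:lq-duality}, whereas you compute both $\exists$ and $\forall$ directly from Definitions~\ref{def:elq} and~\ref{def:ulq} (and also note the duality shortcut as an alternative). Both are equally sound; the duality route saves half the casework, but the direct computation is equally elementary.
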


\begin{proposition}\label{prop:quantify-compound}
Consider literal \(\l\) and formulas \(\alpha\) and \(\beta\). We then have
\begin{itemize}
\item[(a)] \(\exists \l \cdot  (\alpha \vee \beta) = (\exists \l \cdot \alpha) \vee (\exists \l \cdot \beta)\)
\item[(b)] \(\forall \l  \cdot   (\alpha \wedge \beta) = (\forall \l \cdot  \alpha) \wedge (\forall \l \cdot  \beta)\)
\end{itemize}
Moreover, if the variable of literal \(\l\) is not shared between \(\alpha\) and \(\beta\), then
\begin{itemize}
\item[(c)] \(\exists \l  \cdot   (\alpha \wedge \beta) = (\exists \l \cdot \alpha) \wedge (\exists \l \cdot  \beta)\)
\item[(d)] \(\forall \l  \cdot  (\alpha \vee \beta) = (\forall \l \cdot  \alpha) \vee (\forall \l \cdot \beta)\)
\end{itemize}
\end{proposition}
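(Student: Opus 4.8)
The plan is to work from the definitions of $\exists\l\cdot\varphi$ and $\forall\l\cdot\varphi$ (Definitions~\ref{def:elq} and~\ref{def:ulq}) together with the elementary fact that conditioning distributes over $\wedge$ and $\vee$, i.e. $(\alpha\vee\beta)\cd\l = (\alpha\cd\l)\vee(\beta\cd\l)$ and $(\alpha\wedge\beta)\cd\l=(\alpha\cd\l)\wedge(\beta\cd\l)$, and similarly for $\nl$. I would prove (a) and (b) first, since they hold unconditionally, and then (c) and (d), which need the extra hypothesis that the variable $X$ of $\l$ does not occur in both $\alpha$ and $\beta$. Items (c) and (d) can also be obtained from (a) and (b) via the duality $\exists\l\cdot\varphi = \neg(\forall\l\cdot\neg\varphi)$ of Theorem~\ref{theo:lq-duality} once one of each pair is established, so effectively there are two genuine verifications to carry out, say (a) and (d), with (b) dual to (a) and (c) dual to (d).

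For (a): expand $\exists\l\cdot(\alpha\vee\beta) = ((\alpha\vee\beta)\cd\l)\vee(\nl\wedge((\alpha\vee\beta)\cd\nl))$. Push conditioning inside to get $(\alpha\cd\l)\vee(\beta\cd\l)\vee(\nl\wedge((\alpha\cd\nl)\vee(\beta\cd\nl)))$. Distribute $\nl\wedge$ over the inner disjunction, then reassociate and commute the five disjuncts into the two groups $\big((\alpha\cd\l)\vee(\nl\wedge(\alpha\cd\nl))\big)\vee\big((\beta\cd\l)\vee(\nl\wedge(\beta\cd\nl))\big)$, which is exactly $(\exists\l\cdot\alpha)\vee(\exists\l\cdot\beta)$. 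Item (b) follows by the dual computation using Definition~\ref{def:ulq}, or by applying (a) to $\neg\alpha$ and $\neg\beta$ and using the duality theorem together with De Morgan.

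For (d) (hence (c) by duality): the hypothesis that $X$ is not shared means we may assume, up to renaming, that $X$ occurs in at most one of $\alpha,\beta$; say $X$ does not occur in $\beta$, so $\beta\cd\l = \beta\cd\nl = \beta$. Then $\forall\l\cdot(\alpha\vee\beta) = (\l\vee((\alpha\vee\beta)\cd\nl))\wedge((\alpha\vee\beta)\cd\l) = (\l\vee(\alpha\cd\nl)\vee\beta)\wedge((\alpha\cd\l)\vee\beta)$. On the other side, $\forall\l\cdot\beta = \beta$ by Proposition~\ref{prop:quantify-base}-style reasoning (more precisely, $\beta$ is already independent of the variable of $\l$, so conditioning leaves it unchanged and $\forall\l\cdot\beta=(\l\vee\beta)\wedge\beta=\beta$), and $\forall\l\cdot\alpha = (\l\vee(\alpha\cd\nl))\wedge(\alpha\cd\l)$, so $(\forall\l\cdot\alpha)\vee(\forall\l\cdot\beta) = \big((\l\vee(\alpha\cd\nl))\wedge(\alpha\cd\l)\big)\vee\beta$. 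The equivalence of the two expressions is then a routine propositional identity: distributing $\vee\,\beta$ over the conjunction on the right gives $(\l\vee(\alpha\cd\nl)\vee\beta)\wedge((\alpha\cd\l)\vee\beta)$, matching the left-hand side. I expect the main obstacle to be purely bookkeeping — keeping the conditioning-pushing and the De Morgan/duality steps straight — rather than anything conceptual; the one point that genuinely uses the hypothesis is the reduction $\beta\cd\l=\beta\cd\nl=\beta$, and it is worth noting explicitly that without it the distribution fails (e.g.\ $\exists x\cdot(x\wedge\neg x)$ versus $(\exists x\cdot x)\wedge(\exists x\cdot\neg x)$).
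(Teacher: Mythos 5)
Your proof is correct and takes essentially the same approach as the paper: a direct computation from Definitions~\ref{def:elq} and~\ref{def:ulq} for one item of each pair, using the distributivity of conditioning over $\wedge$ and $\vee$, with the remaining item obtained via the duality of Theorem~\ref{theo:lq-duality}. The only (immaterial) difference is that the paper computes (c) directly and derives (d), whereas you compute (d) directly and derive (c).
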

\noindent This proposition holds also for variable quantification as it can be implemented through successive 
literal quantification as shown by Propositions~\ref{prop:ulq-uvq} and~\ref{prop:elq-evq}.

These propositions imply direct methods for quantifying literals
out of clauses and terms which can be useful when working with CNFs and DNFs.
Existentially quantifying literal \(\l\) from a clause leads to \(\top\) if literal \(\l\) appears in the clause, otherwise the clause is left intact.
Universally quantifying literal \(\l\) drops literal \(\nl\) from the clause. 
Existentially quantifying a literal \(\l\) from a term drops literal \(\l\) from the term.
Universally quantifying literal \(\l\) leads to \(\bot\) if \(\nl\) appears in the term, otherwise the term is left intact.

We now consider some classes of Boolean formulas and circuits that allow one to
quantify literals efficiently. We start by considering CNF, DNF and some of their subsets. 
Recall that we do not allow trivial clauses or trivial terms that include complementary literals.
Hence, clauses cannot be valid and terms cannot be inconsistent.

We first consider CNFs while noting that the procedures discussed below guarantee that the result of quantifying
literals is also a CNF.
\begin{proposition}[CNF]\label{prop:q-cnf}
Consider a CNF \(\Delta = \alpha_1 \wedge \ldots \wedge \alpha_n\). 
We can obtain \(\forall \l \cdot \Delta\) by removing literal \(\nl\) from every clause \(\alpha_i\) in \(\Delta\).
Moreover, if \(\Delta\) is closed under resolution on the variable of literal \(\l\), then
we can obtain \(\exists \l \cdot \Delta\) by removing from \(\Delta\) every clause \(\alpha_i\) that contains literal \(\l\).
\end{proposition}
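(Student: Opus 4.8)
The plan is to handle the two claims separately, reducing each to the general results already proved. For the universal part, write $\Delta = \alpha_1 \wedge \ldots \wedge \alpha_n$ and apply Proposition~\ref{prop:quantify-compound}(b) repeatedly to push $\forall \l$ inside the conjunction, so that $\forall \l \cdot \Delta = (\forall \l \cdot \alpha_1) \wedge \ldots \wedge (\forall \l \cdot \alpha_n)$. It then suffices to show $\forall \l \cdot \alpha_i$ is the clause obtained from $\alpha_i$ by deleting literal $\nl$ (and $\alpha_i$ unchanged if $\nl \notin \alpha_i$). I would get this directly from Proposition~\ref{prop:quantify-base} together with Proposition~\ref{prop:quantify-compound}(d): viewing $\alpha_i$ as a disjunction of its literals, the variable of $\l$ occurs in at most one disjunct, so $\forall \l$ distributes over the disjunction; each literal disjunct $\l'$ is sent to $\bot$ if $\l' = \nl$ and left intact otherwise; and $\beta \vee \bot \equiv \beta$. (If $\nl$ is the only literal in $\alpha_i$, the clause becomes the empty clause $\bot$, consistent with the stated procedure.) This establishes the first sentence with no side conditions, as claimed.

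For the existential part, the hypothesis that $\Delta$ is closed under resolution on the variable $X$ of $\l$ is essential, and this is where the real work lies. Let $\Delta'$ be $\Delta$ with every clause containing literal $\l$ removed; I must show $\Delta' \equiv \exists \l \cdot \Delta$. By Proposition~\ref{prop:elq-semantics-old}, $\exists \l \cdot \Delta$ is the logically strongest formula that is independent of literal $\l$ and implied by $\Delta$. So I would verify three things: (i) $\Delta \models \Delta'$, which is immediate since $\Delta'$ is a subset of the clauses of $\Delta$; (ii) $\Delta'$ is independent of literal $\l$, i.e.\ expressible as an NNF not mentioning $\l$ — this holds because no clause of $\Delta'$ contains $\l$ by construction, and a CNF written out as $\bigwedge\bigvee$ of its literals is already such an NNF; and (iii) $\Delta'$ is logically strongest with these two properties, equivalently any model of $\Delta'$ can be extended/adjusted to a model of $\Delta$ by possibly flipping $X$, so that $\MS(\Delta') \subseteq \MS(\exists \l \cdot \Delta)$. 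For (iii), take a world $\w \models \Delta'$; I want to show $\w[\l] \models \Delta$ (which by Definition~\ref{def:elq} suffices, since $\exists \l \cdot \Delta = (\Delta \cd \l) \vee (\nl \wedge (\Delta \cd \nl))$ and $\w[\l] \models \Delta$ gives $\w \models (\Delta \cd \l)$ — actually it is cleanest to argue $\w \models \exists \l \cdot \Delta$ directly via $\w[\l]$).

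The crux is thus: if $\w \models \Delta'$ then $\w[\l] \models \Delta$. Clauses of $\Delta$ not containing variable $X$ are in $\Delta'$, hence satisfied by $\w$ and by $\w[\l]$. Clauses of $\Delta$ containing literal $\l$ are satisfied by $\w[\l]$ outright. The remaining case is a clause $\nl \vee \beta \in \Delta \setminus \Delta'$; here I need $\w[\l] \models \beta$, i.e.\ $\w \models \beta$ (as $\beta$ does not mention $X$). Suppose not: then $\w \models \neg\beta$, so in particular $\w$ falsifies $\beta$, and since $\w \models \Delta'$ every clause $\l \vee \gamma$ of $\Delta$ (which got deleted) — wait, I need to use that $\w \not\models \nl$, i.e.\ $\w \models \l$; but then ... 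I would instead argue: since $\w \models \l$ and $\w \not\models \beta$, and $\Delta$ is closed under resolution on $X$, for each clause $\l \vee \gamma$ of $\Delta$ the resolvent $\gamma \vee \beta$ is in $\Delta$; if this resolvent mentions $X$ it would be trivial (excluded), so it is in $\Delta'$, giving $\w \models \gamma \vee \beta$, hence $\w \models \gamma$ (as $\w \not\models \beta$). This shows $\w$ satisfies $\gamma$ for every clause $\l \vee \gamma$ of $\Delta$, so $\w[\nl] \models \Delta$ as well; combined with needing $\w[\l]\models\Delta$, one of the two completions of $\w$ on $X$ must satisfy $\Delta$ — and a careful bookkeeping of which clauses force which side yields the contradiction with $\w \models \Delta'$ but $\Delta$ itself unsatisfiable at $\w$ restricted appropriately. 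The hard part will be packaging this resolution-closure argument cleanly so that the two "directions" ($\w \models \l$ forcing the $\nl$-clauses, $\w \not\models \beta$ forcing the $\l$-clauses) combine without circularity; I expect the cleanest route is to show directly that $\MS(\Delta') = \MS(\Delta) \cup \MS(\Delta)[\nl \leftarrow \l]$-style closure and match it against Definition~\ref{def:elq}, or alternatively to invoke the known classical fact that resolving out a variable from a resolution-closed CNF and deleting the clauses with one polarity computes the corresponding one-sided elimination, then cite it via Proposition~\ref{prop:elq-semantics-old}.
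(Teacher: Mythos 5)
Your treatment of the universal part matches the paper's: distribute $\forall \l$ over the conjunction via Proposition~\ref{prop:quantify-compound}(b), over the (variable-disjoint) literals of each clause via Proposition~\ref{prop:quantify-compound}(d), and finish with Proposition~\ref{prop:quantify-base}. That half is fine.

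For the existential part you take a genuinely different route from the paper. The paper computes $\exists \l \cdot \Delta$ syntactically from Definition~\ref{def:elq}: it writes $\exists \l \cdot \Delta = (\nl \vee (\Delta\cd\l)) \wedge ((\Delta\cd\l)\vee(\Delta\cd\nl))$, expands both conjuncts as CNFs, and shows the result is $\Delta_b \cup \Delta_c \cup \Delta_d$ (clauses with $\nl$, clauses without $X$, and all $X$-resolvents), after which resolution closure gives $\Delta_d \subseteq \Delta_c$ and the statement drops out. Your route instead goes through the semantic characterization (Proposition~\ref{prop:elq-semantics-old}): verify $\Delta \models \Delta'$, that $\Delta'$ is $\l$-independent, and that $\Delta' \models \exists \l \cdot \Delta$. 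That is a legitimate strategy and the first two checks are right.

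The gap is in step (iii). Your target ``if $\w \models \Delta'$ then $\w[\l] \models \Delta$'' is false. Take $\Delta = \{\nl \vee b\}$; this is vacuously closed under resolution on the variable of $\l$ (no clause contains $\l$), and $\Delta' = \Delta$. Let $\w$ contain $\nl$ and $\n(b)$. Then $\w \models \Delta'$ via $\nl$, but $\w[\l]$ falsifies $\nl \vee b$, so $\w[\l] \not\models \Delta$. The correct target is the weaker disjunction that $\w \models \exists \l \cdot \Delta$, i.e., $\w[\l] \models \Delta$ \emph{or} ($\nl \in \w$ and $\w \models \Delta$); in the counterexample the second disjunct holds. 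The rest of your write-up signals the trouble but never resolves it: you place $\nl \vee \beta$ in $\Delta \setminus \Delta'$ when in fact such clauses remain in $\Delta'$ (only $\l$-clauses are removed), and your resolution argument is launched from ``since $\w \models \l$ and $\w \not\models \beta$,'' which is the wrong hypothesis — if $\w \models \l$ then $\w \models \Delta$ outright and nothing needs proving. The resolution-closure argument is the right idea, but it needs to be deployed after a case split on whether $\l \in \w$ or $\nl \in \w$: in the $\l$ case $\w \models \Delta$ directly; in the $\nl$ case, assume neither $\w[\l] \models \Delta$ nor $\w \models \Delta$, extract a falsified $\nl \vee \beta$ and a falsified $\l \vee \gamma$, and derive the contradiction with $\w \models \gamma \vee \beta \in \Delta'$. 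You stop short of this packaging, and your final ``one of the two completions of $\w$ on $X$ must satisfy $\Delta$'' is the correct conclusion for \emph{variable} forgetting but does not, by itself, give membership in $\exists \l \cdot \Delta$ unless the satisfying completion agrees with $\w$ on $X$ or is the $\l$-completion.
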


\begin{corollary}\hspace{-2mm}\footnote{This corollary slightly extends Proposition 19 in~\cite{LangLM03},
which focuses on the case when $\Delta$ is given by the set of all its prime implicates.}
\label{coro:q-cnf}
Consider a CNF \(\Delta = \alpha_1 \wedge \ldots \wedge \alpha_n\) where each clause \(\alpha_i\) is a
prime implicate of \(\Delta\). We can obtain \(\exists \l \cdot \Delta\) by removing from \(\Delta\)
clauses that contain literal \(\l\).
\end{corollary}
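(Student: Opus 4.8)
The plan is to reduce Corollary~\ref{coro:q-cnf} to Proposition~\ref{prop:q-cnf} by verifying that a CNF whose clauses are all prime implicates is automatically closed under resolution on every variable. Once that is established, the second half of Proposition~\ref{prop:q-cnf} applies directly: $\exists \l \cdot \Delta$ is obtained by deleting the clauses containing literal $\l$, which is exactly the statement to be proved.

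First I would recall the relevant facts: resolving clauses $x \vee \alpha$ and $\n(x) \vee \beta$ on variable $X$ (when $\alpha,\beta$ share no complementary literals) yields the resolvent $\alpha \vee \beta$, and adding resolvents to a CNF does not change its models (stated in the preliminaries). Hence $\alpha \vee \beta$ is an implicate of $\Delta$. Now, since every implicate of $\Delta$ is a superset of some prime implicate of $\Delta$, there is a prime implicate $\gamma \subseteq \alpha \vee \beta$ of $\Delta$. Because $X$ does not occur in $\alpha \vee \beta$ (the only occurrences of $X$ were the literals $x$ and $\n(x)$ that got cancelled), $\gamma$ does not mention $X$ either. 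So $\gamma$ is a clause in $\Delta$ (being a prime implicate, and $\Delta$ consists of all... — here I must be careful, see below) that does not mention $X$, and the resolvent $\alpha \vee \beta$ is subsumed by $\gamma$.

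The subtle point I would need to pin down is what ``closed under resolution on variable $X$'' requires when some resolvents are subsumed rather than literally present. Proposition~\ref{prop:q-cnf} as stated asks that ``the result of each resolution on $X$ is in the CNF.'' If $\Delta$ is the set of \emph{all} prime implicates of $\Delta$, then $\gamma \in \Delta$; but $\gamma$ may be a strict subset of the resolvent $\alpha\vee\beta$, so the resolvent itself need not be an element of $\Delta$. I expect this to be the main obstacle, and I would resolve it in one of two ways: (i) observe that for the purpose of the existential-quantification procedure, adding a subsumed clause to $\Delta$ changes neither the models nor — after deleting all clauses containing $\l$ — the resulting formula up to logical equivalence, so one may first close $\Delta$ under resolution on $X$ (adding the resolvents, all of which are subsumed by existing clauses and hence redundant), apply Proposition~\ref{prop:q-cnf}, and then note the added redundant clauses do not affect the outcome; or (ii) re-examine the proof of the second half of Proposition~\ref{prop:q-cnf} and check that it only needs each resolvent to be \emph{implied by} the clauses of $\Delta$ not mentioning $X$, which is exactly what the prime-implicate argument above gives. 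Either route closes the gap.

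Putting it together: given $\Delta$ whose clauses are all prime implicates, I first argue (via the subsumption argument) that $\Delta$ is closed under resolution on the variable of $\l$ in the sense needed; I then invoke Proposition~\ref{prop:q-cnf} to conclude that $\exists \l \cdot \Delta$ is obtained by removing from $\Delta$ every clause containing literal $\l$; finally I note that the clauses not containing $\l$ (including those containing $\nl$, which are left untouched since $\nl$ is not removed in the existential case) are precisely what remains, matching the claimed procedure. The footnote's remark that this ``slightly extends Proposition 19 in~\cite{LangLM03}'' suggests the only real content beyond that reference is handling an arbitrary prime-implicate CNF rather than the canonical set of all prime implicates, so I would make sure the write-up emphasizes that the prime-implicate property of individual clauses suffices for the resolution-closure condition.
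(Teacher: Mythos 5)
Your approach mirrors the paper's own one-line proof: reduce to Proposition~\ref{prop:q-cnf} via the Quine-style fact that the resolvent of two prime implicates is subsumed by a prime implicate. To that extent you have reconstructed the intended argument. But the gap you flagged parenthetically --- ``$\gamma$ is a clause in $\Delta$ (being a prime implicate, and $\Delta$ consists of all\ldots --- here I must be careful, see below)'' --- is the genuine obstruction, and your ``see below'' never returns to it. Both of your repair routes (i) and (ii) address a different, cosmetic worry (whether the resolvent must be \emph{literally} present versus merely subsumed); both quietly assume exactly what you had flagged as uncertain, namely that the subsuming prime implicate $\gamma$ is a clause \emph{in} $\Delta$.

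That assumption is not available under the stated hypothesis. ``Each clause of $\Delta$ is a prime implicate of $\Delta$'' does not force $\Delta$ to contain \emph{all} of its prime implicates, and the Quine subsumption argument only delivers some prime implicate $\gamma$ of $\Delta$, not one belonging to $\Delta$. Concretely, take $\Delta = (x \vee y) \wedge (\n(x) \vee z)$ and $\l = x$. Both clauses are prime implicates of $\Delta$, but the third prime implicate $y \vee z$ --- which is exactly the resolvent on $X$ --- is absent. Direct computation gives $\exists x \cdot \Delta = z \vee (\n(x) \wedge y) = (\n(x) \vee z) \wedge (y \vee z)$, whereas deleting the clause containing literal $x$ leaves only $\n(x) \vee z$; these are not equivalent, since the world $\n(x)\n(y)\n(z)$ satisfies $\n(x) \vee z$ but falsifies $y \vee z$. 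So your argument, and the paper's equally terse one, only closes under the stronger assumption of Proposition~19 in~\cite{LangLM03} that $\Delta$ comprises \emph{all} of its prime implicates --- precisely the assumption the footnote claims to be relaxing. To complete a proof by this route you would need that stronger hypothesis (or, equivalently, the explicit assumption that $\Delta$ is closed under resolution on the variable of $\l$, which then makes the corollary a special case of Proposition~\ref{prop:q-cnf} with nothing further to show).
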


We next consider DNFs. The procedures discussed below also guarantee that the result of quantifying literals is a DNF.
\begin{proposition}[DNF]\label{prop:q-dnf}
Consider a DNF \(\Delta = \alpha_1 \vee \ldots \vee \alpha_n\). 
We can obtain \(\exists \l \cdot \Delta\) by removing literal \(\l\) from every term \(\alpha_i\) in \(\Delta\).
Moreover, if \(\Delta\) is closed under consensus on the variable of literal~\(\l\), 
then we can obtain \(\forall \l \cdot \Delta\) by removing from \(\Delta\) every term \(\alpha_i\) that contains literal \(\nl\).
\end{proposition}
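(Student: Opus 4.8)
The plan is to prove the two claims of Proposition~\ref{prop:q-dnf} separately, exploiting the duality between existential and universal literal quantification (Theorem~\ref{theo:lq-duality}) together with the distribution laws of Proposition~\ref{prop:quantify-compound} and the base cases of Proposition~\ref{prop:quantify-base}.

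For the first claim, consider a DNF $\Delta = \alpha_1 \vee \ldots \vee \alpha_n$ and a literal $\l$. Since disjunction always distributes over existential literal quantification (Proposition~\ref{prop:quantify-compound}(a)), we have $\exists \l \cdot \Delta = \bigvee_i (\exists \l \cdot \alpha_i)$, so it suffices to show $\exists \l \cdot \alpha_i$ equals the term obtained from $\alpha_i$ by deleting literal $\l$ (if present). Here I would view each term $\alpha_i = \l_1 \wedge \ldots \wedge \l_k$ as a conjunction and repeatedly apply Proposition~\ref{prop:quantify-compound}(c): the variable of $\l$ occurs in at most one conjunct, so we may push $\exists \l$ inside, then use Proposition~\ref{prop:quantify-base} on the single-literal conjuncts. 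Each conjunct $\l_j$ with $\l_j \neq \l$ is unchanged and $\l_j = \l$ becomes $\top$, while conjuncts over other variables are unaffected; the net effect is exactly deleting literal $\l$. (If $\l$ does not occur, the term is untouched.) The only subtlety is the case $\nl \in \alpha_i$, but then the variable of $\l$ is not shared with the rest, Proposition~\ref{prop:quantify-compound}(c) still applies, and $\exists \l \cdot \nl = \nl$ by Proposition~\ref{prop:quantify-base}, so the term is again left intact, consistent with ``remove literal $\l$''.

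For the second claim, suppose $\Delta$ is closed under consensus on the variable $X$ of literal $\l$. I would derive this from the CNF case (Proposition~\ref{prop:q-cnf}) via duality, or argue directly. The direct route: by Proposition~\ref{prop:quantify-compound}(d), universal quantification distributes over disjunction only when the variable is not shared, which fails here, so instead I would reason semantically. Let $\Delta'$ be the result of deleting every term containing literal $\nl$. Clearly $\MS(\Delta') \subseteq \MS(\Delta)$, and every model of $\Delta'$ is a model of $\Delta$ that stays a model when we flip literal $\nl$ to $\l$ — hence by the characterization of $\forall \l$ (e.g. Theorem~\ref{theo:ulq-semantics-b}, dropping $\nl$-boundary models) we get $\Delta' \models \forall \l \cdot \Delta$. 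For the converse inclusion, take a model $\w$ of $\Delta$ that is not $\nl$-boundary, i.e. $\w[\l]$ or $\w[\nl]$ (whichever differs from $\w$ on nothing, plus the flipped one) — more precisely both $\w$ and the world obtained by setting $X$ appropriately satisfy $\Delta$. I must show $\w \models \Delta'$, i.e. $\w$ satisfies some term not containing $\nl$. This is where consensus-closure is needed: $\w$ satisfies some term $\gamma$ of $\Delta$, and the ``companion'' world satisfies some term $\gamma'$; if both $\gamma,\gamma'$ contain $\nl$ that contradicts one of the two worlds satisfying them, and if exactly one does, their consensus on $X$ is a term of $\Delta$ (by closure) that is satisfied by $\w$ and omits $X$ entirely, hence omits $\nl$. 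That gives $\w \models \Delta'$.

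The main obstacle I anticipate is the converse inclusion in the second claim: carefully matching up which of the two worlds ($\w$ and its $X$-flip) satisfies which term, handling the subcase where $\w$ already satisfies a term not mentioning $X$ (then we are done immediately), and verifying that the consensus term is genuinely satisfied by $\w$ and contains neither literal of $X$. I would also need to double-check that the resulting $\Delta'$ is syntactically a legitimate DNF — but since $\Delta'$ is just a subset of the terms of $\Delta$, this is immediate, and similarly for the first claim deleting $\l$ from a term cannot create complementary literals, so the output remains a valid DNF.
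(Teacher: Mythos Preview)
Your proposal is correct, but the paper's own proof is a single line: it invokes Proposition~\ref{prop:q-cnf} together with the duality Theorem~\ref{theo:lq-duality}. Since $\exists \l \cdot \Delta = \neg(\forall \l \cdot \neg\Delta)$ and $\forall \l \cdot \Delta = \neg(\exists \l \cdot \neg\Delta)$, and the negation of a DNF is a CNF (with terms becoming clauses and consensus-closure becoming resolution-closure), both statements for DNF are immediate from the corresponding statements for CNF. You mention this route in passing for the second claim but then develop direct arguments instead.

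Your direct argument for the first claim via Propositions~\ref{prop:quantify-compound}(a,c) and~\ref{prop:quantify-base} is clean and entirely correct. Your direct semantic argument for the second claim via Theorem~\ref{theo:ulq-semantics-b} and consensus is also sound in substance, but the case analysis is presented loosely. Two points deserve tightening: (i) when $\w$ contains $\l$ rather than $\nl$, any term of $\Delta$ satisfied by $\w$ already omits $\nl$, so $\w \models \Delta'$ immediately and no companion world or consensus is needed; (ii) in the case $\w$ contains $\nl$, the companion term $\gamma'$ satisfied by $\w[\l]$ cannot contain $\nl$, but it need not contain $\l$ either---if it mentions $X$ not at all, then $\w \models \gamma'$ directly and you are done without consensus; only when $\gamma$ contains $\nl$ and $\gamma'$ contains $\l$ do you actually form the consensus. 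You flag both issues in your ``obstacles'' paragraph, so you clearly see them; they just need to be woven into the main argument rather than deferred. The duality route sidesteps all of this bookkeeping, which is presumably why the paper prefers it.
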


\begin{corollary}\label{cor:q-dnf}
Consider a DNF \(\Delta = \alpha_1 \vee \ldots \vee \alpha_n\) where each term \(\alpha_i\) is a
prime implicant of \(\Delta\). We can obtain \(\forall \l \cdot \Delta\) by removing from \(\Delta\)
terms that contain literal \(\nl\).
\end{corollary}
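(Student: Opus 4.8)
The plan is to derive this as a direct consequence of Proposition~\ref{prop:q-dnf}. That proposition tells us that if a DNF \(\Delta = \alpha_1 \vee \ldots \vee \alpha_n\) is closed under consensus on the variable of literal \(\l\), then \(\forall \l \cdot \Delta\) is obtained by deleting every term \(\alpha_i\) that contains \(\nl\). So the only thing left to establish is that a DNF all of whose terms are prime implicants of \(\Delta\) is automatically closed under consensus on every variable --- in particular on the variable of \(\l\).

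First I would recall the relevant definitions. The consensus of terms \(x \wedge \gamma\) and \(\n(x) \wedge \delta\) on variable \(X\) is the term \(\gamma \wedge \delta\), defined only when \(\gamma\) and \(\delta\) share no complementary literals, and adding a consensus term to a DNF does not change its models. Being closed under consensus on \(X\) means every such consensus term is already (subsumed by, or literally present as) one of the \(\alpha_i\). The key step is this: let \(\gamma \wedge \delta\) be a consensus on the variable of \(\l\) from two terms \(\alpha_i = x \wedge \gamma\) and \(\alpha_j = \n(x) \wedge \delta\) of \(\Delta\). Since \(\alpha_i \models \Delta\) and \(\alpha_j \models \Delta\), one checks that \(\gamma \wedge \delta \models \Delta\): any model of \(\gamma \wedge \delta\) extends to a model of either \(x \wedge \gamma\) or \(\n(x)\wedge\delta\) by fixing the value of \(X\), and both of those imply \(\Delta\); since \(\Delta\) does not mention the consensus variable's constraint in a way that distinguishes these, the model already satisfies \(\Delta\) --- more carefully, if \(\w \models \gamma\wedge\delta\) then \(\w[x] \models \alpha_i \models \Delta\) so \(\w[x]\) satisfies some term \(\alpha_k\) of \(\Delta\), and similarly \(\w[\n(x)]\) satisfies some term \(\alpha_m\); a short case analysis on whether \(\alpha_k, \alpha_m\) mention \(X\) shows \(\w\) itself satisfies a term of \(\Delta\). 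Hence \(\gamma\wedge\delta\) is an implicant of \(\Delta\), so it is a superset of some prime implicant of \(\Delta\), which by hypothesis is one of the \(\alpha_k\); therefore \(\alpha_k \subseteq \gamma\wedge\delta\), i.e., the consensus term is subsumed by a term already in \(\Delta\), so \(\Delta\) is closed under consensus on the variable of \(\l\) (up to the harmless presence of subsumed terms, which does not affect the deletion procedure).

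With closure under consensus established, Proposition~\ref{prop:q-dnf} applies directly and yields \(\forall \l \cdot \Delta = \bigvee \{\alpha_i : \nl \notin \alpha_i\}\), which is exactly the claimed procedure of removing every term that contains \(\nl\).

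The main obstacle I anticipate is the bookkeeping around subsumption: ``closed under consensus'' as defined in the paper seems to ask that the consensus term literally be in the DNF, whereas a prime-implicant DNF only guarantees that the consensus term is subsumed by a term of the DNF. I would need to check (or note) that the deletion procedure in Proposition~\ref{prop:q-dnf} is insensitive to adding subsumed terms --- equivalently, that the proof of Proposition~\ref{prop:q-dnf} really only needs the weaker ``every consensus term implies some term of \(\Delta\)'' property. This is the one point where I would look back carefully at the statement and proof of Proposition~\ref{prop:q-dnf} rather than treat it as a pure black box.
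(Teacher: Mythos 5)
Your approach is essentially the same as the paper's: invoke Proposition~\ref{prop:q-dnf}, and cite the classical fact (the paper attributes it to Quine) that in a DNF all of whose terms are prime implicants, the consensus of any two terms is subsumed by some term of the DNF. The one worry you raise---that ``closed under consensus'' as literally defined requires membership, not mere subsumption---is a real gap in the paper's terse ``follows directly,'' but a look at the proof of the dual Proposition~\ref{prop:q-cnf} shows the deletion procedure only needs that the resolvents/consensus terms be logically redundant given the surviving clauses/terms (i.e., subsumed), so your argument goes through.
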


To summarize, the following quantifications can be performed in linear time:
universal literal quantification on CNF, universal literal quantification on prime implicants,
existential literal quantification on DNF and existential literal quantification on prime implicates.
\color{\RED}
These results parallel ones that are well-known for variable quantification. To be more precise,
the elimination of universal variable quantifiers from a CNF is a special case of the so-called universal reduction
rule for QBFs~\shortcite{DBLP:journals/iandc/BuningKF95}. Moreover, the elimination of existential variable quantifiers from a 
DNF is a special case of the so-called existential reduction rule for QBFs. Both rules are used in search-based QBF solvers.
Similarly, the use of consensus and resolution to eliminate universal and existential quantifiers in CNFs and
DNFs, respectively, has been known for a while.
\color{black}

\begin{figure*}[tb]
    \centering
    \begin{subfigure}[t]{0.45\textwidth}
        \centering
\scalebox{0.8}{
        \begin{tikzpicture}[scale=.5]    
    \node(root) at (9.5,9){$\vee$};
    \node(n1) at (4,7){$\wedge$};
    \node(n2) at (15,7){$\wedge$};  
    \node(n11) at (2,5){$\overline{d}$};  
    \node(n12) at (6,5){$\vee$};
    \node(n21) at (13,5){$\wedge$};  
    \node(n22) at (17,5){$d$};   
    \node(n121) at (3,3){$\wedge$};  
    \node(n122) at (9,3){$\wedge$};
    \node(n212) at (15,3){$g$};        
    \node(n1211) at (1,1){$h$};     
    \node(n1212) at (5,1){$\top$}; 
    \node(n1221) at (7,1){$\overline{h}$};
    \node(n1222) at (11,1){$i$};    
              
    \draw(root) -- (n1);
    \draw(root) -- (n2);
    \draw(n1) -- (n11);   
    \draw(n1) -- (n12);
    \draw(n2) -- (n21);    
    \draw(n2) -- (n22);
    \draw(n12) -- (n121);   
    \draw(n12) -- (n122);  
    \draw(n21) -- (n1222);       
    \draw(n21) -- (n212); 
    \draw(n121) -- (n1211);   
    \draw(n121) -- (n1212);     
    \draw(n122) -- (n1221);     
    \draw(n122) -- (n1222);               
    \end{tikzpicture}
}
\caption{Decision-DNNF $\Delta$ \label{fig:Decision-DNNF}}
\end{subfigure}
\hfill
    \begin{subfigure}[t]{0.45\textwidth}
        \centering
    \scalebox{0.8}{
         \begin{tikzpicture}[scale=.5]    
    \node(root) at (9.5,9){$\vee$};
    \node(n1) at (4,7){$\wedge$};
    \node(n2) at (15,7){$\wedge$};  
    \node(n11) at (2,5){$\overline{d}$};  
    \node(n12) at (6,5){$\vee$};
    \node(n21) at (13,5){$\wedge$};  
    \node(n22) at (17,5){\textcolor{red}{$\top$}};   
    \node(n121) at (3,3){$\wedge$};  
    \node(n122) at (9,3){$\wedge$};
    \node(n212) at (15,3){$g$};        
    \node(n1211) at (1,1){$h$};     
    \node(n1212) at (5,1){$\top$}; 
    \node(n1221) at (7,1){$\overline{h}$};
    \node(n1222) at (11,1){$i$};    
              
    \draw(root) -- (n1);
    \draw(root) -- (n2);
    \draw(n1) -- (n11);   
    \draw(n1) -- (n12);
    \draw(n2) -- (n21);    
    \draw(n2) -- (n22);
    \draw(n12) -- (n121);   
    \draw(n12) -- (n122);  
    \draw(n21) -- (n1222);       
    \draw(n21) -- (n212); 
    \draw(n121) -- (n1211);   
    \draw(n121) -- (n1212);     
    \draw(n122) -- (n1221);     
    \draw(n122) -- (n1222);               
    \end{tikzpicture}
}
\caption{$\exists d \cdot \Delta$ \label{fig:exists-d}}
\end{subfigure}
\caption{\color{\BLUE}Existentially quantifying literal \(d\) from Decision-DNNF \(\Delta\). 
This amounts to replacing every occurrence of literal \(d\) in the Decision-DNNF by \(\top\).
The result is a DNNF.\color{black}
\label{fig:exists-example}}
\end{figure*}
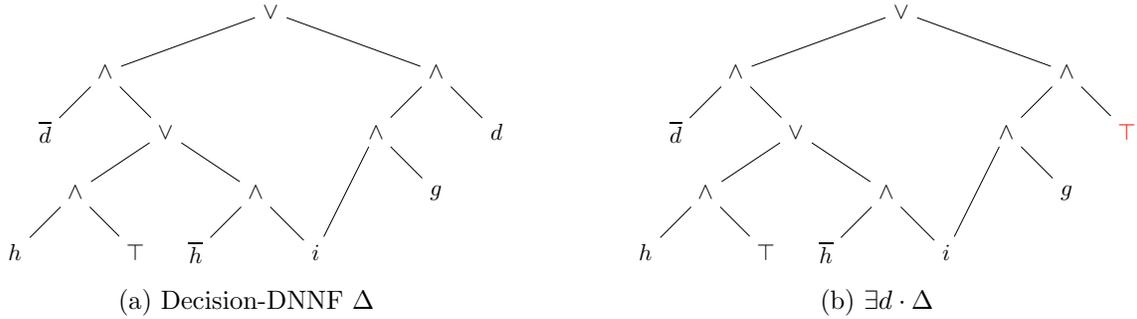

We next show that existential and universal literal quantification can be performed in linear time on Decision-DNNF 
circuits~\cite{jair/HuangD07}. These are NNF circuits which
satisfy two properties: {\em decision} and {\em decomposability.}
The decision property says that every or-node has the form \((\l \wedge \alpha) \vee (\nl \wedge \beta)\), where \(\l\) is a literal.
The decomposability property says that for every and-node, the sets of variables of its children \(\alpha_1, \ldots, \alpha_n\)
are pairwise disjoint. 
\color{\RED} Figure~\ref{fig:Decision-DNNF} depicts an example Decision-DNNF \(\Delta\) for the CNF $(h \vee i) \wedge (\overline{d} \vee g) \wedge (\overline{d} \vee i)$
over variables $D,G,H,I$. 
Decision-DNNF circuits are a strict superset of OBDD circuits.
\color{black}


We first discuss existential literal quantification which yields DNNF circuits~\cite{jacm/Darwiche01}. These are NNF circuits that 
satisfy only the decomposability property. 

\begin{proposition}[\(\exists\), Decision-DNNF] 
\label{prop:elq-D-DNNF}
Literals can be existentially quantified from a Decision-DNNF circuit in time linear in the circuit size while yielding a DNNF circuit.
\end{proposition}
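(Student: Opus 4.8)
The plan is to exploit the recursive structure of a Decision-DNNF and apply the distribution rules for existential literal quantification from Proposition~\ref{prop:quantify-compound}, together with the base cases in Proposition~\ref{prop:quantify-base}. First I would observe that existential quantification always distributes over disjunction, i.e., \(\exists \l \cdot (\alpha \vee \beta) = (\exists \l \cdot \alpha) \vee (\exists \l \cdot \beta)\), with no side condition; this handles all or-nodes. The only subtlety is whether \(\exists \l\) distributes over the and-nodes, and here the decomposability property is exactly what we need: at an and-node with children \(\alpha_1, \ldots, \alpha_n\) whose variable sets are pairwise disjoint, the variable of literal \(\l\) is shared by at most one child, so Proposition~\ref{prop:quantify-compound}(c) lets us push \(\exists \l\) inside, and for every child not mentioning that variable the quantification is the identity by Proposition~\ref{prop:quantify-base}. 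Thus quantification commutes with both node types and reaches the leaves.

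The key steps, in order, would be: (1) argue by structural induction that \(\exists \l \cdot \Delta\) can be computed by a single bottom-up (or equivalently top-down) pass that recurses into children and recombines with the same connective; (2) at each leaf, apply Proposition~\ref{prop:quantify-base} — a constant leaf is unchanged, a literal leaf equal to \(\l\) becomes \(\top\), and any other literal leaf is unchanged; in effect the whole operation amounts to replacing every occurrence of the literal \(\l\) by \(\top\), as illustrated in Figure~\ref{fig:exists-example}; (3) verify that the resulting circuit is still decomposable — replacing a literal node by \(\top\) only shrinks the set of variables appearing under any and-node, so pairwise disjointness of children's variable sets is preserved, hence the output is a DNNF (the decision property may be lost since an or-node \((\l \wedge \alpha) \vee (\nl \wedge \beta)\) can turn into \((\top \wedge \alpha) \vee (\nl \wedge \beta)\), which is why we only claim DNNF, consistent with Proposition~\ref{prop:elq-D-DNNF}'s statement); (4) bound the running time: the pass visits each node once and does \(O(1)\) work per node (and when the circuit is represented as a DAG with shared subcircuits, each shared node is processed once), so the total time is linear in the circuit size, and the output size is no larger than the input.

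I would present the induction cleanly by noting that the soundness of the recursion is underwritten entirely by Proposition~\ref{prop:quantify-compound}(a) and~(c) plus Proposition~\ref{prop:quantify-base}, so the proof is essentially a bookkeeping argument once those are invoked. The main obstacle — really the only point requiring care — is the decomposability check at and-nodes: I must confirm both that the side condition of Proposition~\ref{prop:quantify-compound}(c) (the variable of \(\l\) not shared between the conjuncts) is automatically satisfied by decomposability, and that the transformation does not introduce a variable into a subcircuit where it was previously absent (it cannot, since we only delete occurrences of a variable, never add them), so decomposability is preserved and the claimed DNNF output type is justified. Everything else is routine.
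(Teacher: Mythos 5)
Your proposal is correct and follows essentially the same route as the paper's proof: invoke Propositions~\ref{prop:quantify-base} and~\ref{prop:quantify-compound}(a,c), exploit decomposability to push the quantifier through and-nodes, and observe that the whole operation amounts to replacing occurrences of the literal by \(\top\), which preserves decomposability and runs in linear time. Your version merely makes the structural-induction bookkeeping explicit where the paper states it compactly.
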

The quantification algorithm follows directly from Propositions~\ref{prop:quantify-base} and~\ref{prop:quantify-compound}(a,c) since existential
quantification distributes through both the conjunctions and disjunctions of a Decision-DNNF.
\color{\RED} Figure~\ref{fig:exists-d} depicts the DNNF circuit which results from existentially quantifying literal \(d\) from the Decision-DNNF \(\Delta\) of
Figure~\ref{fig:Decision-DNNF}. 
\color{black}

\begin{figure*}[ht]
    \centering
    \begin{subfigure}[t]{0.45\textwidth}
        \centering
\scalebox{0.8}{
         \begin{tikzpicture}[scale=.5]    
    \node(root) at (9.5,9){$\wedge$};
    \node(n1) at (4,7){$\vee$};
    \node(n2) at (15,7){$\vee$};  
    \node(n11) at (2,5){$d$};  
    \node(n12) at (6,5){$\wedge$};
    \node(n21) at (13,5){$\wedge$};  
    \node(n22) at (17,5){$\overline{d}$};   
    \node(n121) at (3,3){$\vee$};  
    \node(n122) at (9,3){$\vee$};
    \node(n212) at (15,3){$g$};        
    \node(n1211) at (1,1){$\overline{h}$};     
    \node(n1212) at (5,1){$\top$}; 
    \node(n1221) at (7,1){$h$};
    \node(n1222) at (11,1){$i$};    
              
    \draw(root) -- (n1);
    \draw(root) -- (n2);
    \draw(n1) -- (n11);   
    \draw(n1) -- (n12);
    \draw(n2) -- (n21);    
    \draw(n2) -- (n22);
    \draw(n12) -- (n121);   
    \draw(n12) -- (n122);  
    \draw(n21) -- (n1222);       
    \draw(n21) -- (n212); 
    \draw(n121) -- (n1211);   
    \draw(n121) -- (n1212);     
    \draw(n122) -- (n1221);     
    \draw(n122) -- (n1222);               
    \end{tikzpicture}
}
\caption{NNF circuit $\Gamma$ \label{fig:transform-Decision-DNNF}}
\end{subfigure}
\hfill
    \begin{subfigure}[t]{0.45\textwidth}
        \centering
    \scalebox{0.8}{
         \begin{tikzpicture}[scale=.5]    
    \node(root) at (9.5,9){$\wedge$};
    \node(n1) at (4,7){$\vee$};
    \node(n2) at (15,7){$\vee$};  
    \node(n11) at (2,5){$d$};  
    \node(n12) at (6,5){$\wedge$};
    \node(n21) at (13,5){$\wedge$};  
    \node(n22) at (17,5){\textcolor{red}{$\bot$}};   
    \node(n121) at (3,3){$\vee$};  
    \node(n122) at (9,3){$\vee$};
    \node(n212) at (15,3){$g$};        
    \node(n1211) at (1,1){$\overline{h}$};     
    \node(n1212) at (5,1){$\top$}; 
    \node(n1221) at (7,1){$h$};
    \node(n1222) at (11,1){$i$};    
              
    \draw(root) -- (n1);
    \draw(root) -- (n2);
    \draw(n1) -- (n11);   
    \draw(n1) -- (n12);
    \draw(n2) -- (n21);    
    \draw(n2) -- (n22);
    \draw(n12) -- (n121);   
    \draw(n12) -- (n122);  
    \draw(n21) -- (n1222);       
    \draw(n21) -- (n212); 
    \draw(n121) -- (n1211);   
    \draw(n121) -- (n1212);     
    \draw(n122) -- (n1221);     
    \draw(n122) -- (n1222);               
    \end{tikzpicture}}
\caption{$\forall d \cdot \Gamma$ \label{fig:forall-d}}
\end{subfigure}
\caption{\color{\BLUE}Universally quantifying literal $d$ from the NNF circuit $\Gamma$, which is obtained from 
the Decision-DNNF \(\Delta\) of Figure~\ref{fig:Decision-DNNF} using Proposition~\ref{prop:ulq-D-DNNF-transform}. 
The quantification process amounts to replacing every occurrence of literal \(\n(d)\) in \(\Gamma\) with \(\bot\). 
Proposition~\ref{prop:ulq-D-DNNF-transform} guarantees that
the two circuits \(\Delta\) and \(\Gamma\) are equivalent.\color{black}
\label{fig:forall-example}}
\end{figure*}
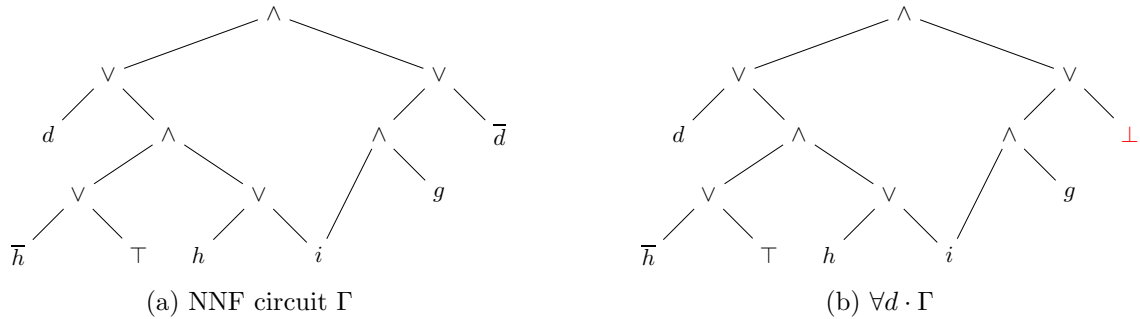

We next show a similar result for universally quantifying literals in linear time, except that the output is only
guaranteed to be an NNF circuit in this case (i.e., not necessarily decomposable). 
This is a two-step procedure, where each step requires linear time processing. The first step is described by the following
proposition. It transforms the Decision-DNNF into a form that allows us to invoke 
Proposition~\ref{prop:quantify-compound}(d) so we can distribute universal quantification through disjunctions
(this quantification always distributes through conjunctions).

\begin{proposition} \label{prop:ulq-D-DNNF-transform}
Let \(\Delta\) be a Decision-DNNF circuit and let \(\Gamma\) be the result of replacing every
fragment \((\l \wedge \alpha) \vee (\nl \wedge \beta)\) in \(\Delta\) with \((\l \vee \beta) \wedge (\nl \vee \alpha)\).
Then 
(1) \(\Gamma\) is an NNF circuit that is equivalent to \(\Delta\);
(2) \(\Gamma\) can be obtained from \(\Delta\) in time linear in the size of \(\Delta\); and
(3) for every disjunction \(\alpha \vee \beta\) in \(\Gamma\), the disjuncts \(\alpha\) and \(\beta\) do not share variables. 
\end{proposition}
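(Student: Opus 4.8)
The plan is to prove the three claims of Proposition~\ref{prop:ulq-D-DNNF-transform} essentially in order, since (1) supplies the correctness that (3) needs, and (2) is a trivial bookkeeping observation once the transformation is understood. The transformation acts locally on each or-node: by the decision property of a Decision-DNNF, every or-node has the form $(\l \wedge \alpha) \vee (\nl \wedge \beta)$, and we replace it by the conjunction $(\l \vee \beta) \wedge (\nl \vee \alpha)$. I would first observe that this is exactly Boole's/Shannon's dual expansion applied to the Boolean function computed at that node: conditioning the function $f = (\l \wedge \alpha) \vee (\nl \wedge \beta)$ on $\l$ gives $\alpha$ (since $\l$ does not occur in $\alpha$ or $\beta$ by decomposability of the and-nodes below, more on this in a moment) and conditioning on $\nl$ gives $\beta$, so $f = (\l \vee (f\cd\nl)) \wedge (\nl \vee (f\cd\l)) = (\l \vee \beta) \wedge (\nl \vee \alpha)$. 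Doing this simultaneously at every or-node preserves the function computed at the root; the cleanest write-up is a bottom-up induction on circuit structure showing each node of $\Gamma$ computes the same function as the corresponding node of $\Delta$.

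For claim (1) I also need to check that $\Gamma$ is an NNF circuit, i.e., that negations still appear only at literals. The transformation introduces no new negations: the literals $\l$ and $\nl$ already appeared in $\Delta$, and $\alpha,\beta$ are unchanged subcircuits. The connectives introduced are $\wedge$ and $\vee$, so the NNF syntactic conditions are preserved. For claim (2), the transformation visits each or-node once and performs a constant amount of rewiring (creating two new $\vee$-nodes and one new $\wedge$-node, reusing the subcircuits $\alpha,\beta,\l,\nl$), hence runs in time linear in $|\Delta|$; I would note that the three new nodes per or-node also keep $|\Gamma| = O(|\Delta|)$ so that the downstream linear-time quantification claim (in the surrounding text) goes through.

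Claim (3) is the real content. I need to show that in $\Gamma$ every disjunction $\alpha' \vee \beta'$ has variable-disjoint disjuncts. There are two kinds of disjunctions in $\Gamma$: those of the form $\l \vee \beta$ and those of the form $\nl \vee \alpha$ coming from a rewritten or-node of $\Delta$. So the claim reduces to: the variable of $\l$ does not occur in $\beta$, and does not occur in $\alpha$. This is where I would use decomposability of Decision-DNNFs together with the decision structure. The or-node $(\l \wedge \alpha) \vee (\nl \wedge \beta)$ is really two and-nodes $\l \wedge \alpha$ and $\nl \wedge \beta$; decomposability of the and-node $\l \wedge \alpha$ says the variables of $\l$ (just the variable $X$ of $\l$) and the variables of $\alpha$ are disjoint, so $X$ does not occur in $\alpha$; similarly $X$ does not occur in $\beta$. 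Hence both $\l \vee \beta$ and $\nl \vee \alpha$ are variable-disjoint disjunctions. (If one wants to be careful about the degenerate Decision-DNNF convention where an or-node might be written with $\alpha$ or $\beta$ equal to $\top$, the argument is unchanged since $\top$ mentions no variables.)

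The main obstacle, such as it is, is being precise about the hypothesis that Decision-DNNF or-nodes literally have the shape $(\l \wedge \alpha)\vee(\nl\wedge\beta)$ with $\l$ appearing only as that explicit conjunct and decomposability applying to the implied and-nodes — i.e., pinning down the exact syntactic normal form assumed for Decision-DNNFs (following \cite{jair/HuangD07}) so that "every occurrence of $\l$" is well-defined and so that decomposability indeed forbids $X$ from occurring inside $\alpha$ or $\beta$. Once that convention is fixed, the proof is a short structural induction for (1), a one-line counting argument for (2), and a direct appeal to decomposability for (3); I do not anticipate any genuine difficulty beyond stating these conventions carefully.
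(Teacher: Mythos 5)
Your proof is correct and its parts (2) and (3) follow essentially the same reasoning as the paper's. For part (1), however, you take a genuinely different route: you establish the local equivalence $(\l \wedge \alpha) \vee (\nl \wedge \beta) = (\l \vee \beta) \wedge (\nl \vee \alpha)$ by identifying the right-hand side as the dual Shannon expansion $(\l \vee (f\cd\nl)) \wedge (\nl \vee (f\cd\l))$, which requires you to invoke decomposability so that $f\cd\l = \alpha$ and $f\cd\nl = \beta$. The paper instead proves the equivalence as a pure Boolean-algebra identity: $(\l \vee \beta) \wedge (\nl \vee \alpha)$ distributes to $(\l \wedge \alpha) \vee (\nl \wedge \beta) \vee (\alpha \wedge \beta)$, and the consensus term $\alpha \wedge \beta$ is subsumed by $(\l \wedge \alpha) \vee (\nl \wedge \beta)$ — no decomposability needed. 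The paper's argument is thus slightly more general (the local rewrite is logically valid regardless of what variables $\alpha$ and $\beta$ contain), while your version couples the equivalence to the Decision-DNNF hypothesis unnecessarily; but both are sound here since decomposability does hold. Your explicit mention of a bottom-up induction to propagate the local equivalence to the full circuit is a reasonable way to make precise what the paper leaves implicit.
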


\color{\RED}
Figure~\ref{fig:transform-Decision-DNNF} depicts an example of this first step. It shows an NNF circuit \(\Gamma\) which is obtained 
by transforming the Decision-DNNF \(\Delta\) of Figure~\ref{fig:Decision-DNNF}.
The resulting circuit \(\Gamma\) is not a Decision-DNNF, yet no variables are shared between  disjuncts in this circuit.
\color{black}

The second step directly applies Propositions~\ref{prop:quantify-base} and~\ref{prop:quantify-compound}(b,d) to the result of the first
step, now that disjuncts no longer share variables. 

\begin{proposition}[\(\forall\), Decision-DNNF] 
\label{prop:ulq-D-DNNF}
Literals can be universally quantified from a Decision-DNNF circuit in time linear in the circuit size while yielding an NNF circuit.  
\end{proposition}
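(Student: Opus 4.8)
The plan is to compose the two linear-time preparatory results already established. Given a Decision-DNNF circuit $\Delta$ and a literal $\l$, the first step is to apply Proposition~\ref{prop:ulq-D-DNNF-transform} to rewrite $\Delta$ in linear time into an NNF circuit $\Gamma$ that is equivalent to $\Delta$ and in which the two disjuncts of every disjunction share no variables. Because $\Gamma$ and $\Delta$ are logically equivalent and $\forall \l \cdot \varphi$ is determined by the logical content of $\varphi$ alone --- it is the logically weakest formula independent of $\nl$ that implies $\varphi$ (Proposition~\ref{prop:ulq-semantics-old}) --- we have $\forall \l \cdot \Delta = \forall \l \cdot \Gamma$. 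So it is enough to compute $\forall \l \cdot \Gamma$ and return it as the answer for $\Delta$.

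The second step is to push the operator $\forall \l$ down through $\Gamma$ in a single bottom-up sweep. At an internal node this is licensed by Proposition~\ref{prop:quantify-compound}: part (b) distributes $\forall \l$ through every conjunction unconditionally, and part (d) distributes it through every disjunction provided the variable of $\l$ is not shared between the two disjuncts --- which is exactly what clause (3) of Proposition~\ref{prop:ulq-D-DNNF-transform} guarantees (full variable-disjointness of the disjuncts entails, in particular, that the variable of $\l$ is absent from at least one of them). At the leaves, Proposition~\ref{prop:quantify-base} says that $\forall \l$ fixes each constant and each literal other than $\nl$ and sends $\nl$ to $\bot$. Unfolding this recursion shows that $\forall \l \cdot \Gamma$ is simply the circuit obtained from $\Gamma$ by replacing every $\nl$-leaf with $\bot$, leaving the rest of the structure intact (exactly the transformation illustrated in Figure~\ref{fig:forall-example}).

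Both claims of the proposition now follow. The whole construction is the linear-time transformation of Proposition~\ref{prop:ulq-D-DNNF-transform} followed by one linear-time pass that rewrites $\nl$-leaves to $\bot$, so it runs in time linear in the size of $\Delta$. The output is an NNF circuit: $\Gamma$ is already an NNF circuit, and replacing a literal leaf by the constant $\bot$ introduces no negation and breaks no NNF condition --- although, as the statement warns, decomposability is destroyed by the first step, so the output need not be a DNNF. The step I would treat most carefully is the justification for distributing $\forall \l$ through the disjunctions of $\Gamma$: the whole argument hinges on clause (3) of Proposition~\ref{prop:ulq-D-DNNF-transform}, so I would spell out explicitly why variable-disjointness of the disjuncts supplies the side condition of Proposition~\ref{prop:quantify-compound}(d). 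A secondary point worth recording is that universal literal quantification is well defined on logical-equivalence classes (immediate from Proposition~\ref{prop:ulq-semantics-old}), which is what allows transferring the computation from $\Delta$ to the equivalent circuit $\Gamma$.
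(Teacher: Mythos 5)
Your proof is correct and follows essentially the same two-step route as the paper: apply Proposition~\ref{prop:ulq-D-DNNF-transform} to get an equivalent NNF circuit in which disjuncts share no variables, then push $\forall\l$ to the leaves using Propositions~\ref{prop:quantify-base} and~\ref{prop:quantify-compound}(b,d), which amounts to replacing each $\nl$-leaf by $\bot$. The one small refinement you add --- invoking Proposition~\ref{prop:ulq-semantics-old} to note that $\forall\l\cdot(\cdot)$ is well defined up to logical equivalence, so the computation may be transferred from $\Delta$ to $\Gamma$ --- is a point the paper leaves implicit; otherwise the argument matches.
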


\color{\RED}
Figure~\ref{fig:forall-d} depicts an example of this second step. 
It shows the result of universally quantifying literal \(d\) from the NNF circuit \(\Gamma\) of Figure~\ref{fig:transform-Decision-DNNF}.
Given Propositions~\ref{prop:quantify-base} and~\ref{prop:quantify-compound}(b,d), we obtain \(\forall d \cdot \Gamma\) by simply replacing every
occurrence of literal \(\n(d)\) in \(\Gamma\) with \(\bot\) as shown in the figure. We finally note that  $\forall d \cdot \Gamma = i \wedge g$ in this case.
\color{black}

\color{\RED}
The two-step, linear-time procedure suggested by Propositions~\ref{prop:ulq-D-DNNF-transform} and~\ref{prop:ulq-D-DNNF} 
was actually used implicitly in~\cite{DarwicheH20} 
\color{black}
for explaining the decisions made by Boolean classifiers
on instances---a process which corresponds to universally quantifying all literals in the instance (see Section~\ref{sec:XAI}). 
The procedure proposed in~\cite{DarwicheH20} transformed fragments \((\l \wedge \alpha) \vee (\nl \wedge \beta)\)
in the Decision-DNNF into \((\l \wedge \alpha) \vee (\nl \wedge \beta) \vee (\alpha \wedge \beta)\),
calling this  a {\em consensus} operation as it resembles the consensus operation on DNF. 
It then transformed this further into \((\l \wedge \alpha) \vee (\alpha \wedge \beta)\) when literal \(\l\) appeared in the instance,
calling this a {\em filtering} operation.
\color{\RED}
This is equivalent to \((\l \vee \beta) \wedge \alpha\), which is the result of universally quantifying literal \(\l\) from
fragment \((\l \vee \beta) \wedge (\nl \vee \alpha) = (\l \wedge \alpha) \vee (\nl \wedge \beta)\). 
\color{black}
These transformations correspond to the two-step procedure described 
above except that~\cite{DarwicheH20} did not realize that their algorithm was actually universally quantifying
literals as such quantification was not introduced yet. But~\cite{DarwicheH20} did observe that their procedure 
yields a monotone circuit, a property which is guaranteed to hold when universally
quantifying a literal for each variable.\footnote{The circuit is monotone since for every variable \(X\), the literals \(x\) and \(\n(x)\) cannot both appear in the circuit.}

\medskip
We next show that similar linear time algorithms can be used on SDD circuits~\cite{ijcai/Darwiche11}.
These are NNF circuits which are composed of fragments having the form \((p_1 \wedge s_1) \vee \ldots \vee (p_n \wedge s_n)\),
where \(p_i\) are called {\em primes} and \(s_i\) are called {\em subs.} SDDs satisfy the decomposability
property. Moreover, the primes \(p_1, \ldots, p_n\) form a partition: \(p_i \neq \bot\), \(p_i \wedge p_j = \bot\) for \(i \neq j\) and
\(p_1 \vee \ldots \vee p_n = \top\). SDDs actually satisfy a stronger version of decomposability, 
called {\em structured decomposability}~\cite{aaai/PipatsrisawatD08}, but
we do not need this stronger property for the following results. 
\color{\RED}
Like Decision-DNNFs, SDDs are a strict superset of OBDDs. We note, however, that Decision-DNNF and SDD circuits are
not comparable in terms of succinctness; that is, neither is strictly more succinct than the other~\cite{mst/BolligB19,uai/BeameL15}.
\color{black}

\begin{proposition}[\(\exists\), SDD] \label{prop:elq-SDD}
One can existentially quantify a set of literals from an SDD circuit in time linear in the circuit size,
with the result being a DNNF circuit.
\end{proposition}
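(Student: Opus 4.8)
The plan is to show that, exactly as for Decision-DNNFs in Proposition~\ref{prop:elq-D-DNNF}, existential literal quantification on an SDD pushes all the way down to the leaves, so that $\exists \{\l_1,\ldots,\l_k\} \cdot \Delta$ is obtained from $\Delta$ by the purely syntactic rewrite that replaces every leaf labeled with a literal in $\{\l_1,\ldots,\l_k\}$ by the constant $\top$, leaving every other leaf (in particular, leaves labeled with the complementary literals $\nl_i$) untouched. The justification is the same trio of facts used for Decision-DNNFs: by Proposition~\ref{prop:quantify-compound}(a) existential quantification distributes through every disjunction unconditionally; by Proposition~\ref{prop:quantify-compound}(c) it distributes through a conjunction whenever the two conjuncts do not share the variable being quantified; and every SDD conjunction has the form $p_i \wedge s_i$ where, by decomposability, $p_i$ and $s_i$ share no variables. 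The base case of constants and single literals is handled by Proposition~\ref{prop:quantify-base}.

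Concretely, I would first prove the single-literal identity $\exists \l \cdot \Delta = T_\l(\Delta)$, where $T_\l$ denotes the rewrite replacing every occurrence of literal $\l$ by $\top$, by structural induction over the SDD DAG: for a leaf, invoke Proposition~\ref{prop:quantify-base}; for a disjunction node $\bigvee_i (p_i \wedge s_i)$, apply Proposition~\ref{prop:quantify-compound}(a) and then the inductive hypothesis to each disjunct; and for a conjunction node $p_i \wedge s_i$, use decomposability to invoke Proposition~\ref{prop:quantify-compound}(c) and then the inductive hypothesis on both $p_i$ and $s_i$ (note that the recursion must descend into the primes, which are themselves arbitrary SDD nodes, not merely into the subs). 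For a set of literals, I would then iterate, using Proposition~\ref{prop:elq-order} to justify that the order is irrelevant, and observe that since $T_\l$ never creates or destroys occurrences of any literal other than $\l$, the composition $T_{\l_k} \circ \cdots \circ T_{\l_1}$ equals the single rewrite that replaces all targeted literal leaves by $\top$ simultaneously; this also covers the case where the set contains both $x$ and $\n(x)$, which collapses to a variable quantification consistent with Proposition~\ref{prop:elq-evq}.

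It then remains to observe two things. First, the rewrite only turns literal leaves into the constant $\top$, so the output is still an NNF circuit, and since it only deletes variables from the leaves it can only shrink the variable set at every node; hence decomposability (indeed structured decomposability with respect to the same vtree) is preserved, and the result is a DNNF circuit, though in general no longer an SDD, since a prime that becomes $\top$ destroys the partition property. Second, the rewrite visits each node of the DAG at most once with constant work per node, so it runs in time linear in the size of $\Delta$. I do not expect a serious obstacle here: the argument is a routine structural induction. The one point requiring care is the bookkeeping for a literal set containing complementary literals of the same variable, where one must confirm that replacing both by $\top$ yields exactly $\exists X \cdot \Delta$ (via Propositions~\ref{prop:elq-order} and~\ref{prop:elq-evq} together with the commutation of the single-literal rewrites), and the secondary point worth stating explicitly is that decomposability survives the rewrite, since that is precisely what upgrades the conclusion from ``NNF circuit'' to ``DNNF circuit.''
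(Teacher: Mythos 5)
Your proof is correct and follows essentially the same route as the paper: both invoke Propositions~\ref{prop:quantify-base} and~\ref{prop:quantify-compound}(a,c), use decomposability of SDD conjunctions to justify pushing the quantifier through, and observe that replacing literal leaves by $\top$ preserves decomposability, yielding a DNNF in linear time. Your version merely spells out the structural induction, the iteration over a literal set, and the complementary-literal case, which the paper leaves implicit.
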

Again, the algorithm follows directly from Propositions~\ref{prop:quantify-base} and~\ref{prop:quantify-compound}(a,c) since existential
quantification can be distributed through both conjunctions and disjunctions in this case.

Universally quantifying literals from an SDD circuit is also based on a two-step procedure, 
where each step requires linear time processing. Similar to Decision-DNNF circuits, 
the first step transforms the SDD into an equivalent NNF circuit in which disjuncts do not share variables, therefore making it 
directly amenable to Propositions~\ref{prop:quantify-base} and~\ref{prop:quantify-compound}(b,d).

\begin{proposition} \label{prop:sdd-transform}
Let \(\Delta\) be an SDD circuit and \(\Gamma\) be the result of replacing every fragment
 \((p_1 \wedge s_1) \vee \ldots \vee (p_n \wedge s_n)\) in \(\Delta\) with \((\neg p_1 \vee s_1) \wedge \ldots \wedge (\neg p_n \vee s_n)\).
Then
(1) \(\Gamma\) is an NNF circuit that is equivalent to \(\Delta\); 
(2) \(\Gamma\) can be obtained from \(\Delta\) in time linear in the size of \(\Delta\); and
(3) disjuncts in \(\Gamma\) do not share variables. 
\end{proposition}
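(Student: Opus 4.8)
The plan is to read the stated rewriting as a recursive transformation $\tau$ applied bottom-up to the SDD $\Delta$ viewed as a DAG, where each negated prime $\neg p_i$ stands for the NNF circuit obtained by pushing that negation down to the leaves via De~Morgan. Concretely, $\tau$ leaves a terminal SDD unchanged except that it flips the literal or constant sitting under a negation ($\neg x$ becomes $\n(x)$, $\neg\top$ becomes $\bot$, and so on), and it sends a decomposition fragment $(p_1 \wedge s_1) \vee \ldots \vee (p_n \wedge s_n)$ to $(\neg p_1 \vee s_1) \wedge \ldots \wedge (\neg p_n \vee s_n)$, recursing into the subs and into the negated primes; then $\Gamma = \tau(\Delta)$. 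All three claims are then proved by structural induction on $\Delta$, with the terminal cases immediate. (One records along the way the trivial sub-fact that $\tau$ and the leaf-level negation do not change which variables occur in a circuit.)

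For part~(1), the semantic core is the identity that, whenever $p_1, \ldots, p_n$ are pairwise mutually exclusive and jointly exhaustive --- which is exactly the SDD partition property of primes --- the formula $(p_1 \wedge s_1) \vee \ldots \vee (p_n \wedge s_n)$ is logically equivalent to $(\neg p_1 \vee s_1) \wedge \ldots \wedge (\neg p_n \vee s_n)$. I would prove this directly over worlds: a world $\w$ satisfies exactly one prime, say $p_k$; then $\w$ satisfies the left-hand side iff $\w \models s_k$ (every other disjunct has a falsified prime), and $\w$ satisfies the right-hand side iff $\w \models s_k$ as well (each conjunct $\neg p_i \vee s_i$ with $i \neq k$ holds because $\w \not\models p_i$, while the $k$-th conjunct reduces to $s_k$ since $\w \models p_k$). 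Since logical equivalence is a congruence for $\wedge$, $\vee$, $\neg$, and since by the induction hypothesis the recursively transformed subs and negated primes are logically equivalent to the respective originals, each transformed node is equivalent to the original fragment, hence $\MS(\Gamma) = \MS(\Delta)$.

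For the ``NNF circuit'' part of~(1) and for part~(2): once De~Morgan has pushed all negations to the leaves, every negation occurring in $\Gamma$ sits on a literal or a constant, so $\Gamma$ is an NNF circuit. For size and running time, the relevant observation is that $\tau$ needs, for each node $v$ of $\Delta$, at most the two transformed circuits $\tau(v)$ and $\tau(\neg v)$ --- a node can occur both as a sub (where $\tau(v)$ is needed) and as a prime (where $\tau(\neg v)$ appears) --- and these two can share substructure; memoizing on pairs consisting of a node together with a sign bit therefore builds $\Gamma$ in time linear in $|\Delta|$ with at most a constant-factor blow-up. This mirrors Proposition~\ref{prop:ulq-D-DNNF-transform}; the only new ingredient relative to the Decision-DNNF case is that a prime must be negated as a circuit rather than flipped as a single literal. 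I expect the handling of this negation --- making $\tau$ precise on the shared DAG while keeping it NNF-preserving and linear, and in particular recursing the transformation into the negated primes rather than leaving them as $n$-ary SDD disjunctions --- to be the main point requiring care.

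For part~(3), note that by construction every disjunction anywhere in $\Gamma$ has one of the binary shapes $\neg p \vee s$ or $\neg p \vee \neg s$, arising from $\tau$ applied to a positively or negatively occurring fragment $(p \wedge s) \vee \ldots$ of $\Delta$ (here $p$ and $s$ are a prime and its matching sub of some decomposition appearing in $\Delta$, possibly at a deep level). Decomposability of $\Delta$ gives that the variables of $p$ and of $s$ are disjoint, and negation (and $\tau$) preserve variable sets, so in either shape the two disjuncts are variable-disjoint; hence all disjuncts in $\Gamma$ are variable-disjoint, which is claim~(3). Given the partition and decomposability properties of SDDs, the equivalence in~(1) and the variable-disjointness in~(3) are thus routine, and the crux of the argument is really the bookkeeping for the negated primes noted above.
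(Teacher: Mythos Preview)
Your proposal is correct and essentially parallels the paper's argument for parts~(2) and~(3), but takes a genuinely different route for part~(1). The paper proves the equivalence $(p_1 \wedge s_1) \vee \ldots \vee (p_n \wedge s_n) \equiv (\neg p_1 \vee s_1) \wedge \ldots \wedge (\neg p_n \vee s_n)$ algebraically: it distributes the conjunction on the right into a disjunction indexed by subsets $S \subseteq \{1,\ldots,n\}$, uses the partition identity $\bigwedge_{i \notin S} \neg p_i = \bigvee_{i \in S} p_i$, and then argues that every resulting disjunct is subsumed by some $p_i \wedge s_i$. Your world-based argument --- each $\w$ satisfies exactly one prime $p_k$, so both sides reduce to $\w \models s_k$ --- is shorter and more transparent, and it makes clear that only mutual exclusivity and exhaustivity of the primes are needed. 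For part~(2) the paper simply observes that negations of all NNF nodes can be precomputed bottom-up while at most doubling the circuit; your memoization on (node, sign) pairs is the same idea phrased operationally. For part~(3) your treatment is actually a bit more careful than the paper's: you explicitly track the two disjunction shapes $\neg p \vee s$ and $\neg p \vee \neg s$ that arise from positive versus negated occurrences of a fragment, whereas the paper's one-line justification mentions only the first shape and leaves the disjunctions created inside negated primes implicit.
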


\begin{proposition}[\(\forall\), SDD] \label{prop:ulq-SDD}
One can universally quantify a set of literals from an SDD circuit in time linear in the circuit size, with the result being an NNF circuit.
\end{proposition}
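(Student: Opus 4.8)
The plan is to follow exactly the two-step recipe already used for Decision-DNNFs in Propositions~\ref{prop:ulq-D-DNNF-transform} and~\ref{prop:ulq-D-DNNF}. First I would invoke Proposition~\ref{prop:sdd-transform} to replace the SDD circuit \(\Delta\) by an equivalent NNF circuit \(\Gamma\) in time linear in the size of \(\Delta\), where by part~(3) of that proposition no disjunction of \(\Gamma\) has disjuncts that share a variable. It then suffices to show that a set of literals can be universally quantified from \(\Gamma\) in linear time while remaining within NNF, and to compose the two linear-time stages.

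For a single literal \(\l\), I would argue by structural induction on \(\Gamma\). At a leaf (a constant or a literal) Proposition~\ref{prop:quantify-base} says that \(\forall \l\) leaves \(\top\), \(\bot\) and every literal other than \(\nl\) untouched and replaces \(\nl\) by \(\bot\). At an and-node \(\alpha \wedge \beta\), Proposition~\ref{prop:quantify-compound}(b) pushes \(\forall \l\) into both conjuncts unconditionally. At an or-node \(\alpha \vee \beta\), part~(3) of Proposition~\ref{prop:sdd-transform} guarantees that \(\alpha\) and \(\beta\) share no variable, in particular not the variable of \(\l\), so the hypothesis of Proposition~\ref{prop:quantify-compound}(d) is met and \(\forall \l\) may be pushed into both disjuncts. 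Hence the net effect of \(\forall \l\) on \(\Gamma\) is just to replace every occurrence of literal \(\nl\) by \(\bot\): a single pass over \(\Gamma\) that visits each node once (shared subcircuits handled once), which is linear, and whose output is again an NNF circuit since constants are permitted in NNF.

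For a set of literals \(\{\l_1,\ldots,\l_k\}\) I would use Proposition~\ref{prop:ulq-order} to quantify them one after another, the order being irrelevant. The point to check is that the property ``no disjunction has variable-sharing disjuncts'' is preserved when a literal is replaced by \(\bot\) (this substitution introduces no new variables), so the induction above still applies after each step; equivalently, one linear-time pass that replaces every occurrence of each \(\nl_i\) by \(\bot\) computes \(\forall \l_1,\ldots,\l_k \cdot \Gamma\). Composing this with the linear-time transform of Proposition~\ref{prop:sdd-transform} yields an overall linear-time algorithm. Since the \(\l_i\) are allowed to be complementary, this in particular recovers universal variable quantification on SDDs via Proposition~\ref{prop:ulq-uvq}.

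The main obstacle, and the only place that genuinely uses the structure of SDDs rather than generic NNF, is justifying that Proposition~\ref{prop:quantify-compound}(d) applies at \emph{every} or-node; this rests on Proposition~\ref{prop:sdd-transform}(3), which in turn relies on the partition property of SDD primes together with structured decomposability (each prime and its sub range over disjoint variable sets). Everything else is routine bookkeeping. As in the Decision-DNNF case, I would also flag that the transform of Proposition~\ref{prop:sdd-transform} generally destroys decomposability, so the output can only be claimed to be an NNF circuit, not a DNNF --- which is precisely why the \(\exists\) case (Proposition~\ref{prop:elq-SDD}) returns a DNNF while the \(\forall\) case does not.
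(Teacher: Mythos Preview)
Your proposal is correct and follows essentially the same two-step approach as the paper: invoke Proposition~\ref{prop:sdd-transform} to obtain an equivalent NNF circuit with variable-disjoint disjuncts, then push \(\forall \l\) to the leaves using Propositions~\ref{prop:quantify-base} and~\ref{prop:quantify-compound}(b,d). One small inaccuracy in your final paragraph: Proposition~\ref{prop:sdd-transform}(3) relies only on ordinary decomposability of SDDs (primes and subs share no variables), not on structured decomposability, which the paper explicitly notes is not needed here.
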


In summary, the existential and universal quantification of multiple literals can be performed in linear time 
on Decision-DNNF and SDD circuits. \color{\RED} Existential literal quantification yields DNNF circuits, while
universal literal quantification yields NNF circuits that may not be decomposable. Hence, the above procedures 
on Decision-DNNF and SDD circuits cannot be used to interleave literal quantifications of different types. This should not be
surprising though since the existence of a polynomial-time algorithm for applying a sequence
of quantifiers of different types to a tractable circuit (even an OBDD) would imply {\sf P = NP};
see~\shortcite{DBLP:journals/jsat/Coste-MarquisBLM06}.
Nonetheless, it is worth noting that for both types of literal quantification, the NNF circuit that results from applying the 
above procedures is monotone, if we quantify a literal for each variable that appears in the input circuit.\footnote{This follows because Proposition~\ref{prop:quantify-base} tells us that \(\exists \l \cdot \l = \top\) and \(\exists \l \cdot \nl = \nl\);
similarly \(\forall \l \cdot \l = \l\) and \(\forall \l \cdot \nl = \bot\). Hence, the quantified NNF circuit will not
contain complementary literals for any variable.}
Hence, one can test efficiently whether the resulting circuit is satisfiable, or whether it is valid, which does not hold for either 
CNF or DNF unless {\sf P = NP} (deciding the satisfiability of a CNF is {\sf NP}-complete and
deciding the validity of a DNF is {\sf coNP}-complete).
\color{black}

\section{Universal Quantification for Explainable AI}
\label{sec:XAI}

We now consider a number of questions that arise in explainable AI and show how they can be answered using universal 
literal and variable quantification. These questions include 
(1)~finding the {\em culprit} behind a decision (a minimal set of characteristics that can trigger the decision); 
(2)~assessing whether a decision is {\em biased} (depends on protected features); and 
(3)~identifying instances with some {\em irrelevant} features or characteristics (do not play a role in the decisions made on these instances). 
Some of these questions have been treated in the literature~\shortcite<e.g.,>{ShihCD18,IgnatievNM19a,IgnatievNM19b,ignatiev2019validating,DarwicheH20,kr/AudemardKM20}, 
but we provide new formulations based on quantification which allow a more refined and general treatment. 
Moreover, given the results in Section~\ref{sec:tractable}, our treatment will shed light on the syntactic forms of classifiers that 
facilitate the computation of explainable AI queries.

\subsection{Classifiers and Decisions}
Our focus is on Boolean classifiers, which correspond to Boolean functions \(f(X_1, \ldots, X_n)\) that map literals \(\l_1, \ldots, \l_n\) 
of variables \(X_1, \ldots, X_n\) into \(\{0,1\}\). A set of literals \(\l_1, \ldots, \l_n\) will be called an {\em instance},
which is {\em positive} when \(f(\l_1, \ldots, \l_n) = 1\) and {\em negative} when \(f(\l_1, \ldots, \l_n) = 0\). 
A term \(\gamma\) over some variables in \(X_1, \ldots, X_n\) will be called a {\em population} as it characterizes a 
set of instances (those compatible with term \(\gamma\)). An instance corresponds to a singleton population, so 
claims about populations apply to instances but the converse is not true. For example, a classifier will always make a 
decision on an instance but it may not be able to make a collective decision on a population as the population may 
contain both positive and negative instances.

We will represent a Boolean classifier by a Boolean formula \(\Delta\), where the
models of \(\Delta\) correspond to positive instances and the models of \(\neg \Delta\) correspond to negative instances.
Hence, the syntactic forms of both \(\Delta\) and its negation \(\neg \Delta\) are relevant when computing explainable
AI queries.

\begin{definition}\label{def:decision}
Let \(\Delta\) be a Boolean formula over variables \(X_1, \ldots, X_n\).
We call \(\Delta\) a \hl{classifier}, variable \(X_i\) a \hl{feature}, literal \(\l_i\) 
a \hl{characteristic}, and \(\delta = \{\l_1, \ldots, \l_n\}\) an \hl{instance.} 
The \hl{decision} of classifier \(\Delta\) on instance \(\delta\) is denoted \(\Delta(\delta)\).
It is defined as \(\Delta(\delta) = 1\) if \(\delta \models \Delta\) (positive decision) 
and \(\Delta(\delta) = 0\) if \(\delta \models  \neg\Delta\) (negative decision).
We further define \(\Delta_\delta = \Delta\) when the instance \(\delta\) is positive and
\(\Delta_\delta = \neg \Delta\) when the instance \(\delta\) is negative.
\end{definition}
Since \(\Delta\) captures positive instances and its negation \(\neg \Delta\) captures
negative instances, we usually work with \(\Delta\) when reasoning about 
positive decisions and with \(\neg \Delta\) when reasoning about negative decisions.
This explains the significance of the notation \(\Delta_\delta\).

\medskip
Consider the following classifier which decides whether an applicant should be granted a loan based on four features:
whether they defaulted on a previous loan (\(D\)), have a guarantor (\(G\)), own a home (\(H\)) or
have a high income (\(I\)). This classifier is specified by \textcolor{\BLUE}{the following formula $\Delta$ and its negation $\neg \Delta$, both in CNF:}
\begin{equation}
\Delta  = (h \vee i) \wedge (\n(d) \vee g) \wedge (\n(d) \vee i) \mbox{ and }
\neg \Delta = (d \vee \n(i)) \wedge (d \vee \n(h)) \wedge (\n(g) \vee \n(i)).
\label{c:loan}
\end{equation}
Classifier \(\Delta\) will grant a loan to an applicant who never defaulted on a previous loan, owns a home,
has a high income but does not have a guarantor, \(\Delta(\n(d),\n(g),h,i)=1\). But it will not
grant a loan to such an applicant if they defaulted on a previous loan, \(\Delta(d,\n(g),h,i)=0\).
Instance \(\delta_1 = \n(d),\n(g),h,i\) is positive and instance
\(\delta_2 = d,\n(g),h,i\) is negative. Using our notation, we have 
\(\Delta_{\delta_1} = \Delta\) since \(\delta_1 \models \Delta\) and
 \(\Delta_{\delta_2} = \neg \Delta\) since \(\delta_2 \models \neg \Delta\).

The notion of a decision in Definition~\ref{def:decision} can be extended to populations.

\begin{definition}
A population \(\gamma\) is \hl{decided} by a classifier \(\Delta\) iff \(\gamma\models\Delta\) or \(\gamma\models\neg\Delta\).
In the first case, the decision is positive and we write \(\Delta(\gamma)=1\). In the second case, the decision is negative
and we write \(\Delta(\gamma)=0\). Otherwise, the decision \(\Delta(\gamma)\) is undefined.
\end{definition}

\noindent Contrary to instances, it is possible that no decision is made on a population \(\gamma\) as we may have neither
\(\gamma \models \Delta\) nor \(\gamma \models  \neg\Delta\).
Consider the population of applicants who have guarantors and a high income, \(\gamma_1 = g,i\).
All members of this population will be granted loans since \(\gamma_1 \models \Delta\), and
hence \(\Delta(\gamma_1)=1\), making this a positive population. 
The population of applicants who defaulted on a previous loan and do not
have a high income, \(\gamma_2 = d,\n(i)\), is negative. No member of this population will be granted a loan
since \(\gamma_2 \models \neg \Delta\) and hence \(\Delta(\gamma_2)=0\). Consider now
the population of applicants who defaulted on a loan and are not home owners, \(\gamma_3 = d,\n(h)\).
This population cannot be decided as it contains some members who will be granted loans (e.g., \(d,g,\n(h),i\))
and others who will not (e.g., \(d,g,\n(h),\n(i)\)). 
We then have \(\gamma_3 \not \models \Delta\) and \(\gamma_3 \not \models \neg \Delta\), 
causing the decision \(\Delta(\gamma_3)\) to be undefined. 
This will never happen for an instance \(\delta\) as we must have either \(\delta \models \Delta\) or \(\delta \models  \neg\Delta\) 
since the instance \(\delta\) must contain a characteristic for each feature. 

We will find it useful to talk about containment when analyzing the decisions made on populations. 
We will say that population \(\gamma\) {\em contains} population \(\beta\) iff \(\beta \models \gamma\). 
For example, the population of home owners (\(\gamma = h\)) contains the 
population of home owners with a high income (\(\beta = h,i\)).
We say in this case that \(\gamma\) is a {\em super-population} of \(\beta\) and \(\beta\) is a {\em sub-population} 
of \(\gamma\). For example, when explaining the decision on an instance \(\delta\), one is typically interested in
finding all maximal super-populations of instance \(\delta\) that are decided similarly as \(\delta\). 
As we show later, this is precisely the approach proposed in~\cite{ShihCD18}, which we will 
generalize to explain decisions on populations as well. 

\subsection{Decision Making and Universal Quantification}

Before we start discussing explainable AI queries in Section~\ref{sec:ifeature}, we will first provide some insights
on the fundamental role that universal literal quantification plays when reasoning about decisions. 
We will interpret literal quantification as {\em characteristic quantification} and show how such quantification
{\em selects} instances in ways that can be useful for answering various queries of interest to explainable AI.

When universally quantifying characteristic \(\l\) from classifier \(\Delta\),
we are {\em filtering out} all positive instances for which characteristic \(\nl\) is essential for the decisions on these instances.
We call these positive {\em \(\nl\)-boundary instances} as they are positive instances with characteristic \(\nl\) but will become negative
if this characteristic is flipped (Theorem~\ref{theo:ulq-semantics-b}). 
Universally quantifying characteristic \(\l\) will not filter out any instance with characteristic \(\l\) (Proposition~\ref{prop:ulq-preserve}).
We can therefore view the application of operator \(\forall \l\) to \(\Delta\) as a process of {\em selecting} all
positive instances that do not require characteristic \(\nl\) for their positiveness. If any of these instances has characteristic
\(\nl\) then that characteristic is irrelevant to the decision made on the instance. These are precisely the instances
characterized by \(\forall \l \cdot \Delta\).
A complementary situation arises when universally quantifying characteristic \(\l\) from \(\neg \Delta\).
The instances characterized by \(\forall \l \cdot \neg \Delta\) are precisely the negative instances characterized
by \(\neg \Delta\) which do not require characteristic \(\nl\) for their negativeness.

\color{\BLUE}
Consider again the classifier defined by Equation \ref{c:loan}. The positive instance $\delta_1 = \n(d), \n(g), h, i$ 
is not a model of $\forall d \cdot \Delta$ (which is equivalent to $i \wedge g$ as discussed in Section \ref{sec:tractable}). Thus, characteristic
$\n(d)$ of $\delta_1$ is essential for the positiveness of $\delta_1$ (instance $d, \n(g), h, i$ is negative).
Contrastingly, the positive instance $\delta_2 = \n(d), g, \n(h), i$ is a model of $\forall d \cdot \Delta$ so
characteristic $\n(d)$ is not essential for the positiveness of $\delta_2$ (instance $d, g, \n(h), i$ is also positive).
\color{black}

We now turn to interpreting b-rules as descriptors of {\em boundary instances} that get filtered out by universal quantification.
Each b-rule \(\Rule(\alpha,\l)\) for classifier \(\Delta\) identifies a positive instance \(\delta = \alpha,\l\) 
which becomes negative if we flip its characteristic \(\l\) (Definition~\ref{def:brule}). 
That is, b-rules \(\Rule(\alpha,\l)\) for \(\Delta\) identify positive \(\l\)-boundary instances. 
Similarly, b-rules \(\Rule(\alpha,\l)\) for \(\neg \Delta\) identify negative \(\l\)-boundary instances.
Hence, b-rules characterize instances that are selected (or filtered out) when universally quantifying a characteristic.

Suppose now that we are universally quantifying a set of characteristics \(\l_1, \ldots, \l_n\)
from \(\Delta\) to yield \(\forall \l_1, \ldots, \l_n \cdot \Delta\). In this case, \(\forall \l_1, \ldots, \l_n \cdot \Delta\)
captures all positive instances~\(\delta\) where the characteristics they have in \(\nl_1, \ldots, \nl_n\) are
irrelevant to how these instances are decided. 
That is, we can flip any of the characteristics in \(\delta \cap \{\nl_1, \ldots, \nl_n\}\) without changing 
the decision (Theorem~\ref{theo:ulq-semantics}).
As we shall see later, when characteristics \(\l_1, \ldots, \l_n\) define a positive instance, 
\(\forall \l_1, \ldots, \l_n \cdot \Delta\) will characterize all
super-populations of this instance that are decided similarly to the instance. Moreover,
these super-populations can be viewed as explanations of the decision made on
instance \(\l_1, \ldots, \l_n\). We will also see how selecting instances
based on quantifying both characteristics and features can be used to answer further
queries such as those relating to decision and classifier bias.

\subsection{Irrelevant Features}
\label{sec:ifeature}

The first application we shall consider for universal quantification concerns the selection of instances which can be
decided independently of a given set of features. 
For example, we may wish to identify all individuals who will be granted a loan regardless of their income and home ownership. 
These features may be relevant to some applicants but not others. Our interest is therefore in identifying all instances
that can be decided independently of some given features while capturing these instances using a Boolean formula. 
We next show how universal quantification can be used to achieve this. 

\begin{definition}\label{def:erase}
To \hl{erase} variables \(X_1, \ldots, X_n\) from term \(\gamma\)
is to remove the literals of variables \(X_1, \ldots, X_n\) from term \(\gamma\).
The resulting term is denoted $\erase{\gamma}{X_1, \ldots, X_n}$.\footnote{We can also define this operator
using variable quantification since $\erase{\gamma}{X_1, \ldots, X_n} = \exists X_1, \ldots, X_n \cdot \gamma.$
We use Definition~\ref{def:erase} instead as it is more direct.}
\end{definition}
\noindent For example, if \(\gamma = x \n(y) \n(z) w\) then \(\erase{\gamma}{Z,W} = x \n(y)\).
When term \(\gamma\) represents an instance or a population, the erase operator creates a super-population of \(\gamma\). 

\begin{definition}\label{def:ifeature}
Let \(\gamma\) be a population decided by classifier \(\Delta\). We say that decision \(\Delta(\gamma)\) is \hl{independent}
of features \(X_1, \ldots, X_n\) precisely when \(\Delta(\gamma) = \Delta(\erase{\gamma}{X_1,\ldots,X_n})\). 
\end{definition}
\noindent We also say in this case that features \(X_1, \ldots, X_n\) are {\em irrelevant} to the decision \(\Delta(\gamma)\).

Consider again the loan classifier in~(\ref{c:loan}) and the population of home owners who have a high income but
never defaulted on previous loan, \(\gamma = \n(d),h,i\). This is a positive population, \(\Delta(\gamma)=1\), as all
its members will be granted loans.  The decision on this population is independent of feature \(I\) though since
\(\Delta(\erase{\gamma}{I})=1\) where \(\erase{\gamma}{I} = \n(d) h\).

If our goal is to check whether a decision \(\Delta(\gamma)\) is independent of features \(X_1, \ldots, X_n\)
then we can simply check whether \(\Delta(\gamma) = \Delta(\erase{\gamma}{X_1,\ldots,X_n})\). 
But universal quantification can be used to characterize all instances that are decided independently of some features.

\begin{theorem}\label{theo:ifeature}
Let \(\Delta\) be a classifier and \(\delta\) be an instance. 
Decision \(\Delta(\delta)\) is independent of features \(X_1, \ldots, X_n\) iff \(\delta \models \forall X_1, \ldots, X_n \cdot~\Delta_\delta\).
\end{theorem}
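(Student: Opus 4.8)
The plan is to reduce the statement to Theorem~\ref{theo:ulq-semantics}, which gives a semantic characterization of $\forall \l_1, \ldots, \l_n \cdot \varphi$ in terms of $\alpha$-independent models. First I would fix notation: let $\delta = \{\l_1, \ldots, \l_n\}$ be the instance and suppose without loss of generality that $X_1, \ldots, X_k$ are the features being erased (so $\delta$ contains literals $\l_1, \ldots, \l_k$ for exactly these variables); write $\Delta_\delta$ for $\Delta$ if $\delta$ is positive and $\neg\Delta$ otherwise, noting $\delta \models \Delta_\delta$ by construction. Recall $\erase{\delta}{X_1,\ldots,X_k} = \delta \setminus \{\l_1,\ldots,\l_k\}$ is a term (a sub-instance), so the decision $\Delta(\erase{\delta}{X_1,\ldots,X_k})$ is defined precisely when this sub-population is decided, i.e.\ entailed by $\Delta$ or by $\neg\Delta$. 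The key translation step is: \emph{$\Delta(\delta)$ is independent of $X_1,\ldots,X_k$ iff $\erase{\delta}{X_1,\ldots,X_k} \models \Delta_\delta$.} This holds because $\delta \models \Delta_\delta$ already pins down the decision on $\delta$, and $\Delta(\erase{\delta}{X_1,\ldots,X_k}) = \Delta(\delta)$ forces the erased sub-population to be decided the same way, hence entailed by $\Delta_\delta$; conversely, if $\erase{\delta}{X_1,\ldots,X_k} \models \Delta_\delta$ then every instance extending this term — in particular $\delta$ and all its feature-$X_i$ variants — is a model of $\Delta_\delta$, so the decision is constant and the super-population is decided identically.

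Next I would connect this to $\alpha$-independent models. Apply Theorem~\ref{theo:ulq-semantics} with the literals being quantified taken as $x_1, \n(x_1), \ldots, x_k, \n(x_k)$ (both literals of each erased variable), which by Propositions~\ref{prop:ulq-uvq} and~\ref{prop:ulq-order} equals $\forall X_1, \ldots, X_k \cdot \Delta_\delta$. For the world $\w = \delta$, the set $\alpha = \delta \cap \{x_1, \n(x_1), \ldots, x_k, \n(x_k)\}$ is exactly $\{\l_1, \ldots, \l_k\}$, the literals of $\delta$ over the erased variables. Theorem~\ref{theo:ulq-semantics} then says $\delta \models \forall X_1, \ldots, X_k \cdot \Delta_\delta$ iff $\delta$ is an $\alpha$-independent model of $\Delta_\delta$, which by Definition~\ref{def:imodel} means $\alpha \subseteq \delta$ (immediate) and $\delta \setminus \alpha \models \Delta_\delta$. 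But $\delta \setminus \alpha$ is precisely $\erase{\delta}{X_1,\ldots,X_k}$, so this last condition is exactly the translated criterion from the first paragraph. Chaining the two equivalences completes the proof.

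The routine obstacle to handle carefully is the first-paragraph translation in the direction where we go from decision-independence to entailment: one must make sure that $\Delta(\erase{\delta}{X_1,\ldots,X_k})$ being \emph{defined} is not an extra hypothesis but a consequence, and that the two cases (positive vs.\ negative $\delta$) are handled uniformly via the $\Delta_\delta$ notation rather than by a case split. A clean way is to observe that $\delta$ always extends $\erase{\delta}{X_1,\ldots,X_k}$, so if the latter is decided at all it is decided the same way as $\delta$; hence "$\Delta(\delta)$ independent of $X_1,\ldots,X_k$" is equivalent to "$\erase{\delta}{X_1,\ldots,X_k}$ is decided" (automatically with the same sign), which is equivalent to $\erase{\delta}{X_1,\ldots,X_k} \models \Delta_\delta$ since a decided population with the same sign as $\delta$ must entail $\Delta_\delta$ (and not $\neg\Delta_\delta$, as it has $\delta$ among its models). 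Everything else is bookkeeping with Definition~\ref{def:imodel} and the order/variable-decomposition propositions for $\forall$.
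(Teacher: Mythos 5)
Your proof is correct, and it takes a genuinely different route from the paper. The paper proves Theorem~\ref{theo:ifeature} directly in two directions using the implication-preservation results: for ($\Rightarrow$) it applies Propositions~\ref{prop:ulq-imply} and~\ref{prop:ulq-uvq} to lift $\erase{\delta}{X_1,\ldots,X_n} \models \Delta_\delta$ through the universal quantifiers and then exploits the facts that $\forall X_1,\ldots,X_n \cdot \erase{\delta}{X_1,\ldots,X_n} = \erase{\delta}{X_1,\ldots,X_n}$ (the term mentions none of the $X_i$) and $\delta \models \erase{\delta}{X_1,\ldots,X_n}$; for ($\Leftarrow$) it dualizes with Propositions~\ref{prop:elq-imply} and~\ref{prop:evq}, using the observation that $\exists X_1,\ldots,X_n \cdot \delta = \erase{\delta}{X_1,\ldots,X_n}$ and that the quantified formula $\forall X_1,\ldots,X_n \cdot \Delta_\delta$ is itself invariant under $\exists X_1,\ldots,X_n$. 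You instead reduce everything to a single application of Theorem~\ref{theo:ulq-semantics} (the $\alpha$-independent-model characterization), after carefully verifying that quantifying the paired literals $x_i, \n(x_i)$ yields $\alpha = \delta \setminus \erase{\delta}{X_1,\ldots,X_n}$. This is legitimate --- Theorem~\ref{theo:ulq-semantics} explicitly permits conflicting literals --- and it collapses both directions into one equivalence chain. Interestingly, your route is exactly the strategy the paper uses to prove the sibling result, Theorem~\ref{theo:icharacteristic}, so the two approaches are complementary: the paper's direct argument is more self-contained at this point in the development, while yours makes the structural kinship between Theorems~\ref{theo:ifeature} and~\ref{theo:icharacteristic} explicit. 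Your first-paragraph translation (decision independence $\iff$ $\erase{\delta}{X_1,\ldots,X_n} \models \Delta_\delta$) is also handled with the right care regarding undefined decisions and the positive/negative case split via $\Delta_\delta$.
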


\noindent According to this result, \(\forall X_1, \ldots, X_n \cdot~\Delta\) characterizes all instances that will be decided positively 
independently of features \(X_1, \ldots, X_n\) and \(\forall X_1, \ldots, X_n \cdot \neg \Delta\) characterizes all instances that 
will be decided negatively independently of these features. For example, the following Boolean formula characterizes all
applicants who will be granted a loan independently of whether they  own a home or have defaulted on a previous loan:

\[ \forall D,H \cdot \Delta = g \wedge i,  \mbox{ where } \Delta = (h \vee i) \wedge (\n(d) \vee g) \wedge (\n(d) \vee i). \]

This formula captures applicants who have a guarantor and a high income. Each member of this population
will be granted a loan regardless of their features \(D\) and \(H\). The expression \(\forall D,H \cdot \Delta\) 
can be easily evaluated since \(\Delta\) is given as a CNF: 
we just remove literals \(d\), \(\n(d)\), \(h\) and \(\n(h)\) from every clause of the CNF (see Proposition~\ref{prop:ulq-uvq} and~\ref{prop:q-cnf}). 

It is possible that \(\forall X_1, \ldots, X_n \cdot~\Delta = \bot\),
which means that every positive decision must depend on features \(X_1, \ldots, X_n\). 
Similarly, it is possible that \(\forall X_1, \ldots, X_n \cdot \neg \Delta = \bot\), which means that every negative decision must
depend on these features. Moreover, it is possible that one decision type is independent of some features but the other is not. 
This is illustrated by the following example:
\[
\begin{array}{ll}
\forall D,G \cdot \Delta = \bot, & \mbox{where \(\Delta = (h \vee i) \wedge (\n(d) \vee g) \wedge (\n(d) \vee i)\)} \\
\forall D,G \cdot \neg \Delta = \n(h)\wedge\n(i), & \mbox{where \(\neg \Delta = (d \vee \n(i)) \wedge (d \vee \n(h)) \wedge (\n(g) \vee \n(i)).\)}
\end{array}
\]
No applicant will be granted a loan without considering whether they defaulted on a previous loan and whether they
have a guarantor (features \(D\) and \(G\)). However, some applicants will be denied a loan without considering
these features. In particular, any applicant who does not own a home and does not have a high income will be declined. 

\shrink{
Consider for example classifier \(\Delta = x \wedge y\), which makes one positive decision on instance \(\{x,y\}\)
and three negative decisions on instances \(\{\n(x),y\}\), \(\{x,\n(y)\}\) and \(\{\n(x),\n(y)\}\). 
We have \(\forall X \cdot \Delta = \bot\) but \(\forall X \cdot \neg \Delta = \n(y)\).
The single positive decision of this classifier depends on feature \(X\), yet the negative decisions it makes on 
instances \(\{x,\n(y)\}\) and \(\{\n(x),\n(y)\}\) are independent of this feature. 
}

In contrast, we may have \(\forall X_1, \ldots, X_n \cdot \Delta = \Delta\) indicating that every positive decision
is independent of features \(X_1, \ldots, X_n\). This is equivalent to \(\forall X_1, \ldots, X_n \cdot \neg \Delta = \neg \Delta\),
which means that all negative decisions will also be independent of these features.\footnote{To show this, observe that
\(\forall X_1, \ldots, X_n \cdot \Delta = \Delta\) is equivalent to $\Delta$ being independent of \(X_1, \ldots, X_n\). 
This is equivalent to $\neg \Delta$ being independent of \(X_1, \ldots, X_n\), which is equivalent to
\(\forall X_1, \ldots, X_n \cdot \neg \Delta = \neg \Delta\).} The loan classifier depends on all its features.

\shrink{
It is possible that \(\forall X_1, \ldots, X_n \cdot \Delta = \bot\). In this case, no instance can be 
decided positively without considering features \(X_1, \ldots, X_n\). Similarly if \(\forall X_1, \ldots, X_n \cdot \neg \Delta = \bot\).
Moreover, if \(\forall X_1, \ldots, X_n \cdot \Delta = \Delta\) then these features are irrelevant for every decision. 
}

\subsection{Irrelevant Characteristics}
\label{sec:icharacteristic}

We may also be interested in instances whose classification does not depend on some characteristics (in contrast to features). 
For example, we may be interested in applicants who will be granted a loan but not due to their high income or home
ownership. These applicants may not have any of these characteristics but if they do then these characteristics
are irrelevant to how their application is decided.
We next show how universal literal quantification can be used to select instances with irrelevant characteristics
and then further contrast irrelevant characteristics with irrelevant features. 

\begin{definition}\label{def:icharacteristic}
Let \(\gamma\) be a population decided by classifier \(\Delta\) and \(\alpha\) be a set of characteristics.
We say that decision \(\Delta(\gamma)\) is \hl{independent}
of characteristics~\(\alpha\) precisely when \(\Delta(\gamma) = \Delta(\gamma\setminus\alpha)\). 
\end{definition}
\noindent We also say in this case that characteristics \(\alpha\) are {\em irrelevant} to decision \(\Delta(\gamma)\).
This definition does not require every characteristic of \(\alpha\) to appear in population \(\gamma\).
Moreover, a characteristic \(\l\) and its negation \(\nl\) may both appear in \(\alpha\).

Consider again the loan classifier and an applicant who defaulted on a previous loan, owns a home,
has a guarantor but does not have a high income, \(\delta=d,h,g,\n(i)\). 
This applicant will be denied a loan, \(\Delta(\delta)=0\), but the decision is independent of 
characteristics \(\n(d)\) and \(h\) since \(\Delta(\delta\setminus\{\n(d),h\})=0\). The decision is independent
of characteristic \(\n(d)\) since the applicant did default on a previous loan. 
It is independent of characteristic \(h\) since the applicant will still be denied a loan if they did not
own a home, \(\Delta(d,\n(h),g,\n(i))=0\).
Note, however, that this decision is not independent of features \(D\) and \(H\).
For example, the applicant will be granted a loan if they did not default on a previous loan,
\(\Delta(\n(d),h,g,\n(i))=1\).

\begin{theorem}\label{theo:icharacteristic}
Let \(\Delta\) be a classifier, \(\l_1, \ldots, \l_n\) be characteristics and \(\delta\) be an instance.
Then \\ 
\(\delta \models \forall \l_1, \ldots, \l_n \cdot \Delta_\delta\) iff
decision \(\Delta(\delta)\) is independent of characteristics \(\nl_1, \ldots, \nl_n\).
\end{theorem}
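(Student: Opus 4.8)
The plan is to reduce Theorem~\ref{theo:icharacteristic} to Theorem~\ref{theo:ulq-semantics}, which already gives a semantic characterization of $\forall \l_1, \ldots, \l_n \cdot \varphi$ in terms of $\alpha$-independent models. First I would set $\varphi = \Delta_\delta$ and let $\alpha = \delta \cap \{\nl_1, \ldots, \nl_n\}$, exactly the term appearing in the statement of Theorem~\ref{theo:ulq-semantics}. Since $\delta$ is an instance, it assigns a characteristic to every feature, and in particular it is a model of exactly one of $\Delta$, $\neg \Delta$; by Definition~\ref{def:decision}, $\Delta_\delta$ is precisely the one that $\delta$ satisfies, so $\delta \models \Delta_\delta$ always holds. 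Applying Theorem~\ref{theo:ulq-semantics} with $\w = \delta$, we get that $\delta \models \forall \l_1, \ldots, \l_n \cdot \Delta_\delta$ iff $\delta$ is an $\alpha$-independent model of $\Delta_\delta$, i.e.\ (Definition~\ref{def:imodel}) iff $\alpha \subseteq \delta$ (automatic, by choice of $\alpha$) and $\delta \setminus \alpha \models \Delta_\delta$.

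So the remaining work is to show that $\delta \setminus \alpha \models \Delta_\delta$ is equivalent to ``decision $\Delta(\delta)$ is independent of characteristics $\nl_1, \ldots, \nl_n$,'' which by Definition~\ref{def:icharacteristic} means $\Delta(\delta) = \Delta(\delta \setminus \{\nl_1, \ldots, \nl_n\})$. The key observation is that $\delta \setminus \alpha = \delta \setminus \{\nl_1, \ldots, \nl_n\}$: removing from $\delta$ the literals in $\{\nl_1, \ldots, \nl_n\}$ is the same as removing those in $\alpha = \delta \cap \{\nl_1, \ldots, \nl_n\}$, since the literals of $\{\nl_1, \ldots, \nl_n\}$ not in $\delta$ contribute nothing to the set difference. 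Write $\gamma = \delta \setminus \{\nl_1, \ldots, \nl_n\} = \delta \setminus \alpha$. Now $\gamma$ is a sub-term of the instance $\delta$, hence a population; I would argue that $\gamma$ is decided by $\Delta$ and $\Delta(\gamma) = \Delta(\delta)$ exactly when $\gamma \models \Delta_\delta$. Indeed, if $\gamma \models \Delta_\delta$ then $\Delta(\gamma)$ agrees with $\Delta(\delta)$ (both positive if $\Delta_\delta = \Delta$, both negative if $\Delta_\delta = \neg \Delta$), giving independence; conversely, if $\Delta(\gamma) = \Delta(\delta)$ then $\gamma$ is decided the same way $\delta$ is, so $\gamma \models \Delta_\delta$.

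The one subtlety — and the step I expect to need the most care — is the converse direction: from ``decision $\Delta(\delta)$ is independent of $\nl_1, \ldots, \nl_n$'' one must first know that $\Delta(\gamma)$ is \emph{defined} before concluding $\Delta(\gamma) = \Delta(\delta)$, since Definition~\ref{def:ifeature}/\ref{def:icharacteristic} presuppose a decision on the relevant population. Here I would appeal to the fact that $\gamma \models \delta$ is false in general but $\delta \models \gamma$ holds (a super-population is implied by its sub-population), so any extension of $\gamma$ to a full instance that agrees with $\delta$ on the remaining features recovers $\delta$ itself; more carefully, $\delta \models \gamma$ together with $\delta \models \Delta_\delta$ does \emph{not} immediately give $\gamma \models \Delta_\delta$, so the independence hypothesis is genuinely needed and is precisely what Definition~\ref{def:icharacteristic} supplies (it asserts $\Delta(\gamma)$ is defined and equals $\Delta(\delta)$). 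Once that is unwound, chaining the equivalences — $\delta \models \forall \l_1, \ldots, \l_n \cdot \Delta_\delta$ $\iff$ $\delta$ is $\alpha$-independent model of $\Delta_\delta$ $\iff$ $\delta \setminus \alpha \models \Delta_\delta$ $\iff$ $\gamma \models \Delta_\delta$ $\iff$ $\Delta(\gamma) = \Delta(\delta)$ $\iff$ decision $\Delta(\delta)$ is independent of $\nl_1, \ldots, \nl_n$ — completes the proof. Everything beyond the careful handling of ``$\Delta(\gamma)$ is defined'' is routine bookkeeping about terms as sets of literals.
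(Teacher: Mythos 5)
Your proposal is correct and takes essentially the same approach as the paper: reduce to Theorem~\ref{theo:ulq-semantics} with $\alpha = \delta \cap \{\nl_1,\ldots,\nl_n\}$, and then show that ``$\delta$ is an $\alpha$-independent model of $\Delta_\delta$'' is equivalent to ``decision $\Delta(\delta)$ is independent of $\nl_1,\ldots,\nl_n$,'' using the set identity $\delta \setminus \alpha = \delta \setminus \{\nl_1,\ldots,\nl_n\}$ and the fact that $\gamma \models \Delta_\delta$ is the same as $\Delta(\gamma) = \Delta(\delta)$. The ``subtlety'' you flag about $\Delta(\gamma)$ being defined is handled correctly (and implicitly in the paper) because the independence hypothesis, via Definition~\ref{def:icharacteristic}, already asserts that $\Delta(\delta\setminus\{\nl_1,\ldots,\nl_n\})$ is defined and equal to $\Delta(\delta)$.
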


\noindent According to this result, \(\forall \l_1, \ldots, \l_n \cdot \Delta\) characterizes all instances that are decided positively but
not due to any characteristic they may have in \(\nl_1, \ldots, \nl_n\). 
An instance captured by \(\forall \l_1, \ldots, \l_n \cdot \Delta\) may not have any characteristic in 
\(\nl_1, \ldots, \nl_n\). But if it does, then those characteristics are irrelevant: 
we can change them in any manner without changing the decision.
Similarly,  \(\forall \l_1, \ldots, \l_n \cdot \neg \Delta\) 
captures all instances that are decided negatively but not due to any characteristic they may have in \(\nl_1, \ldots, \nl_n\).

For the loan classifier, there are four applicants who will be denied a loan independently of
characteristics \(d\) and \(h\). These applicants are characterized by the following CNF:
\[
\forall \n(d), \n(h) \cdot \neg \Delta = \n(h)\wedge\n(i), 
\mbox{ where \(\neg \Delta = (d\vee \n(i))\wedge(d\vee\n(h))\wedge(\n(g)\vee\n(i))\).}
\]
None of these applicants owns a home. Moreover, if any of them defaulted on a previous loan they will still
be denied if they did not default.
But there are no applicants who will be denied a loan independently of features \(D\) and \(H\):
\[
\forall D, H \cdot \neg \Delta = \bot.
\]
That is, these features are relevant for every negative decision.

Again, expressions \(\forall \n(d), \n(h) \cdot \neg \Delta\) and \(\forall D, H \cdot \neg \Delta\) 
can be easily evaluated using Propositions~\ref{prop:ulq-uvq} and~\ref{prop:q-cnf} since \(\neg \Delta\) is given as a CNF.
For the first expression, we just remove literals \(d\) and \(h\) from all clauses.
For the second expression, we remove literals \(d\), \(\n(d)\), \(h\) and \(\n(h)\).

It is possible that \(\forall \l_1, \ldots, \l_n \cdot \Delta = \bot\). This indicates that every positive instance with some characteristics
in \(\nl_1, \ldots, \nl_n\) can become negative if we flip some of these characteristics so these characteristics
are relevant for all positive instances. Similarly, if \(\forall \l_1, \ldots, \l_n \cdot \neg \Delta = \bot\),
then any negative instance with some characteristics in \(\nl_1, \ldots, \nl_n\) may become positive 
if we flip some of these characteristics. 
A set of characteristics may be relevant for one type of decision but irrelevant for the other type
since \(\forall \l_1, \ldots, \l_n \cdot \Delta\) may be inconsistent but \(\forall \l_1, \ldots, \l_n \cdot \neg \Delta\) 
may be consistent (and vice versa).

The following result relates irrelevant characteristics and features.
It shows that if some features are irrelevant to a decision then any corresponding characteristics
are also irrelevant to the decision. The converse is not true though as we have seen earlier.

\begin{proposition}\label{prop:ifc}
Let \(\Delta\) be a classifier, \(X_1, \ldots, X_n\) be features and \(\l_1, \ldots, \l_n\) be corresponding characteristics.
Then \(\forall X_1, \ldots, X_n \cdot \Delta \models \forall \l_1, \ldots, \l_n \cdot \Delta\).
\end{proposition}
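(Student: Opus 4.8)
The plan is to reduce the claim to a statement about literal quantification and then invoke Proposition~\ref{prop:ulq-uvq} together with monotonicity of universal literal quantification under conjunction. First I would recall that by Proposition~\ref{prop:ulq-uvq} we have $\forall X_i \cdot \psi = \forall \{x_i, \n(x)_i\} \cdot \psi$ for any formula $\psi$, and that (by Proposition~\ref{prop:ulq-order}) a set of literals and variables may be quantified in any order. Hence $\forall X_1, \ldots, X_n \cdot \Delta$ can be rewritten as the result of universally quantifying the $2n$ literals $x_1, \n(x)_1, \ldots, x_n, \n(x)_n$ from $\Delta$, in any order we like. In particular we can arrange to quantify the $n$ literals $\l_1, \ldots, \l_n$ last, obtaining
\[
\forall X_1, \ldots, X_n \cdot \Delta \;=\; \forall \l_1, \ldots, \l_n \cdot \bigl( \forall \nl_1, \ldots, \nl_n \cdot \Delta \bigr).
\]

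The key step is then to compare $\forall \l_1, \ldots, \l_n \cdot (\forall \nl_1, \ldots, \nl_n \cdot \Delta)$ with $\forall \l_1, \ldots, \l_n \cdot \Delta$. Since universal quantification strengthens a formula (by Theorem~\ref{theo:ulq-semantics-b}, iterated, or more directly by Proposition~\ref{prop:ulq-semantics-old} which says $\forall \nl_i \cdot$ always yields an implicant-preserving strengthening), we have $\forall \nl_1, \ldots, \nl_n \cdot \Delta \models \Delta$. Now I would apply Proposition~\ref{prop:ulq-imply}, which states that universal literal quantification preserves implication, iterating it over the literals $\l_1, \ldots, \l_n$: from $\forall \nl_1, \ldots, \nl_n \cdot \Delta \models \Delta$ we conclude
\[
\forall \l_1, \ldots, \l_n \cdot \bigl( \forall \nl_1, \ldots, \nl_n \cdot \Delta \bigr) \;\models\; \forall \l_1, \ldots, \l_n \cdot \Delta.
\]
Combining this with the identity from the first paragraph gives exactly $\forall X_1, \ldots, X_n \cdot \Delta \models \forall \l_1, \ldots, \l_n \cdot \Delta$, as desired.

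The main obstacle, and the one point that needs care, is making the reordering rigorous: Proposition~\ref{prop:ulq-order} is stated for two literals, so I would first note it extends by induction to any finite multiset of literals (the pairwise swap property generates the full symmetric group), which legitimizes treating $\forall X_1, \ldots, X_n \cdot \Delta$ as a single block of $2n$ literal quantifications that may be split as $\{\l_i\}_i$ followed by $\{\nl_i\}_i$ in whatever grouping is convenient; Proposition~\ref{prop:ulq-uvq} handles the translation from variable to literal blocks. A secondary, purely bookkeeping point is that Proposition~\ref{prop:ulq-imply} is stated for a single literal, so the passage $\forall \nl_1, \ldots, \nl_n \cdot \Delta \models \Delta \;\Rightarrow\; \forall \l_1, \ldots, \l_n \cdot (\cdots) \models \forall \l_1, \ldots, \l_n \cdot \Delta$ should be spelled out as $n$ successive applications. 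Everything else is immediate from the cited results. (Alternatively, one could give a direct model-theoretic argument: if $\w \models \forall X_1, \ldots, X_n \cdot \Delta$ then flipping any characteristics of $\w$ among the $X_i$ keeps it a model of $\Delta$, which in particular means flipping any subset of $\w \cap \{\nl_1, \ldots, \nl_n\}$ keeps it a model, i.e. $\w$ is an $\alpha$-independent model of $\Delta$ for $\alpha = \w \cap \{\nl_1, \ldots, \nl_n\}$, and then Theorem~\ref{theo:ulq-semantics} yields $\w \models \forall \l_1, \ldots, \l_n \cdot \Delta$. I would mention this as the intuition but prefer the short algebraic proof above.)
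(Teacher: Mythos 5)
Your proof is correct, but you nest the quantifiers the opposite way from the paper, which changes which lemmas carry the load. The paper rewrites $\forall X_1,\ldots,X_n \cdot \Delta$ (via Propositions~\ref{prop:ulq-uvq} and~\ref{prop:ulq-order}) as $\forall \nl_1,\ldots,\nl_n (\forall \l_1,\ldots,\l_n \cdot \Delta)$, i.e.\ it quantifies the $\l_i$'s first and the $\nl_i$'s last, and then finishes in a single stroke with Theorem~\ref{theo:ulq-semantics-b}: since universal literal quantification only strengthens, $\forall \nl_1,\ldots,\nl_n(\psi) \models \psi$ with $\psi = \forall \l_1,\ldots,\l_n \cdot \Delta$. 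You instead arrange it as $\forall \l_1,\ldots,\l_n (\forall \nl_1,\ldots,\nl_n \cdot \Delta)$, which forces a two-step argument: first strengthening inside to get $\forall \nl_1,\ldots,\nl_n \cdot \Delta \models \Delta$, and then $n$ applications of Proposition~\ref{prop:ulq-imply} (monotonicity) to push the outer $\forall \l_i$'s through the entailment. Both arrangements are licensed by the reordering propositions and both arguments are sound; the paper's nesting is the more economical one because it avoids invoking monotonicity entirely. Your model-theoretic alternative via $\alpha$-independent models and Theorem~\ref{theo:ulq-semantics} is also valid and arguably more illuminating, but, as you note, it is the longer route.

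One small point of rigor you flag yourself and which the paper glosses over as well: Propositions~\ref{prop:ulq-order} and~\ref{prop:ulq-imply} are stated for two literals and one literal respectively, so strictly speaking both proofs implicitly rely on obvious inductive extensions. Your explicit acknowledgment of that is a plus, not a defect.
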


\subsection{Explaining Decisions}
\label{sec:explain}

We will now consider the application of universal literal and variable quantification to explaining the decisions of classifiers
on both instances and populations. 

Consider a decision \(\Delta(\delta)\) made by classifier \(\Delta\) on instance \(\delta\). One way to explain
this decision is to identify a minimal set of characteristics \(\gamma \subseteq \delta\) that is sufficient to trigger 
the decision, \(\Delta(\gamma)=\Delta(\delta)\). 
This notion of explanation was introduced in~\cite{ShihCD18} under the name of a {\em PI-explanation}
and was later called a {\em sufficient reason} for the decision in~\cite{DarwicheH20}.  We will next
provide some examples of this notion using a classifier from~\cite{DarwicheH20} for admitting 
students into an academic program.

This classifier makes its decision based on five features: whether an applicant passed the entrance exam (\(E\)),
is a first time applicant (\(F\)), has good grades (\(G\)), has work experience (\(W\)) and 
comes from a rich hometown (\(R\)). The classifier is specified by the following CNFs:
\begin{eqnarray*}
\Delta & = & 
(e \vee g)\wedge(e \vee r)\wedge(e \vee w)\wedge(f \vee r)\wedge(\n(f) \vee g \vee w)  \\
\neg \Delta & = & 
(\n(e)\vee f \vee \n(r))\wedge(\n(e)\vee\n(f)\vee\n(g))\wedge(\n(e)\vee\n(f)\vee\n(w))\wedge(\n(g)\vee\n(r)\vee\n(w)) 
\end{eqnarray*}
Consider now an applicant who does not come from a rich hometown but satisfies all other
requirements. This applicant will be admitted, \(\Delta(e,f,g,w,\n(r))=1\), 
and there are two sufficient reasons for this decision, (\(e,f,g\)) and (\(e,f,w\)). 
Passing the entrance exam~(\(e\)), being a first time applicant (\(f\)) and having good grades (\(g\))
will guarantee admission but no strict subset of these characteristics will.
Having work experience instead of good grades will also guarantee admission but again
no strict subset of characteristics (\(e,f,w\)) provides such a guarantee.
If this applicant were to come from a rich hometown, then there would be more sufficient
reasons for the admission decision,  \(\Delta(e,f,g,w,r)=1\):
\[
(e,f,g)\quad
(e,f,w)\quad
(e,g,r)\quad
(e,r,w)\quad
(g,r,w)
\]
For example, passing the entrance exam and having good grades will guarantee admission for
an applicant who comes from a rich hometown.

A decision may have an exponential number of sufficient reasons. Moreover,
some explainable AI queries, such as ones relating to decision bias, are based on checking whether 
the sufficient reasons for a decision satisfy some properties.
These sufficient reasons can sometimes be represented compactly using the notion of a {\em complete reason} 
introduced in~\cite{DarwicheH20}, which showed a number of results relating to this notion.
These results include
(1)~The complete reason for a decision can be computed efficiently when the classifier is represented using 
a tractable circuit of appropriate type;
(2)~The sufficient reasons for a decision correspond to the prime implicants of its complete reason; and
(3)~Some properties of sufficient reasons can be checked in time linear in the size of a complete reason, 
again, when it is represented using a tractable circuit of appropriate type.

We will next show how the notion of a complete reason can be formulated using universal quantification,
particularly the selection semantics of such quantification. This formulation has a number of
implications, which include generalizing this notion to decisions on populations and opening new pathways
for the efficient computation of complete reasons. 

The main insight behind our formulation is to try to find a necessary and sufficient condition for 
why a decision was made. 
Consider a decision on population \(\gamma = \l_1, \ldots, \l_m\) and let \(X_{m+1},\ldots,X_n\) be
all features not mentioned in population \(\gamma\). We will do this by finding all instances that
are decided similarly to population \(\gamma\) but based only on the information used to decide \(\gamma\);
that is, characteristics \(\l_1, \ldots, \l_m\). We will first select instances that are decided
similarly to \(\gamma\) but independently of features \(X_{m+1},\ldots,X_n\) as these features did not
play a role in the decision on population \(\gamma\). These instances
are characterized by the formula \(\forall X_{m+1},\ldots,X_n \cdot \Delta_\gamma\)
as shown in Section~\ref{sec:ifeature}. From these instances,  we will now select
those that are decided similarly to \(\gamma\) but independently of characteristics \(\nl_1, \ldots, \nl_m\)
as these characteristics did not play a role in the decision either.
As shown in Section~\ref{sec:icharacteristic},  these instances are characterized by the formula 
\(\forall \l_1, \ldots, \l_m (\forall X_{m+1},\ldots,X_n \cdot \Delta_\gamma)\).
This formula can be thought of as a necessary and sufficient reason for the decision on
population \(\gamma\) so we shall call it the complete reason for the decision (more on this later).

\begin{definition}[Complete Reason]
 \label{def:creason}
Let \(\gamma = \l_1, \ldots, \l_m\) be a population decided by classifier \(\Delta\) and \(X_{m+1},  \ldots, X_n\)
be all classifier features not mentioned in \(\gamma\). The \hl{complete reason} for decision \(\Delta(\gamma)\)
is defined as the formula \(\forall \l_1, \ldots, \l_m, X_{m+1}, \ldots, X_n \cdot \Delta_\gamma\). 
\end{definition}
\begin{definition}[Sufficient Reason]
\label{def:sreason}
The \hl{sufficient reasons} for a decision are defined as the prime implicants of its complete reason.
\end{definition}

We now have the following result, which establishes our definition of complete reason as a
generalization of the one given in~\cite{DarwicheH20} and our definition of a sufficient reason
as a generalization of the PI-explanation introduced in~\cite{ShihCD18}.

\begin{theorem}\label{theo:reasons}
Let \(\gamma\) be a population decided by classifier \(\Delta\).
Then \(\gamma^\star\) is a sufficient reason for decision \(\Delta(\gamma)\) iff
\(\gamma^\star\) is a minimal subset of \(\gamma\) that satisfies \(\Delta(\gamma^\star)=\Delta(\gamma)\).
\end{theorem}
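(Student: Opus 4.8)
The plan is to unwind the definitions. Fix a population $\gamma = \l_1, \ldots, \l_m$ decided by $\Delta$, let $X_{m+1}, \ldots, X_n$ be the features not mentioned in $\gamma$, and write $\mathcal{C} = \forall \l_1, \ldots, \l_m, X_{m+1}, \ldots, X_n \cdot \Delta_\gamma$ for the complete reason of $\Delta(\gamma)$ (Definition~\ref{def:creason}); by Definition~\ref{def:sreason} a sufficient reason is exactly a prime implicant of $\mathcal{C}$. The key observation is that for any term $\gamma'$ we have $\Delta(\gamma') = \Delta(\gamma)$ iff $\gamma' \models \Delta_\gamma$ (when $\Delta(\gamma)=1$ this reads $\gamma'\models\Delta$, when $\Delta(\gamma)=0$ it reads $\gamma'\models\neg\Delta$; in particular $\gamma'\models\Delta_\gamma$ already forces $\gamma'$ to be decided). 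So the theorem reduces to: $\gamma^\star$ is a prime implicant of $\mathcal{C}$ iff $\gamma^\star$ is a $\subseteq$-minimal subset of $\gamma$ with $\gamma^\star \models \Delta_\gamma$. I would prove this via two lemmas.

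\emph{Preservation:} if $\gamma' \subseteq \gamma$ and $\gamma' \models \Delta_\gamma$, then $\gamma' \models \mathcal{C}$. Using order-independence (Propositions~\ref{prop:ulq-order} and~\ref{prop:ulq-uvq}) I would reorder so that the literals $\l_i$ lying in $\gamma'$ are quantified first; each such step keeps $\gamma'$ an implicant by Proposition~\ref{prop:ulq-preserve}. All remaining quantifications (the $\l_i \notin \gamma'$, and the variables $X_j$ with $j>m$) are over variables that do not occur in $\gamma'$, and for those I would use the boundary-model semantics: by Theorem~\ref{theo:ulq-semantics-b}, $\forall \l \cdot \varphi$ drops exactly the $\nl$-boundary models of $\varphi$, and a world extending $\gamma'$ cannot be $\nl$-boundary when the variable of $\l$ is unconstrained by $\gamma'$ (flipping that variable keeps the world extending $\gamma'$, hence a model of $\varphi$). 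Iterating yields $\gamma' \models \mathcal{C}$.

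\emph{Stripping:} every prime implicant $\gamma^\star$ of $\mathcal{C}$ is a subset of $\gamma$. First, $\mathcal{C}$ is independent of each literal $\nl_i$ and of each variable $X_j$ with $j>m$: pulling the relevant quantifier to the outside (again order-independence) puts $\mathcal{C}$ in the form $\forall \l_i \cdot (\cdots)$ or $\forall X_j \cdot (\cdots)$, and Proposition~\ref{prop:ulq-semantics-old} (resp. Proposition~\ref{prop:uvq}) says such a formula is independent of $\nl_i$ (resp. of $X_j$, hence of both its literals). Second — the one genuinely hands-on step — if $\varphi$ is independent of literal $\l$ and $\l$ occurs in an implicant $\gamma'$ of $\varphi$, then $\gamma' \setminus \{\l\}$ is still an implicant: write $\varphi$ as an NNF with no occurrence of $\l$, so $\varphi$ is monotone in the complementary leaf $\nl$ and therefore $\varphi \cd \l \models \varphi \cd \nl$; since $\gamma' \setminus \{\l\} \models \varphi \cd \l$ (and $\gamma' \setminus \{\l\}$ does not mention the variable of $\l$), every extension of $\gamma' \setminus \{\l\}$ satisfies $\varphi$ whichever way it sets that variable. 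Now a literal of $\gamma^\star$ lying outside $\gamma$ would be one of the $\nl_i$ or a literal over some $X_j$ with $j>m$, i.e. a literal of which $\mathcal{C}$ is independent, so dropping it gives a strictly smaller implicant of $\mathcal{C}$, contradicting primality.

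With these in hand the equivalence closes. Note $\mathcal{C} \models \Delta_\gamma$, since $\mathcal{C}$ arises from $\Delta_\gamma$ by universal quantifications, each producing a formula that implies the previous one (the ``implies $\varphi$'' clauses of Propositions~\ref{prop:ulq-semantics-old} and~\ref{prop:uvq}). For $(\Leftarrow)$: if $\gamma^\star \subseteq \gamma$ is minimal with $\gamma^\star \models \Delta_\gamma$, then $\gamma^\star \models \mathcal{C}$ by Preservation, and any $\gamma'' \subsetneq \gamma^\star$ with $\gamma'' \models \mathcal{C}$ would satisfy $\gamma'' \models \Delta_\gamma$, contradicting minimality; so $\gamma^\star$ is a prime implicant of $\mathcal{C}$. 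For $(\Rightarrow)$: if $\gamma^\star$ is a prime implicant of $\mathcal{C}$, then $\gamma^\star \subseteq \gamma$ by Stripping and $\gamma^\star \models \mathcal{C} \models \Delta_\gamma$, so $\Delta(\gamma^\star) = \Delta(\gamma)$; and any $\gamma'' \subsetneq \gamma^\star$ with $\Delta(\gamma'') = \Delta(\gamma)$, i.e. $\gamma'' \models \Delta_\gamma$, would give $\gamma'' \models \mathcal{C}$ by Preservation, again contradicting primality. The main obstacle I anticipate is the NNF-monotonicity step inside Stripping, together with the care needed to justify the repeated ``reorder / pull one quantifier out'' manipulations when literal and variable quantifiers are interleaved; everything else is routine chaining of the cited results.
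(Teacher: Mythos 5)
Your proposal follows the same overall route as the paper: unwind the definitions so the claim becomes ``$\gamma^\star$ is a prime implicant of the complete reason $\Gamma$ iff $\gamma^\star$ is a minimal subset of $\gamma$ implying $\Delta_\gamma$,'' then handle both directions by tracking how universal quantification acts on terms. Your \emph{Preservation} lemma is precisely the workhorse the paper uses in both directions (via Propositions~\ref{prop:ulq-imply} and~\ref{prop:ulq-uvq} together with the observation that $\forall \l_1,\ldots,\l_m, X_{m+1},\ldots,X_n \cdot \gamma' = \gamma'$ when $\gamma' \subseteq \gamma$, which is Lemma~\ref{lem:ulq-term}); you argue it via boundary-model semantics where the paper applies \texttt{ulq-imply} and the fixed-term computation, but the content is the same.

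The one substantive thing you add is the \emph{Stripping} lemma. The paper's $(\Rightarrow)$ direction asserts $\alpha \subseteq \{\l_1,\ldots,\l_m\}$ and concludes that $\gamma^\star$ is a minimal subset of $\gamma$, but it never actually establishes $\gamma^\star \subseteq \gamma$ --- exactly the fact Stripping supplies, and it is genuinely needed (both to license the $\alpha \subseteq \gamma$ assertion and to justify the final ``subset of $\gamma$'' claim). Your NNF-monotonicity argument for it is correct, though there is a one-line route inside the paper's own toolkit: if $\Gamma$ is independent of a literal $\l$ then $\exists \l \cdot \Gamma = \Gamma$ by Proposition~\ref{prop:elq-semantics-old}, and if $\gamma^\star \models \Gamma$ with $\l \in \gamma^\star$ then Proposition~\ref{prop:elq-imply} gives $\gamma^\star \setminus \{\l\} = \exists \l \cdot \gamma^\star \models \exists \l \cdot \Gamma = \Gamma$, so a prime implicant cannot mention a literal of which $\Gamma$ is independent. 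As for the interleaving of literal and variable quantifiers you flag as a concern, Propositions~\ref{prop:ulq-order} and~\ref{prop:ulq-uvq} dispose of it directly (variable quantification is just two commuting literal quantifications), so that step is not an obstacle.
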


\noindent According to this result, the sufficient reasons for decision \(\Delta(\gamma)\) are the maximal 
super-populations of \(\gamma\) that are decided similarly to population \(\gamma\). 
Moreover, these super-populations are precisely the prime implicants of the complete reason for the decision.
As mentioned earlier, we may have an exponential number of such super-populations but they are now
encoded by the complete reason which may not be exponentially sized (depending on its syntactic form). 
When the population is a singleton (instance), 
the complete reason for decision \(\Delta(\gamma)\) reduces to \(\forall \l_1, \ldots, \l_m \cdot \Delta_\gamma\)
which provides an alternative definition to the one given in~\cite{DarwicheH20}.
This is an expression that we studied in Section~\ref{sec:icharacteristic}, except that we now have the 
condition \(\l_1, \ldots, \l_m \models \Delta_\gamma\) which we did not assume in that section.
This additional condition provides further selection semantics for universal literal quantification:
the complete reason
\(\forall \l_1, \ldots, \l_m \cdot \Delta_\gamma\) characterizes all instances \(\delta\) that are decided similarly to
\(\gamma\) but due only to the characteristics they have in common with \(\gamma\);
that is, independently of their characteristics \(\delta \cap \{\nl_1, \ldots, \nl_m\}\).

Consider the admission classifier and an applicant who passed the entrance exam, has good grades 
and work experience, comes from a rich hometown but is not a first time applicant. The classifier will
admit this applicant, \(\Delta(e,\n(f),g,r,w)=1\), due to the following complete reason expressed as a CNF:
\[
\forall e,\n(f),g,r,w \cdot \Delta = (e\vee g)\wedge(e \vee w)\wedge(r)\wedge(\n(f)\vee g \vee w)
\]
This formula has four prime implicants representing the sufficient reasons for this decision:
\[
(e,g,r) \quad (e,r,w) \quad (e,\n(f),r) \quad (g,r,w)
\]
Consider now the population of applicants who are applying again but did not pass the entrance exam
and do not come from a rich hometown. Members of this population will be denied admission,
\(\Delta(\n(e),\n(f),\n(r))=0\), for the following complete reason
\[
\forall \n(e),\n(f),\n(r),G,W \cdot \neg \Delta = (\n(e)\vee\n(f))\wedge(\n(r))
\]
which has two sufficient reasons (\(\n(e),\n(r)\)) and (\(\n(f),\n(r)\)).

When a classifier \(\Delta\) and its negation \(\neg \Delta\) are represented by CNFs, 
the complete reason for a decision \(\Delta(\gamma)\) can be
computed in linear time using Propositions~\ref{prop:ulq-uvq} and~\ref{prop:q-cnf}: We just drop every
characteristic from the CNF \(\Delta_\gamma\) if that characteristic does not appear in population (term)~\(\gamma\).
As a result, the complete reason will be a {\em monotone} CNF, allowing one to enumerate 
sufficient reasons using a quasi-polynomial 
time algorithm~\cite{DBLP:journals/dam/GurvichK99}.\footnote{
\cite{DBLP:journals/dam/GurvichK99} presented a quasi-polynomial time algorithm for the incremental 
enumeration of the prime implicants of a monotone CNF formula. 
This algorithm is based on the algorithm for the dualization problem (i.e., testing the duality of a pair of 
monotone DNF formulas) reported in~\cite{DBLP:journals/jal/FredmanK96}. These dualization  
algorithms have been implemented~\shortcite{DBLP:journals/dam/KhachiyanBEG06}, 
evaluated~\shortcite{DBLP:conf/alenex/HagenHM09}, and improved recently~\shortcite{DBLP:conf/wea/SedaghatSC18}.}
This also provides another characterization of the complete reason for
decision \(\Delta(\gamma)\) as the weakest CNF \(\Gamma\) that contains only characteristics that appears in
\(\gamma\) and that satisfies \(\gamma \models \Gamma \models \Delta_\gamma\).
In this sense, the complete reason is the most general abstraction of population \(\gamma\)
that justifies the decision made on \(\gamma\). 
Every aspect (set of characteristics) of \(\gamma\) that can trigger decision \(\Delta(\gamma)\)
is an implicant of the complete reason \(\Gamma\). Moreover, every prime implicant of 
\(\Gamma\) is an aspect of \(\gamma\). This justifies viewing
the complete reason for a decision as a necessary and sufficient condition for explaining the decision.

\subsection{Decision Bias}
\label{sec:bias}

We next show how universal quantification can be used to characterize biased decisions. 
Following~\cite{DarwicheH20}, these are decisions 
that are made based on features some of which are designated as {\em protected.} A classifier is biased precisely
when it makes at least one biased decision. Hence, a biased classifier may still make some unbiased decisions.

\begin{definition}
Let \(\Delta\) be a classifier where some of its features are designated as \hl{protected.}
A decision \(\Delta(\delta)\) on instance \(\delta\) is \hl{biased} iff \(\Delta(\delta) \neq \Delta(\delta^\star)\) for some instance
\(\delta^\star\) obtained from \(\delta\) by changing only the values of some protected features.
\end{definition}
We next show how variable quantification can be used to characterize all biased decisions that a classifier may make.

\begin{theorem}\label{theo:bias}
Let \(\Delta\) be a classifier and let \(X_1, \ldots, X_n\) be its protected features.
Then \(\Delta \wedge \neg (\forall X_1,$ $\ldots,$ $X_n \cdot \Delta)\) characterizes all positive instances on which \(\Delta\) 
will make a biased decision. Moreover,  \(\neg \Delta \wedge \neg (\forall X_1, \ldots, X_n \cdot \neg \Delta)\) characterizes
all negative instances on which \(\Delta\) will make a biased decision.
\end{theorem}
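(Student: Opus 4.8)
The plan is to prove both claims by the same argument, working with $\Delta_\delta$ so that a biased decision always corresponds to a positive instance that fails to survive universal quantification of the protected features. Concretely, I would first unfold the definition: an instance $\delta$ with $\Delta(\delta)=1$ receives a biased decision iff there is an instance $\delta^\star$, agreeing with $\delta$ on all non-protected features, with $\Delta(\delta^\star)=0$. Since $\delta$ and $\delta^\star$ differ only on (some) protected features $X_1,\ldots,X_n$, we have $\erase{\delta}{X_1,\ldots,X_n} = \erase{\delta^\star}{X_1,\ldots,X_n}$, and conversely any instance agreeing with $\delta$ off the protected features arises by choosing values for $X_1,\ldots,X_n$. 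So ``there exists such a $\delta^\star$ with $\Delta(\delta^\star)=0$'' is exactly the statement that $\Delta(\delta)$ is \emph{not} independent of features $X_1,\ldots,X_n$ in the sense of Definition~\ref{def:ifeature}, i.e.\ $\Delta(\delta)\neq\Delta(\erase{\delta}{X_1,\ldots,X_n})$ would hold if $\erase{\delta}{X_1,\ldots,X_n}$ were decided, but more robustly: \emph{not all} completions of the non-protected part get decided positively.

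The second step is to translate this into quantification. By Theorem~\ref{theo:ifeature}, for a positive instance $\delta$ (so $\Delta_\delta=\Delta$), decision $\Delta(\delta)$ is independent of $X_1,\ldots,X_n$ iff $\delta \models \forall X_1,\ldots,X_n\cdot\Delta$. I would then argue that ``$\Delta(\delta)$ is independent of $X_1,\ldots,X_n$'' is equivalent to ``$\Delta(\delta)$ is not biased'' for a positive $\delta$: if independent, then by Proposition~\ref{prop:uvq} (weakest formula independent of $X_1,\ldots,X_n$ implying $\Delta$) every completion $\delta^\star$ of $\erase{\delta}{X_1,\ldots,X_n}$ satisfies $\delta^\star\models\forall X_1,\ldots,X_n\cdot\Delta\models\Delta$, hence $\Delta(\delta^\star)=1$, so no biased counterpart exists; conversely, if $\delta$ is unbiased then every such completion is a model of $\Delta$, so $\erase{\delta}{X_1,\ldots,X_n}\models\Delta$, giving $\Delta(\erase{\delta}{X_1,\ldots,X_n})=1=\Delta(\delta)$, i.e.\ independence. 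Combining, a positive instance $\delta$ receives a biased decision iff $\delta\models\Delta$ and $\delta\not\models\forall X_1,\ldots,X_n\cdot\Delta$, which is exactly $\delta\models\Delta\wedge\neg(\forall X_1,\ldots,X_n\cdot\Delta)$.

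For the negative case I would invoke the duality, or simply rerun the same argument with $\neg\Delta$ in place of $\Delta$. A decision $\Delta(\delta)=0$ is biased iff some protected-feature variant $\delta^\star$ has $\Delta(\delta^\star)=1$, i.e.\ iff $\delta$ is a negative instance whose negativeness is not independent of $X_1,\ldots,X_n$; applying Theorem~\ref{theo:ifeature} to $\Delta_\delta=\neg\Delta$ gives that independence is $\delta\models\forall X_1,\ldots,X_n\cdot\neg\Delta$, so the biased negative instances are exactly the models of $\neg\Delta\wedge\neg(\forall X_1,\ldots,X_n\cdot\neg\Delta)$. I would also note the consistency check that $\forall X_1,\ldots,X_n\cdot\Delta\models\Delta$ (Proposition~\ref{prop:uvq}), so $\Delta\wedge\neg(\forall X_1,\ldots,X_n\cdot\Delta)$ really does sit inside the positive region, and symmetrically for the negative region; this makes the two characterized sets disjoint and partitions the biased instances by decision type.

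The main obstacle I anticipate is the careful bookkeeping around ``changing only the values of some protected features'': the definition allows changing a \emph{subset} of protected features, but erasing \emph{all} of $X_1,\ldots,X_n$ and re-completing covers every such variant (including $\delta$ itself), so the existential quantifier over variants matches the conjunction $\Delta\wedge\neg(\forall X_1,\ldots,X_n\cdot\Delta)$ precisely --- one must check the edge case where the only witnessing $\delta^\star$ equals $\delta$ (impossible, since they have opposite labels) and the case where $\delta$ mentions no protected feature nontrivially (then $\forall X_1,\ldots,X_n\cdot\Delta$ quantifies features $\delta$ still assigns, and the equivalence still goes through because $\delta$ assigns every feature). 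Handling the reduction cleanly via $\erase{\cdot}{X_1,\ldots,X_n}$ and Theorem~\ref{theo:ifeature} is what keeps this from becoming a messy case analysis.
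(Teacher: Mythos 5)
Your proposal is correct and follows essentially the same route as the paper: reduce bias to (lack of) feature-independence via Definition~\ref{def:ifeature}, then apply Theorem~\ref{theo:ifeature} with $\Delta_\delta = \Delta$ for positive instances and $\Delta_\delta = \neg\Delta$ for negative ones. The paper's proof asserts the equivalence ``independent of protected features iff unbiased'' in a single clause, whereas you spell out both directions of that equivalence explicitly --- a useful elaboration, but not a different argument.
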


\cite{DarwicheH20} provided an efficient procedure for deciding whether a decision is biased,
assuming the classifier is represented using an appropriate tractable circuit. 
The above theorem suggests an efficient procedure for detecting decision bias assuming classifier \(\Delta\)
and its negation \(\neg \Delta\) are in CNF. To check whether a biased, positive decision is
made on instance \(\delta\), we just need to check whether \(\delta \models \Delta \wedge \neg (\forall X_1, \ldots, X_n \cdot \Delta)\).
Since \(\Delta\) is a CNF, \(\forall X_1, \ldots, X_n \cdot \Delta\) can be computed in linear time
so the previous test can be performed efficiently. A similar procedure can be used to detect biased negative decisions. 

Our treatment of decision bias extends the one in~\cite{DarwicheH20} not only computationally, but also in terms of scope.
Instead of only testing whether a particular decision is biased, we can now characterize all biased decisions which allows us to
entertain further questions relating to bias. Consider again the admission classifier and suppose that
feature \(R\) (rich hometown) is protected. The following expression will then characterize all
applicants on whom a biased, positive decision will be made:
\[
\Delta \wedge \neg (\forall R \cdot \Delta) = 
(e \vee g)\wedge(e \vee w)\wedge(r)\wedge(\n(f)\vee g)\wedge(\n(f)\vee w)\wedge(\n(e)\vee\n(f))
\]
There are six classes of applicants that satisfy the above formula. All will be admitted but they will be denied admission if
they were not to come from a rich hometown. We can now find out if any of these applicants could have failed the
entrance exam by computing:
\[
\n(e) \wedge (\Delta \wedge \neg (\forall R \cdot \Delta)) = \n(e)\wedge g \wedge r \wedge w
\]
The answer is affirmative. Moreover, the above result tells us that admitted applicants who fail the entrance 
exam and whose admission depends critically on coming from a rich hometown must have good grades 
and a work experience.

\section{Concluding Remarks}
\label{sec:conclusion}

We formalized and studied the universal quantification of literals in Boolean logic, together with its applications
to explainable AI. Our treatment was based on the novel notion of boundary models, which stands to have
implications on the study of Boolean logic beyond quantification. A major contribution of our work is the 
interpretation of universal quantification as a selection process, which we hope will expand the applications
of this form of quantification in AI and beyond. Another major contribution is the complexity results relating
to the computation of existential and universal quantification on various logical forms. While we were
driven by understanding universal literal quantification, our findings have furthered
our understanding of Boolean logic quantification more broadly. 

As to explainable AI, we have shown how to analyze classifiers and their decisions through the systematic construction 
of Boolean formulas, using universal literal and variable quantification. We provided some prototypical queries in 
earlier sections but one can go further beyond them. In a sense, the combination of universal quantifiers
with classical Boolean connectives provides a query language for interrogating classifiers and for gathering various insights
about how they make decisions and why they make these decisions. 

Our treatment of literal quantification can be extended to propositional formulas over discrete 
variables~\cite{MVLBook}, allowing one to reason about the behavior of discrete classifiers which arise in a number of contexts 
including decision trees and random forests~\cite<e.g.,>{ignatiev2019validating,kr/AudemardKM20,ChoiShihGoyankaDarwiche20}.
Consider a propositional formula \(\varphi\) over discrete variables and let \(X\) be a variable that has values \(x_1, \ldots, x_n\). 
We can generalize Definitions~\ref{def:ulq} and~\ref{def:elq} to quantify a literal \(x_i\) as follows:\footnote{The 
operator \(\exists x_i\) drops literal \(x_i\) from the expansion \(\varphi = \bigvee_{j} (x_j \wedge (\varphi | x_j))\).
The formula \(\varphi\) can also be expanded as 
\(\varphi = \bigwedge_{j} (\bigvee_{k \neq j} x_k \vee (\varphi | x_j))\); see the proof of Proposition~\ref{prop:sdd-transform}.
The operator \(\forall x_i\) drops all literals but \(x_i\) from this expansion.}
\begin{eqnarray*}
\forall x_i \cdot \varphi & = & (\varphi | x_i) \wedge \bigwedge_{j \neq i} (x_i \vee (\varphi | x_j)) \\
\exists x_i \cdot \varphi & = & (\varphi | x_i) \vee \bigvee_{j \neq i} (x_j \wedge (\varphi | x_j)).
\end{eqnarray*}
These quantifiers are also dual and have selection and forgetting semantics
as in the Boolean setting, therefore expanding the utility of our treatment beyond Boolean logic.

\color{\RED}
We close this section by a remark on further connections of our work to QBFs. Let \(\varphi\) be a Boolean formula
and  \(\L_1, \ldots, \L_n\) be a partition of all literals.
One can define another class of prenex and closed QBFs, $Q_1 \L_1, \ldots, Q_n \L_n \cdot \varphi,$ where 
$Q_1,\ldots,Q_n$ are alternating quantifiers in \(\{\forall, \exists\}$.
These QBFs are also guaranteed to be either valid or inconsistent. It remains to be explored though whether these QBFs
will also admit the notion of a solution that is normally defined for classical, prenex and closed QBFs. This is a subject for future work.
\color{black}

\acks{We thank the anonymous reviewers for their valuable feedback. 
This work has benefited from the support of the International Research Project MAKC (``Modern Approaches to
Knowledge Compilation'') shared between the Automated Reasoning Group of the University of California at Los Angeles (UCLA) 
and the Centre de Recherche en Informatique de Lens (CRIL UMR 8188 CNRS - Artois University). 
The work has been partially supported by NSF grant IIS-1910317, DARPA grant N66001-17-2-4032,
AI Chair EXPE{\bf KC}TATION (ANR-19-CHIA-0005-01) of the French National Research Agency (ANR)
and by TAILOR, a project funded by EU Horizon 2020 research and innovation programme under GA No 952215.}

\appendix 

\section{Characterizing the Dynamics of Boundary Rules}
\label{app:b-rule-dynamics}

\def\us{{\underline{\hspace{2mm}}}}

We provide in this appendix a complete characterization of which b-rules are deleted, introduced or preserved
when universally quantifying a literal. This allows us to define universal quantification as a process of b-rule 
transformation (one can use the duality theorem to prove similar results for existential literal quantification). 
These results are mostly meant for completeness as they are not essential for the main storyline in the paper.

\color{\RED}
To simplify notation, we will use the symbol $\us$ in b-rules to denote any term such that the result is a b-rule according to Definition \ref{def:brule}.
\color{black}
The operator \(\forall \l_i\) erases every b-rule of the form \(\Rule(\us,\nl_i)\). 
It also preserves all b-rules of the form \(\Rule(\us,\l_i)\) or \(\Rule({\us,\l_i},\us)\).
Finally, it preserves b-rules of the form \(\Rule({\us,\nl_i},\l_j)\) as long as there are no b-rules of the form \(\Rule({\us,\l_j},\nl_j)\).
We present this result formally next and then interpret it using boundary models.

\begin{theorem} \label{theo:brules-del-keep}
For formula \(\varphi\) and literals \(\l_i\) and \(\l_j\) where \(i \neq j\), we have
\begin{itemize}
\item[(a)] If \(\Rule(\alpha,\l_i) \in \RS(\varphi)\), then \(\Rule(\alpha,\l_i) \in \RS(\forall \l_i \cdot \varphi)\).
\item[(b)] \(\Rule(\alpha,\nl_i) \not \in \RS(\forall \l_i \cdot \varphi)\).
\item[(c)] If \(\Rule({\alpha,\l_i},\l_j) \in \RS(\varphi)\), then \(\Rule({\alpha,\l_i},\l_j) \in \RS(\forall \l_i \cdot \varphi)\).
\item[(d)] If \(\Rule({\alpha,\nl_i},\l_j) \in \RS(\varphi)\), then \(\Rule({\alpha,\nl_i},\l_j) \in \RS(\forall \l_i \cdot \varphi)\)
iff \(\Rule({\alpha,\l_j},\nl_i) \not \in \RS(\varphi)\).
\end{itemize}
\end{theorem}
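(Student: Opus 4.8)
The plan is to reduce all four claims to the semantic characterization of universal literal quantification in Theorem~\ref{theo:ulq-semantics-b}. Write $\psi = \forall \l_i \cdot \varphi$. The facts I will lean on are: (i) $\MS(\psi) \subseteq \MS(\varphi)$, so any world modelling $\neg\varphi$ also models $\neg\psi$; (ii) a world modelling $\varphi$ fails to model $\psi$ exactly when it is $\nl_i$-boundary for $\varphi$, hence any world containing literal $\l_i$ that models $\varphi$ also models $\psi$ (it cannot be $\nl_i$-boundary) --- this is also Proposition~\ref{prop:ulq-preserve} applied to the corresponding complete term; (iii) for a complete world $\w$, exactly one of $\w \models \varphi$, $\w \models \neg\varphi$ holds; and (iv) membership $\Rule(\beta,\l') \in \RS(\chi)$ unwinds, via Definition~\ref{def:brule}, to ``$\beta,\l' \models \chi$ and $\beta,\nl' \models \neg\chi$.'' Throughout, I will use that in a b-rule $\Rule({\alpha,\nl_i},\l_j)$ with $i\neq j$ the term $\alpha$ ranges over the variables other than $X_i$ and $X_j$, so $\alpha,\l_i,\l_j$, $\alpha,\nl_i,\l_j$, etc.\ are all complete worlds.

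For (a): $\Rule(\alpha,\l_i)\in\RS(\varphi)$ gives $\alpha,\l_i\models\varphi$ and $\alpha,\nl_i\models\neg\varphi$. The world $\alpha,\l_i$ contains $\l_i$ and models $\varphi$, so by (ii) it models $\psi$; and $\alpha,\nl_i\models\neg\varphi$ forces $\alpha,\nl_i\models\neg\psi$ by (i). Hence $\Rule(\alpha,\l_i)\in\RS(\psi)$. Part (c) is the identical argument with the world $\alpha,\l_i,\l_j$ in place of $\alpha,\l_i$ (it contains $\l_i$ and models $\varphi$, hence models $\psi$) and $\alpha,\l_i,\nl_j$ in place of $\alpha,\nl_i$ ($\models\neg\varphi$, hence $\models\neg\psi$). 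For (b): by Proposition~\ref{prop:ulq-semantics-old}, $\psi$ is independent of literal $\nl_i$, so Proposition~\ref{prop:rule-independence} rules out any b-rule of $\psi$ whose consequent is $\nl_i$; thus $\Rule(\alpha,\nl_i)\notin\RS(\psi)$. (One can also argue directly: if $\alpha,\nl_i\models\psi$ then $\alpha,\nl_i\models\varphi$ is not $\nl_i$-boundary, so $\alpha,\l_i\models\varphi$, hence $\alpha,\l_i\models\psi$ by (ii), contradicting the requirement $\alpha,\l_i\models\neg\psi$.)

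The substance is (d), which I would prove by anchoring both b-rules at the same world. Assume $\Rule({\alpha,\nl_i},\l_j)\in\RS(\varphi)$ and let $u$ be the world $\alpha,\nl_i,\l_j$, so $u\models\varphi$ and $u[\nl_j]=\alpha,\nl_i,\nl_j\models\neg\varphi$. Deciding whether $\Rule({\alpha,\nl_i},\l_j)\in\RS(\psi)$ means deciding whether $u\models\psi$ and $u[\nl_j]\models\neg\psi$. The second condition holds unconditionally, by (i). For the first: $u\models\varphi$ already, so by Theorem~\ref{theo:ulq-semantics-b}, $u\models\psi$ iff $u$ is not $\nl_i$-boundary for $\varphi$; as $u$ contains $\nl_i$, this says $u[\l_i]=\alpha,\l_i,\l_j\not\models\neg\varphi$, i.e.\ $\alpha,\l_i,\l_j\models\varphi$ by (iii). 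On the other side, $\Rule({\alpha,\l_j},\nl_i)$ concerns the \emph{same} world $v=\alpha,\l_j,\nl_i=u$: its membership in $\RS(\varphi)$ requires $v\models\varphi$ (true by hypothesis) and $v[\l_i]=\alpha,\l_j,\l_i\models\neg\varphi$. Hence $\Rule({\alpha,\l_j},\nl_i)\notin\RS(\varphi)$ iff $\alpha,\l_i,\l_j\not\models\neg\varphi$ iff $\alpha,\l_i,\l_j\models\varphi$ --- precisely the condition obtained above for $\Rule({\alpha,\nl_i},\l_j)\in\RS(\psi)$. This yields the desired equivalence.

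The main obstacle is purely the bookkeeping in (d): one must keep straight that $\alpha$ omits both $X_i$ and $X_j$, notice that the two b-rules being compared are anchored at the common world $\alpha,\nl_i,\l_j$ (so that ``$\models\varphi$'' parts coincide with the standing hypothesis), and use the complete-world dichotomy (iii) to convert ``$\Rule({\alpha,\l_j},\nl_i)\notin\RS(\varphi)$'' into the positive statement $\alpha,\l_i,\l_j\models\varphi$. Everything else is an immediate consequence of Theorem~\ref{theo:ulq-semantics-b} together with Proposition~\ref{prop:ulq-preserve} (for (a),(c)) and Propositions~\ref{prop:ulq-semantics-old} and~\ref{prop:rule-independence} (for (b)).
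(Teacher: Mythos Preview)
Your proof is correct and follows essentially the same approach as the paper's: all four parts are reduced to the semantic characterization in Theorem~\ref{theo:ulq-semantics-b} together with Proposition~\ref{prop:ulq-preserve}. Your primary argument for (b) via Propositions~\ref{prop:ulq-semantics-old} and~\ref{prop:rule-independence} is a slightly cleaner shortcut than the paper's direct contradiction, and your treatment of (d) as a single biconditional (both sides reducing to $\alpha,\l_i,\l_j\models\varphi$) is more streamlined than the paper's two-direction argument, but the underlying reasoning is the same.
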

\noindent According to this result, 
(a)~\(\l_i\)-boundary models of \(\varphi\) are \(\l_i\)-boundary models of \(\forall \l_i \cdot \varphi\),
(b)~\(\forall \l_i \cdot \varphi\) does not have any \(\nl_i\)-boundary models and
(c,d)~\(\l_j\)-boundary models of \(\varphi\) are \(\l_j\)-boundary models of \(\forall \l_i \cdot \varphi\) iff
they are not \(\nl_i\)-boundary models of \(\varphi\).

We now turn to the introduction of new b-rules which leads to introducing boundary models.
The following result says that the operator \(\forall \l_i\) can only introduce b-rules of the form \(\Rule({\us,\nl_i},\us)\)
so it will only introduce \(\l_j\)-boundary models that contain literal \(\nl_i\) (\(i \neq j\)).

\begin{theorem}\label{theo:brules-add1}
If \(r \not \in \RS(\varphi)\) and \(r \in \RS(\forall \l_i \cdot \varphi)\), then b-rule \(r\) has the form \(\Rule({\alpha,\nl_i},\l_j)\).
\end{theorem}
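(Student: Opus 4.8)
The plan is to argue by elimination. Write $\psi := \forall \l_i \cdot \varphi$, and suppose $r = \Rule(\beta,\l)$ is a b-rule with $r \in \RS(\psi)$ but $r \notin \RS(\varphi)$; the goal is to show that the consequent $\l$ is a literal $\l_j$ of some variable $X_j \neq X_i$ and that the antecedent $\beta$ contains $\nl_i$ rather than $\l_i$. By Definition~\ref{def:brule}, the antecedent of any b-rule contains exactly one literal for each variable other than that of the consequent, so these two facts together say precisely that $r$ has the form $\Rule({\alpha,\nl_i},\l_j)$. I may assume $\psi$ is consistent, since otherwise $\RS(\psi)=\emptyset$ and there is nothing to prove. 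The main tools will be Theorem~\ref{theo:ulq-semantics-b}, which gives $\MS(\psi)\subseteq\MS(\varphi)$ and identifies $\MS(\varphi)\setminus\MS(\psi)$ with the $\nl_i$-boundary models of $\varphi$, together with the fact (noted after Theorem~\ref{theo:ulq-syntax}, via Propositions~\ref{prop:ulq-semantics-old} and~\ref{prop:rule-independence}) that $\psi$ is independent of literal $\nl_i$.

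First I would rule out the case in which the consequent $\l$ is a literal of variable $X_i$. If $\l=\nl_i$, then $r$ cannot belong to $\RS(\psi)$ at all: $\psi$ is independent of $\nl_i$, so by Proposition~\ref{prop:rule-independence} it has no b-rule of the form $\Rule(\beta,\nl_i)$ (equivalently, this is Theorem~\ref{theo:brules-del-keep}(b)). If $\l=\l_i$, then from $r\in\RS(\psi)$ we have, by Definition~\ref{def:brule}, that $\beta,\l_i \models \psi$ and $\beta,\nl_i \models \neg\psi$; and $\beta,\l_i\models\psi$ gives $\beta,\l_i\models\varphi$ since $\MS(\psi)\subseteq\MS(\varphi)$. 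Because $r\notin\RS(\varphi)$ while $\beta,\l_i\models\varphi$, the world $\beta,\nl_i$ cannot be a model of $\neg\varphi$, so $\beta,\nl_i\models\varphi$. But then $\beta,\nl_i$ is a model of $\varphi$ whose flip $\beta,\l_i$ is also a model of $\varphi$ and hence not of $\neg\varphi$, so $\beta,\nl_i$ is not a $\nl_i$-boundary model of $\varphi$; by Theorem~\ref{theo:ulq-semantics-b} it is therefore a model of $\psi$, contradicting $\beta,\nl_i\models\neg\psi$ and the consistency of $\psi$.

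Next I would rule out the case in which $\l=\l_j$ for some $X_j\neq X_i$ while the antecedent contains $\l_i$; write $\beta = \beta',\l_i$. From $r\in\RS(\psi)$, Definition~\ref{def:brule} gives $\beta',\l_i,\l_j\models\psi$ and $\beta',\l_i,\nl_j\models\neg\psi$, and the first yields $\beta',\l_i,\l_j\models\varphi$. Since $r\notin\RS(\varphi)$, the flipped world $\beta',\l_i,\nl_j$ is not a model of $\neg\varphi$, so $\beta',\l_i,\nl_j\models\varphi$. This world contains $\l_i$, hence not $\nl_i$, so by Definition~\ref{def:bmodel} it cannot be a $\nl_i$-boundary model of $\varphi$; by Theorem~\ref{theo:ulq-semantics-b} it is then a model of $\psi$, contradicting $\beta',\l_i,\nl_j\models\neg\psi$. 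Having eliminated consequent $\nl_i$, consequent $\l_i$, and consequent $\l_j$ (with $j\neq i$) whose antecedent carries $\l_i$, the only remaining shape is $r=\Rule({\alpha,\nl_i},\l_j)$ with $j\neq i$, which is what we wanted.

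I do not expect a serious obstacle here; the only delicate point is keeping straight which completed world must be a model of $\varphi$ and which of $\neg\varphi$, and then invoking the $\nl_i$-boundary characterization of Theorem~\ref{theo:ulq-semantics-b} in the correct (contrapositive) direction. The crucial observation that makes the second case collapse is simply that a $\nl_i$-boundary model of $\varphi$ must literally contain $\nl_i$, so any model of $\varphi$ carrying $\l_i$ survives into $\psi$. It is also worth disposing of the degenerate situations ($\varphi$ or $\psi$ inconsistent) explicitly at the outset so that the appeals to the consistency of $\psi$ in the two contradictions are legitimate.
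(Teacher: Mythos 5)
Your proof is correct and uses the same essential ingredients as the paper's: Theorem~\ref{theo:ulq-semantics-b} to relate the models of $\varphi$ and $\psi=\forall\l_i\cdot\varphi$, the fact that an $\nl_i$-boundary model must literally contain $\nl_i$, and Theorem~\ref{theo:brules-del-keep}(b) (equivalently, independence of $\psi$ from $\nl_i$) to rule out consequent $\nl_i$. The organizational difference is that you argue by case elimination on the shape of $r$ (consequent $\nl_i$, consequent $\l_i$, or consequent $\l_j$ with $\l_i$ in the antecedent), repeating a similar contradiction in the last two cases, whereas the paper makes a single direct deduction: since $\w[\nl_k]$ is a model of $\varphi$ that is dropped by $\forall\l_i$, Theorem~\ref{theo:ulq-semantics-b} immediately identifies it as an $\nl_i$-boundary model, which yields $\nl_i\in\w[\nl_k]$ and, after ruling out $k=i$, $\nl_i\in\beta$ in one stroke. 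Both are valid; the paper's version is a bit more economical because it names the $\nl_i$-boundary property of $\w[\nl_k]$ once and extracts both conclusions from it, while your case split foregrounds the disjoint possibilities for $r$'s shape. One small remark: the appeal to "consistency of $\psi$" at the end of your Case B is unnecessary; $\beta,\nl_i\models\psi$ and $\beta,\nl_i\models\neg\psi$ is already a contradiction for any fixed world regardless of whether $\psi$ is consistent.
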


\noindent The next result provides a complete characterization for when universal literal quantification will introduce new b-rules (of the above form).

\begin{theorem}\label{theo:brules-add2}
Let b-rule \(r = \Rule({\alpha,\nl_i},\l_j)\). We then have
\(r \not \in \RS(\varphi)\) and \(r \in \RS(\forall \l_i \cdot \varphi)\) iff\\
\(\Rule({\alpha,\l_i},\l_j) \in \RS(\varphi)\),
\(\Rule({\alpha,\nl_j},\nl_i) \in \RS(\varphi)\),
\(\Rule({\alpha,\l_j},\l_i) \not \in \RS(\varphi)\) and
\(\Rule({\alpha,\nl_i},\nl_j) \not \in \RS(\varphi)\).
\end{theorem}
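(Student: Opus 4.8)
The plan is to reduce the statement to elementary Boolean reasoning over the four worlds extending the term $\alpha$. Write $X_i,X_j$ for the variables of $\l_i,\l_j$. Since the antecedent of a b-rule for $\l_j$ mentions every variable except $X_j$, the term $\alpha$ is a consistent term over $\ps\setminus\{X_i,X_j\}$ (if it is not, neither side can hold, since none of the objects named is then a legitimate b-rule; I would dispatch this degenerate case first). So $\alpha$ is completed to a world in exactly four ways: $\w_1=\alpha\l_i\l_j$, $\w_2=\alpha\l_i\nl_j$, $\w_3=\alpha\nl_i\l_j$, $\w_4=\alpha\nl_i\nl_j$. For $k=1,\dots,4$ set $m_k=1$ if $\w_k\models\varphi$ and $m_k=0$ otherwise; since every world is a model of exactly one of $\varphi,\neg\varphi$, the statement ``$\w_k\models\neg\varphi$'' is $\neg m_k$.

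First I would translate each b-rule occurring in the theorem into a condition on the $m_k$ via Definitions~\ref{def:bmodel} and~\ref{def:brule}. Unwinding ``$\varphi$ has b-rule $\beta\to\l$'' as ``$\beta\l\models\varphi$ and $\beta\nl\models\neg\varphi$'' yields: $r=\Rule(\alpha\,\nl_i,\l_j)\in\RS(\varphi)$ iff $m_3\wedge\neg m_4$; $\Rule(\alpha\,\l_i,\l_j)\in\RS(\varphi)$ iff $m_1\wedge\neg m_2$; $\Rule(\alpha\,\nl_j,\nl_i)\in\RS(\varphi)$ iff $m_4\wedge\neg m_2$; $\Rule(\alpha\,\l_j,\l_i)\in\RS(\varphi)$ iff $m_1\wedge\neg m_3$; and $\Rule(\alpha\,\nl_i,\nl_j)\in\RS(\varphi)$ iff $m_4\wedge\neg m_3$. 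Hence the right-hand side of the theorem, the conjunction of its four conditions, is $(m_1\wedge\neg m_2)\wedge(m_4\wedge\neg m_2)\wedge\neg(m_1\wedge\neg m_3)\wedge\neg(m_4\wedge\neg m_3)$; using $m_1$ and $m_4$ to absorb the $\neg m_3$ terms, this simplifies to $m_1\wedge m_3\wedge m_4\wedge\neg m_2$.

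Next I would compute how $\forall\l_i\cdot\varphi$ acts on $\w_3$ and $\w_4$ directly from Definition~\ref{def:ulq}, $\forall\l_i\cdot\varphi=(\l_i\vee(\varphi\cd\nl_i))\wedge(\varphi\cd\l_i)$. At $\w_3=\alpha\nl_i\l_j$: $\l_i$ is false, $(\varphi\cd\nl_i)$ agrees with $\varphi$ at $\w_3$ (value $m_3$), and $(\varphi\cd\l_i)$ agrees with $\varphi$ at $\w_1$ (value $m_1$), so $\w_3\models\forall\l_i\cdot\varphi$ iff $m_1\wedge m_3$. Likewise at $\w_4=\alpha\nl_i\nl_j$ one gets $\w_4\models\forall\l_i\cdot\varphi$ iff $m_2\wedge m_4$, hence $\w_4\models\neg(\forall\l_i\cdot\varphi)$ iff $\neg m_2\vee\neg m_4$. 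Therefore $r\in\RS(\forall\l_i\cdot\varphi)$, i.e.\ $\w_3\models\forall\l_i\cdot\varphi$ and $\w_4\models\neg(\forall\l_i\cdot\varphi)$, is $(m_1\wedge m_3)\wedge(\neg m_2\vee\neg m_4)$; conjoining with $r\notin\RS(\varphi)$, i.e.\ $\neg(m_3\wedge\neg m_4)=\neg m_3\vee m_4$, again collapses to $m_1\wedge m_3\wedge m_4\wedge\neg m_2$. This is the simplified right-hand side, so the two sides are equivalent. (Alternatively, the two membership facts for $\forall\l_i\cdot\varphi$ can be read off Theorem~\ref{theo:ulq-semantics-b}: $\w_3,\w_4$ survive $\forall\l_i$ exactly when they are models of $\varphi$ that are not $\nl_i$-boundary, and being $\nl_i$-boundary is itself one of the $m_k$-conditions listed above.)

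The only genuine work is bookkeeping: getting the ``flip'' right in each of the five b-rule translations and tracking which variable each conditioning operation touches. I expect this to be the main, though entirely routine, obstacle; once the translations are in place, both sides reduce mechanically to the single conjunction $m_1\wedge m_3\wedge m_4\wedge\neg m_2$, which has the transparent reading that $\alpha\l_i\l_j$, $\alpha\nl_i\l_j$ and $\alpha\nl_i\nl_j$ are models of $\varphi$ while $\alpha\l_i\nl_j$ is a counter model.
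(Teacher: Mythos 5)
Your proof is correct and, modulo presentation, takes essentially the same approach as the paper: the paper's proof also hinges on reducing both sides to the four membership facts about $\alpha\l_i\l_j$, $\alpha\l_i\nl_j$, $\alpha\nl_i\l_j$, $\alpha\nl_i\nl_j$ (labeled (1)--(4) there, which are exactly your $m_1, m_2, m_3, m_4$ reading $1,0,1,1$). The only real difference is stylistic: you introduce Boolean indicators up front and simplify both sides algebraically to $m_1 \wedge \neg m_2 \wedge m_3 \wedge m_4$, and you evaluate membership in $\forall \l_i \cdot \varphi$ directly from Definition~\ref{def:ulq} rather than via Theorem~\ref{theo:ulq-semantics-b}, whereas the paper tracks these same facts through a more traditional $(\Rightarrow)/(\Leftarrow)$ argument explicitly invoking Theorem~\ref{theo:ulq-semantics-b}. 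One small note: the ``degenerate case'' you guard against cannot actually arise, since by Definition~\ref{def:brule} the antecedent of a b-rule is a complete assignment to $\ps$ minus the consequent's variable, so $\alpha$ is automatically a (consistent) term over $\ps \setminus \{X_i, X_j\}$.
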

\noindent The proof of this theorem provides a characterization for when a model of \(\varphi\) that is not \(\l_j\)-boundary
will become an \(\l_j\)-boundary model of \(\forall \l_i \cdot  \varphi\): Flipping either literal \(\nl_i\) or literal \(\l_j\) will
preserve it as a model of \(\varphi\) but flipping both will not.

\section{Lemmas}
\label{app:lemmas}

The following lemmas are used in proofs of propositions and theorems.
The first lemma says that an \(\l\)-boundary model for a formula is preserved when dropping other models of the formula
(but some non-boundary models may become boundary).

\begin{lemma}\label{lem:drop}
Let \(\varphi\) and \(\phi\) be formulas and \(\w\) be a world. 
If \(\w \models \varphi\) and \(\varphi \models \phi\) and \(\w\) is an \(\l\)-boundary model for \(\phi\),
then \(\w\) is also an \(\l\)-boundary model for \(\varphi\).
\end{lemma}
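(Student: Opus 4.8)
The plan is to unfold Definition~\ref{def:bmodel} and verify its three requirements for \(\w\) with respect to \(\varphi\), using the stated hypotheses together with the contrapositive form of \(\varphi \models \phi\). There is no real subtlety here: the claim is essentially immediate once the definition is spelled out, so the write-up will be short.

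First I would record what the hypothesis ``\(\w\) is an \(\l\)-boundary model of \(\phi\)'' buys us. By Definition~\ref{def:bmodel} it means precisely: (i) \(\l \in \w\); (ii) \(\w \models \phi\); and (iii) \(\w[\nl] \models \neg \phi\). To conclude that \(\w\) is an \(\l\)-boundary model of \(\varphi\), the three things to check are: \(\l \in \w\); \(\w \models \varphi\); and \(\w[\nl] \models \neg \varphi\). The first is exactly (i), and the second is one of the hypotheses of the lemma, so only the third requires an argument.

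For the third point, I would note that \(\varphi \models \phi\) is the same as \(\MS(\varphi) \subseteq \MS(\phi)\), and taking complements within the set of all worlds over \(\ps\) gives \(\MS(\neg \phi) \subseteq \MS(\neg \varphi)\), i.e.\ \(\neg \phi \models \neg \varphi\). Applying this to (iii) yields \(\w[\nl] \models \neg \varphi\), which is the remaining condition. This completes the verification, and hence the proof.

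The ``hard part'' is in fact no harder than observing that implication reverses direction when both sides are negated; this is the only step I would spell out carefully, since it is what lets us pass from \(\w[\nl] \models \neg \phi\) to \(\w[\nl] \models \neg \varphi\). The parenthetical remark in the statement (that non-boundary models may become boundary under the passage from \(\phi\) to \(\varphi\)) is commentary only and needs no proof, though one could illustrate it with a one-line example if desired.
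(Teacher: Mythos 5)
Your proof is correct and takes essentially the same approach as the paper's: both unfold Definition~\ref{def:bmodel} and verify that \(\w[\nl]\) remains a counter-model when passing from \(\phi\) to the stronger \(\varphi\). You get there a bit more directly by invoking the contrapositive \(\neg\phi \models \neg\varphi\), whereas the paper routes the same fact through the implicate \(\phi \models \alpha \then \l\), deduces \(\varphi \models \alpha \then \l\), and concludes \(\alpha,\nl \not\models \varphi\); the underlying idea is identical.
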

\begin{proof}
Suppose \(\w \models \varphi\) and \(\varphi \models \phi\) and \(\w\) is an \(l\)-boundary model for \(\phi\). 
Then \(\w\) has the form \(\alpha,\l\)
where \(\alpha,\nl \models \neg \phi\). Then \(\phi \models \alpha \then \l\) and hence \(\varphi \models \alpha \then \l\)
and \(\varphi \wedge \alpha \wedge \nl\) is inconsistent. 
Therefore \(\alpha,\nl\) is not a model of \(\varphi\) and hence \(\w = \alpha,\l\) is an \(l\)-boundary model for \(\varphi\).
\end{proof}

\shrink{
\begin{lemma}
\label{lem:ncd}
For literal \(\l\) and formula \(\varphi\), we have \(\l \vee \varphi = \l \vee (\varphi \cd \nl)\).
\end{lemma}
\begin{proof}
\(\l \vee \varphi = \l \vee (\nl \wedge \varphi) = \l \vee ((\nl \wedge (\varphi \cd \nl)) = \l \vee (\varphi \cd \nl)\).
\end{proof}
}

\shrink{
\begin{lemma}\label{lem:ulq-false}
For formula \(\varphi\) and term \(\gamma = \l_1, \ldots, \l_n\),  we have
\(\gamma \models \neg \varphi\) only if \(\forall \l_1, \ldots, \l_n \cdot \varphi = \bot\).
\end{lemma}
\begin{proof}
Suppose \(\gamma = \l_1, \ldots, \l_n\), \(\gamma \models \neg \varphi\) but 
\(\forall \l_1, \ldots, \l_n \cdot \varphi \neq \bot\). We must then have a world
\(\w \models \forall \l_1, \ldots, \l_n \cdot \varphi\).
By Theorem~\ref{theo:ulq-semantics}, \(\w\) is an \(\alpha\)-independent model of \(\varphi\)
where \(\alpha = \w \cap \{\nl_1, \ldots, \nl_n\}\). 
We then have \(\w\setminus\alpha \models \varphi\).
Let \(\beta =  \{\l_i \mid \nl_i \in \alpha\}\).  
Then \((\w\setminus\alpha)\cup\beta \models \varphi\). 
Since \(\gamma \subseteq (\w\setminus\alpha) \cup \beta\) and  \(\gamma \models \neg \varphi\), we also have
\((\w\setminus\alpha)\cup\beta \models \neg \varphi\) which is a contradiction.
Hence, \(\forall \l_1, \ldots, \l_n \cdot \varphi = \bot\).
\end{proof}
}

\noindent This lemma says that universally quantifying a literal from a term keeps it intact if the negation of that literal does not appear in the term.
\begin{lemma}\label{lem:ulq-term}
Let \(\gamma\) be a term and \(\l\) be a literal such that \(\nl \not \in \gamma\).
Then \(\forall \l \cdot \gamma = \gamma\).
\end{lemma}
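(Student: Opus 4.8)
The plan is to reduce to the two base facts already established: Proposition~\ref{prop:quantify-base}, which evaluates $\forall \l$ on a single literal, and Proposition~\ref{prop:quantify-compound}(b), which lets $\forall \l$ distribute over conjunctions. First I would dispose of the degenerate case $\gamma = \top$ (the empty term), where $\forall \l \cdot \top = \top = \gamma$ is immediate from Proposition~\ref{prop:quantify-base}. Otherwise write $\gamma = \l_1 \wedge \cdots \wedge \l_m$ with $m \geq 1$. Applying Proposition~\ref{prop:quantify-compound}(b) (formally by induction on $m$), we get $\forall \l \cdot \gamma = \bigwedge_{i=1}^m (\forall \l \cdot \l_i)$. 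Now for each $i$: since $\nl \notin \gamma$ we have $\l_i \neq \nl$, so $\l$ is not the negation of $\l_i$; hence Proposition~\ref{prop:quantify-base} gives $\forall \l \cdot \l_i = \l_i$ (the sub-case $\l_i = \l$ is permitted and also yields $\l_i$, since $\l \neq \nl$). Conjoining the equalities gives $\forall \l \cdot \gamma = \bigwedge_i \l_i = \gamma$.

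As an alternative, one could argue directly from Definition~\ref{def:ulq}, $\forall \l \cdot \gamma = (\l \vee (\gamma \cd \nl)) \wedge (\gamma \cd \l)$, by splitting on whether the variable $X$ of $\l$ occurs in $\gamma$. If $X$ does not occur in $\gamma$, then $\gamma \cd \l = \gamma \cd \nl = \gamma$ and the expression collapses to $(\l \vee \gamma) \wedge \gamma = \gamma$ by absorption. If $\l \in \gamma$, writing $\gamma = \l \wedge \gamma'$ with $X$ absent from $\gamma'$, conditioning gives $\gamma \cd \l = \gamma'$ and $\gamma \cd \nl = \bot$, so the expression is $(\l \vee \bot) \wedge \gamma' = \l \wedge \gamma' = \gamma$. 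The hypothesis $\nl \notin \gamma$ is exactly what rules out the only remaining possibility and forces $\gamma \cd \nl$ to collapse as needed.

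There is no substantive obstacle here; the single thing to be careful about is bookkeeping of the edge cases---the empty term, and the sub-case where $\l$ itself already occurs in $\gamma$---which the first, algebraic route via Propositions~\ref{prop:quantify-base} and~\ref{prop:quantify-compound}(b) handles most cleanly.
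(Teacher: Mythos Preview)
Your proposal is correct, and in fact your \emph{alternative} route---unfolding Definition~\ref{def:ulq} and splitting on whether the variable of \(\l\) occurs in \(\gamma\)---is exactly the paper's own proof. The paper writes the two cases as ``\(\l \not\in \gamma\)'' versus ``\(\l \in \gamma\)'' (using the hypothesis \(\nl \notin \gamma\) to exhaust the possibilities) and simplifies \((\l \vee (\gamma|\nl)) \wedge (\gamma|\l)\) in each, just as you describe.

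Your \emph{primary} route via Propositions~\ref{prop:quantify-base} and~\ref{prop:quantify-compound}(b) is a genuinely different, more compositional argument: distribute \(\forall \l\) through the conjunction and evaluate on each literal. It is sound and non-circular (those propositions are proved directly from Definitions~\ref{def:ulq}, \ref{def:elq} and Theorem~\ref{theo:lq-duality}, none of which rely on Lemma~\ref{lem:ulq-term}). The trade-off is that the paper's direct unfolding is self-contained and uses only the definition, whereas your compositional argument imports two later-stated propositions; on the other hand, your approach scales uniformly without the absorption step and makes the edge cases (empty term, \(\l \in \gamma\)) disappear into the base rule \(\forall \l \cdot \l = \l\).
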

\begin{proof}
By Defintion~\ref{def:ulq}, \(\forall \l \cdot \gamma = (\l \vee (\gamma|\nl))\wedge(\gamma|\l)\).
If \(\l \not \in \gamma\),
\((\l \vee (\gamma|\nl))\wedge(\gamma|\l) = (\l \vee \gamma)\wedge\gamma = \gamma\).
If \(\l \in \gamma\),
\((\l \vee (\gamma|\nl))\wedge(\gamma|\l) = (\l \vee \bot)\wedge(\gamma\setminus\{\l\}) = \gamma\). 
Hence, \(\forall \l \cdot \gamma = \gamma\).
\end{proof}

\noindent This lemma provides another syntactic characterization of universal literal quantification.
\begin{lemma}\label{lem:ulq-syntax2}
For variable \(X\) and formula \(\varphi\), we have \(\forall x \cdot \varphi = (\forall X \cdot \varphi) \vee (x \wedge \varphi)\)
and \(\forall \n(x) \cdot \varphi = (\forall X \cdot \varphi) \vee (\n(x) \wedge \varphi)\).
\end{lemma}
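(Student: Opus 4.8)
The plan is to simplify $\forall x \cdot \varphi$ starting from Definition~\ref{def:ulq} using elementary Boolean identities together with Definition~\ref{def:uvq} of universal variable quantification. By Definition~\ref{def:ulq}, $\forall x \cdot \varphi = (x \vee (\varphi \cd \n(x))) \wedge (\varphi \cd x)$. Distributing the outer conjunction over the disjunction gives $\forall x \cdot \varphi = (x \wedge (\varphi \cd x)) \vee ((\varphi \cd \n(x)) \wedge (\varphi \cd x))$. The second disjunct is $(\varphi \cd x) \wedge (\varphi \cd \n(x))$, which by Definition~\ref{def:uvq} is exactly $\forall X \cdot \varphi$. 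So it only remains to show that the first disjunct $x \wedge (\varphi \cd x)$ is logically equivalent to $x \wedge \varphi$.

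For this I would use Shannon's expansion $\varphi = (x \wedge (\varphi \cd x)) \vee (\n(x) \wedge (\varphi \cd \n(x)))$, recalled earlier in the paper. Conjoining both sides with $x$ and using $x \wedge \n(x) = \bot$ yields $x \wedge \varphi = x \wedge (\varphi \cd x)$. Substituting back gives $\forall x \cdot \varphi = (x \wedge \varphi) \vee (\forall X \cdot \varphi) = (\forall X \cdot \varphi) \vee (x \wedge \varphi)$, which is the first claim. The second claim, $\forall \n(x) \cdot \varphi = (\forall X \cdot \varphi) \vee (\n(x) \wedge \varphi)$, follows by the identical argument: Definition~\ref{def:ulq} with $\l = \n(x)$ gives $\forall \n(x) \cdot \varphi = (\n(x) \vee (\varphi \cd x)) \wedge (\varphi \cd \n(x))$; distributing yields $(\n(x) \wedge (\varphi \cd \n(x))) \vee (\forall X \cdot \varphi)$; and $\n(x) \wedge (\varphi \cd \n(x)) \equiv \n(x) \wedge \varphi$ again by Shannon's expansion (conjoin both sides with $\n(x)$).

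There is essentially no obstacle here; the only point that needs care is the identity $x \wedge \varphi \equiv x \wedge (\varphi \cd x)$, i.e.\ that conditioning a formula on a literal does not change which of its models contain that literal. This is immediate from Shannon's expansion as above, and it can alternatively be seen semantically: any world $\w$ with $x \in \w$ satisfies $\varphi$ iff it satisfies $\varphi \cd x$, since $\varphi \cd x$ replaces every occurrence of $X$ by $\top$, which is the value $X$ takes in $\w$. I would present the Boolean-algebra version since it is shortest and matches the style of the surrounding remarks following Definition~\ref{def:ulq}.
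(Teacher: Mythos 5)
Your proof is correct and follows the same route as the paper: both start from Definition~\ref{def:ulq}, distribute the conjunction over the disjunction, identify $(\varphi \cd x) \wedge (\varphi \cd \n(x))$ as $\forall X \cdot \varphi$, and use $x \wedge (\varphi \cd x) \equiv x \wedge \varphi$. You spell out the latter identity via Shannon's expansion, which the paper leaves implicit, but this is just a small elaboration of an identical argument.
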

\begin{proof}
\(\forall x \cdot \varphi  = (x \vee (\varphi \cd \n(x))) \wedge  (\varphi \cd x) = (x \wedge (\varphi \cd x)) \vee ((\varphi \cd \n(x))\wedge(\varphi \cd x))
= (x \wedge \varphi) \vee (\forall X \cdot \varphi)\). We can similarly show \(\forall \n(x) \cdot \varphi = (\forall X \cdot \varphi) \vee (\n(x) \wedge \varphi)\).
\end{proof}

\noindent This lemma says that the \(\alpha\)-independent models of a formula are preserved when adding more models to the formula. 
\begin{lemma}\label{lem:imodel-s2w}
Let \(\varphi\) and \(\phi\) be formulas and \(\w\) be a world such that
\(\w \models \phi \models \varphi\). If \(\w\) is an \(\alpha\)-independent model of \(\phi\),
then \(\w\) is also an \(\alpha\)-independent model of \(\varphi\).
\end{lemma}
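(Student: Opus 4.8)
The plan is to unfold Definition~\ref{def:imodel} and reduce the claim to transitivity of logical consequence. First I would record what the hypothesis gives: since $\w$ is an $\alpha$-independent model of $\phi$, we have both $\alpha \subseteq \w$ and $\w \setminus \alpha \models \phi$, where the latter is read as the term $\w\setminus\alpha$ implying $\phi$, i.e.\ $\MS(\w\setminus\alpha) \subseteq \MS(\phi)$. The first of these, $\alpha \subseteq \w$, is one of the two conditions Definition~\ref{def:imodel} demands for $\w$ to be an $\alpha$-independent model of $\varphi$, and it transfers verbatim since it does not mention the formula at all.

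For the second condition I would simply chain inclusions: from $\MS(\w\setminus\alpha) \subseteq \MS(\phi)$ and the standing hypothesis $\phi \models \varphi$ (i.e.\ $\MS(\phi) \subseteq \MS(\varphi)$) we obtain $\MS(\w\setminus\alpha) \subseteq \MS(\varphi)$, that is, $\w\setminus\alpha \models \varphi$. Having verified $\alpha \subseteq \w$ and $\w\setminus\alpha \models \varphi$, Definition~\ref{def:imodel} yields that $\w$ is an $\alpha$-independent model of $\varphi$, which is the conclusion.

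The only point needing a little care is making precise what ``$\w\setminus\alpha \models \psi$'' means, since $\w\setminus\alpha$ is a term rather than a full world; once this is fixed as ``every world extending $\w\setminus\alpha$ satisfies $\psi$'' (equivalently, the conjunction of the literals in $\w\setminus\alpha$ implies $\psi$), the argument is immediate. I do not expect any real obstacle here: the lemma is just the monotonicity of $\alpha$-independent models under weakening of the formula (adding models), dual in flavour to Lemma~\ref{lem:drop}. Note in passing that the hypothesis $\w \models \phi$ is inherited from the lemma's setup and is not actually needed for the implication itself; it merely ensures the situation is non-vacuous.
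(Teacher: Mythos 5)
Your proof is correct and follows the same route as the paper's: unfold Definition~\ref{def:imodel}, observe that $\alpha \subseteq \w$ carries over for free, and chain $\w\setminus\alpha \models \phi \models \varphi$ to obtain $\w\setminus\alpha \models \varphi$. Your side remark that $\w \models \phi$ is redundant is also accurate, since $\alpha \subseteq \w$ together with $\w\setminus\alpha \models \phi$ already forces $\w \models \phi$.
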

\begin{proof}
Suppose \(\w \models \phi \models \varphi\) and \(\w\) is an \(\alpha\)-independent model of \(\phi\).
Then \(\alpha \subseteq \w\) and \(\w\setminus\alpha \models \phi\). 
Hence, \(\w\setminus\alpha \models \varphi\) which establishes 
\(\w\) as an \(\alpha\)-independent model of \(\varphi\).
\end{proof}

\section{Proofs}
\label{app:proofs}


The next three results are folklore. We provide proof sketches for the sake of completeness.

\begin{proof}[\bf Proof of Proposition~\ref{prop:evq}]
This result has been known. See for example Theorem~6.1 in~\cite{DBLP:conf/ijcai/KatsunoM89} and Corollary 6 in~\cite{LangLM03}. 
Since $\exists X \cdot \varphi = (\varphi \cd x) \vee (\varphi \cd \n(x))$, we get that $\exists X \cdot \varphi$
is independent of $X$ since literals $x$ and $\n(x)$ do not occur in $(\varphi \cd x) \vee (\varphi \cd \n(x))$.
Suppose now that there exists a formula $\psi$ such that $\varphi \models \psi$, $\exists X \cdot \varphi \not \models \psi$ and variable \(X\)
does not occur in $\psi$. We will next show a contradiction.
Since any $\psi$ can be put into an equivalent CNF, we can assume without loss of generality that
$\psi$ is a clause. Since $\varphi \models \psi$, every model of $\varphi$ must satisfy a literal $\l$ of $\psi$. Since $\exists X \cdot \varphi \not \models \psi$,
there must be a model $\w$ of $\exists X \cdot \varphi$ that does not satisfy any literal of $\psi$. The models of $\exists X \cdot \varphi$
are the models of $\varphi$ plus all those worlds that differ from a model of $\varphi$ on $X$ only. Since $\w$ cannot be a model of $\varphi$, there must be
a model $\w'$ of $\varphi$ such that $\w$ and $\w'$ coincide on every variable but $X$. Since $\psi$ does not contain any occurrence of $X$,
the variable of $\l$ is different from $X$, and as a consequence, since $\w'$ is a model of $\varphi$, $\w$ satisfies $\varphi$ as well.
This is a contradiction.
\end{proof}

\begin{proof}[\bf Proof of Proposition~\ref{prop:uvq}]
Follows directly from Propositions~\ref{prop:evq} and~\ref{prop:evquvq-duality}.
\end{proof}

\begin{proof}[\bf Proof of Proposition~\ref{prop:evquvq-duality}]
Follows from Definitions~\ref{def:evq} and~\ref{def:uvq} using De Morgan's law.
\end{proof}


\def\S{{\cal S}}
\def\L{{\cal L}}

\begin{proof}[\bf Proof of Theorem~\ref{theo:know}]~

(\(\then\))~Suppose $\MS(\varphi_1) = \MS(\varphi_2)$. 
Then $\RS(\varphi_1) = \RS(\varphi_2)$ by Definition~\ref{def:brule} (and Definition~\ref{def:bmodel}).

(\(\bthen\))~It suffices to show that for a consistent formula \(\varphi\), its models $\MS(\varphi)$ are fully characterized by its b-rules $\RS(\varphi)$.
We will show this by defining an operator \(\L\) that depends only on b-rules $\RS(\varphi)$ and then show that
$\MS(\varphi)$ is the stationary point of the sequence $(\L^i(\BMS(\varphi)))_{i \in \mathbb{N}}$,
where $\L^0(W) = W$ and $\L^{i+1}(W) = \L(\L^{i}(W)).$ Note that
the boundary models of \(\varphi\) also depend only on b-rules $\RS(\varphi)$ since \(\BMS(\varphi) = \{ \alpha,\l \mid \Rule(\alpha,\l) \in \RS(\varphi)\}\).
The operator \(\L: \S \mapsto \S$ is defined as follows, where $\S$ consists of all sets of worlds which contain boundary models $\BMS(\varphi)$:
\begin{eqnarray*}
\S      &  = & \{W \mid \BMS(\varphi) \subseteq W \subseteq {2^\ps}\} \\
\L(W) & = & W \cup \{\alpha, \nl \mid \alpha, \l \in W \mbox{ and } \Rule(\alpha,\l) \not \in \RS(\varphi)\}
\end{eqnarray*}
The operator \(\L\) grows the set of worlds \(W\) as follows. 
For each world \(\alpha, \l \in W\),  it adds world \(\alpha, \nl\) in case \(\Rule(\alpha,\l) \not \in \RS(\varphi)\).
This condition is equivalent to: \(\alpha, \l \in \MS(\varphi)\) only if \(\alpha, \nl \in \MS(\varphi)\). 
If we apply the operator \(\L\) to the boundary models \(\BMS(\varphi)\), it will infer additional models of \(\varphi\) and add them to \(W\). 
Applying \(\L\) again to the result will infer/add more models and so on. This is precisely what the sequence $(\L^i(\BMS(\varphi)))_{i \in \mathbb{N}}$ does.
It suffices now to show that $\MS(\varphi)$ is the least fixed point of operator $\L$. We need a few lemmas for this,
the first shows that \(\L\) must have a least fixed point.

The pair \((\S,\subseteq)\) forms a complete lattice with $\BMS(\varphi)$ as the least element and $2^\ps$ as the greatest element.
If $\L$ is monotonic, then by the Knaster–Tarski Theorem~\cite{Knaster28,Tarski55},
the set of fixed points of $\L$ is not empty and also forms a complete lattice so it does have a least element.

\renewcommand{\quote}{\list{}{\rightmargin=\leftmargin\topsep=0pt}\item\relax}

\begingroup
\addtolength\leftmargini{-7mm}
\begin{quote}
\begin{lemma}\label{lem:monotonic}
$\L: \S \mapsto \S$ is monotonic: $W \subseteq W'$ only if $\L(W) \subseteq \L(W')$ for every $W, W' \in \S$.
\end{lemma}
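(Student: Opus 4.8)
The plan is to prove the inclusion $\L(W) \subseteq \L(W')$ directly from the definition of $\L$, by taking an arbitrary world in $\L(W)$ and tracing through the two ways it can arise. Recall that
$$\L(W) = W \cup \{\alpha, \nl \mid \alpha, \l \in W \mbox{ and } \Rule(\alpha,\l) \not \in \RS(\varphi)\}.$$
So fix $W, W' \in \S$ with $W \subseteq W'$, and let $\w \in \L(W)$. There are exactly two cases.

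First, suppose $\w \in W$. Then $\w \in W' \subseteq \L(W')$, since $\L(W')$ contains $W'$ as a subset by definition. Second, suppose $\w \notin W$, so that $\w$ is one of the worlds explicitly added by $\L$: there is a term $\alpha$ and a literal $\l$ with $\w = \alpha,\nl$, where $\alpha,\l \in W$ and $\Rule(\alpha,\l) \notin \RS(\varphi)$. Here the key observation is that the side condition $\Rule(\alpha,\l) \notin \RS(\varphi)$ depends only on $\varphi$ and not on the set of worlds, so it is unaffected by replacing $W$ with $W'$. Meanwhile, from $\alpha,\l \in W$ and $W \subseteq W'$ we get $\alpha,\l \in W'$. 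Hence $\alpha,\l \in W'$ and $\Rule(\alpha,\l) \notin \RS(\varphi)$, which by the definition of $\L(W')$ puts $\w = \alpha,\nl$ into $\L(W')$. In both cases $\w \in \L(W')$, so $\L(W) \subseteq \L(W')$, as required.

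I do not anticipate any real obstacle here: the statement is a routine unfolding of the definition of $\L$, and monotonicity holds because the "growth rule" of $\L$ adds a world $\alpha,\nl$ whenever its witness $\alpha,\l$ is present, and having more witnesses available in a larger set $W'$ can only add (never remove) worlds. I would also remark, for completeness, that this lemma is exactly what is needed to invoke the Knaster--Tarski Theorem on the complete lattice $(\S,\subseteq)$, guaranteeing that $\L$ has a least fixed point; but the monotonicity claim itself requires nothing beyond the case analysis above.
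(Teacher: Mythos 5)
Your proof is correct and follows exactly the same case analysis as the paper's: split $\w \in \L(W)$ into the case $\w \in W$ and the case where $\w$ was added by the growth rule, and observe that both $W \subseteq W'$ and the $W$-independent side condition $\Rule(\alpha,\l) \notin \RS(\varphi)$ carry the witness over to $\L(W')$. Your explicit remark that the side condition depends only on $\varphi$ is a clean way of articulating the step the paper leaves implicit, but the argument is the same.
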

\end{quote}
\endgroup

\begin{proof}
Suppose $W \subseteq W'$. 
If $\w \in \L(W)$, then by definition of \(\L\) we have either (1)~$\w \in W$ or 
(2)~$\w = \alpha, \nl$ where $\alpha, \l \in W$ and $\alpha \rightarrow \l \not \in \RS(\varphi)$. 
Case~(1) implies $\w \in W'$ and hence $\w \in \L(W')$. 
Case~(2) implies $\alpha, \l \in W'$ and hence $\w = \alpha, \nl \in \L(W')$ since $\alpha \rightarrow \l \not \in \RS(\varphi)$. 
\end{proof}

\shrink{
\begin{lemma}\label{lem:fp}
$\MS(\varphi)$ is a fixed point of $\L$:  $\L(\MS(\varphi))=\MS(\varphi)$.
\end{lemma}

\begin{proof}
\(\MS(\varphi)\) is in the domain \(\S\) of $\L$ since \(\MS(\varphi) \supseteq \BMS(\varphi)\).
By definition of  $\L$, $\MS(\varphi) \subseteq \L(\MS(\varphi))$. 
Suppose now that there is a world $\alpha, \nl \in \L(\MS(\varphi)) \setminus \MS(\varphi)$.
Since $\alpha, \nl \in \L(\MS(\varphi))$, there exists a world $\alpha, \l \in \MS(\varphi)$
such that $\alpha \rightarrow \l \not \in \RS(\varphi)$. But if $\alpha, \l \in \MS(\varphi)$ and $\alpha, \nl \not \in \MS(\varphi)$ then
$\alpha \rightarrow \l \in \RS(\varphi)$, which is a contradiction. Hence,
$\L(\MS(\varphi)) \setminus \MS(\varphi) = \emptyset$ and therefore $\L(\MS(\varphi)) \subseteq \MS(\varphi)$.
\end{proof}
}

The next two lemmas use the notion of a {\em Hamming path (H-path).} 
An H-path from world \(\w_1\) to world \(\w_d\) is a sequence of worlds $\w_1, \ldots, \w_d$ where world \(\w_i\), \(i > 1\), 
is obtained from world \(\w_{i-1}\) by flipping the value of a single variable. This H-path has length \(d\).

\begingroup
\addtolength\leftmargini{-7mm}
\begin{quote}
\begin{lemma}\label{lem:path}
If \(\w_1, \ldots, \w_d\) is a shortest H-path from \(\BMS(\varphi)\) to \(\w_d \in \MS(\varphi)\), then \(\w_i \in \MS(\varphi)\) for \(i=1, \ldots, d\).
\end{lemma}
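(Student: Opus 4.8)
The plan is to prove the contrapositive-flavored statement by induction on the position $i$ along the H-path, working \emph{backwards} from the endpoint $\w_d$, or equivalently forwards from $\w_1$; I will actually argue forwards, showing that the first time the path could leave $\MS(\varphi)$ it would have to re-enter it, contradicting minimality of the path length. First I would recall the setup: $\w_1 \in \BMS(\varphi) \subseteq \MS(\varphi)$ by Definition~\ref{def:bmodel}, and $\w_d \in \MS(\varphi)$ by hypothesis, so the endpoints are already in $\MS(\varphi)$; the content is about the intermediate worlds. Note also that each consecutive pair $\w_{i-1},\w_i$ differs on exactly one variable, say $X_i$, so $\w_i = \w_{i-1}[\l]$ where $\l$ is the literal of $X_i$ appearing in $\w_i$.

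The key step is the following observation. Suppose for contradiction that some $\w_j \notin \MS(\varphi)$, i.e.\ $\w_j \models \neg\varphi$. Let $j$ be the \emph{smallest} such index; then $j \ge 2$ (since $\w_1 \in \MS(\varphi)$) and $\w_{j-1} \in \MS(\varphi)$. Also $j \le d-1$ is impossible to rule out immediately, but since $\w_d \in \MS(\varphi)$ there must be a largest index $k$ with $j \le k \le d-1$ such that $\w_j, \ldots, \w_k \notin \MS(\varphi)$ and $\w_{k+1} \in \MS(\varphi)$. Now consider the variable $X_j$ flipped between $\w_{j-1}$ and $\w_j$: since $\w_{j-1} \in \MS(\varphi)$ and $\w_j = \w_{j-1}[\nl] \models \neg\varphi$ (writing $\l$ for the literal of $X_j$ in $\w_{j-1}$), the world $\w_{j-1}$ is an $\l$-boundary model of $\varphi$, hence $\w_{j-1} \in \BMS(\varphi)$. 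But then $\w_{j-1}, \w_j, \ldots, \w_d$ would already be an H-path from $\BMS(\varphi)$ to $\w_d$ --- wait, that is not shorter; the real contradiction must come from the other side. The cleaner route: at index $k$, we have $\w_k \models \neg\varphi$ and $\w_{k+1} = \w_k[\l'] \models \varphi$ for the appropriate literal $\l'$ of the flipped variable $X_{k+1}$. Then $\w_{k+1}$ is an $\l'$-boundary model of $\varphi$, so $\w_{k+1} \in \BMS(\varphi)$. Consequently the truncated sequence $\w_{k+1}, \w_{k+2}, \ldots, \w_d$ is an H-path from $\BMS(\varphi)$ to $\w_d$ of length $d-k < d$, contradicting the assumption that $\w_1, \ldots, \w_d$ is a \emph{shortest} such H-path. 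Therefore no $\w_j$ lies outside $\MS(\varphi)$, which is the claim.

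The main obstacle, and the step requiring the most care, is the bookkeeping around which literal is flipped at each step and verifying that a single-variable flip taking a model of $\varphi$ to a model of $\neg\varphi$ (or vice versa) exhibits one of the two worlds as a boundary model in the precise sense of Definition~\ref{def:bmodel} --- in particular getting the orientation right (the boundary model is the one that \emph{is} a model of $\varphi$, and it contains the literal $\l$ such that flipping to $\nl$ yields a counter model). Once that local fact is pinned down, the global argument is just: the last world on the path before returning to $\MS(\varphi)$ forces its successor into $\BMS(\varphi)$, and truncating there produces a strictly shorter H-path from $\BMS(\varphi)$ to $\w_d$, contradicting minimality. I would present the local boundary-model fact as a one-line consequence of Definition~\ref{def:bmodel} and then run the truncation argument as above.
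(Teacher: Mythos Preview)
Your argument is correct and matches the paper's proof: both identify an index $k$ with $\w_k \notin \MS(\varphi)$ and $\w_{k+1} \in \MS(\varphi)$, observe that $\w_{k+1} \in \BMS(\varphi)$, and derive a strictly shorter H-path $\w_{k+1},\ldots,\w_d$ from $\BMS(\varphi)$ to $\w_d$. The paper simply takes $k$ to be the \emph{largest} index with $\w_k \notin \MS(\varphi)$ from the outset, which avoids your detour through the smallest-index idea; for a clean writeup you should drop that false start and go directly to the re-entry point.
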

\end{quote}
\endgroup

\begin{proof}
The lemma holds trivially for \(d \in \{1,2\}\). The proof for \(d \geq 3\) is by contradiction. 
Let \(k\) be the largest index such that \(\w_k \not \in \MS(\varphi)\).  Then \(1 < k < d\).
Since \(\w_{k+1} \in \MS(\varphi)\) and \(\w_{k}\) is obtained by flipping a single variable in \(\w_{k+1}\), we have
\(\w_{k+1} \in \BMS(\varphi)\). Moreover, \(\w_{k+1}, \ldots, \w_{d}\) is an H-path from \(\BMS(\varphi)\)
to \(\w_d\) which has length \(d-k < d\), a contradiction. Hence, \(\w_i \in \MS(\varphi)\) for \(i=1,\ldots, d\).
\end{proof}

\begingroup
\addtolength\leftmargini{-7mm}
\begin{quote}
\begin{lemma}\label{lem:lfp}
If $\w_1, \ldots, \w_d$ is a shortest H-path from $\BMS(\varphi)$ to $\w_d \in \MS(\varphi)$, then 
$\w_d \in \L^{d-1}(\BMS(\varphi))$.
\end{lemma}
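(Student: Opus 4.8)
The plan is to prove the lemma by induction on the length $d$ of the H-path, using Lemma~\ref{lem:path} throughout to guarantee that every world on a shortest H-path to a model of $\varphi$ is itself a model of $\varphi$. For the base case $d = 1$ the path is the single world $\w_1 \in \BMS(\varphi)$, and since $\L^{0}(\BMS(\varphi)) = \BMS(\varphi)$ by definition, we immediately obtain $\w_d = \w_1 \in \L^{0}(\BMS(\varphi))$.

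For the inductive step I would take $d \geq 2$ and assume the claim for every shortest H-path of length $d-1$. The first step is to observe that the prefix $\w_1, \ldots, \w_{d-1}$ is again a shortest H-path from $\BMS(\varphi)$ to a world in $\MS(\varphi)$: Lemma~\ref{lem:path} gives $\w_{d-1} \in \MS(\varphi)$, and if there were an H-path from $\BMS(\varphi)$ to $\w_{d-1}$ strictly shorter than $d-1$, then appending $\w_d$ to it (a legal one-variable extension, since $\w_{d-1}$ and $\w_d$ disagree on exactly one variable) would yield an H-path from $\BMS(\varphi)$ to $\w_d \in \MS(\varphi)$ of length below $d$, contradicting the minimality of the given path. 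The induction hypothesis then gives $\w_{d-1} \in \L^{d-2}(\BMS(\varphi))$.

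It remains to push through one more application of $\L$. Write $\w_{d-1} = \alpha,\l$ and $\w_d = \alpha,\nl$, where $\l$ is the literal on which the two worlds disagree. By Lemma~\ref{lem:path}, both $\w_{d-1}$ and $\w_d$ are models of $\varphi$, so $\alpha,\nl$ is not a model of $\neg\varphi$; by Definitions~\ref{def:bmodel} and~\ref{def:brule} this means $\alpha,\l$ is not an $\l$-boundary model of $\varphi$, i.e.\ $\Rule(\alpha,\l) \notin \RS(\varphi)$. Combined with $\w_{d-1} = \alpha,\l \in \L^{d-2}(\BMS(\varphi))$, the definition of $\L$ then places $\w_d = \alpha,\nl$ into $\L(\L^{d-2}(\BMS(\varphi))) = \L^{d-1}(\BMS(\varphi))$, closing the induction. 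The only step that calls for any care is the verification that the prefix path is still shortest; everything else is a routine unfolding of the definition of $\L$ together with the fact, supplied by Lemma~\ref{lem:path}, that all worlds along such a path are models of $\varphi$, so I do not expect a real obstacle.
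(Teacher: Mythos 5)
Your proof is correct and follows the same inductive strategy as the paper's: induct on $d$, apply Lemma~\ref{lem:path} to place $\w_{d-1}$ in $\MS(\varphi)$, observe $\Rule(\alpha,\l)\notin\RS(\varphi)$, and apply one more step of $\L$. You in fact go a bit further than the paper by explicitly justifying that the prefix $\w_1,\ldots,\w_{d-1}$ is itself a shortest H-path — a step the paper only asserts — which is a welcome addition rather than a deviation.
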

\end{quote}
\endgroup

\begin{proof}
By induction on $d$. For \(d=1\) (base case), we have \(\w_1=\w_d \in \BMS(\varphi)$,
$\L^{0}(\BMS(\varphi)) = \BMS(\varphi)$ and hence \(\w_d \in \L^{d-1}(\BMS(\varphi))\).
For $d > 1$ (inductive step), the subsequence $\w_1, \ldots, \w_{d-1}$ is a shortest H-path from $\BMS(\varphi)$ to $\w_{d-1}$.
Moreover, \(\w_{d-1} \in \MS(\varphi)\) by Lemma~\ref{lem:path}.
Hence, $\w_{d-1} \in \L^{d-2}(\BMS(\varphi))$ by the induction hypothesis.
Worlds \(\w_{d-1}\) and \(\w_d\) must have the forms \(\alpha,\l\) and \(\alpha,\nl\). 
Since both are in \(\MS(\varphi)\), then \(\Rule(\alpha,\l) \not \in \RS(\varphi)\). 
Hence \(\w_d \in \L^{d-1}(\BMS(\varphi))$ since $\L^{d-1}(\BMS(\varphi))=\L(\L^{d-2}(\BMS(\varphi)))$.
\end{proof}

We will now finish the proof by showing that \(\MS(\varphi)\) is the least fixed point of \(\L\).
By Lemma~\ref{lem:lfp}, $\MS(\varphi) \subseteq \L^{k}(\BMS(\varphi))$ for some \(k \geq 0\).
Suppose that \(\BMS(\varphi) \subseteq W\) and \(\L(W)=W\) (a fixed point).
Then $\L^{k}(W) = W$. Since $\L$ is monotonic (Lemma \ref{lem:monotonic}), 
we have $\L^{k}(\BMS(\varphi)) \subseteq \L^{k}(W)$ and hence $\MS(\varphi) \subseteq \L^{k}(\BMS(\varphi)) \subseteq \L^{k}(W) = W$.
This implies $\MS(\varphi) \subseteq W$ so $\MS(\varphi)$ is the least fixed point of~\(\L\)
and therefore the stationary point of the sequence $(\L^i(\BMS(\varphi)))_{i \in \mathbb{N}}$.
\end{proof}

\begin{proof}[\bf Proof of Proposition~\ref{prop:brule}]~

(\(\then\))~Suppose \(\varphi\) has b-rule \(\Rule(\alpha,\l)\). By Definition~\ref{def:brule}, \(\alpha,\l\) is an \(\l\)-boundary model of \(\varphi\).
By Definition~\ref{def:bmodel}, \(\alpha,\l \models \varphi\) and \(\alpha,\nl \models \neg \varphi\).
From \(\alpha,\l \models \varphi\), we conclude that \(\varphi \wedge \alpha\) is consistent.
From \(\alpha,\nl \models \neg \varphi\), we conclude that \(\alpha \wedge \nl \wedge \varphi\) is inconsistent and hence 
\(\varphi \models \neg(\alpha \wedge \nl)\) and further \(\varphi \models \alpha\then\l\).

(\(\bthen\))~Suppose \(\varphi\wedge\alpha\) is consistent and \(\varphi \models \alpha\then\l\).
From \(\varphi\wedge\alpha\) being consistent, we conclude that \(\alpha,\l \models \varphi\) or \(\alpha,\nl \models \varphi\).
From \(\varphi \models \alpha\then\l\), we conclude \(\alpha,\nl\ \models \neg \varphi\) and hence \(\alpha,\l \models \varphi\).
We now have that \(\alpha,\l\) is an \(\l\)-boundary model of \(\varphi\) by Definition~\ref{def:bmodel},
and hence \(\varphi\) has b-rule \(\Rule(\alpha,\l)\) by Definition~\ref{def:brule}.
\end{proof}

\begin{proof}[\bf Proof of Proposition~\ref{prop:rule-independence}]~
\color{\BLUE}

{\em The literal case.}
(\(\then\))~Suppose \(\varphi\) is independent of literal \(\ell\). There must exist some NNF \(\psi = \varphi\) that does not contain literal \(\ell\).
Let \(\alpha\) be a term that does not mention the variable \(X\) of $\ell$ and is such that \(\psi \wedge \alpha\) is consistent.
Then \(\psi \wedge \alpha\) is a consistent NNF that does not mention literal \(\ell\)
so \((\psi \wedge \alpha)|\n(\ell)\) is also consistent and hence \(\psi \wedge \alpha \not \models \ell\). 
This means \(\varphi\) cannot have a b-rule of the form \(\Rule(\alpha,\ell)\).

(\(\bthen\))~Suppose \(\varphi\) has no b-rule of the form \(\Rule(\alpha,\ell)\). If \(\varphi\) has a model of the form \(\alpha,\ell\),
it also has \(\alpha,\n(\ell)\) as a model; otherwise, \(\varphi\) will have rule \(\Rule(\alpha,\ell)\). Since 
\((\alpha,\ell) \vee (\alpha,\n(\ell)) = \alpha\), we can express \(\varphi\) as a DNF that does not contain literal \(\ell\). Hence,
\(\varphi\) is independent of literal~\(\ell\).

\color{black}

{\em The variable case.} Follows from the above case given that \(\varphi\) is independent of variable \(X\) iff it
is independent of literal \(x\) and independent of literal \(\n(x)\).
\end{proof}

\begin{proof}[\bf Proof of Proposition~\ref{prop:rconnect}]
This proof invokes Definition~\ref{def:brule} frequently.
\begin{itemize}
\item [(a)] We have $\alpha \rightarrow \l \in \RS(\overline{\varphi})$ 
iff $\alpha, \l \models \overline{\varphi}$ and $\alpha, \nl \models \overline{\overline{\varphi}}$  
iff  $\alpha \rightarrow \nl \in \RS(\varphi)$.

\item [(b)] 
\begin{itemize}

\item If $\alpha \rightarrow \l \in \RS(\varphi) \cap \RS(\psi)$,
then $\alpha, \l \models \varphi$, $\alpha, \nl \models \overline{\varphi}$, $\alpha, \l \models \psi$ and $\alpha, \nl \models \overline{\psi}$. 
Thus, $\alpha, \l \models \varphi \wedge \psi$ and $\alpha, \nl \models \overline{\varphi} \vee \overline{\psi}$.
Since $\overline{\varphi} \vee \overline{\psi} = \overline{\varphi \wedge \psi}$, we get
 $\Rule(\alpha,\l) \in \RS(\varphi \wedge \psi)$ and therefore $\RS(\varphi) \cap \RS(\psi) \subseteq \RS(\varphi \wedge \psi)$.

\item If $\alpha \rightarrow \l \in \RS(\varphi \wedge \psi)$, then $\alpha, \l \models \varphi \wedge \psi$ and $\alpha, \nl \models \overline{\varphi} \vee \overline{\psi}$.
Hence, $\alpha, \l \models \varphi$ and  $\alpha, \l \models \psi$. 
Moreover, $\alpha, \nl \models \overline{\varphi}$ or  $\alpha, \nl \models \overline{\psi}$.
This gives  $\Rule(\alpha,\l) \in \RS(\varphi)$ or  $\Rule(\alpha,\l) \in \RS(\psi)$, and 
therefore $\RS(\varphi \wedge \psi) \subseteq \RS(\varphi) \cup \RS(\psi)$.

\end{itemize}

\item [(c)] 
\begin{itemize}

\item If $\alpha \rightarrow \l \in \RS(\varphi) \cap \RS(\psi)$, then $\alpha, \l \models \varphi$, $\alpha, \nl \models \overline{\varphi}$, $\alpha, \l \models \psi$
and $\alpha, \nl \models \overline{\psi}$. 
Thus, $\alpha, \l \models \varphi \vee \psi$ and $\alpha, \nl \models \overline{\varphi} \wedge \overline{\psi}$. 
Since $\overline{\varphi} \wedge \overline{\psi} = \overline{\varphi \vee \psi}$, 
we get $\Rule(\alpha,\l) \in \RS(\varphi \vee \psi)$ and therefore $\RS(\varphi) \cap \RS(\psi) \subseteq \RS(\varphi \vee \psi)$.

\item If $\alpha \rightarrow \l \in \RS(\varphi \vee \psi)$, then $\alpha, \l \models \varphi \vee \psi$ and $\alpha, \nl \models \overline{\varphi} \wedge \overline{\psi}$. 
Hence, $\alpha, \l \models  \overline{\varphi}$ and  $\alpha, \l \models \overline{\psi}$. 
Moreover, $\alpha, \l \models \varphi$ or  $\alpha, \l \models \psi$.
This gives  $\Rule(\alpha,\l) \in \RS(\varphi)$ or  $\Rule(\alpha,\l) \in \RS(\psi)$, and 
therefore $\RS(\varphi \vee \psi) \subseteq \RS(\varphi) \cup \RS(\psi)$. \qedhere
\end{itemize}

\end{itemize}
\end{proof}

\color{\BLUE}
\begin{proof}[\bf Proof of Proposition~\ref{prop:rcount}]
By Definition~\ref{def:brule}, a boundary model can generate at most \(n\) b-rules, hence \(|\RS(\varphi)| \leq n \cdot |\BMS(\varphi)|\).
The bound \(n \cdot |\BMS(\varphi)| \leq n \cdot |\MS(\varphi)|\) is trivial since \(\BMS(\varphi) \subseteq \MS(\varphi)\). 
Considering $\overline{\varphi}$ instead of $\varphi$, we have $|\RS(\overline{\varphi})| \leq n \cdot |\BMS(\overline{\varphi})| \leq n \cdot |\MS(\overline{\varphi})|$.
Finally, Proposition~\ref{prop:rconnect}(a) shows that $|\RS(\overline{\varphi})| = |\RS(\varphi)|$.
\end{proof}
\color{black}



\begin{proof}[\bf Proof of Theorem~\ref{theo:ulq-semantics-b}]
By Boole's expansion, we have \(\varphi = (\l \wedge (\varphi \cd \l)) \vee (\nl \wedge (\varphi \cd \nl))\), 
which can be expanded using consensus into 
\(\varphi = (\l \wedge (\varphi \cd \l)) \vee (\nl \wedge (\varphi \cd \nl)) \vee ((\varphi \cd \l)\wedge(\varphi \cd \nl))\).
By Definition~\ref{def:ulq}, we have 
\(\forall \l \cdot \varphi = (\l \vee (\varphi \cd \nl)) \wedge  (\varphi \cd \l)\), which can be expanded into
\(\forall \l \cdot \varphi = (\l \wedge (\varphi \cd \l)) \vee ((\varphi \cd \l)\wedge(\varphi \cd \nl))\). 
This gives \(\forall \l \cdot \varphi \models \varphi\) and hence \(\MS(\forall \l \cdot \varphi) \subseteq \MS(\varphi)\).
We next prove the two directions of the second part of the theorem using the expansions:
\begin{eqnarray}
\varphi & = & (\l \wedge (\varphi \cd \l)) \vee (\nl \wedge (\varphi \cd \nl)) \label{exp1} \\
\varphi & = & (\l \wedge (\varphi \cd \l)) \vee (\nl \wedge (\varphi \cd \nl)) \vee ((\varphi \cd \l)\wedge(\varphi \cd \nl)) \label{exp1b} \\
\forall \l \cdot \varphi & = &  (\l \wedge (\varphi \cd \l)) \vee ((\varphi \cd \l)\wedge(\varphi \cd \nl)) \label{exp2}
\end{eqnarray}
(\(\then\))~Suppose \(\w \in \MS(\varphi)\) and \(\w \not \in \MS(\forall \l \cdot \varphi)\);
that is, \(\w \models \varphi\) and \(\w \not \models \forall \l \cdot \varphi\). 
Given Expansions~(\ref{exp1}) and~(\ref{exp2}), this implies \(\w \models \nl \wedge (\varphi \cd \nl)\).
Suppose \(\w\) is not an \(\nl\)-boundary model of \(\varphi\). 
Then \(\w[\l] \models \varphi\) and hence \(\w[\l] \models \l \wedge \varphi\) and \(\w[\l] \models \l \wedge (\varphi \cd \l)\).
We now have \(\w \models \varphi \cd \nl\) and \(\w[\l] \models \varphi \cd \l\). 
Since \(\varphi \cd \l\) does not mention the variable of literal \(\l\), we also have \(\w \models (\varphi \cd \l) \wedge  (\varphi \cd \nl)\)
which implies \(\w \models \forall \l \cdot \varphi\) by Expansion~(\ref{exp2}).
This is a contradiction with the supposition \(\w \not \in \MS(\forall \l \cdot \varphi)\) so \(\w\) must be an \(\nl\)-boundary model of \(\varphi\).\\
(\(\bthen\))~Suppose \(\w\) is an \(\nl\)-boundary model of \(\varphi\). Then \(\nl \in \w\), \(\w \models \varphi\)
and \(\w[\l] \not \models  \varphi\). We then have \(\w \in \MS(\varphi)\) so we just need to show that \(\w \not \in \MS(\forall \l \cdot \varphi)\).
Suppose \(\w \in \MS(\forall \l \cdot \varphi)\). Since \(\nl \in \w\), we have \(\w \models (\varphi \cd \l)\wedge(\varphi \cd \nl)\)
by Expansion~(\ref{exp2}). Since \(\varphi \cd \l\) and \(\varphi \cd \nl\) do not mention the variable
of literal \(\nl\), we also have \(\w[\l] \models (\varphi \cd \l)\wedge(\varphi \cd \nl)\) and hence \(\w[\l] \models \varphi\) 
by Expansion~(\ref{exp1b}). This is a contradiction so \(\w \not \in \MS(\forall \l \cdot \varphi)\).
\end{proof}

\begin{proof}[\bf Proof of Theorem~\ref{theo:ulq-syntax}]
We first prove $\forall \l . \varphi \models \varphi \wedge \bigwedge_{\alpha \rightarrow \nl \in \RS(\varphi)} \overline{\alpha}$.
By Theorem~\ref{theo:ulq-semantics-b}, $\forall \l . \varphi \models \varphi$.
We next show that $\forall \l . \varphi \models \overline{\alpha}$ whenever 
$\alpha \rightarrow \nl \in \RS(\varphi)$. If $\alpha \rightarrow \nl \in \RS(\varphi)$, we have $\alpha, \l \models \overline{\varphi}$
by Definition~\ref{def:brule} and hence $\varphi \models \overline{\alpha} \vee \nl$. By Proposition~\ref{prop:ulq-imply},
$\forall \l . \varphi \models \forall \l . (\overline{\alpha} \vee \nl)$. Since $\alpha$ does not contain the variable of literal $\l$ by construction,
$\forall \l . (\overline{\alpha} \vee \nl)=(\forall \l . \overline{\alpha}) \vee (\forall \l . \nl)$
by Proposition~\ref{prop:quantify-compound}(d). 
Since $\forall \l . \nl = \bot$ we get $\forall \l . \varphi \models \forall \l . \overline{\alpha}$. 
By Theorem~\ref{theo:ulq-semantics-b}, $\forall \l . \overline{\alpha} \models \overline{\alpha}$ so \(\forall \l . \varphi \models \overline{\alpha}\) 
and therefore $\forall \l . \varphi \models \varphi \wedge \bigwedge_{\alpha \rightarrow \nl \in \RS(\varphi)} \overline{\alpha}$.

We now prove $\varphi \wedge \bigwedge_{\alpha \rightarrow \nl \in \RS(\varphi)} \overline{\alpha} \models \forall \l . \varphi$.
By Definition \ref{def:ulq}, $\forall \l . \varphi = (\l \vee (\varphi \mid \nl)) \wedge (\varphi \mid \l)$.
Consider a model $\w \models \varphi \wedge \bigwedge_{\alpha \rightarrow \nl \in \RS(\varphi)} \overline{\alpha}$.
We will next show that \(\w \models  \forall \l . \varphi\).
If $\w \models \l$, then $\w \models \varphi \wedge \l$. Since $\varphi \wedge \l \models \varphi \mid \l$, we now have 
$\w \models \varphi \mid \l\) and hence $\w \models \forall \l . \varphi$.
If $\w \models \nl$, then $\w \models \varphi \wedge \nl$ and hence  $\w \models \varphi \mid \nl$. 
We next show $\w \models \varphi \mid \l$ which gives us $\w \models \forall \l . \varphi$, therefore concluding the proof.
Let $\w = \alpha', \nl$. We must have $\alpha', \l \models \varphi$, otherwise $\Rule(\alpha',\nl) \in \RS(\varphi)$ 
and then \(\w \models \overline{\alpha'}\) which is a contradiction. We now have \(\alpha' \models  \varphi\).
Since $\varphi=(\l \vee (\varphi \mid \nl)) \wedge (\nl \vee (\varphi \mid \l))$,
we have $\alpha' \models \nl \vee (\varphi \mid \l)$. Since $\nl \vee (\varphi \mid \l)$ is independent of \(\l\), we also have
 $\alpha' \models \forall \l . (\nl \vee (\varphi \mid \l))$. 
Since $\nl$ and $\varphi \mid \l$ do not share variables, $\forall \l . (\nl \vee (\varphi \mid \l))=(\forall \l . \nl) \vee (\forall \l . (\varphi \mid \l))$
by Proposition~\ref{prop:quantify-compound}(d).
Moreover, $(\forall \l . \nl) \vee (\forall \l . (\varphi \mid \l))=\varphi \mid \l$ since $\forall \l . \nl=\bot$ 
and $\varphi \mid \l$ is independent of $\l$. We now have \(\alpha' \models \varphi \mid \l\) and hence $\w \models \varphi \mid \l$.
Therefore $\varphi \wedge \bigwedge_{\alpha \rightarrow \nl \in \RS(\varphi)} \overline{\alpha} \models \forall \l . \varphi$.
\end{proof}

\begin{proof}[\bf Proof of Proposition~\ref{prop:ulq-preserve}]
By Lemma~\ref{lem:ulq-syntax2}, we have \(\forall \l \cdot \varphi = (\forall X \cdot \varphi) \vee (\l \wedge \varphi)\), where
\(X\) is the variable of literal \(\l\). Any implicant of \(\varphi\) that contains literal \(\l\) will also be an implicant of \(\l \wedge \varphi\) 
and hence an implicant of \(\forall \l \cdot \varphi\).
\end{proof}

\begin{proof}[\bf Proof of Proposition~\ref{prop:ulq-imply}]
By Theorem~\ref{theo:ulq-semantics}, \(\forall \l \cdot \varphi \models \varphi\).
Given \(\varphi \models \phi\), we now have \(\forall \l \cdot \varphi \models \phi\).
Let \(\w\) be a model of \(\forall \l \cdot \varphi\). Then \(\w \models \forall \l \cdot \varphi \models \varphi \models \phi\).
If \(\w \not  \models \forall \l \cdot \phi\) then \(\w\) is an \(\nl\)-boundary model of \(\phi\) by Theorem~\ref{theo:ulq-semantics}
and hence \(\w\) is an \(\nl\)-boundary model of \(\varphi\) by Lemma~\ref{lem:drop}.
By Theorem~\ref{theo:ulq-semantics}, \(\w \not \models \forall \l \cdot \varphi\) which is a contradiction.
We then have \(\w \models \forall \l \cdot \phi\) and therefore \(\forall \l \cdot \varphi \models \forall \l \cdot \phi\).
\end{proof}

\begin{proof}[\bf Proof of Proposition~\ref{prop:ulq-semantics-old}]
Proposition 2(4) in~\cite{LangLM03} shows that $\varphi$ is independent of literal $\l$ if and only if $\neg \varphi$ is independent of literal $\nl$.
Hence, the result follows directly from Proposition~\ref{prop:elq-semantics-old} (Proposition 16 of~\cite{LangLM03}) and Theorem \ref{theo:lq-duality} (duality).
\end{proof}

\begin{proof}[\bf Proof of Proposition~\ref{prop:ulq-order}]
Follows directly from Proposition~\ref{prop:elq-order} and Theorem \ref{theo:lq-duality}.
\end{proof}

\begin{proof}[\bf Proof of Proposition~\ref{prop:ulq-uvq}]
Follows directly from Proposition~\ref{prop:elq-evq} and Theorem \ref{theo:lq-duality}.
\end{proof}
\color{black}

\begin{proof}[\bf Proof of Theorem~\ref{theo:ulq-semantics}]
\color{\BLUE}
The proof of this theorem is based on two lemmas that we state and prove next.
The first lemma says that the result of universally quantifying literals is independent of their complements.
\begin{lemma}\label{lem:ulq-independence}
Let \(\varphi\) be a formula, \(\l_1,\ldots,\l_n\) be literals and \(\w\) be a world.
If \(\w\models \forall \l_1,\ldots,\l_n \cdot \varphi\), then
\(\w\setminus\{\nl_1,\ldots,\nl_n\} \models \forall \l_1,\ldots,\l_n \cdot \varphi\).
\end{lemma}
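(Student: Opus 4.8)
The plan is to show that $\psi := \forall\,\l_1,\ldots,\l_n \cdot \varphi$ is independent of each literal $\nl_i$, and then to argue that, starting from any model $\w$ of $\psi$, flipping the $\nl_i$ that occur in $\w$ — one variable at a time — yields only models of $\psi$. Writing $\alpha = \w\cap\{\nl_1,\ldots,\nl_n\}$, this is exactly the claim, since $\w\setminus\{\nl_1,\ldots,\nl_n\} = \w\setminus\alpha$ and ``$\w\setminus\alpha \models \psi$'' is what it means for $\w$ to be an $\alpha$-independent model of $\psi$ (Definition~\ref{def:imodel}); here $\alpha\subseteq\w$ holds trivially.

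First I would establish that $\psi$ is independent of literal $\nl_i$ for every $i$. By Proposition~\ref{prop:ulq-order} the order of universal literal quantification is immaterial, so $\psi = \forall\,\l_i \cdot \chi$ with $\chi = \forall\,\l_1,\ldots,\l_{i-1},\l_{i+1},\ldots,\l_n \cdot \varphi$; and by Proposition~\ref{prop:ulq-semantics-old}, $\forall\,\l_i \cdot \chi$ — hence $\psi$ — is independent of literal $\nl_i$. Repeated or complementary literals among the $\l_j$ cause no trouble, since the conclusion is asserted for each index separately: if $\l_i = \nl_j$ for some $j$, then $\psi$ is merely independent of the whole variable of $X_i$, which is stronger than what is needed.

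Next I would record a one-literal ``flip'' fact: if a formula $\psi$ is independent of literal $\lambda$, $\lambda\in\w$, and $\w\models\psi$, then $\w[\overline{\lambda}]\models\psi$. Indeed, by Proposition~\ref{prop:rule-independence}, independence of $\lambda$ means $\psi$ has no b-rule of the form $\Rule(\alpha,\lambda)$; were $\w[\overline{\lambda}]\not\models\psi$, then writing $\w = \mu,\lambda$ would make $\w$ a $\lambda$-boundary model of $\psi$ (Definitions~\ref{def:bmodel} and~\ref{def:brule}), i.e.\ $\Rule(\mu,\lambda)\in\RS(\psi)$ — a contradiction.

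Finally I would combine the two. Let $\w\models\psi$, $\alpha = \w\cap\{\nl_1,\ldots,\nl_n\}$, and let $V$ be the set of variables whose literal in $\w$ lies in $\alpha$. Every world extending the term $\w\setminus\alpha$ is obtained from $\w$ by flipping the literals of some $S\subseteq V$, so it suffices to show each such world still models $\psi$, which I would do by induction on $|S|$: the case $S=\emptyset$ is the hypothesis, and in the inductive step a not-yet-flipped variable $X\in S$ carries in (the current model, which agrees with) $\w$ some literal $\lambda\in\alpha$, hence $\lambda=\nl_k$ for some $k$, so $\psi$ is independent of $\lambda$ by the first step and the flip fact lets us flip it while staying a model of $\psi$. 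Then $\w\setminus\alpha\models\psi$, as required. I do not expect a real obstacle; the one point needing care is precisely this last bookkeeping — using $\alpha = \w\cap\{\nl_1,\ldots,\nl_n\}$ to guarantee that for each variable in $V$ the literal actually present in $\w$ (not its complement) is one of the $\nl_k$, so that the independence from the first step applies to exactly that literal.
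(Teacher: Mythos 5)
Your proof is correct, and it takes a genuinely different route from the paper's. Both proofs start the same way, using Propositions~\ref{prop:ulq-order} and~\ref{prop:ulq-semantics-old} to establish that $\psi := \forall \l_1,\ldots,\l_n \cdot \varphi$ is independent of every literal $\nl_i$. The paper then closes in one step: since $\psi$ is independent of $\nl_1,\ldots,\nl_n$ and is implied by the term $\w$, Proposition~\ref{prop:elq-semantics-old} gives $\exists \nl_1,\ldots,\nl_n \cdot \w \models \psi$, and existentially quantifying those literals from the term $\w$ simply deletes them, so $\exists \nl_1,\ldots,\nl_n \cdot \w = \w\setminus\{\nl_1,\ldots,\nl_n\}$. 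You instead unwind independence at the level of models: Proposition~\ref{prop:rule-independence} converts independence of $\nl_i$ into the absence of $\nl_i$-boundary models of $\psi$, which yields your one-literal flip fact, and an induction over subsets of flipped variables walks $\w$ down to $\w\setminus\alpha$ while staying inside $\MS(\psi)$. The paper's argument is shorter and stays at the level of the forgetting/selection semantics; yours is more elementary, relying only on the boundary-model machinery and making visible the variable-flipping picture that Definition~\ref{def:imodel} encodes, at the cost of the bookkeeping you yourself flag in the final paragraph (and of a slightly longer write-up). Both are sound.
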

\begin{proof}
Suppose \(\w\models \forall \l_1,\ldots,\l_n \cdot \varphi\).
By Proposition~\ref{prop:ulq-semantics-old}, \(\forall \l_1,\ldots,\l_n \cdot \varphi\) is independent of literals \(\nl_1,\ldots,\nl_n\).
Thus, by Proposition~\ref{prop:elq-semantics-old}, \(\exists \nl_1, \ldots, \nl_n \cdot \w\models \forall \l_1,\ldots,\l_n \cdot \varphi\).
Since \(\exists \nl_1, \ldots, \nl_n \cdot \w = \w\setminus\{\nl_1, \ldots, \nl_n\}\), we finally have 
\(\w\setminus\{\nl_1,\ldots,\nl_n\} \models \forall \l_1,\ldots,\l_n \cdot \varphi\).
\end{proof}
The second lemma identifies a class of \(\alpha\)-independent models that are preserved by universal literal quantification.
\begin{lemma}\label{lem:imodel-w2s}
Let \(\varphi\) be a formula, \(\l_1,\ldots,\l_n\) be literals, \(\w\) be a world and \(\alpha=\w\cap\{\nl_1,\ldots,\nl_n\}\).
If \(\w\) is an \(\alpha\)-independent model of \(\varphi\), then \(\w\) is an \(\alpha\)-independent model 
of \(\forall \l_1,\ldots,\l_n \cdot \varphi\).
\end{lemma}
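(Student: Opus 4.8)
The plan is to check the two conditions of Definition~\ref{def:imodel} for \(\w\) being an \(\alpha\)-independent model of \(\forall \l_1, \ldots, \l_n \cdot \varphi\). The condition \(\alpha \subseteq \w\) holds by hypothesis, since it is already part of ``\(\w\) is an \(\alpha\)-independent model of \(\varphi\)'', so everything reduces to proving \(\w \setminus \alpha \models \forall \l_1, \ldots, \l_n \cdot \varphi\). Write \(\beta = \w \setminus \alpha\); since \(\alpha = \w \cap \{\nl_1, \ldots, \nl_n\}\), we have \(\beta = \w \setminus \{\nl_1, \ldots, \nl_n\}\), so the term \(\beta\) contains none of the literals \(\nl_1, \ldots, \nl_n\). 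I would establish \(\beta \models \forall \l_1, \ldots, \l_n \cdot \varphi\) by exhibiting \(\beta\), read as a formula, as a witness that is independent of every \(\nl_i\) and implies \(\varphi\), and then invoking the weakest-formula characterization of universal literal quantification.

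Concretely: because \(\beta\) is a term not containing \(\nl_i\), it is itself an NNF in which \(\nl_i\) does not occur, hence \(\beta\) is independent of literal \(\nl_i\) for every \(i\) by the definition of literal independence. Moreover \(\beta = \w \setminus \alpha \models \varphi\), directly from the hypothesis that \(\w\) is an \(\alpha\)-independent model of \(\varphi\) (Definition~\ref{def:imodel}). Now \(\forall \l_1, \ldots, \l_n \cdot \varphi\) is implied by every formula that is independent of all of \(\nl_1, \ldots, \nl_n\) and that implies \(\varphi\); this is the multi-literal form of Proposition~\ref{prop:ulq-semantics-old}. Applying it to the witness \(\beta\) gives \(\beta \models \forall \l_1, \ldots, \l_n \cdot \varphi\), that is, \(\w \setminus \alpha \models \forall \l_1, \ldots, \l_n \cdot \varphi\); together with \(\alpha \subseteq \w\) this is exactly the claim.

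The one supporting step that needs its own argument — and the one I expect to be the delicate point — is the multi-literal form of Proposition~\ref{prop:ulq-semantics-old}: every \(\chi\) that is independent of each of \(\nl_1, \ldots, \nl_n\) and satisfies \(\chi \models \varphi\) also satisfies \(\chi \models \forall \l_1, \ldots, \l_n \cdot \varphi\). I would prove this by induction on \(n\), using Proposition~\ref{prop:ulq-order} to write \(\forall \l_1, \ldots, \l_n \cdot \varphi = \forall \l_n \cdot \psi\) with \(\psi = \forall \l_1, \ldots, \l_{n-1} \cdot \varphi\): from \(\chi \models \varphi\) and \(\chi\) independent of \(\nl_1, \ldots, \nl_{n-1}\), the induction hypothesis yields \(\chi \models \psi\); then, since \(\chi\) is also independent of \(\nl_n\), the single-literal Proposition~\ref{prop:ulq-semantics-old}, with \(\psi\) in the role of the formula, yields \(\chi \models \forall \l_n \cdot \psi\). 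It is worth stressing that ``independent of \(\nl_i\)'' is a property of an individual literal, so the fact that some of \(\l_1, \ldots, \l_n\) may be literals of the same variable — even complementary ones — creates no difficulty anywhere above; this is really the only place where the possibly-conflicting nature of the quantified literals could have been a concern.
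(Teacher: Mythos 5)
Your proof is correct, and it takes a genuinely different route than the paper's.

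The paper's argument is algebraic: apply the monotonicity of universal literal quantification (Proposition~\ref{prop:ulq-imply}) to $\w\setminus\alpha \models \varphi$ to obtain $\forall \l_1,\ldots,\l_n \cdot (\w\setminus\alpha) \models \forall \l_1,\ldots,\l_n \cdot \varphi$, and then observe that because none of $\nl_1,\ldots,\nl_n$ occur in the term $\w\setminus\alpha$, repeated use of Lemma~\ref{lem:ulq-term} (quantifying a term whose $\nl$ is absent leaves it intact) collapses the left side to $\w\setminus\alpha$. Your argument instead exhibits $\beta = \w\setminus\alpha$ directly as a witness — an NNF independent of every $\nl_i$ that implies $\varphi$ — and invokes the weakest-formula semantic characterization of $\forall$ (Proposition~\ref{prop:ulq-semantics-old}) to conclude $\beta \models \forall \l_1,\ldots,\l_n \cdot \varphi$. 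The two approaches are duals in spirit: the paper uses ``$\forall$ preserves implication'' plus ``$\forall$ fixes terms free of $\nl$,'' while you use ``$\forall$ is the weakest $\nl$-independent implicant.'' Your route carries one extra burden that you correctly flag and discharge: Proposition~\ref{prop:ulq-semantics-old} is stated for a single literal, so you need its multi-literal form, which you establish by a clean induction on $n$ using Proposition~\ref{prop:ulq-order} to peel off quantifiers in a convenient order. Your remark that possibly conflicting literals among $\l_1,\ldots,\l_n$ pose no difficulty is also accurate: if $x$ and $\n(x)$ both appear, then $\{\nl_1,\ldots,\nl_n\}$ contains both literals of $X$, so $\alpha$ picks up whichever of them is in $\w$ and $\beta$ mentions neither, remaining independent of both. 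Net effect: the paper's proof is slightly shorter because its two supporting tools already iterate cleanly without a separate lemma, whereas your proof is a bit more self-contained semantically but needs the extra induction; both are valid.
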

\begin{proof}
Suppose \(\w\) is an \(\alpha\)-independent model of \(\varphi\).
Then \(\alpha \subseteq \w\) and \(\w\setminus\alpha \models \varphi\).
By Proposition~\ref{prop:ulq-imply}, \(\forall \l_1,\ldots,\l_n \cdot (\w\setminus\alpha) \models \forall \l_1,\ldots,\l_n \cdot \varphi\).
Since \(\nl_i \not \in (\w\setminus\alpha)\) for \(i=1,\ldots,n\), we get
\(\forall \l_1, \ldots, \l_n \cdot (\w\setminus\alpha) = \w\setminus\alpha\) by Lemma~\ref{lem:ulq-term}.
Hence, \(\w\setminus\alpha \models  \forall \l_1,\ldots,\l_n \cdot\varphi\)
so \(\w\) is an \(\alpha\)-independent model of \(\forall \l_1,\ldots,\l_n \cdot\varphi\).
\end{proof}
\color{black}

We are now ready to prove the theorem. Let \(\w\) be a world and \(\alpha=\w\cap\{\nl_1,\ldots,\nl_n\}\).

(\(\then\))~Suppose \(\w \models \forall \l_1,\ldots,\l_n \cdot \varphi\). 
By Lemma~\ref{lem:ulq-independence}, \(\w\setminus\{\nl_1,\ldots,\nl_n\} \models \forall \l_1,\ldots,\l_n \cdot \varphi\)
and hence \(\w\setminus\alpha \models \forall \l_1,\ldots,\l_n \cdot \varphi\).
Since \(\alpha \subseteq \w\), we get that \(\w\) is an \(\alpha\)-independent model of \(\forall \l_1,\ldots,\l_n \cdot \varphi\).
By Theorem~\ref{theo:ulq-semantics-b}, \(\forall \l_1,\ldots,\l_n \cdot \varphi \models \varphi\).
By Lemma~\ref{lem:imodel-s2w} and
\(\w \models \forall \l_1,\ldots,\l_n \cdot \varphi \models \varphi\),
we get that \(\w\) is an \(\alpha\)-independent model of \(\varphi\).

(\(\bthen\)) Suppose \(\w\) is an \(\alpha\)-independent model of \(\varphi\).
By Lemma~\ref{lem:imodel-w2s}, \(\w\) is an \(\alpha\)-independent model of \(\forall \l_1,\ldots,\l_n \cdot \varphi\)
and hence \(\w \models \forall \l_1,\ldots,\l_n \cdot \varphi\).
\end{proof}

\begin{proof}[\bf Proof of Proposition~\ref{prop:imodel-brule}]
Let \(\w = \alpha,\beta\) be a model of formula \(\varphi\) where \(\alpha\) and \(\beta\) are disjoint terms.

(\(\then\))~Suppose \(\w\) is an \(\alpha\)-independent model of \(\varphi\).
Then \(\beta \models \varphi\). Moreover, for every world \(\beta,\gamma,\l\) (where \(\beta\), \(\gamma\) and \(\l\) are disjoint) we must have
\(\beta,\gamma,\l \models \varphi\) and \(\beta,\gamma,\nl  \models \varphi\).
By Definition~\ref{def:brule}, \(\varphi\) cannot then have a b-rule of the form \(\Rule({\beta,\gamma},\l)\) or \(\Rule({\beta,\gamma},\nl)\).

(\(\bthen\))~Suppose \(\varphi\) has no b-rules of the form \(\Rule({\beta,\gamma},\l)\).
By Definition~\ref{def:brule}, for every world of the form \(\beta,\gamma,\l\), 
we have \(\beta,\gamma,\l \models \varphi\) only if \(\beta,\gamma,\nl \models \varphi\).
Any world \(\w^\star \supseteq \beta\) can be obtained from world \(\w=\beta,\alpha\) by a sequence of single flips to the variables of \(\alpha\).
Hence, any such world \(\w^\star\) is a model of \(\varphi\), which implies
\(\beta \models \varphi\). Therefore, \(\w\) must be an \(\alpha\)-independent model of \(\varphi\).
\end{proof}


\begin{proof}[\bf Proof of Theorem~\ref{theo:lq-duality}]
We have \(\neg(\forall \l \cdot \neg \varphi)
 = \neg((\l \vee (\neg \varphi \cd \nl)) \wedge (\neg \varphi \cd \l)) 
 =  (\nl \wedge (\varphi \cd \nl)) \vee (\varphi \cd \l) 
=  \exists \l \cdot \varphi.\)
We can similarly show \(\forall \l \cdot \varphi = \neg(\exists \l \cdot \neg \varphi)\).
\end{proof}

\begin{proof}[\bf Proof of Proposition~\ref{prop:elq-semantics-old}]
See Proposition 16 of~\cite{LangLM03}.
\end{proof}

\begin{proof}[\bf Proof of Proposition~\ref{prop:elq-order}]
See footnote 4 in~\cite{LangLM03}.
\end{proof}

\begin{proof}[\bf Proof of Proposition~\ref{prop:elq-evq}]
See Proposition 20 of~\cite{LangLM03}.
\end{proof}

\begin{proof}[\bf Proof of Theorem~\ref{theo:elq-semantics-b}]
By Proposition~\ref{prop:elq-semantics-old}, $M(\varphi) \subseteq M(\exists \l \cdot \varphi)$.
By Theorem~\ref{theo:lq-duality} (duality), $\w \in M(\exists \l \cdot \varphi)$ iff $\w \not \in M(\forall \l \cdot \nvarphi)$.
Moreover, $\w \not \in M(\varphi)$  iff $\w \in M(\nvarphi)$. 
Thus, $\w \in M(\exists \l \cdot \varphi)$ and $\w \not \in M(\varphi)$ iff $\w \not \in M(\forall \l \cdot \nvarphi)$ and $\w \in M(\nvarphi)$
iff $\w$ is an $\nl$-boundary model of $\nvarphi$ (by second part of Theorem~\ref{theo:ulq-semantics-b}).
\end{proof}

\begin{proof}[\bf Proof of Theorem~\ref{theo:elq-syntax}]
By Theorem~\ref{theo:ulq-syntax}, $\forall \l . \overline{\varphi}=\overline{\varphi} \wedge \bigwedge_{\alpha \rightarrow \nl \in \RS(\overline{\varphi})}\overline{\alpha}$. Negating the two sides, we get $\neg (\forall \l . \overline{\varphi})=\varphi \vee \bigvee_{\alpha \rightarrow \nl \in \RS(\overline{\varphi})} \alpha$. 
By Theorem~\ref{theo:lq-duality} (duality), $\neg (\forall \l . \overline{\varphi})=\exists \l . \varphi$.
By Proposition~\ref{prop:rconnect}, $\alpha \rightarrow \nl \in \RS(\overline{\varphi})$ is equivalent to $\alpha \rightarrow \l \in \RS(\varphi)$.
This concludes the proof.
\end{proof}

\begin{proof}[\bf Proof of Proposition~\ref{prop:elq-preserve}]
Suppose $\varphi \models \beta$ and $\nl \in \beta$.
Let $\gamma$ be the term containing the complements of literals in clause $\beta$ (\(\gamma = \overline{\beta}\)). 
Then $\gamma \models \nvarphi$ and $\l \in \gamma$. 
Moreover, $\gamma \models \forall \l \cdot \nvarphi$ by Proposition~\ref{prop:ulq-preserve},
and  $\exists \l \cdot \varphi \models \beta$ by contraposition and Theorem~\ref{theo:lq-duality} (duality).
\end{proof}

\begin{proof}[\bf Proof of Proposition~\ref{prop:elq-imply}]
From $\varphi \models \phi$, we get $\overline{\phi} \models \overline{\varphi}$. 
By Proposition~\ref{prop:ulq-imply}, $\forall \l \cdot \overline{\phi} \models \forall \l \cdot \overline{\varphi}$.
Finally, \(\exists \l \cdot \varphi \models \exists \l \cdot \phi\) by contraposition and Theorem~\ref{theo:lq-duality} (duality).
\end{proof}


\begin{proof}[\bf Proof of Proposition~\ref{prop:quantify-base}]
The results for $\top$/$\bot$ follow directly from Definitions~\ref{def:ulq} and~\ref{def:elq}.

Using Definition \ref{def:elq}, $\exists \l_1 . \l_2 = (\l_2 \cd \l_1) \vee (\nl_1 \wedge (\l_2 \cd \nl_1))$.
If $\l_1 = \l_2$, we have $\l_2 \cd \l_1 = \top$ and then $\exists \l_1 . \l_2 = \top$.
If $\l_1 \neq \l_2$, then either $\l_1 = \nl_2$ or $\l_1 \neq \nl_2$.
If $\l_1 = \nl_2$, then $\l_2 \cd \l_1 = \bot$ and $\l_2 \cd \nl_1 = \top$ leading to $\exists \l_1 . \l_2 = \nl_1 = \l_2$.
If $\l_1 \neq \nl_2$, then $\l_2 \cd \l_1 = \l_2 \cd \nl_1 = \l_2$ and hence $\exists \l_1 . \l_2 = \l_2 \vee (\nl_1 \wedge \l_2) = \l_2$.

The results for $\forall \l_1 . \l_2$ follow from the results for $\exists \l_1 . \l_2$ using Theorem~\ref{theo:lq-duality} (duality).
\end{proof}

\begin{proof}[\bf Proof of Proposition~\ref{prop:quantify-compound}]~
\begin{itemize}

\item [(a)] By Definition~\ref{def:elq}, we have
\begin{eqnarray*}
\exists \l (\alpha \vee \beta) 
& = & ((\alpha \vee \beta)|\l)  \vee (\nl \wedge ((\alpha \vee \beta)|\nl)) \\
& = & (\alpha|\l) \vee (\beta|\l)  \vee (\nl \wedge ((\alpha|\nl) \vee (\beta|\nl))) \\
& = & (\alpha|\l) \vee (\beta|\l)  \vee (\nl \wedge (\alpha|\nl)) \vee (\nl \wedge (\beta|\nl)) \\
& = & (\exists \l . \alpha) \vee (\exists \l . \beta)
\end{eqnarray*}

\item [(b)] Follows from Part~(a) and Theorem~\ref{theo:lq-duality} (duality).

\item [(c)] Suppose literals \(\l\) and \(\nl\) do not appear in \(\beta\).
By Definition~\ref{def:elq}, \(\exists \l . \beta = \beta\) and also:
\begin{eqnarray*}
\exists \l (\alpha \wedge \beta) 
& = & ((\alpha \wedge \beta)|\l)  \vee (\nl \wedge ((\alpha \wedge \beta)|\nl)) \\
& = & ((\alpha|\l) \wedge \beta)  \vee (\nl \wedge (\alpha|\nl) \wedge \beta)\\
& = & ((\alpha|\l)  \vee (\nl \wedge (\alpha|\nl))) \wedge \beta \\
& = & (\exists \l . \alpha) \wedge (\exists \l . \beta)
\end{eqnarray*}

\item [(d)] Follows from Part~(c) and Theorem~\ref{theo:lq-duality} (duality). \qedhere
\end{itemize}
\end{proof}

\begin{proof}[\bf Proof of Proposition~\ref{prop:q-cnf}]
The first statement follows directly from Propositions~\ref{prop:quantify-base} and~\ref{prop:quantify-compound}(b,d).
By Proposition~\ref{prop:quantify-compound}(b), universal literal quantification distributes over conjuncts (clauses).
By Proposition~\ref{prop:quantify-compound}(d), it also distributes over disjuncts (literals) when they do not share variables (the literals
of a clause are over distinct variables). Hence, we just need to replace each literal \(\l'\) in the CNF with \(\forall \l \cdot \l'\).
By Proposition~\ref{prop:quantify-base}, \(\forall \l \cdot \l'=\bot\) if \(\nl = \l'\) and \(\forall \l \cdot \l'=\l'\).

As to the second statement, for literal $\l$, the clauses of CNF $\Delta$ can be partitioned into:
\begin{eqnarray*}
\Delta_a & = & \{\alpha \mid \alpha \in \Delta \mbox{  and } \l \in \alpha\} \\
\Delta_b & = & \{\alpha \mid \alpha \in \Delta \mbox{ and } \nl \in \alpha\} \\
\Delta_c & = & \{\alpha \mid \alpha \in \Delta \mbox{ and } \l \not \in \alpha, \nl \not \in \alpha\} 
\end{eqnarray*}
By Definition \ref{def:elq},
$\exists \l . \Delta = (\Delta \cd \l) \vee (\nl \wedge (\Delta\cd\nl)) = (\nl \vee (\Delta \cd \l)) \wedge ((\Delta \cd \l) \vee (\Delta \cd \nl))$
so we get CNFs:
\begin{eqnarray*}
\Delta \cd \l       & = & \{\alpha\setminus \{\nl\} \mid \alpha \in \Delta_b\} \cup \Delta_c \\
\Delta \cd \nl      & = & \{\alpha\setminus \{\l\}   \mid \alpha \in \Delta_a\} \cup \Delta_c \\
\Delta_A            & = & \{\nl \vee \alpha \mid \alpha \in \Delta \cd \l\}  \mbox{ represents \(\nl \vee (\Delta \cd \l)\)} \\
\Delta_B            & = & \{\alpha \vee \beta \mid \alpha \in \Delta \cd \l \mbox{ and } \beta \in \Delta \cd \nl\} \mbox{ represents \((\Delta \cd \l)\wedge(\Delta \cd \nl)\)} \\
\exists \l . \Delta & = & \Delta_A \cup \Delta_B 
\end{eqnarray*}
CNF \(\Delta_B\) does not contain the variable of literal $\l$ and can be expressed as follows:
\begin{eqnarray*}
\Delta_B & = & \Delta_c \cup \Delta_d \\
\Delta_d & =  & \{(\alpha\setminus\{\l\}) \cup (\beta\setminus\{\nl\}) \mid \alpha \in \Delta, \beta \in \Delta, \l \in \alpha \mbox{ and } \nl \in \beta\}
\end{eqnarray*}
where \(\Delta_d\) contains the resolvents of \(\Delta\) on the variable of literal \(\l\).
We can express CNF \(\Delta_A\) as: 
\[
\Delta_A  =
\{\nl \vee (\alpha\setminus\{\nl\}) \mid \alpha \in \Delta_b\} \cup
 \{\nl \vee \alpha \mid \alpha \in \Delta_c \}.
\]
The first part is equivalent to \(\Delta_b\) and the second part is subsumed by \(\Delta_c\) so we now have:
\[
\exists \l . \Delta = \Delta_A \cup \Delta_B = \Delta_b \cup \Delta_c \cup \Delta_d
\]
That is, \(\exists \l . \Delta\) consists of all clauses of \(\Delta\) that do not contain literal \(\l\) (\(\Delta_b\) and \(\Delta_c\)) in
addition to all resolvents of \(\Delta\) on the variable \(X\) of literal \(\l\) (\(\Delta_d\)).
When $\Delta$ is closed under resolution on variable \(X\), we have \(\Delta_d \subseteq \Delta_c\). In this case, 
$\exists \l \cdot \Delta$ can be obtained from $\Delta$ by keeping only its clauses \(\Delta_b\) and \(\Delta_c\), 
which is equivalent to removing clauses \(\Delta_a\) (containing literal \(\l\)) as claimed by the theorem.
\end{proof}

\begin{proof}[\bf Proof of Corollary~\ref{coro:q-cnf}]
Follows directly from Proposition~\ref{prop:q-cnf} given that a CNF $\Delta$ in prime implicate form
is such that the resolvent of any two clauses of $\Delta$ is subsumed by a clause of $\Delta$~\cite{Quine55}.
See also Proposition~19 in~\cite{LangLM03}.
\end{proof}

\begin{proof}[\bf Proof of Proposition~\ref{prop:q-dnf}]
Follows directly from Proposition~\ref{prop:q-cnf} and Theorem~\ref{theo:lq-duality} (duality).
\end{proof}

\begin{proof}[\bf Proof of Corollary~\ref{cor:q-dnf}]
Follows directly from Proposition~\ref{prop:q-dnf} given that a DNF $\Delta$ in prime implicant form
is such that the consensus of any two terms of $\Delta$ is subsumed by a term of $\Delta$~\cite{Quine55}.
\end{proof}

\begin{proof}[\bf Proof of Proposition~\ref{prop:elq-D-DNNF}]
By Propositions~\ref{prop:quantify-base} and~\ref{prop:quantify-compound}(a,c), one can 
existentially quantify literals from a Decision-DNNF circuit in linear time since conjuncts in these circuits 
do not share variables. The result is a DNNF circuit as one would only be replacing 
some literals with \(\top,\) therefore preserving the decomposability property.
\end{proof}

\begin{proof}[\bf Proof of Proposition~\ref{prop:ulq-D-DNNF-transform}]
(1) \((\l \vee \beta) \wedge (\nl \vee \alpha)\) is equivalent to \((\l \wedge \alpha) \vee (\nl \wedge \beta) \vee (\alpha\wedge\beta)\),
which is equivalent to \((\l \wedge \alpha) \vee (\nl \wedge \beta)\) since \(\alpha\wedge\beta\) is subsumed by 
 \((\l \wedge \alpha) \vee (\nl \wedge \beta)\). 
(2) \(\Gamma\) can be obtained from \(\Delta\) by flipping some \(\vee\) to \(\wedge\) and vice versa.
(3) Every disjunction in \(\Gamma\) appears in a fragment \((\l \vee \beta) \wedge (\nl \vee \alpha)\). 
By definition of a Decision-DNNF circuit, we know that \(\l\) shares no variables with \(\beta\), and \(\nl\) share no variables with \(\alpha\).
\end{proof}

\begin{proof}[\bf Proof of Proposition~\ref{prop:ulq-D-DNNF}]
Using Proposition~\ref{prop:ulq-D-DNNF-transform}, we can in linear time transform a Decision-DNNF circuit into an NNF 
circuit in which disjuncts do not share variables. Using Propositions~\ref{prop:quantify-base} and~\ref{prop:quantify-compound}(b,d),
we can then universally quantify literals in linear time, leading to an NNF circuit since we only replace some
literals with constants. 
\end{proof}

\begin{proof}[\bf Proof of Proposition~\ref{prop:elq-SDD}]
By Propositions~\ref{prop:quantify-base} and~\ref{prop:quantify-compound}(a,c), one can 
existentially quantify literals from an SDD circuit in linear time since conjuncts in these circuits 
do not share variables. The result is guaranteed to be a DNNF circuit as one would only be replacing 
some literals with \(\top\), therefore preserving the decomposability property.
\end{proof}

\begin{proof}[\bf Proof of Proposition~\ref{prop:sdd-transform}]~
\begin{itemize}
\item[(1)] We first observe that primes \(p_1, \ldots, p_n\) form a partition. Let \(N = \{1,\ldots,n\}\).
Then \(\bigvee_{i \in N} p_i = \top\). Moreover, for \(S \subseteq N\), \(\bigvee_{i \in S} p_i = \bigwedge_{i \in N\setminus S} \neg p_i\). 
We now have:
\begin{eqnarray*}
(\neg p_1 \vee s_1) \wedge \ldots \wedge (\neg p_n \vee s_n)
& = & \bigvee_{S \subseteq N}  \left[ \left[\bigwedge_{i \in N\setminus S} \neg p_i \right] \wedge \left[ \bigwedge_{j \in S} s_j \right] \right] \\
& = & \bigvee_{S \subseteq N} \left[\left[\bigvee_{i \in S} p_i\right] \wedge \left[\bigwedge_{j \in S} s_j \right]\right].
\end{eqnarray*}
We will consider the above disjuncts according to the set \(S\). When \(S = \{\}\), the disjunct is \(\bot\). When \(S = \{i\}\), the
disjunct is \((p_i \wedge s_i)\). When \(S=N\), the disjunct is \((s_1 \wedge \ldots \wedge s_n)\). Otherwise, \(1 < |S| < N\) and the disjunct
is equivalent to \(\bigvee_{i \in S} (p_i \wedge \bigwedge_{j \in S} s_j)\). Each term \((p_i \wedge \bigwedge_{j \in S} s_j)\) 
is subsumed by the disjunct \((p_i \wedge s_i)\) generated by \(S=\{i\}\). Moreover, the term \((s_1 \wedge \ldots \wedge s_n)\)
is subsumed by \((p_1 \wedge s_1) \vee \ldots \vee (p_n \wedge s_n)\) since primes \(p_i\) form a partition.
Hence, \((\neg p_1 \vee s_1) \wedge \ldots \wedge (\neg p_n \vee s_n) = (p_1 \wedge s_1) \vee \ldots \vee (p_n \wedge s_n)\).

\item[(2)] An SDD circuit is an NNF circuit. We can construct a negation for each node in an NNF circuit while at most doubling the size of the 
NNF circuit, a process that can be done in time linear in the NNF circuit size. This can be done by traversing the NNF circuit bottom, while constructing a negation for each encountered node. The process is trivial for constants and literals. For a disjunction \(\alpha_1 \vee \ldots \vee \alpha_n\), the
negation is \(\neg \alpha_1 \wedge \ldots \wedge \neg \alpha_n\) and we already have nodes for all \(\neg \alpha_i\). A similar 
process is applied to conjunctions. We can therefore replace each fragment
\((p_1 \wedge s_1) \vee \ldots \vee (p_n \wedge s_n)\)  by the fragment \((\neg p_1 \vee s_1) \wedge \ldots \wedge (\neg p_n \vee s_n)\)
in time linear in the size of SDD circuit. 

\item[(3)] Every disjunction in \(\Gamma\) appears in a fragment \((\neg p_1 \vee s_1) \wedge \ldots \wedge (\neg p_n \vee s_n)\). 
By definition of SDDs, prime \(p_i\) shares no variables with sub \(s_i\) so \(\neg p_i\) shares no variables with \(s_i\). \qedhere
\end{itemize}
\end{proof}

\begin{proof}[\bf Proof of Proposition~\ref{prop:ulq-SDD}]
Using Proposition~\ref{prop:sdd-transform}, we can in linear time transform an SDD circuit into an NNF 
circuit in which disjuncts do not share variables. Using Propositions~\ref{prop:quantify-base} and~\ref{prop:quantify-compound}(b,d),
we can then universally quantify literals in linear time, leading to an NNF circuit since we only replace some
literals with constants. 
\end{proof}


\begin{proof}[\bf Proof of Theorem~\ref{theo:ifeature}]
Given Definition~\ref{def:ifeature}, 
we need to show \(\Delta(\delta) = \Delta(\erase{\delta}{X_1,\ldots,X_n})\) iff \(\delta \models \forall X_1, \ldots, X_n \cdot~\Delta_\delta\).
We next show both directions of the theorem.

(\(\then\)) Suppose \(\Delta(\delta) = \Delta(\erase{\delta}{X_1,\ldots,X_n})\). 
Then \(\erase{\delta}{X_1,\ldots,X_n} \models \Delta_\delta\). 
By Propositions~\ref{prop:ulq-imply} and~\ref{prop:ulq-uvq}, we have 
\(\forall X_1,\ldots,X_n \cdot \erase{\delta}{X_1,\ldots,X_n} \models  \forall X_1, \ldots, X_n \cdot~\Delta_\delta\).
Since variables \(X_1,\ldots,X_n\) are not mentioned in \(\erase{\delta}{X_1,\ldots,X_n}\), we also have
\(\erase{\delta}{X_1,\ldots,X_n} \models  \forall X_1, \ldots, X_n \cdot~\Delta_\delta\).
Since \(\delta \models \erase{\delta}{X_1,\ldots,X_n}\), we finally get
\(\delta \models \forall X_1, \ldots, X_n \cdot~\Delta_\delta\).

(\(\bthen\)) Suppose \(\delta \models \forall X_1, \ldots, X_n \cdot~\Delta_\delta\). 
Then \(\exists X_1, \ldots, X_n \cdot \delta \models \exists X_1, \ldots, X_n (\forall X_1, \ldots, X_n \cdot~\Delta_\delta)\) by Proposition~\ref{prop:elq-imply}.
Moreover, \(\forall X_1, \ldots, X_n \cdot~\Delta_\delta\) is independent of variables \(X_1, \ldots, X_n\) by Proposition~\ref{prop:evq},
so we have \(\exists X_1, \ldots, X_n (\forall X_1, \ldots, X_n \cdot~\Delta_\delta) = \forall X_1, \ldots, X_n \cdot~\Delta_\delta\). 
Since  \(\exists X_1, \ldots, X_n \cdot \delta = \erase{\delta}{X_1,\ldots,X_n}\), we get
\(\erase{\delta}{X_1,\ldots,X_n} \models \forall X_1, \ldots, X_n \cdot~\Delta_\delta\).
Since \(\forall X_1, \ldots, X_n \cdot~\Delta_\delta \models \Delta_\delta\), we have 
\(\erase{\delta}{X_1,\ldots,X_n} \models \Delta_\delta\).
Hence, \(\Delta(\delta) = \Delta(\erase{\delta}{X_1,\ldots,X_n})\).
\end{proof}

\begin{proof}[\bf Proof of Theorem~\ref{theo:icharacteristic}]
Let \(\alpha = \delta \cap \{\nl_1, \ldots, \nl_n\}\).
It suffices to show that decision \(\Delta(\delta)\) is independent of characteristics \(\nl_1, \ldots, \nl_n\)
iff \(\delta\) is an \(\alpha\)-independent model of \(\Delta_\delta\). If this holds,
Theorem~\ref{theo:icharacteristic} will then follow directly from Theorem~\ref{theo:ulq-semantics}.

(\(\then\))~Suppose decision \(\Delta(\delta)\) is independent of characteristics \(\nl_1, \ldots, \nl_n\).
By Definition~\ref{def:icharacteristic}, \(\Delta(\delta) = \Delta(\delta\setminus\{\nl_1, \ldots, \nl_n\})\) and hence
\(\delta\setminus\{\nl_1, \ldots, \nl_n\} \models \Delta_\delta\). By definition of \(\alpha\), we now have
\(\delta\setminus\alpha \models \Delta_\delta\). Since \(\alpha \subseteq \delta\), then
\(\delta\) is an \(\alpha\)-independent model of \(\Delta_\delta\).

(\(\bthen\))~Suppose \(\delta\) is an \(\alpha\)-independent model of \(\Delta_\delta\).
Then \(\delta\setminus\alpha \models \Delta_\delta\) and hence \(\delta\setminus\{\nl_1, \ldots, \nl_n\} \models \Delta_\delta\). 
By definition, \(\delta \models \Delta_\delta\) so \(\Delta(\delta) = \Delta(\delta\setminus\{\nl_1, \ldots, \nl_n\})\). 
By Definition~\ref{def:icharacteristic}, decision \(\Delta(\delta)\) is independent of characteristics \(\nl_1, \ldots, \nl_n\).
\end{proof}

\begin{proof}[\bf Proof of Proposition~\ref{prop:ifc}]
By Proposition~\ref{prop:ulq-uvq}, \(\forall X_1, \ldots, X_n \cdot \Delta = \forall \l_1,\nl_1, \ldots, \l_n,\nl_n \cdot \Delta\).
By Proposition~\ref{prop:ulq-order}, \(\forall \l_1,\nl_1, \ldots, \l_n,\nl_n \cdot \Delta = \forall \nl_1,\ldots,\nl_n (\forall \l_1, \ldots,\l_n \cdot \Delta)\).
By Theorem~\ref{theo:ulq-semantics-b}, \(\forall \nl_1,\ldots,\nl_n (\forall \l_1, \ldots,\l_n \cdot \Delta) \models \forall \l_1, \ldots,\l_n \cdot \Delta\).
Hence, \(\forall X_1, \ldots, X_n \cdot \Delta \models \forall \l_1, \ldots, \l_n \cdot \Delta\).
\end{proof}

\begin{proof}[\bf Proof of Theorem~\ref{theo:reasons}]
Let \(\gamma = \{\l_1, \ldots, \l_m\}\), \(X_{m+1},  \ldots, X_n\) be all classifier features not mentioned in \(\gamma\)
and let \(\Gamma\) be the complete reason \(\forall \l_1, \ldots, \l_m, X_{m+1}, \ldots, X_n \cdot \Delta_\gamma\).
By definition, \(\Delta(\gamma^\star)=\Delta(\gamma)\) is equivalent to \(\gamma^\star \models \Delta_\gamma\).
Moreover, \(\Gamma \models \Delta_\gamma\) by Proposition~\ref{prop:uvq} and Theorem~\ref{theo:ulq-semantics-b}.
We next prove both directions of the theorem.

(\(\then\))~Suppose \(\gamma^\star\) is a sufficient reason for decision \(\Delta(\gamma)\).
By definition, \(\gamma^\star\) is a prime implicant of the complete reason \(\Gamma\):
\(\gamma^\star \models \Gamma\) and no strict subset of \(\gamma^\star\) satisfies this property.
We now have \(\gamma^\star \models \Gamma \models \Delta_\gamma\) and
need to show that no strict subset of \(\gamma^\star\) satisfies \(\gamma^\star \models \Delta_\gamma\).
Suppose to the contrary: \(\alpha \subset \gamma^\star\) and \(\alpha \models \Delta_\gamma\). 
By Propositions~\ref{prop:ulq-imply} and~\ref{prop:ulq-uvq}, 
we have \(\forall \l_1, \ldots, \l_m, X_{m+1}, \ldots, X_n \cdot  \alpha \models \forall \l_1, \ldots, \l_m, X_{m+1}, \ldots, X_n \cdot  \Delta_\gamma\).
Since variables \(X_{m+1}, \ldots, X_n\) do not appear in \(\alpha\) and \(\alpha \subseteq \{\l_1, \ldots, \l_m\}\),
we have \(\forall \l_1, \ldots, \l_m, X_{m+1}, \ldots, X_n \cdot  \alpha = \alpha\) and then
\(\alpha \models \forall \l_1, \ldots, \l_m, X_{m+1}, \ldots, X_n \cdot  \Delta_\gamma\).
Since \(\alpha \subset \gamma^\star\), then \(\gamma^\star\) cannot be a prime implicant of \(\Gamma\) which is a contradiction. 
Hence, \(\gamma^\star\) is a minimal subset of \(\gamma\) that satisfies
\(\gamma^\star \models \Delta_\gamma\) (and \(\Delta(\gamma^\star)=\Delta(\gamma)\)).

(\(\bthen\))~Suppose \(\gamma^\star\) is a minimal subset of \(\gamma\) that satisfies \(\Delta(\gamma^\star)=\Delta(\gamma)\).
Then \(\gamma^\star\) is a prime implicant of \(\Delta_\gamma\).
We need to show that \(\gamma^\star\) is a sufficient reason for decision \(\Delta(\gamma)\) which by definition is equivalent
to \(\gamma^\star\) being a prime implicant of the complete reason \(\Gamma\). Since \(\gamma^\star \models \Delta_\gamma\), we can
use Propositions~\ref{prop:ulq-imply} and~\ref{prop:ulq-uvq} as in the first part to 
get \(\gamma^\star \models \Gamma\) (\(\gamma^\star\) is an implicant of \(\Gamma\)).
Since \(\gamma^\star \models \Gamma \models \Delta_\gamma\), then \(\gamma^\star\) must be a prime
implicant of \(\Gamma\), otherwise it cannot be a prime implicant of \(\Delta_\gamma\).
\end{proof}

\begin{proof}[\bf Proof of Theorem~\ref{theo:bias}]
By Theorem~\ref{theo:ifeature}, \(\forall X_1, \ldots, X_n \cdot \Delta\) characterizes all positive instances with
decisions that are independent of protected features \(X_1, \ldots, X_n\); that is, positive instances with unbiased decisions.
Therefore, \(\Delta \wedge \neg (\forall X_1, \ldots, X_n \cdot \Delta)\) characterizes all positive instances
with biased decisions. One can similarly show the second part of the theorem, which characterizes negative instances with biased decisions.
\end{proof}


\begin{proof}[\bf Proof of Theorem~\ref{theo:brules-del-keep}] The following proof invokes Definition~\ref{def:brule} frequently.
\begin{itemize}
\item [(a)]~Suppose \(\Rule(\alpha,\l_i) \in \RS(\varphi)\). Then \(\alpha,\l_i \models \varphi\) and \(\alpha,\nl_i \not \models \varphi\).
We have \(\alpha,\l_i \models \forall \l_i \varphi\) by Proposition~\ref{prop:ulq-preserve} and 
\(\alpha,\nl_i \not \models \forall \l_i \cdot \varphi\) by Theorem~\ref{theo:ulq-semantics-b}. 
Hence, \(\Rule(\alpha,\l_i) \in \RS(\forall \l_i \cdot \varphi)\).
\item[(b)]~Suppose \(\Rule(\alpha,\nl_i) \in \RS(\forall \l_i \cdot \varphi)\). 
Then \(\alpha,\nl_i \models \forall \l_i \cdot \varphi\) and \(\alpha,\l_i \not \models \forall \l_i \cdot \varphi\). 
Since \(\alpha,\nl_i \models \forall \l_i \cdot \varphi\), we get \(\alpha,\nl_i \models \varphi\) by Theorem~\ref{theo:ulq-semantics-b}. 
Consider two cases: \(\alpha,\l_i \models \varphi\) or \(\alpha,\l_i \not \models \varphi\). The first case is impossible
since it implies \(\alpha,\l_i \models \forall \l_i \cdot \varphi\) by Proposition~\ref{prop:ulq-preserve}
so we must have \(\alpha,\l_i \not \models \varphi\). 
Since \(\alpha,\nl_i \models \varphi\), then \(\alpha,\nl_i\) is an \(\nl_i\)-boundary model for \(\varphi\)
so it cannot be a model of \(\forall \l_i \cdot \varphi\) by Theorem~\ref{theo:ulq-semantics-b}, which is a contradiction.
We must therefore have \(\Rule(\alpha,\nl_i) \not \in \RS(\forall \l_i \cdot \varphi)\). 
\item [(c)]~Similar to Part~(a).
\item [(d)]~Suppose \(\Rule({\alpha,\nl_i},\l_j) \in \RS(\varphi)\). 
Then \(\alpha,\nl_i,\nl_j \not \models \varphi\) and 
hence \(\alpha,\nl_i,\nl_j \not \models \forall \l_i \cdot \varphi\) by Theorem~\ref{theo:ulq-semantics-b}. \\
\noindent (\(\then\)) Suppose \(\Rule({\alpha,\nl_i},\l_j) \in \RS(\forall \l_i \cdot \varphi)\).
Then \(\alpha,\nl_i,\l_j \models \forall \l_i \cdot \varphi\) and \(\alpha,\nl_i,\nl_j \not \models \forall \l_i \cdot \varphi\).
To show \(\Rule({\alpha,\l_j},\nl_i) \not \in \RS(\varphi)\), it suffices to show \(\alpha,\l_i,\l_j \models \varphi\).
Suppose \(\alpha,\l_i,\l_j \not \models \varphi\). Then \(\alpha,\l_i,\l_j \not \models \forall \l_i \cdot \varphi\) by Theorem~\ref{theo:ulq-semantics-b}.
Since \(\alpha,\nl_i,\l_j \models \forall \l_i \cdot \varphi\), 
we have \(\Rule({\alpha,\l_j},\nl_i) \in \RS(\forall \l_i \cdot \varphi)\) which contradicts~(b). 
Hence, \(\alpha,\l_i,\l_j \models \varphi\) and \(\Rule({\alpha,\l_j},\nl_i) \not \in \RS(\varphi)\). \\
\noindent (\(\bthen\)) Suppose \(\Rule({\alpha,\l_j},\nl_i) \not \in \RS(\varphi)\).
To show \(\Rule({\alpha,\nl_i},\l_j) \in \RS(\forall \l_i \cdot \varphi)\),  
we need to show that we have
\(\alpha,\nl_i,\l_j \models \forall \l_i \cdot \varphi\) and \(\alpha,\nl_i,\nl_j \not \models \forall \l_i \cdot \varphi\).
The latter follows from the supposition \(\Rule({\alpha,\nl_i},\l_j) \in \RS(\varphi)\).
The former also holds since supposition \(\Rule({\alpha,\l_j},\nl_i) \not \in \RS(\varphi)\) implies that 
\(\alpha,\nl_i,\l_j\) is not an \(\nl_i\)-boundary model for \(\varphi\) (Definition~\ref{def:brule}) 
so it is not dropped when universally quantifying literal \(\l_i\) (Theorem~\ref{theo:ulq-semantics-b}).  
Hence, \(\Rule({\alpha,\nl_i},\l_j) \in \RS(\forall \l_i \cdot \varphi)\). \qedhere
\end{itemize}
\end{proof}

\begin{proof}[\bf Proof of Theorem~\ref{theo:brules-add1}]
Let b-rule \(r=\Rule(\beta,\l_k)\), world \(\w = \beta,\l_k\) and suppose \(r \not \in \RS(\varphi)\) and that \(r \in \RS(\forall \l_i \cdot \varphi)\).
We next show that \(k \neq i\) and \(\nl_i \in \beta\), which is sufficient to prove the theorem.
By Theorem~\ref{theo:brules-del-keep}(b), \(\l_k \neq \nl_i\).
By \(r \in \RS(\forall \l_i \cdot \varphi)\) and Definition~\ref{def:brule}, 
\(\w \in \MS(\forall \l_i \cdot \varphi)\) and \(\w[\nl_k] \not \in \MS(\forall \l_i \cdot \varphi)\).
By \(\w \in \MS(\forall \l_i \cdot \varphi)\) and Theorem~\ref{theo:ulq-semantics-b}, \(\w \in \MS(\varphi)\).
By \(\w \in \MS(\varphi)\), \(r \not \in \RS(\varphi)\) and Definition~\ref{def:brule}, \(\w[\nl_k] \in \MS(\varphi)\).
By \(\w[\nl_k] \in \MS(\varphi)\), \(\w[\nl_k] \not \in \MS(\forall \l_i \cdot \varphi)\) and Theorem~\ref{theo:ulq-semantics-b},
world \(\w^\star = \w[\nl_k]\) must be an \(\nl_i\)-boundary model of \(\varphi\); that is, \(\nl_i \in \w^\star\)
and \(\w^\star[\l_i] \not \in \MS(\varphi)\).
To show \(i \neq k\), suppose the contrary \(i = k\).
Then \(\w^\star[\l_i] = (\w[\nl_k])[\l_i] = (\w[\nl_k])[\l_k] = \w[\l_k] = \w\).
This conflicts with \(\w \in \MS(\varphi)\) and \(\w^\star[\l_i] \not \in \MS(\varphi)\) so \(i \neq k\).
Since \(\nl_i \in \w^\star\) we get \(\nl_i \in \beta\).
\end{proof}

\begin{proof}[\bf Proof of Theorem~\ref{theo:brules-add2}]
Let world \(\w = \alpha,\nl_i,\l_j\) and let
\begin{enumerate}
\item[(A)] \(\Rule({\alpha,\l_i},\l_j) \in \RS(\varphi)\)
\item[(B)] \(\Rule({\alpha,\nl_j},\nl_i) \in \RS(\varphi)\)
\item[(C)] \(\Rule({\alpha,\l_j},\l_i) \not \in \RS(\varphi)\) 
\item[(D)] \(\Rule({\alpha,\nl_i},\nl_j) \not \in \RS(\varphi)\)
\end{enumerate}
(\(\then\))~Suppose \(r \not \in \RS(\varphi)\) and \(r \in \RS(\forall \l_i \cdot \varphi)\).
By \(r \in \RS(\forall \l_i \cdot \varphi)\) and Definition~\ref{def:brule}, 
\(\w \in \MS(\forall \l_i \cdot \varphi)\) and \(\w[\nl_j] \not \in \MS(\forall \l_i \cdot \varphi)\).
By \(\w \in \MS(\forall \l_i \cdot \varphi)\) and Theorem~\ref{theo:ulq-semantics-b}, \(\w \in \MS(\varphi)\).
By \(\w \in \MS(\varphi)\), \(r \not \in \RS(\varphi)\) and Definition~\ref{def:brule}, \(\w[\nl_j] \in \MS(\varphi)\).
By \(\w[\nl_j] \in \MS(\varphi)\), \(\w[\nl_j] \not \in \MS(\forall \l_i \cdot \varphi)\) and Theorem~\ref{theo:ulq-semantics-b},
world \(\w^\star = \w[\nl_j]\) must be an \(\nl_i\)-boundary model of \(\varphi\);  
that is, \(\nl_i  \in \w^\star\) and \(\w^\star[\l_i] \not \in \MS(\varphi)\).
By \(\w \in \MS(\varphi)\), \(\w \in \MS(\forall \l_i \cdot \varphi)\) and Theorem~\ref{theo:ulq-semantics-b},
\(\w\) is not an \(\nl_i\)-boundary model of \(\varphi\) and hence \(\w[\l_i] \in \MS(\varphi)\).
We now have
\begin{enumerate}
\item[(1)] \(\alpha,\nl_i,\l_j   =  \w \in \MS(\varphi)\),
\item[(2)] \(\alpha,\l_i,\l_j   = \w[\l_i] \in \MS(\varphi)\),
\item[(3)] \(\alpha,\nl_i,\nl_j = \w[\nl_j] = \w^\star \in \MS(\varphi)\) and
\item[(4)] \(\alpha,\l_i,\nl_j = \w^\star[\l_i] \not \in \MS(\varphi)\).
\end{enumerate}
By Definition~\ref{def:brule}, 
(2) and (4) imply (A);
(3) and (4) imply (B); 
(1) implies (C) and~(D).

(\(\bthen\))~Suppose (A), (B), (C) and (D).
By Definition~\ref{def:brule}, (A) implies (2) and (4); (B) implies (3) and (4); (C) and (2) imply (1); (D) and (3) imply (4).
We have now established (1), (2), (3) and (4)---we only need (A) and  (B) together with either (C) or (D) to establish this.
By (3) and Definition~\ref{def:brule}, we get \(r \not \in \RS(\varphi)\).
By (1) and (2), \(\alpha,\nl_i,\l_j\) is not an \(\nl_i\)-boundary model of \(\varphi\) and 
hence \(\alpha,\nl_i,\l_j \in \MS(\forall \l_i \cdot \varphi)\) by Theorem~\ref{theo:ulq-semantics-b}.
By (3) and (4), \(\alpha,\nl_i,\nl_j\) is an \(\nl_i\)-boundary model of \(\varphi\) and 
hence \(\alpha,\nl_i,\nl_j \not \in \MS(\forall \l_i \cdot \varphi)\) by Theorem~\ref{theo:ulq-semantics-b}.
By \(\w=\alpha,\nl_i,\l_j \in \MS(\forall \l_i \cdot \varphi)\), \(\alpha,\nl_i,\nl_j \not \in \MS(\forall \l_i \cdot \varphi)\) and
Definition~\ref{def:brule}, we get  \(r \in \RS(\forall \l_i \cdot \varphi)\).
\end{proof}

\vskip 0.2in
\bibliographystyle{theapa}
\bibliography{biblio,biblioExt}

\end{document}